\documentclass[10pt, letterpaper, twoside]{article} 

\usepackage[utf8]{inputenc} % allow utf-8 input
\usepackage[T1]{fontenc}    % use 8-bit T1 fonts
\usepackage[hidelinks]{hyperref}       % hyperlinks
\usepackage{url}            % simple URL typesetting
\usepackage{booktabs}       % professional-quality tables
\usepackage{amsfonts}       % blackboard math symbols
\usepackage{nicefrac}       % compact symbols for 1/2, etc.
\usepackage{microtype}      % microtypography
\usepackage{xcolor}         % colors

\usepackage{amsmath}
\usepackage{mathtools}
\usepackage{algorithm}
\usepackage{algpseudocode}
\usepackage{amsthm}
\usepackage{enumitem}
\usepackage{graphicx}
\usepackage{subcaption}

\usepackage{tikz}
\usetikzlibrary{arrows.meta,fit}

\theoremstyle{plain}
\newtheorem{theorem}{Theorem}[section]
\newtheorem{lemma}{Lemma}[section]

\newtheorem{assumption}{Assumption}[section]

\theoremstyle{remark}
\newtheorem{remark}{Remark}

\newtheorem{example}{Example}[section]

\newlength{\myindent}
\newcommand{\myPhase}[1]{%
  \State \textbf{Phase #1:}%
}
\newcommand{\InPhase}[1]{%
  \State \hspace*{\myindent}{#1}%
}

\newcommand{\ForInPhase}[1]{%
  \State \hspace*{\myindent}\textbf{for #1 do}%
}

\newcommand{\InFor}[1]{%
  \State \hspace*{2\myindent}#1%
}

\newcommand{\EndForPhase}{%
  \State \hspace*{\myindent} \textbf{end For} \quad \textbf{end Phase}%
}

\newcommand{\IfInFor}[1]{%
  \State \hspace*{2\myindent}\textbf{if #1}%
}
\newcommand{\ElseIfInFor}[1]{%
  \State \hspace*{2\myindent}\textbf{else if #1}%
}
\newcommand{\InIfInFor}[1]{%
  \State \hspace*{3\myindent}#1%
}
\newcommand{\EndIfEndForEndPhase}{%
  \State \hspace*{\myindent}\textbf{end if} \quad \textbf{end for} \quad \textbf{end Phase}%
}

\newcommand{\EndPhase}{%
  \State \textbf{end Phase}%
}

\usepackage{placeins}
\usepackage{enumitem}
\setlist[itemize]{itemsep=0pt,parsep=2pt,topsep=2pt}
\setlist[enumerate]{itemsep=0pt,parsep=2pt,topsep=2pt}

\usepackage{amsmath,amsthm}
\usepackage{amssymb,mathrsfs}
\allowdisplaybreaks

\usepackage[numbers,sort&compress]{natbib}

\theoremstyle{plain}
\newtheorem{proposition}{Proposition}[section]
\newtheorem{cor}{Corollary}[section]

\numberwithin{equation}{section}
\numberwithin{remark}{section}

\theoremstyle{definition}
\newtheorem{myalgorithm}{Algorithm}

\def\A{\mathcal{A}}

\def\tA{{\tilde \A}}
\def\B{\mathcal{B}}
\def\tB{{\tilde \B}}
\def\C{\mathcal{C}}
\def\tC{{\tilde \C}}
\def\F{\mathcal{F}}

\def\tG{{\tilde \Gamma}}
\def\J{\mathcal{J}}
\def\K{\mathcal{K}}
\def\M{\mathcal{M}}
\def\E{\mathbb{E}}
\def\P{\mathcal{P}}
\def\R{\mathbb{R}}
\def\S{\mathcal{S}}
\def\tS{{\tilde \S}}
\def\T{\mathcal{T}}
\def\Z{\mathcal{Z}}
\def\cE{\mathcal{E}}
\def\stop{\text{\rm stop}}
\def\argmax{\mathop{\arg\max}}
\def\ud{\underline{\delta}}
\def\bd{\bar \delta}
\def\1{\mathbf{1}}
\def\Rec{\mathcal{R}}

\def\Sr{\S^{\mathrm{r}}}
\def\Se{\S^{\mathrm{e}}}
\def\Sb{\S^{\mathrm{b}}}
\def\tSr{\tS^{\mathrm{r}}}
\def\tSe{\tS^{\mathrm{e}}}
\def\tSb{\tS^{\mathrm{b}}}

\def\Cr{\C^{\mathrm{r}}}
\def\Ce{\C^{\mathrm{e}}}
\def\Cb{\C^{\mathrm{b}}}
\def\tCr{\tC^{\mathrm{r}}}
\def\tCe{\tC^{\mathrm{e}}}
\def\tCb{\tC^{\mathrm{b}}}

\newcommand{\Clim}{\mathop{C\text{-}\!\lim}}

\DeclareMathAlphabet{\mathbbb}{U}{bbold}{m}{n}
\newcommand{\ind}{\mathbbb{1}}

\newcommand\blfootnote[1]{%
  \begingroup
  \renewcommand\thefootnote{}\footnote{#1}%
  \addtocounter{footnote}{-1}%
  \endgroup
}

\begin{document}

\title{Average-Reward Reinforcement Learning for Multichain MDPs: A Hierarchical Decomposition Approach\textsuperscript{$\dagger$}}
%\footnote{This research was supported in part by Amii and the Natural Sciences and Engineering Research Council of Canada (NSERC) under grants RGPIN-2024-04939 and DGECR-2024-00312.}}

\markboth{\small\itshape Yu and Heidt}{\small\itshape Average-Reward RL for Multichain MDPs}

\author{
Huizhen Yu\textsuperscript{*}
\qquad
Isaiah Heidt\textsuperscript{*}
\\[0.6ex]
Department of Computing Science\\
University of Alberta, Canada
}

\date{} 
\maketitle
\vspace{-2em}

\blfootnote{\textsuperscript{$\dagger$}H.Y.\ dedicates this paper to the memory of Professor Dimitri Bertsekas, her mentor and friend.}
\blfootnote{\textsuperscript{*}Both authors contributed equally. 
Emails: \texttt{janey.hzyu@gmail.com} (H.Y.) and
\texttt{iheidt@ualberta.ca} (I.H.).}

\begin{abstract}
We study learning optimal policies in average-reward multichain Markov decision processes (MDPs), where the optimal gain may depend on the initial state and recurrence structures vary across policies, creating challenges for reinforcement learning (RL) methods. We propose an asynchronous value-iteration-based RL algorithm that requires no model knowledge beyond the MDP's transition graph and leverages Bather's decomposition to hierarchically partition the state space into communicating subsystems and transient states. This decomposition induces a recasting of the global decision problem into structured subproblems, which our algorithm exploits. We show that the algorithm converges to the optimal gain and produces gain-optimal policies after finite time. Building on this base algorithm, we develop two further algorithms: one approximately solves the multichain average optimality equations to obtain near gain-optimal policies, and another targets near bias-optimality by approximating the optimal bias function and solving an induced average-reward multichain MDP using the base algorithm. We provide almost-sure convergence guarantees for all three algorithms and empirically compare their tradeoffs, showing that the latter two also consistently improve transient performance relative to the base algorithm. To our knowledge, these are the first essentially model-free average-reward RL algorithms for general multichain MDPs without reductions to discounted problems.
\end{abstract}
 
\medskip 
\noindent{\bf Keywords:}
average-reward reinforcement learning; multichain Markov decision processes;\\ 
\hspace*{1.9cm} hierarchical decomposition; asynchronous stochastic value iteration;\\ 
\hspace*{1.9cm} gain and bias optimality; stability and convergence

\section{Introduction}

Reinforcement learning (RL) methods for Markov decision processes (MDPs) under the average-reward criterion are important for sequential decision problems where optimizing long-run performance is essential. In this paper, we study learning optimal policies in finite-space multichain MDPs and propose new asynchronous RL methods that do not require model knowledge beyond the MDP's transition graph---the directed graph indicating which states can be reached from each state under each action.
 
A multichain MDP represents the most general type of MDP structure, arising when a stationary policy can induce more than one recurrent class: the process may settle into different regions of the state space, and some may be better than others. This is not only a matter of the decisions we make but a question of the fundamental structure of the problem we are solving. In a communicating or weakly communicating MDP, mistakes can always be undone: outside of transient behavior, every state remains reachable from every other, so even a poor decision can be corrected by a policy that steers the process back out. Multichain structure that is not weakly communicating breaks this guarantee: the state space can fragment into regions that once exited, cannot be revisited, or that once entered, cannot be exited, so a policy may commit the process to a suboptimal long-run regime irrevocably, with no path back. Such irreversibility is pervasive in real-world stochastic systems: an agent interacting with its environment may make decisions that permanently alter the options available to it in the future. Descending a steep slope may leave it unable to climb back or taking a risky route may cause irreversible damage to its components. In this sense, essentially any real-world system with effectively irrevocable decisions represents a multichain problem, showing the importance of algorithms specifically designed to tackle them.

The classical theory of average-reward multichain MDPs is well understood. Let $\mathcal{M} = (\mathcal{S}, \mathcal{A}, p, r)$ be a finite-state, finite-action MDP with state space $\S$, action constraint set $\A = \{\A(s)\}_{s \in \S}$, transition function $p$, and reward function $r$.  A classical approach to computing \emph{gain-optimal} (or \emph{average-reward optimal}) policies is via the \emph{average-reward optimality equations} (AOE), which characterize optimal solutions via a pair of nested equations, given below in vector notation (cf.\ \cite[Chap.\ 9]{Put94}):
\begin{equation} \label{eq-aoe}
   g = \max_{\pi \in \Pi_{\A}} \{ P_\pi g \}, \qquad   g + h = \max_{\pi \in \Pi_{\A_*}} \{ r_\pi + P_\pi h \}.
\end{equation}
Here, we solve for $(g, h) \in \R^{|\S|} \times \R^{|\S|}$; $\Pi_\A$ denotes the set of nonrandomized stationary policies satisfying the action constraint $\A$; and $\Pi_{\A_*}$ is defined analogously for an action constraint $\A_*  \subset \A$ induced by the first optimization problem. Specifically, $\A_* = \{\A_*(s)\}_{s \in \S}$, where $\A_*(s) \subset \A(s)$ consists of actions that attain the maximum in the first equation for state $s$. The AOE \eqref{eq-aoe} has infinitely many solutions, all of the form $(g^*, h)$, where $g^*$ is the optimal gain vector of the MDP. (For the solution structure of the $h$-component, see \cite{ScF78}.) For any AOE solution, any $\pi^* \in \Pi_{\A_*}$ that attains the maximum in the second equation of \eqref{eq-aoe} for all states is gain-optimal.

This nested optimization problem is difficult to solve with existing model-free RL methods. Much recent work reduces the average-reward problem to discounted problems via the vanishing discount factor approach; the resulting discounted problems can then be addressed by model-free RL or model-based approaches that leverage the classical MDP toolkit (see, e.g., \cite{GrP23,JBGZ24,ZuC24,ZuC25} and references therein). While \cite{FPL18} studies non-communicating MDPs without this reduction, it focuses on a single communicating class and is model-based. 

In contrast, rather than approximating the average-reward problem through a sequence of discounted problems, we tackle the multichain problem directly by exploiting the structure of the original problem itself. Our approach is based on Bather's unique hierarchical decomposition (UHD) of the state space~\cite{Bather3}, which we view as fundamental to solving multichain MDPs, independent of the particular algorithm built on top of it: it identifies which regions of the state space are capable of sustaining long-run behavior and are safe to explore with existing model-free methods. This decomposition partitions the state space into communicating subsystems (also introduced by Bather~\cite{Bather2}) and transient states, inducing a structured reformulation of the global decision problem into two types of subproblems (Schweitzer~\cite{Schweitzer1984}, Ohno~\cite{Ohno}): optimal decision problems within each subsystem, and a global decision problem formulated as an aggregated optimal stopping problem (OSP).

These subproblems have features that are amenable to model-free RL. The aggregated OSP is a transient total-reward problem, while each subsystem is communicating, so its associated AOE reduces to a single equation (the second equation in \eqref{eq-aoe} with $\A_* = \A$ for the subsystem).

We exploit this structure algorithmically. Our base algorithm is an asynchronous, essentially model-free RL analogue of Ohno's value-iteration algorithm for multichain MDPs. For each subsystem, we solve the average-reward problem using a stochastic relative value iteration method, RVI Q-learning, introduced in Abounadi, Bertsekas, and Borkar \cite{ABB01} for unichain MDPs and recently shown to converge in (weakly) communicating MDPs \cite{WYS24,YWS25a,YWS25b}. For the aggregated OSP, we employ classical Q-learning \cite{WatkinsQ,Tsi94}.

We show that the base algorithm converges almost surely (a.s.\@) to the optimal gain vector $g^*$ and produces gain-optimal policies after finite time (Theorems~\ref{thm:gain_identification1} and~\ref{thm:policy_attains_gstar1}).

A shortcoming of the base algorithm, inherited from Ohno's approach, is that it can exhibit poor transient performance. Moreover, although $g^*$ is obtained asymptotically, the AOE~\eqref{eq-aoe} is not fully solved: we only obtain partial solutions from the subsystem-level AOEs.

We propose two further algorithms that build on the base algorithm and address these issues. They are inspired by the seminal work of \citet{FeS81,FeS84} and \citet{Schweitzer1984} on value iteration methods for multichain MDPs. The first algorithm approximately solves the full AOE to obtain near gain-optimal policies by leveraging the sub-MDP AOE solutions computed by RVI Q-learning. The second algorithm, aimed at obtaining near bias-optimal policies, approximates the optimal bias function via discounted approximations and solves an induced average-reward multichain MDP using the base algorithm. We establish almost-sure convergence guarantees for both algorithms, together with eventual performance guarantees (Theorems~\ref{thm-1}--\ref{thm-2}, \ref{thm-b-conv}--\ref{thm-3-gen}), so that all three algorithms come with such guarantees. To our knowledge, these are the first essentially model-free average-reward RL algorithms for general multichain MDPs that do not rely on reductions to discounted problems.

Beyond these results, an important implication of this work is removing a key barrier to average-reward \emph{options} algorithms, a class of hierarchical RL algorithms with temporal abstractions: the restrictive unichain or communicating assumptions required in~\cite{WNS21b,WYS24,YWS25b} are no longer needed. More broadly, the sub-policies learned within each class of Bather's UHD are closely analogous to options~\cite{SPS99}, suggesting a principled basis for \emph{option discovery} and reuse. Moreover, since the decomposition depends only on transition structure, not rewards or transition probability values, it may also suit \emph{continual learning} settings where preferences evolve or transition probabilities vary over time but the environment's transition structure remains stable.

Another important application of our algorithms is to \emph{infinite-space MDPs}, which may themselves be obtained as discrete-time approximations of continuous-time stochastic control problems through the Markov chain approximation method \cite{KuD01}. Although our algorithms are developed for finite-space MDPs, they can be combined with quantization-based approximation methods, such as those studied in \cite{SLY18,KaY26}, to compute approximate solutions for infinite-space MDPs. Since our algorithms handle general multichain MDPs, applying them does not require strong ergodicity conditions on the original infinite-space MDP model, in contrast to most existing learning-based algorithms for the quantized MDPs.

Finally, although our algorithms assume knowledge of the MDP's transition graph, this assumption can in principle be removed by learning the transition structure through exploration. In Section~\ref{sec:conclusion}, we discuss this limitation and related directions for future work, including online discovery and maintenance of Bather's UHD.

The paper is organized as follows. Section~\ref{sec:prel} provides background on average-reward MDPs and a class of total-reward MDPs, and briefly describes the RL setting and assumptions considered in this paper. Related background material on RVI Q-learning for weakly communicating MDPs and total-reward Q-learning used by our algorithms is provided in Appendices~\ref{append-rvi-ql} and~\ref{append-ql-ssp}, respectively. Section~\ref{sec:baseline-alg} presents the base algorithm, Section~\ref{sec:alg2} the algorithm for approximate AOE solutions, and Section~\ref{sec:alg3} the algorithm aimed at obtaining near bias-optimal policies. These sections also state the main convergence results and establish the performance guarantees, with proof outlines for the convergence of the base algorithm included in Section~\ref{sec:baseline-alg}. Section~\ref{sec:main-results} presents preliminary experimental results, with additional experiments and details provided in Appendix~\ref{sebsec:app-experiments}. Detailed convergence proofs are given in Sections~\ref{sec:conv-proofs-base-alg} and~\ref{sec-conv-proofs}. Section~\ref{sec:conclusion} concludes the paper with a discussion of limitations and future work.

% Section 2: Preliminaries
\section{Preliminaries} \label{sec:prel}

We first recall some basic concepts for average-reward MDPs and introduce notation used throughout. We then briefly review a class of total-reward MDPs that we use in developing the average-reward RL algorithms proposed in this paper.

\subsection{Average-reward MDPs: basic definitions and notation}

We follow standard MDP and RL notation (see, e.g., \cite{Put94,SuB18}). For an MDP $\M=(\S,\A,p,r)$, we use $\pi$ or $\mu$ to denote a policy. We focus on stationary policies and, in most cases, nonrandomized stationary policies, since the latter class contains optimal policies for the various expected reward criteria considered in this paper \cite{Put94}. This class is denoted by $\Pi_\A := \{ \pi \mid \pi(s) \in \A(s), \forall \, s \in \S\}$. More generally, for any $\B \subset \A$ such that $\B(s) \not= \varnothing$ for all $s \in \S$, $\Pi_\B$ denotes the subset of nonrandomized stationary policies satisfying the action constraints specified by $\B$. (Here and in what follows, the set-inclusion notation $\subset$ is used to mean $\subseteq$.)

A stationary policy $\pi$ induces a Markov chain on $\S$ with transition probability matrix $P_\pi$ and expected one-stage reward vector $r_\pi$. The set of recurrent states (resp.\ transient states) of this Markov chain is denoted by $\Rec_\pi$ (resp.\ $\T_\pi$). A set $D \subset \S$ is \emph{closed} under $\pi$ if, starting from any $s \in D$, the chain remains in $D$ with probability one. 

The \emph{gain} (or average reward) of a stationary policy $\pi$ from state $s$ is 
\[
g_\pi(s):=\lim_{N\to\infty}\frac{1}{N}\E^\pi_s\left[\sum_{t=0}^{N-1}r(s_t,a_t)\right],
\]
where $\E^\pi_s$ denotes expectation under $\pi$ with $s_0=s$. The \emph{optimal gain} is $g^*(s)=\sup_\pi g_\pi(s)$. We say that $\pi$ is \emph{gain-optimal for state $s$} if $g_\pi(s)=g^*(s)$, and \emph{gain-optimal} if this holds for all $s \in \S$. 

The optimal gain function $g^*(\cdot)$ is constant on each recurrent class induced by a stationary policy, but need not be constant over $\S$ in the multichain setting. As noted earlier, $g^*$ is the unique solution to the $g$-component of the multichain AOE~\eqref{eq-aoe}. Its two nested equations take the following componentwise form: for all $s \in \S$,
\begin{align*}
   g^*(s) & = \max_{a \in \A(s)} \Big\{ \sum_{s' \in \S}  p(s' \mid s, a) g^*(s') \Big\}, \\
   g^*(s) + h(s) & = \max_{a \in \A_*(s)} \Big\{ r(s,a) + \sum_{s' \in \S}  p(s' \mid s, a) h(s') \Big\},
\end{align*}
where $\A_*(s) =  \argmax_{a \in \A(s)} \{ \sum_{s' \in \S}  p(s' \mid s, a) g^*(s') \}$.   
When $g^*$ is constant, independent of the initial state $s$, the first equation becomes vacuous, $\A_* = \A$, and the scalar $g^*$ is uniquely determined by the second equation.

A general structural condition ensuring a constant $g^*$ is that $\M$ is \emph{communicating}: for every $s, s' \in \S$, there exists a stationary policy under which $s$ is reachable from $s'$ with positive probability. More generally, a subset $C \subset \S$ is called a \emph{closed communicating class} if every pair of states in $C$ communicate in this sense and, starting from any state in $C$, the system remains within $C$ regardless of the policy; i.e., $C$ is closed under every stationary policy. On each such $C$, $g^*(\cdot)$ is constant. A communicating MDP has $\S$ as its single closed communicating class. Slightly more generally, an MDP is \emph{weakly communicating} if $\S$ consists of a single closed communicating class together with a possibly empty set of states that are transient under every stationary policy. In such an MDP, $g^*$ is also constant. This state communication structure forms the foundation for Bather's UHD for general, multichain MDPs (see Section~\ref{sec:baseline-alg}).

The AOE~\eqref{eq-aoe} admits infinitely many solutions of the form $(g^*, h)$. The solution structure for $h$ depends on the recurrence structure under stationary optimal policies \cite{ScF78}. In particular, $h$ need not be unique up to an additive constant, even in the communicating setting. A particular solution related to the bias-optimality criterion is the optimal bias function $h^*$. We introduce the bias of a policy and discuss bias optimality later (see Section~\ref{sec:alg3}), when these concepts are needed.
   
\medskip   
\noindent {\bf The RL setting and additional notation.} In RL, we deal with random state transitions and rewards generated according to $\M$ at feasible state-action pairs $(s, a) \in \A$. Throughout, the reward distribution associated with each feasible pair $(s, a)$ is assumed to have mean $r(s, a)$ and \emph{finite variance}. The transition and reward data may be obtained from a simulator or through interactions with the environment, by following one or multiple trajectories, possibly generated by multiple agents. In a multichain MDP, a single trajectory need not provide data from all relevant parts of the state space, so multiple trajectories may in general be needed. Our requirements on how such data are generated and used are formulated based on general asynchronous stochastic approximation theory; specific conditions will be stated later for the respective algorithms. 

We use $\ind\{\ldots\}$ for indicator functions, $\1_B$ for set indicators, and $\1$ for all-ones vectors of various dimensions.

\subsection{Total-reward MDPs of SSP type} \label{sec-background-ssp}

In general, solving a (undiscounted) total-reward MDP is more demanding than solving an average-reward MDP, as it amounts to solving the problem with respect to (w.r.t.\@) the bias-optimality criterion, which is stronger than gain-optimality \cite[Sec.\ 10.4]{Put94}. However, the average-reward framework need not be invoked for transient total-reward problems or for certain non-transient problems with termination under every policy or under optimal policies. We discuss a family of such problems, which has a well-established theory of optimality and solution methods, including Q-learning in the RL setting. We make use of this class of problems in both the formulation and analysis of our average-reward RL algorithms for multichain MDPs. To provide the background, we review here its model assumptions and optimality properties; for references, see \citet{BeT91} and \cite[Sec.\ 2.2]{BeT96}. Prior Q-learning results for these problems are reviewed in Appendix~\ref{append-ql-ssp}.
 
Consider an MDP $\hat \M := \big\{\hat \S \cup \{\Delta\}, \hat \A, \hat p, \hat r \big\}$ where $\Delta$ is a special termination state: it is absorbing and has zero reward. We call $\hat \S$ the \emph{effective state space} of $\hat \M$, and let $\hat \A$ denote the set of feasible state-action pairs for states in $\hat \S$. We call a policy \emph{proper} if $\Delta$ is reached with probability one from every initial state $s \in \hat \S$, and \emph{improper} otherwise. 
For any policy $\pi$, define its total-reward function by
$V_{\pi}(s) := \liminf_{N \to \infty} \E_s^\pi \left[\sum_{t=0}^N \hat r(s_t,a_t)\right]$, $s\in \hat \S$. Define the optimal value function of $\hat \M$ by $V^*(s) : = \sup_\pi V_{\pi}(s)$, $s \in \hat \S$. 

\begin{assumption}\label{cond:ssp} 
Consider $\hat \M$ and its nonrandomized stationary policies.
\begin{itemize}
\item[\rm (i)] There exists at least one proper policy. 
\item[\rm (ii)] Every improper policy $\pi$ satisfies $V_{\pi}(s) = - \infty$ for at least one state $s \in \hat \S$.
\end{itemize}
\end{assumption}

We call $\hat \M$ a \emph{total-reward MDP of SSP type} if $\hat \M$ satisfies Assumption~\ref{cond:ssp}. SSP stands for stochastic shortest path. In \cite{BeT91,BeT96}, which study total-cost MDPs, such MDPs are referred to as SSP problems.

\begin{theorem}[{\cite{BeT91}, \cite[Prop.\ 2.1]{BeT96}}] \label{thm:ssp-opt}
Let $\hat \M$ be a total-reward MDP of SSP type. Then the optimal value function $V^*$ is real-valued and is the unique real-valued function satisfying the optimality equation
\[  \textstyle{ V^*(s) = \max_{a \in \A(s)} \big\{ \hat r(s,a) + \sum_{s' \in \hat \S} \hat p(s' \mid s, a) V^*(s') \big\},} \quad s \in \hat \S.  \]
Furthermore, any stationary policy attaining the maximum in this equation for every state $s$ is proper and optimal.
\end{theorem}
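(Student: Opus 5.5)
The plan is the classical Bertsekas--Tsitsiklis argument for stochastic shortest path problems, written for rewards instead of costs. Write $T$ for the Bellman operator
\[
(TV)(s) = \max_{a}\Big\{\hat r(s,a) + \sum_{s'\in\hat\S}\hat p(s'\mid s,a)V(s')\Big\}.
\]
Write $T_\mu$ for the policy-evaluation operator of a stationary policy $\mu$. For a proper $\mu$, the substochastic matrix $\hat P_\mu$ restricted to $\hat\S$ satisfies $\hat P_\mu^N \to 0$. Hence $T_\mu$ is an $N$-stage contraction with a unique fixed point $V_\mu = (I-\hat P_\mu)^{-1}\hat r_\mu$, and $T_\mu^k V \to V_\mu$ for every real $V$.

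\textbf{Step 1 (policy improvement stays proper).} This is the key lemma and the step I expect to be the main obstacle, since it is the only place Assumption~\ref{cond:ssp}(ii) enters. Let $\mu$ be proper and let $\mu'$ be greedy for $V_\mu$, so that $T_{\mu'}V_\mu = TV_\mu \ge T_\mu V_\mu = V_\mu$. Iterating this inequality and using monotonicity gives
\[
V_\mu \le T_{\mu'}^N V_\mu = \E^{\mu'}\Big[\sum_{t<N}\hat r(s_t,a_t)\Big] + \hat P_{\mu'}^N V_\mu .
\]
The last term is bounded because $V_\mu$ is bounded. So if $\mu'$ were improper, this bound would contradict $V_{\mu'}(s)=-\infty$ for some $s$. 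One must be careful here because $V_{\mu'}$ is defined through a $\liminf$: the expected partial sums must be shown to diverge to $-\infty$, not merely have $\liminf=-\infty$. I would handle this through the Markov chain structure. Under the improper $\mu'$, some closed recurrent class avoids $\Delta$. The average reward on that class must be negative: if it were nonnegative, $V_{\mu'}$ would not be $-\infty$ there, while states that do not reach such a class have finite $V_{\mu'}$. A negative average reward forces the partial sums to $-\infty$, which gives the contradiction. Hence $\mu'$ is proper, and $V_{\mu'} = \lim_k T_{\mu'}^k V_\mu \ge V_\mu$.

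\textbf{Step 2 (existence).} Start from the proper policy guaranteed by Assumption~\ref{cond:ssp}(i) and run policy iteration. By Step 1 every iterate is proper and the values are monotone nondecreasing. There are finitely many stationary policies, so the iteration terminates at a proper $\mu^*$ with $TV_{\mu^*} = V_{\mu^*}$. This gives a real-valued solution of the optimality equation.

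\textbf{Step 3 (uniqueness and identification with $V^*$).} Let $V$ be any real solution of $V = TV$, and let $\mu$ be greedy for it. By the argument of Step 1 with $V$ in place of $V_\mu$, $\mu$ is proper. Then $V = T_\mu V$ gives $V = V_\mu$.

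Next take an arbitrary (possibly history-dependent, randomized) policy $\pi$. By monotonicity,
\[
V \ge \E^\pi\Big[\sum_{t<N}\hat r\Big] + \E^\pi\big[V(s_N)\big].
\]
If $\pi$ reaches $\Delta$ with probability one from every state, the last term tends to $0$ and $V \ge V_\pi$ follows. For general $\pi$ I would perturb the problem: add $-\epsilon$ to every reward. The perturbed problem still satisfies Assumption~\ref{cond:ssp}, and every improper policy now has value $-\infty$ from all states it fails to terminate from. Its solution $V_\epsilon$ (from Step 2) therefore dominates all policies. Finally, let $\epsilon\downarrow0$ and use that $V_\epsilon$ is the value of some proper policy, with finitely many candidates; this yields $V \ge V^*$.

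Combining, $V = V_\mu \le V^* \le V$. So every real solution equals $V^*$, which proves uniqueness and that $V^*$ is real-valued. The final claim follows the same way: any stationary policy attaining the maximum at every state is proper by Step 1's argument, and its value equals $V^*$, so it is optimal.
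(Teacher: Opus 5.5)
There is a genuine gap in Step 3, in the proof of $V \ge V^*$ for policies $\pi$ that fail to reach $\Delta$ with probability one. Your $-\epsilon$ perturbation points the wrong way. It gives $V^\epsilon_\pi \le V_\pi$, and, as you note yourself, $V^\epsilon_\pi(s) = -\infty$ for every $\epsilon>0$ whenever $\pi$ fails to terminate with positive probability from $s$. For exactly these policies, ``$V_\epsilon$ dominates all policies'' is vacuous and tells you nothing about $V_\pi(s)$. Letting $\epsilon \downarrow 0$ along a subsequence with a fixed proper $\bar\mu$ only identifies $\lim V_\epsilon$ with the value of a proper stationary policy, hence at most $V$; that is not an upper bound on $V_\pi$. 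So $V^* \le V$ is not established. With it go the real-valuedness of $V^*$ and the optimality of greedy policies against arbitrary history-dependent policies.

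Your uniqueness argument also currently routes through $V^*$, but that part is easy to repair without it. If $V' = TV'$ has proper greedy policy $\mu'$, then $V = TV \ge T_{\mu'}^k V \to V_{\mu'} = V'$, and symmetrically $V' \ge V$. Separately, the $\liminf$ worry in Step 1 is unnecessary. Since $V_\mu \le \E^{\mu'}[\sum_{t<N}\hat r] + \|V_\mu\|_\infty$ holds for every $N$, a subsequence along which the partial sums tend to $-\infty$ already gives the contradiction.

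The cited Bertsekas--Tsitsiklis proof closes the missing step via value iteration from $0$. Take $\mu$ proper and greedy for $V$, and let $W = V - \delta\,\E_\mu[\tau_\Delta]$ be the value of $\mu$ under rewards $\hat r - \delta$. Then $TW \ge T_\mu W = W + \delta\1 \ge W$, and $W \le 0$ once $\delta \ge \|V\|_\infty$. On the other side, $T(V + c\1) \le V + c\1$ for $c \ge 0$ by substochasticity. The monotone sequences $T^kW$ (nondecreasing, bounded above by $V$) and $T^k(V+c\1)$ (nonincreasing, bounded below by $V$) converge to real fixed points of $T$, hence to $V$ by uniqueness. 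Since $W \le 0 \le V + c\1$, this gives $T^k 0 \to V$. Because $\E^\pi_s[\sum_{t<N}\hat r(s_t,a_t)] \le (T^N 0)(s)$ for every policy $\pi$, you get $V_\pi \le V$.

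Alternatively, keep your perturbation idea but use $+\epsilon$:
\begin{itemize}
\item First, show from Assumption~\ref{cond:ssp}(ii) that every $\Delta$-avoiding recurrent class $C$ of every stationary policy has strictly negative average reward. Splice the policy on $C$ with a proper policy off $C$; if the average reward on $C$ were nonnegative, this improper policy would have finite value everywhere.
\item For $\epsilon$ below the minimum of these finitely many magnitudes, rewards $\hat r + \epsilon$ still give an SSP-type problem, so your Steps 1--2 produce a real solution $U$.
\item Then $\E^\pi_s[\sum_{t<N}\hat r(s_t,a_t)] \le 2\|U\|_\infty - \epsilon\,\E^\pi_s[N\wedge\tau_\Delta]$. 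Hence $V_\pi(s) = -\infty$ whenever $\E^\pi_s[\tau_\Delta] = \infty$.
\item All remaining policies terminate a.s.\ and are handled by your direct argument.
\end{itemize}
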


The optimality equation can equivalently be expressed in terms of the optimal state-action value function of $\hat \M$ (or optimal action-value function for short), defined by 
\[ 
\textstyle{ Q^*(s,a) : = \hat r(s,a) + \sum_{s' \in \hat \S} \hat p(s' \mid s, a) V^*(s')}, \quad (s,a) \in \hat \A.
\] 
Throughout the paper, we call a nonrandonmized stationary policy $\pi$ greedy w.r.t.\ a Q-function $Q$ if $\pi(s) \in \argmax_{a} Q(s,a)$ over the feasible actions at every state $s$.
Theorem~\ref{thm:ssp-opt} then implies that any stationary policy greedy w.r.t.\ $Q^*$ is proper and optimal for $\hat \M$. Under the standard stepsize and asynchrony conditions of \cite{Tsi94}, classical Q-learning applied to an SSP-type total-reward MDP converges a.s.\ to $Q^*$ \cite{Tsi94,YuB13} (cf.\ Appendix~\ref{append-ql-ssp}).

For average-reward RL in multichain MDPs, our base algorithm, after decomposing the problem, makes use of Q-learning for a transient total-reward MDP, which is an SSP-type MDP with all policies proper. A subsequent RL algorithm we propose involves Q-learning in a more general SSP-type total-reward MDP, which may have improper policies.

% Section 3: Base algorithm
\section{Base algorithm}
\label{sec:baseline-alg}

\begin{algorithm}[t]
\caption{Method for obtaining optimal gain and gain-optimal policy (Conceptual Description)}
\label{alg:mf-multichain}
\begin{algorithmic}[1]
\myPhase{1} Apply Bather's UHD to $\mathcal M$
    \InPhase{Obtain communicating classes $\{C_{\ell,k}\}$ with restricted action sets $\mathcal A_{\ell,k}$ and transient sets $\{\mathcal T_\ell\}$}
    \InPhase{Define communicating sub-MDPs $\mathcal M_{\ell,k}=(C_{\ell,k},\mathcal A_{\ell,k},p,r)$}
\EndPhase
\For{$t = 0,1,2,\dots$} \Comment{Phases 2--4 run concurrently}
\myPhase{2} In parallel, run RVI Q-learning on each $\mathcal M_{\ell,k}$
\InPhase{Obtain estimates $r^{(t)}_{\ell,k}$ and $\pi^{(t)}_{\ell,k}$ for aggregated OSP $\bar{\mathcal{M}}^{(t)} = (\bar{\mathcal S}, \bar{\mathcal A}, \bar p, \bar r^{(t)})$}
\EndPhase
\myPhase{3} Run total-reward Q-learning on the OSP induced by $\bar{\mathcal{M}}^{(t)}$
\InPhase{Obtain iterates $\bar{Q}^{(t)}$ and policy $\bar{\pi}^{(t)}$ specifying \emph{stop} or \emph{exit} through $(s^{*},a^*)$ for each $\bar{s}_{\ell,k}$}
\EndPhase
\myPhase{4} Construct policy $\pi^{(t)}$ for original MDP $\mathcal M$
\InPhase{set $\pi^{(t)}(s)\gets\bar{\pi}^{(t)}(s)$ for all $s\in\mathcal T$}
\ForInPhase{all $(\ell,k)$ in parallel}
\IfInFor{$\bar\pi^{(t)}(\bar s_{\ell,k}) = a_{\ell,k}^{\mathrm{stop}}$ then}
\InIfInFor{set $\pi^{(t)}(s)\gets \pi^{(t)}_{\ell,k}(s)$ for all $s\in C_{\ell,k}$}
\ElseIfInFor{$\bar\pi^{(t)}(\bar s_{\ell,k}) = (s^{*},a^*)$ then}
\InIfInFor{$\pi^{\mathrm{reach}}_{\ell,k}\gets \textsc{ReachPolicy}(\mathcal{M}_{\ell,k},s^{*})$}
\InIfInFor{Set $\pi^{(t)}(s)\gets \pi^{\mathrm{reach}}_{\ell,k}(s)$ for $s\in C_{\ell,k}\setminus\{s^{*}\}$}
\InIfInFor{Set $\pi^{(t)}(s^{*})\gets a^*$}
\EndIfEndForEndPhase
\EndFor
\end{algorithmic}
\end{algorithm}

We first present a largely model-free algorithm for computing the optimal gain vector and gain-optimal policies in a multichain MDP. The method builds on Bather's UHD~\cite{Bather3}, and subsequent value-iteration approaches of Schweitzer~\cite{Schweitzer1984} and Ohno~\cite{Ohno}.

The algorithm proceeds in four phases. A summary is provided in Algorithm~\ref{alg:mf-multichain}. Phase~1 decomposes the MDP into communicating subsystems. Phase~2 estimates the optimal gain within each subsystem via RVI Q-learning. Phase~3 constructs an aggregated OSP to determine whether each class should be retained or made transient. Phase~4 lifts the aggregated decisions to a policy on the original MDP. Phases 2–4 operate concurrently.

The algorithm assumes knowledge of the \emph{transition graph}:
for each state--action pair, the set of possible successor
states, but not their transition probabilities.
This structural information is used to construct the
decomposition in Phase~1; the learning procedures in
Phases~2--4 do not
require any model knowledge.
We discuss the practical relevance of this assumption
and possible approaches to relaxing it in
Section~\ref{sec:conclusion}.

\subsection{The four phases of the algorithm}

\subsubsection{Phase 1: Bather's UHD}
\label{subsec:phase1}

Bather's UHD~\cite{Bather3,Schweitzer1984} is a purely structural decomposition, depending only on which transitions are possible, not on their underlying probabilities. As such, it can be computed from the transition graph introduced above. We apply this decomposition to partition $\mathcal{S}$ into communicating subsystems and transient states.

The decomposition proceeds hierarchically, level by level. At level $0$, we identify all closed communicating classes of $\mathcal{S}$ under the full action constraint set $\mathcal{A}$, forming $\mathcal{C}_0$, which is always non-empty~\cite{Bather3}. These classes can be identified, for example, using the
algorithm of Fox and Landi~\cite{FoxLandi}. The transient set $\mathcal T_0$ consists of the states
outside $\mathcal C_0$ from which $\mathcal C_0$ is reached
almost surely under every policy. The remaining states 
form $\mathcal{S}_1 = \mathcal{S} \setminus (\mathcal{C}_0 \cup \mathcal{T}_0)$, 
on which the procedure repeats with a restricted action set that excludes 
transitions out of $\mathcal{S}_1$, yielding $\mathcal{C}_1$, $\mathcal{T}_1$, 
and so on. The process terminates in finitely many levels 
$\ell = 0, 1, \ldots, L$, producing a partition 
\[
\mathcal{S} = \mathcal{C}_0 \cup \mathcal{T}_0 \cup \mathcal{C}_1 \cup \mathcal{T}_1 
\cup \cdots \cup \mathcal{C}_L \cup \mathcal{T}_L,
\]
where $\mathcal{C}_\ell = \{{C}_{\ell,k} : k = 1, \ldots, n(\ell)\}$ is a 
collection of disjoint communicating classes and $\mathcal{T}_\ell$ is a set of 
transient states at level $\ell$. Let $\mathcal{T} = \bigcup_\ell \mathcal{T}_\ell$ 
denote the set of all transient states.

For each class ${C}_{\ell,k}$, define the class-preserving action set
$\mathcal{A}_{\ell,k}(s) \subseteq \mathcal{A}(s), s \in {C}_{\ell,k}$,
consisting of actions whose transitions remain within ${C}_{\ell,k}$. The corresponding exit actions $\mathcal{E}_{\ell,k}(s) := \mathcal{A}(s) \setminus \mathcal{A}_{\ell,k}(s)$ are those whose transitions can reach at least one state at a strictly lower level of the decomposition. We define the exit state--action pairs by $\mathcal{E}_{\ell,k} \coloneqq \{(s,a) : s \in C_{\ell,k},\ a \in \mathcal{E}_{\ell,k}(s)\}$.

Under $\mathcal{A}_{\ell,k}$, 
each ${C}_{\ell,k}$ forms a communicating MDP, so the optimal gain within 
the class is constant across its states. Each class induces a sub-MDP 
\[
\mathcal{M}_{\ell,k} = ({C}_{\ell,k},\, \mathcal{A}_{\ell,k},\, p,\, r).
\]
The decomposition satisfies two key properties, which ensure the UHD uniquely identifies all regions capable of sustaining long-run behavior.

\begin{lemma}[Properties of Bather's UHD \cite{Bather3,Schweitzer1984}]
\label{lem:uhd} \hfill \\
{\rm (i)} Every recurrent class induced by any stationary policy is contained in a unique ${C}_{\ell,k}$.\\
{\rm (ii)} For each ${C}_{\ell,k}$, there exists a stationary policy under which ${C}_{\ell,k}$ is recurrent.
\end{lemma}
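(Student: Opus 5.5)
The plan is to argue level by level along the construction of the decomposition, by induction on $\ell$. I will use the following precise description. At level $\ell$ we have a subset $\S_\ell$ (with $\S_0=\S$) and the action constraint $\A^{(\ell)}(s)=\{a\in\A(s): p(\S_\ell\mid s,a)=1\}$. Then $\C_\ell$ is the collection of closed communicating classes of the sub-MDP $(\S_\ell,\A^{(\ell)})$, and $\T_\ell$ is the set of states of $\S_\ell\setminus\bigcup_k C_{\ell,k}$ from which $\bigcup_k C_{\ell,k}$ is reached a.s.\ under every policy of that sub-MDP. A preliminary fact I need is that $\A^{(\ell)}(s)\neq\varnothing$ for every $s\in\S_\ell$, so that the recursion is well defined. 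I would prove this from the definition of $\T_{\ell-1}$. If $s\in\S_\ell$, then $s\notin\T_{\ell-1}$, so some policy of level $\ell-1$ avoids $\bigcup_k C_{\ell-1,k}$ with positive probability from $s$. I would then argue, via the standard characterization of the set of states from which a target set is reached almost surely under all policies (a greatest fixed point on the transition graph), that $\S_\ell$ is closed for some action at each of its states.

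For (ii), fix $C_{\ell,k}$. It is closed and communicating under $\A_{\ell,k}\subset\A^{(\ell)}$. I would take a randomized stationary policy that uses all actions of $\A_{\ell,k}(s)$ with positive probability at each $s\in C_{\ell,k}$. Under it, the induced chain on $C_{\ell,k}$ is irreducible, because every pair of states communicates via some action sequence and all those actions are used. The chain is also closed, so $C_{\ell,k}$ is a single recurrent class. Outside $C_{\ell,k}$ the policy can be arbitrary. If a nonrandomized policy is wanted, I would instead use a spanning in-tree toward a fixed state $s_0$ together with an out-path from $s_0$. A cleaner route is to note that the paper's claim allows stationary (randomized) policies, so the randomized construction suffices.

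For (i), let $R$ be a recurrent class of a stationary policy $\pi$. Let $\ell$ be the largest level with $R\subset\S_\ell$; one always exists since $\S_0=\S$. Because $R$ is closed under $\pi$, the actions $\pi$ uses on $R$ keep the chain in $R\subset\S_\ell$, so they belong to $\A^{(\ell)}$. Consider $\pi$ restricted to $R$ and extended to an admissible policy of the level-$\ell$ sub-MDP. From states of $R$, the target $\bigcup_k C_{\ell,k}$ is then never reached unless $R$ meets it, so $R\cap\T_\ell=\varnothing$ unless $R$ intersects some $C_{\ell,k}$. If $R$ intersects $C_{\ell,k}$, then $R\subset C_{\ell,k}$, because $C_{\ell,k}$ is closed under every admissible action and $R$ is a single communicating class of $\pi$. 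If $R$ meets no class, then $R$ avoids $\T_\ell$ as well, so $R\subset\S_{\ell+1}$, contradicting the maximality of $\ell$. The potential gap is showing that $R\cap\T_\ell\ne\varnothing$ forces $R$ to meet $\bigcup_k C_{\ell,k}$: a state of $\T_\ell$ in $R$ reaches the classes a.s.\ under $\pi$, and recurrence of $R$ then keeps those visited states inside $R$. Uniqueness of the containing class is immediate because the $C_{\ell,k}$ are disjoint.

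The main obstacle is the well-definedness and termination of the recursion, namely that $\A^{(\ell)}$ is nonempty on $\S_\ell$ and that $\C_\ell\neq\varnothing$ whenever $\S_\ell\ne\varnothing$. I would handle this with the greatest-fixed-point characterization of $\S\setminus(\C_0\cup\T_0)$ as the largest set on which some action at each state avoids $\C_0$ forever. Nonemptiness of $\C_\ell$ follows because any finite MDP has a closed communicating class: take a minimal closed set under the reachability preorder of the graph.
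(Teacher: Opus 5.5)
\textbf{The gap: the recursion you formalize is not well defined.} The step that fails is the one you yourself flag as the main obstacle. You claim that $\A^{(\ell)}(s)\neq\varnothing$ on $\S_\ell$, i.e., that $\S\setminus(\C_0\cup\T_0)$ is ``the largest set on which some action at each state avoids $\C_0$ forever.'' With $\T_0$ defined by almost-sure reachability, as in your formalization, $\S\setminus(\C_0\cup\T_0)$ is the set of states from which some policy avoids $\C_0$ with \emph{positive} probability. The greatest fixed point you invoke is instead the set from which $\C_0$ can be avoided with probability \emph{one}, and the first set can be strictly larger.

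Example: take states $c,w,s$. Let $c$ be absorbing; let $w$ have a self-loop action and an action moving to $c$; let $s$ have a single action moving to $c$ or to $w$ with probability $1/2$ each. Then $\C_0=\{\{c\}\}$, and $\T_0=\varnothing$, since from $s$ the policy that loops at $w$ avoids $c$ with probability $1/2$. So $\S_1=\{w,s\}$, yet the only action at $s$ leaves $\S_1$ with probability $1/2$, giving $\A^{(1)}(s)=\varnothing$. Consequently the level-$1$ sub-MDP is not an MDP, $\T_1$ (defined via its policies) is meaningless, and your extension of $\pi|_R$ to an admissible level-$\ell$ policy need not exist. Nor can $s$ be placed in any class: it is transient under every stationary policy, so (ii) would fail for any $C_{\ell,k}$ containing it. The paper gives no proof here (it cites Bather and Schweitzer). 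Its verbal summary, read literally, does not define a valid recursion, and your patch for this is false.

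\textbf{How to repair it.} Define the next level by the greatest fixed point itself. Let $W_\ell$ be the largest subset of $\S_\ell\setminus\bigcup_k C_{\ell,k}$ in which every state has an action of $\A^{(\ell)}$ keeping the process in $W_\ell$ with probability one. Set $\S_{\ell+1}:=W_\ell$ and $\T_\ell:=\S_\ell\setminus(\bigcup_k C_{\ell,k}\cup W_\ell)$; in the example this gives $\S_1=\{w\}$ and $s\in\T_0$. Then:
\begin{itemize}
\item $\A^{(\ell+1)}$ is nonempty on $\S_{\ell+1}$ by construction, and your minimal-closed-set argument gives $\C_{\ell+1}\neq\varnothing$, so the recursion terminates.
\item Part (i) becomes simpler. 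If a recurrent class $R\subset\S_\ell$ of $\pi$ meets no $C_{\ell,k}$, every action in the support of $\pi$ on $R$ keeps the process in $R$. By maximality, $R\subset W_\ell=\S_{\ell+1}$, contradicting the choice of $\ell$. No almost-sure reachability argument is needed.
\item By (i), every state of $\bigcup_\ell\T_\ell$ is transient under every stationary policy.
\end{itemize}

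\textbf{Part (ii).} Your proof via the randomized policy that uses all of $\A_{\ell,k}(s)$ is correct. Drop the nonrandomized alternative, because it is impossible in general. Take $C=\{1,2,3\}$, where $1$ can move deterministically to $2$ or to $3$, and $2,3$ each return to $1$. Every nonrandomized stationary policy leaves $2$ or $3$ transient, so randomization is necessary, not just cleaner.
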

This decomposition reduces the global problem to choosing, for each 
${C}_{\ell,k}$, whether to remain within it or exit toward other regions, 
forming the basis of the aggregated decision problem in Phases 2--4.

\subsubsection{Phase 2: RVI Q-learning}

Phase~2 runs RVI Q-learning~\cite{ABB01,YWS25b} on each
communicating sub-MDP $\mathcal M_{\ell,k}$ in parallel,
using the asynchronous scheme described in
Appendix~\ref{append-rvi-ql}. For each learner, choose a stepsize sequence
$\{\alpha_{\ell,k}^{(n)}\}_{n\geq 0}$ satisfying
Assumption~\ref{cond-ss} and a reference function $f_{\ell,k}$
satisfying Assumption~\ref{cond-f}. Let $Y_{\ell,k}^{(t)}$ denote the set of class-preserving
state--action pairs selected for update at iteration $t$,
with each learner's update schedule satisfying
Assumption~\ref{cond-us}.

For each $(s,a)\in Y_{\ell,k}^{(t)}$, observe a successor and reward $(S_{t+1}^{sa},R_{t+1}^{sa})$ generated by taking action $a$ in state $s$, and set
\[
 Q_{\ell,k}^{(t+1)}(s,a)
 =Q_{\ell,k}^{(t)}(s,a)
 +\alpha_{\ell,k}^{(\nu(t,(s,a)))}
 \Bigl[
 R_{t+1}^{sa}-f_{\ell, k}(Q_{\ell,k}^{(t)})
 +\max_{a'\in\mathcal A_{\ell,k}(S_{t+1}^{sa})}
 Q_{\ell,k}^{(t)}(S_{t+1}^{sa},a')
 -Q_{\ell,k}^{(t)}(s,a)
 \Bigr].
\]
Here, $\nu(t,(s,a))$ counts the updates of $(s,a)$
prior to iteration $t$ in this learner. Components not selected for update retain their values.
This is the classwise version of~\eqref{eq: q-ite}.

The reference function $f_{\ell, k}$ supplies the gain
estimates, which satisfy
\[
 r_{\ell,k}^{(t)}
 :=f_{\ell, k}(Q_{\ell,k}^{(t)})
 \to r_{\ell,k}^*
 \qquad\text{a.s.},
\]
where $r_{\ell,k}^*$ is the optimal gain of
$\mathcal M_{\ell,k}$.
Moreover, almost surely, for all sufficiently large $t$,
every policy $\pi_{\ell,k}^{(t)}$ greedy w.r.t.
$Q_{\ell,k}^{(t)}$ is gain-optimal for $\mathcal M_{\ell,k}$.
These guarantees follow from~\cite[Thm.~3.1]{YWS25b}
and are formalized for Phase~2 in
Lemma~\ref{lem:phase2_rvi_convergence} of
Section~\ref{sec:conv-proofs-base-alg}.

The associated relative-value iterates
$v_{\ell,k}^{(t)}(s)
:=\max_{a\in\mathcal A_{\ell,k}(s)}Q_{\ell,k}^{(t)}(s,a)$
also converge a.s. to a compact subset of relative-value
vectors that, together with $r_{\ell,k}^*$, solve the
AOE for $\mathcal M_{\ell,k}$; see
Appendix~\ref{append-rvi-ql}.
Convergence to a single relative-value vector is not
required for the base algorithm.

Classes at level zero cannot be exited, so their
restricted optimal gains coincide with their optimal
gains in the original MDP.
For higher-level classes, the restricted gains describe
what can be achieved by remaining within the class.
Phase~3 incorporates the possibility of leaving these
classes to determine the optimal gains throughout
the entire MDP.

\subsubsection{Phase 3: aggregated OSP}
\label{subsec:phase3}

Phase~3 uses the sub-MDP gain estimates from Phase~2
to determine whether to remain within each class or exit it, and which actions to take at transient states. These decisions are formulated as an OSP over an
aggregated MDP, where stopping represents remaining
within a class and following an optimal policy for the restricted sub-MDP.
We use the aggregated OSP formulation implicit in
Schweitzer~\cite{Schweitzer1984} and made explicit by
Ohno~\cite{Ohno}, solving it here through model-free learning.

We first specify the limiting aggregated MDP
$\bar{\mathcal M}
=(\bar{\mathcal S},\bar{\mathcal A},\bar p,\bar r)$
using the exact sub-MDP gains $r_{\ell,k}^*$. During learning, the gains are replaced by the current
estimates $r_{\ell,k}^{(t)}$ from Phase~2, allowing the two phases to operate concurrently. The transition probabilities $\bar p$ need not be
computed or stored, since aggregated successors can
be obtained from transition observations in the
original MDP, as explained in the learning scheme below.

\paragraph{States and actions.}Let
$\mathcal C=\{C_{\ell,k}:\ell=0,\ldots,L,\ k=1,\ldots,n(\ell)\}$.
Represent each class $C_{\ell,k}$ by a single state $\bar s_{\ell,k}$, retain all transient states, and introduce an absorbing terminal state $\Delta$. The aggregated state space is
\[
 \bar{\mathcal{S}}
 =\{\bar s_{\ell,k}:C_{\ell,k}\in\mathcal C\}
   \cup \mathcal T\cup\{\Delta\}.
\]
Define the aggregation map
$\phi:\mathcal S\to\bar{\mathcal{S}}\setminus\{\Delta\}$ by
\[
 \phi(s)=
 \begin{cases}
  \bar s_{\ell,k},&s\in C_{\ell,k};\\
  s,&s\in \mathcal T.
 \end{cases}
\]
At each class state $\bar s_{\ell,k}$, the available actions consist
of a stopping action $a_{\ell,k}^{\mathrm{stop}}$ and the exit
state--action pairs in $\mathcal E_{\ell,k}$. Thus,
\[
 \bar {\mathcal{A}}(\bar s)=
 \begin{cases}
  \{a_{\ell,k}^{\mathrm{stop}}\}\cup \mathcal E_{\ell,k},
       &\bar s=\bar s_{\ell,k};\\
  \mathcal A(s),&\bar s=s\in \mathcal T;\\
  \{a_\Delta\},&\bar s=\Delta,
 \end{cases}
\]
where $a_\Delta$ is a dummy terminal action.
An exit pair $(s,a)\in \mathcal E_{\ell,k}$ specifies both the underlying state at which to act and the original action to execute. At level zero, $\mathcal E_{0,k}=\varnothing$, so stopping is the only
available action.

\paragraph{Transitions and rewards.}
For an exit action $\bar{a} = (s,a)\in \mathcal E_{\ell,k}$, define
\begin{equation}
 \bar p(\bar y\mid\bar s_{\ell,k},\bar{a})
 =
 \sum_{y\in \mathcal S:\,\phi(y)=\bar y}p(y\mid s,a),
 \qquad \bar y\in\bar{\mathcal{S}} \setminus\{\Delta\}.
 \label{eq:base-aggregated-exit}
\end{equation}
For $s\in \mathcal T$ and $a\in \mathcal A(s)$, define similarly
\begin{equation}
 \bar p(\bar y\mid s,a)
 =
 \sum_{y\in \mathcal S:\,\phi(y)=\bar y}p(y\mid s,a),
 \qquad \bar y\in\bar{\mathcal{S}}\setminus\{\Delta\}.
 \label{eq:base-aggregated-transient}
\end{equation}
For stopping and terminal actions, set
\[
 \bar p(\Delta\mid\bar s_{\ell,k},
                    a_{\ell,k}^{\mathrm{stop}})=1,
 \qquad
 \bar p(\Delta\mid\Delta,a_\Delta)=1.
\]
The rewards are
\begin{equation}
 \bar r(\bar s,\bar a)=
 \begin{cases}
  r_{\ell,k}^*,
    &\bar s=\bar s_{\ell,k},\
     \bar a=a_{\ell,k}^{\mathrm{stop}};\\
  0,&\text{otherwise}.
 \end{cases}
 \label{eq:base-aggregated-reward}
\end{equation}
The stopping reward represents the optimal gain attainable by
remaining in the corresponding class. Since all other rewards
are zero, the total reward of the aggregated process is precisely
the reward received upon stopping. Maximizing expected
total reward therefore amounts to maximizing the expected gain
of the selected class. Under this formulation~\cite{Schweitzer1984,Ohno}, the optimal OSP values coincide with the original optimal gains through
the aggregation map $\phi$
(Lemma~\ref{lem:agg-upper-bound}).

The aggregated OSP is a transient total-reward problem
of the type described in Section~\ref{sec-background-ssp}:
every stationary policy is proper
(Lemma~\ref{lem:osp_properness}).
We include proofs of these properties for completeness
in Section~\ref{sec:conv-proofs-base-alg}.

\paragraph{Learning and gain estimates.}
During learning, Phase~3 uses the aggregated OSP
$\bar{\mathcal M}^{(t)}
=(\bar{\mathcal S},\bar{\mathcal A},\bar p,\bar r^{(t)})$,
where $\bar r^{(t)}$ is obtained
from~\eqref{eq:base-aggregated-reward} by replacing
$r_{\ell,k}^*$ with $r_{\ell,k}^{(t)}$.
We apply the asynchronous total-reward Q-learning
scheme of Appendix~\ref{append-ql-ssp}, using transition
observations from the original MDP and the current
sub-MDP gain estimates.
The aggregated transition probabilities need not be
computed or stored; instead, we can use transition data generated from the original MDP.

Recall that a retained transient state $\bar s=s$
has access to its original actions $a\in\mathcal A(s)$,
whereas a class state $\bar s_{\ell,k}$ has access to exit
actions $\bar a=(s,a)\in\mathcal E_{\ell,k}$ and
a stopping action $a_{\ell,k}^{\mathrm{stop}}$.
Thus, each non-stopping aggregated state--action pair
specifies an original pair $(s,a)$ from which a
transition can be observed.

At iteration $t$, select a nonempty set $\bar Y_t$
of nonterminal aggregated state--action pairs.
For each $(\bar s,\bar a)\in\bar Y_t$, construct a
successor and reward
$(\bar S_{t+1}^{\bar s\bar a},
  \bar R_{t+1}^{\bar s\bar a})$ as follows:
\begin{itemize}
\item
\emph{Non-stopping actions.}
For the corresponding original pair $(s,a)$,
observe a successor $s'$ generated by taking
action $a$ in state $s$ of the original MDP, and set
\[
\bar S_{t+1}^{\bar s\bar a}=\phi(s'),
\qquad
\bar R_{t+1}^{\bar s\bar a}=0.
\]

\item
\emph{Stopping actions.}
For a stopping action $\bar a
= a_{\ell,k}^{\mathrm{stop}}$,
no transition observation is needed. Set
\[
\bar S_{t+1}^{\bar s\bar a}=\Delta,
\qquad
\bar R_{t+1}^{\bar s\bar a}=r_{\ell,k}^{(t)}.
\]
\end{itemize}
Using these successors and rewards, update
\[
 \bar Q^{(t+1)}(\bar s,\bar a)
 =\bar Q^{(t)}(\bar s,\bar a)
 +\bar\alpha_t(\bar s,\bar a)
 \Bigl[
 \bar R_{t+1}^{\bar s\bar a}
 +\ind\{\bar S_{t+1}^{\bar s\bar a}\ne\Delta\}
 \cdot\!\max_{\bar a'\in\bar{\mathcal A}(\bar S_{t+1}^{\bar s\bar a})}
 \bar Q^{(t)}(\bar S_{t+1}^{\bar s\bar a},\bar a')
 -\bar Q^{(t)}(\bar s,\bar a)
 \Bigr].
\]
Here $\bar\alpha_t(\bar s,\bar a)$ is the stepsize
for the selected component.
Components not selected for update retain their values,
and $\bar Q^{(t)}(\Delta,a_\Delta)=0$ for every $t$.

Under the sampling scheme above and the asynchrony and
stepsize conditions in Assumption~\ref{cond-ssp-ql},
Lemma~\ref{lem: phase3_conv} establishes
\[
 \bar Q^{(t)}\to\bar Q^*
 \qquad\text{a.s.},
\]
where $\bar Q^*$ is the optimal action-value function of the
limiting aggregated OSP.
The lemma combines properness of all stationary policies
with the Phase~2 convergence guarantees, which ensure
that the errors in the stopping rewards vanish almost surely.

Define the gain estimates on the original state space by
\begin{equation}
 g^{(t)}(s)
 :=
 \max_{\bar a\in\bar{\mathcal{A}}(\phi(s))}
       \bar Q^{(t)}(\phi(s),\bar a),
 \qquad s\in \mathcal S.
 \label{eq:base-gain-estimate}
\end{equation}
By Lemmas~\ref{lem: solve_agg} and~\ref{lem:agg-upper-bound}, the corresponding
maxima of $\bar Q^*$ equal $g^*(s)$.
Consequently, $g^{(t)}\to g^*$ a.s.

Choose an aggregated policy greedily:
\[
 \bar\pi^{(t)}(\bar s)
 \in
 \mathop{\rm arg\,max}_{\bar a\in\bar{\mathcal{A}}(\bar s)}
       \bar Q^{(t)}(\bar s,\bar a),
 \qquad \bar s\ne\Delta.
\] 
The aggregated Q-values incorporate the Phase~2
sub-MDP gain estimates $r_{\ell,k}^{(t)}$ through
the stopping rewards.
The resulting greedy policy specifies whether to
remain in each class $C_{\ell,k}$ (stop) or exit
through some $(s^*,a^*)\in\mathcal E_{\ell,k}$,
as well as which actions to take at retained
transient states.
Phase~4 lifts these aggregated decisions to a policy
on the original state space.

\subsubsection{Phase 4: deriving a policy for the original MDP}
\label{subsec:phase4}

Phase 4 lifts the aggregated policy $\bar\pi^{(t)}$ to a stationary
policy $\pi^{(t)}$ on $\mathcal M$.
Since transient states and their action sets are retained in the
aggregated MDP, their decisions can be used directly:
\[
 \pi^{(t)}(s)=\bar\pi^{(t)}(s),
 \qquad s\in\mathcal T.
\]
For a class $C_{\ell,k}$, if
$\bar\pi^{(t)}(\bar s_{\ell,k})=a_{\ell,k}^{\mathrm{stop}}$,
the aggregated policy indicates it is currently best to remain within the class, receiving reward $r_{\ell, k}^{(t)}$.
We implement this decision using the current sub-MDP policy
$\pi_{\ell,k}^{(t)}$ from Phase 2, setting
$\pi^{(t)}(s)=\pi_{\ell,k}^{(t)}(s)$ for all $s\in C_{\ell,k}$.

If instead
$\bar\pi^{(t)}(\bar s_{\ell,k})=(s^*,a^*)$,
the aggregated policy indicates it is currently best to exit the class through $(s^*, a^*)$. Implementing this decision requires first reaching $s^*$ within the class and then executing $a^*$. Let $\textsc{ReachPolicy}(\mathcal M_{\ell,k},s^*)$ return a
stationary nonrandomized policy $\pi_{\ell,k}^{\mathrm{reach}}$ that uses
only class-preserving actions and reaches $s^*$ a.s. from
every state in $C_{\ell,k}$.
We refer to this requirement as the \emph{reaching property}.
Such a policy exists because $\mathcal M_{\ell,k}$ is
communicating. Set
\[
 \pi^{(t)}(s)=
 \begin{cases}
  \pi_{\ell,k}^{\mathrm{reach}}(s),
      &s\in C_{\ell,k}\setminus\{s^*\};\\
  a^*,&s=s^*.
 \end{cases}
\]
Thus, the process follows the reaching policy until it reaches
$s^*$ and then executes $a^*$.

The base algorithm requires only the reaching property;
the following example describes two possible implementations.

\begin{example}[Implementing $\textsc{ReachPolicy}$]
\label{ex:base-reach-policy}

We construct the reaching policy by solving an SSP problem on the restricted subsystem $\mathcal M_{\ell,k}$.
Make $s^*$ absorbing and terminal, and assign reward $-1$
to every transition except those entering $s^*$, which receive reward $0$. Since $\mathcal M_{\ell,k}$ is communicating, a proper policy exists. Moreover, every improper stationary policy incurs a total reward of $-\infty$. The SSP convergence results in Appendix~\ref{append-ql-ssp} therefore apply under the stepsize and sampling conditions stated there:
total-reward Q-learning converges a.s., and its greedy policies eventually satisfy the reaching property while minimizing expected hitting time.

If stationary randomized policies are allowed in the
lifting construction, another option is to select
uniformly among the class-preserving actions at each
state. Communication ensures that this policy also satisfies the reaching property,
although its expected hitting time may be substantially larger.
We use the SSP construction in our implementation.

The transition graph available in Phase 1 can be used to verify whether a computed policy satisfies the reaching property. Thus, for a selected target, total-reward Q-learning can be continued until its greedy policy is verified to satisfy this property.

Alternatively, eventual validity can be ensured without using the transition graph for verification by maintaining a separate SSP learner for each class--target pair. Each learner retains its iterates and stepsize history, allowing learning to resume whenever its target is selected again. If every learner used infinitely often receives updates satisfying the SSP convergence conditions, the resulting policies eventually satisfy the reaching property a.s., even if the targets continue to change.
\hfill $\diamond$
\end{example}

Fix an iteration $t$ at which all reaching policies
used in the construction of $\pi^{(t)}$ satisfy the reaching property.
For every initial state $s$ and class $C_{\ell,k}$,
Lemma~\ref{lem:equiv-class} gives
\[
\mathbb P_{\pi^{(t)}}\!
\left(
\text{absorbed in }C_{\ell,k}
\mid s_0=s
\right)
=
\mathbb P_{\bar\pi^{(t)}}\!
\left(
\text{stops at }\bar s_{\ell,k}
\mid \bar s_0=\phi(s)
\right).
\]
Thus, the process under $\pi^{(t)}$ eventually remains in each $C_{\ell,k}$ with the same probability that
$\bar\pi^{(t)}$ stops at $\bar s_{\ell,k}$.
Once the Phase~2 sub-MDP policies used in stopping
classes are gain-optimal, the gain of $\pi^{(t)}$
equals the expected stopping reward of
$\bar\pi^{(t)}$ in the limiting OSP.
An optimal aggregated policy then yields a
gain-optimal policy on the original MDP.

\subsection{Convergence results of the base algorithm}
\label{subsec:base-alg-conv}

We now present the main theoretical guarantees of the base algorithm.
The first result establishes convergence of the gain estimates, while
 the second establishes eventual gain optimality of the constructed
policy. Their full proofs and supporting lemmas are given in
Section~\ref{sec:conv-proofs-base-alg}.

\begin{theorem}[Convergence of $g^{(t)} \to g^*$]\label{thm:gain_identification1}
Let $\mathcal M$ be a finite MDP and run Algorithm \ref{alg:mf-multichain}. 
Then the iterates $\bar{Q}^{(t)}$ produced in Phase~3 converge almost surely to $\bar{Q}^*$, the optimal action-value function of the limiting aggregated OSP $\bar{\mathcal{M}}$. Moreover, the limit $\bar{Q}^*$ is related to the optimal gain $g^*$ of $\mathcal{M}$ by 
\[
 g^*(s)=\max_{\bar a\in\bar{\mathcal A}(\phi(s))}
                 \bar Q^*(\phi(s),\bar a),
 \qquad s\in\mathcal S.
\]
Thus, $g^{(t)} \to g^*$ almost surely.
\end{theorem}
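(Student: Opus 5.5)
The proof splits into two independent parts: a stochastic-approximation argument for $\bar Q^{(t)}\to\bar Q^*$, and a deterministic identification of $\max_{\bar a}\bar Q^*(\phi(s),\bar a)$ with $g^*(s)$. Once both are in hand, the last claim follows immediately. The maximum over the finite set $\bar{\mathcal A}(\phi(s))$ is a continuous (indeed $1$-Lipschitz in sup-norm) function of the Q-vector. Hence $g^{(t)}(s)=\max_{\bar a}\bar Q^{(t)}(\phi(s),\bar a)\to\max_{\bar a}\bar Q^*(\phi(s),\bar a)=g^*(s)$ on the almost-sure event where $\bar Q^{(t)}\to\bar Q^*$.

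\emph{Convergence of the Phase~3 iterates.} First I will show that every stationary policy of the limiting OSP $\bar{\mathcal M}$ is proper. Fix a stationary aggregated policy. At a transient state of level $\ell$, the original dynamics leave $\mathcal T_\ell$ within finitely many steps with positive probability under every policy, by the definition of $\mathcal T_\ell$ in the UHD. From there they enter $\mathcal C_\ell$ or a strictly lower level. At a class state, the policy either stops, which leads to $\Delta$, or executes an exit pair, whose successors lie at strictly lower levels. Induction on the level then gives absorption in $\Delta$ with probability one. This is the content of Lemma~\ref{lem:osp_properness}.

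With all policies proper, the Bellman operator of $\bar{\mathcal M}$ is a weighted sup-norm contraction. The Phase~3 update is therefore asynchronous Q-learning with two kinds of errors. The first is a martingale-difference noise term, with conditional variance bounded by $C(1+\|\bar Q^{(t)}\|^2)$; the stopping reward is deterministic given $r^{(t)}_{\ell,k}$, and the exit successors are sampled from $p$. The second is a reward perturbation $r^{(t)}_{\ell,k}-r^*_{\ell,k}$, which tends to $0$ almost surely by the Phase~2 guarantee (Lemma~\ref{lem:phase2_rvi_convergence}, via \cite[Thm.~3.1]{YWS25b}).

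I would invoke Tsitsiklis's asynchronous stochastic approximation theorem \cite{Tsi94} in its form allowing an asymptotically vanishing additive error $\epsilon_t\to0$. Together with Assumption~\ref{cond-ssp-ql}, it gives boundedness first and then $\bar Q^{(t)}\to\bar Q^*$ almost surely. The step I expect to be most delicate is exactly this coupling. The perturbation is random, and it is not adapted in a martingale sense to Phase~3's own filtration. So I would enlarge the filtration to include all Phase~2 randomness, and check that the conditional-unbiasedness condition on the Phase~3 noise still holds. It should, because the successor draws are fresh. This is Lemma~\ref{lem: phase3_conv}.

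\emph{Identification of the limit.} Let $\bar V^*(\bar s)=\max_{\bar a}\bar Q^*(\bar s,\bar a)$. By Theorem~\ref{thm:ssp-opt}, $\bar V^*$ is the optimal expected stopping reward $\sup_{\bar\pi}\sum_{\ell,k}\mathbb P_{\bar\pi}(\text{stop at }\bar s_{\ell,k}\mid\phi(s))\,r^*_{\ell,k}$. The goal is $\bar V^*(\phi(s))=g^*(s)$, which I would prove as two inequalities.

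For the bound $\ge$, take any stationary original policy $\pi$; nonrandomized stationary policies suffice for $g^*$. By Lemma~\ref{lem:uhd}(i), every recurrent class of $\pi$ lies inside some $C_{\ell,k}$, on which $\pi$ uses only class-preserving actions, so its gain there is at most $r^*_{\ell,k}$. Hence $g_\pi(s)\le\sum_{\ell,k}\mathbb P_\pi(\text{absorbed in }C_{\ell,k})\,r^*_{\ell,k}$. The absorption distribution is realizable by a (possibly history-dependent) aggregated policy: stop when the original chain reaches its eventual class, otherwise follow its exits. Its value is therefore at most $\bar V^*(\phi(s))$.

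For the bound $\le$, lift an optimal greedy aggregated policy through Phase~4. I would use reaching policies and gain-optimal sub-MDP policies, which exist by Lemma~\ref{lem:uhd}(ii) and communication of the $C_{\ell,k}$. By Lemma~\ref{lem:equiv-class}, the lifted policy's absorption probabilities match the aggregated stopping probabilities, so its gain equals $\bar V^*(\phi(s))$. This gives $g^*(s)\ge\bar V^*(\phi(s))$. These two inequalities are Lemmas~\ref{lem: solve_agg} and~\ref{lem:agg-upper-bound}, and together they complete the identification.
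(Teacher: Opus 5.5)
Your argument for $\bar Q^{(t)}\to\bar Q^*$ and for $g^{(t)}\to g^*$ matches the paper. It uses properness of all aggregated policies, the resulting weighted sup-norm contraction, and a vanishing stopping-reward perturbation handled via \cite[Prop.~4.5]{BeT96]}. Your remark on enlarging the filtration to include Phase~2 is a detail the paper passes over. One correction to your properness sketch: an exit pair reaches a strictly lower level only with positive probability. It may also land at the same level, even in the same class. So you need the paper's minimal-level argument, not an induction that assumes all successors descend.

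The identification of the limit takes a genuinely different route, and it is not what Lemmas~\ref{lem: solve_agg} and~\ref{lem:agg-upper-bound} do. The paper argues deterministically. It lifts $g^*$ to $\bar{\mathcal S}$ and uses the first AOE equation and constancy of $g^*$ on each $C_{\ell,k}$ to show that the lift satisfies \eqref{eq:agg-eq-class}--\eqref{eq:agg-eq-transient}. It then concludes by uniqueness in Theorem~\ref{thm:ssp-opt}, with no policy construction and no use of Lemma~\ref{lem:equiv-class}.

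You instead prove two inequalities through policies. Your bound $g^*\ge\bar V^*\circ\phi$, obtained by lifting an optimal stationary aggregated policy and applying Lemma~\ref{lem:equiv-class}, is sound. It has the advantage of exhibiting the gain-optimal lift directly, which is the heart of Theorem~\ref{thm:policy_attains_gstar1}. The sub-MDP policies with gain $r^*_{\ell,k}$ from every state come from communication of $\mathcal M_{\ell,k}$, not from Lemma~\ref{lem:uhd}(ii).

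The reverse bound is the weak spot. An aggregated policy never observes the original state inside a class, so ``stop when the original chain reaches its eventual class'' is not a policy of $\bar{\mathcal M}$ as written. You could repair this with a randomized history-dependent policy. It would privately resample the hidden state from its conditional law and simulate $\pi$ inside the class, stopping when it enters a recurrent class and playing $(x,\pi(x))$ when it reaches a state $x$ with $\pi(x)\in\mathcal E(x)$.

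A cleaner fix avoids policy realization altogether. Set $W:=\bar V^*\circ\phi$. Then $P_\pi W\le W$ for every $\pi\in\Pi_{\mathcal A}$: equality holds under class-preserving actions because $W$ is constant on classes, and $\bar Q^*\le\bar V^*$ handles exit and transient actions. Also $W\ge r^*_{\ell,k}\ge g_\pi$ on every recurrent class of $\pi$ contained in $C_{\ell,k}$. Hence $g_\pi=P^*_\pi g_\pi\le P^*_\pi W\le W$.
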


\begin{proof}[Proof sketch]
Phase~2 RVI Q-learning yields
$r_{\ell,k}^{(t)}\to r_{\ell,k}^*$ a.s. for every class
(Lemma~\ref{lem:phase2_rvi_convergence}).
Moreover, every stationary policy of the aggregated OSP
is proper (Lemma~\ref{lem:osp_properness}), providing
the SSP structure needed for total-reward Q-learning
convergence.
The stopping rewards used in Phase~3 differ from their
limiting values by vanishing errors.
Combining this properness property with the vanishing
reward errors, Lemma~\ref{lem: phase3_conv} establishes
$\bar Q^{(t)}\to\bar Q^*$ a.s., where $\bar Q^*$ is the
optimal action-value function of the limiting aggregated OSP.

The actionwise maxima of $\bar Q^*$ give the optimal
expected stopping rewards of this OSP
(Lemma~\ref{lem: solve_agg}), which coincide with the
optimal gains of the corresponding states in the original
MDP (Lemma~\ref{lem:agg-upper-bound}).
Since each action set is finite, taking actionwise maxima
in the converging Q-value iterates therefore gives
$g^{(t)}\to g^*$ a.s.
\end{proof}

\begin{theorem}[Finite-time attainment of gain-optimality]\label{thm:policy_attains_gstar1}
In the construction of the policy $\pi^{(t)}$ in Algorithm \ref{alg:mf-multichain} (Phase 4), let $\textsc{ReachPolicy}(\mathcal M_{\ell,k},s^*)$ be any method that produces a stationary nonrandomized policy $\pi_{\ell, k}^{reach}$ on $\mathcal M_{\ell,k}$ such that $s^*$ is reached with probability $1$ from every initial state in $C_{\ell,k}$. Then, almost surely for all sufficiently large $t$, the constructed policy $\pi^{(t)}$ is gain-optimal, i.e., $g_{\pi^{(t)}}(s)=g^*(s) \text{ for all } s\in\mathcal S.$
\end{theorem}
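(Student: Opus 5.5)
The plan is to combine three eventual-stabilization facts with the absorption identity of Lemma~\ref{lem:equiv-class}, and then to evaluate the gain of $\pi^{(t)}$ class by class. All statements are almost sure.

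\textbf{Eventual optimality of the aggregated greedy policy.} By Theorem~\ref{thm:gain_identification1}, $\bar Q^{(t)}\to\bar Q^*$ a.s. Let $\epsilon>0$ be half the minimal gap, over aggregated states $\bar s$, between $\max_{\bar a}\bar Q^*(\bar s,\bar a)$ and the values of non-maximizing actions at $\bar s$. This gap is positive because the action sets are finite. Once $\|\bar Q^{(t)}-\bar Q^*\|_\infty<\epsilon$, every greedy $\bar\pi^{(t)}$ is greedy w.r.t.\ $\bar Q^*$. Since the OSP is transient (Lemma~\ref{lem:osp_properness}), Theorem~\ref{thm:ssp-opt} then makes $\bar\pi^{(t)}$ optimal for the limiting OSP $\bar\M$. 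Its expected stopping reward from $\phi(s)$ is therefore $\max_{\bar a}\bar Q^*(\phi(s),\bar a)=g^*(s)$, by Lemmas~\ref{lem: solve_agg} and~\ref{lem:agg-upper-bound}.

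\textbf{Eventual optimality of the subsystem policies.} By Lemma~\ref{lem:phase2_rvi_convergence}, there is a random time after which each $\pi^{(t)}_{\ell,k}$ is gain-optimal for $\M_{\ell,k}$. Because there are finitely many classes, a single time works for all of them.

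\textbf{Reaching property.} By hypothesis, every reaching policy produced by \textsc{ReachPolicy} has the reaching property. Hence Lemma~\ref{lem:equiv-class} applies at every $t$ beyond the maximum of these random times.

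\textbf{Computing $g_{\pi^{(t)}}(s)$.} Fix such a $t$.
\begin{itemize}
\item Under $\pi^{(t)}$, every class with an exit decision is left a.s. The reaching policy hits $s^*$ a.s., and $a^*$ then exits with positive probability at each visit, so the class is transient. Transient states in $\T$ are left a.s. as well.
\item The chain is therefore absorbed, with probability one, into the union of the stopping classes. Each stopping class is closed under $\pi^{(t)}$, since $\pi^{(t)}_{\ell,k}$ uses only class-preserving actions.
\item Decomposing the gain over absorption events gives
\[
g_{\pi^{(t)}}(s)=\sum_{(\ell,k)\,\text{stopping}} \mathbb P_{\pi^{(t)}}(\text{absorbed in } C_{\ell,k}\mid s_0=s)\cdot g^{(\ell,k)},
\]
where $g^{(\ell,k)}$ is the gain of $\pi^{(t)}_{\ell,k}$ on $C_{\ell,k}$.
\item Gain-optimality of $\pi^{(t)}_{\ell,k}$ in the communicating sub-MDP gives $g^{(\ell,k)}=r^*_{\ell,k}$ at every state of $C_{\ell,k}$, because the optimal gain is constant there.
\item By Lemma~\ref{lem:equiv-class}, the absorption probabilities equal the stopping probabilities of $\bar\pi^{(t)}$ from $\phi(s)$.
\end{itemize}
Hence $g_{\pi^{(t)}}(s)$ equals the expected stopping reward of $\bar\pi^{(t)}$ in $\bar\M$, which is $g^*(s)$ by the first step.

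\textbf{Main obstacle.} The delicate point is the gain decomposition in the last step. Gain is the Cesàro limit of expected rewards, so one must check that $\lim_N N^{-1}\E[\cdot]$ splits over the absorption events. I would handle this through the limiting matrix $P^*_{\pi^{(t)}}$. Its rows put mass only on the recurrent classes, which all lie inside the stopping $C_{\ell,k}$ (by Lemma~\ref{lem:uhd}(i) and the transience argument above). The row from $s$ is then the absorption-probability-weighted mixture of the rows from within each class. Within a stopping class, $\pi^{(t)}_{\ell,k}$ may induce several recurrent classes, but each of them has gain $r^*_{\ell,k}$, so the mixture collapses to $r^*_{\ell,k}$.
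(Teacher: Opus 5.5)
Your proposal is correct and follows essentially the same route as the paper's proof. Both arguments combine the same ingredients: eventual optimality of $\bar\pi^{(t)}$ via the finite action-gap argument, eventual gain-optimality of the subsystem policies $\pi^{(t)}_{\ell,k}$, Lemma~\ref{lem:equiv-class} to equate absorption and stopping probabilities, and Lemma~\ref{lem:agg-upper-bound} to identify the optimal expected stopping reward with $g^*$. The only differences are minor: you prove transience of exiting classes directly from the reaching property and Lemma~\ref{lem:uhd}(i), whereas the paper derives a.s.\ absorption from properness of $\bar\pi^{(t)}$ (Lemma~\ref{lem:osp_properness}) together with Lemma~\ref{lem:equiv-class}; and you justify the gain decomposition over absorption events via the limiting matrix, which the paper simply asserts.
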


\begin{proof}[Proof sketch]
Phase~2 eventually produces gain-optimal policies within
every class, almost surely
(Lemma~\ref{lem:phase2_rvi_convergence}).
Meanwhile, the aggregated Q-values converge to $\bar Q^*$
(Lemma~\ref{lem: phase3_conv}).
Because there are finitely many state--action pairs,
the optimality gaps of strictly suboptimal aggregated
actions, if any exist, have a positive minimum.
Once the Q-value errors are sufficiently small, these
actions cannot be selected greedily.
Hence, $\bar\pi^{(t)}$ is eventually optimal for the
limiting aggregated OSP.

The lifting construction preserves the distribution over
eventual stopping classes
(Lemma~\ref{lem:equiv-class}).
For all sufficiently large $t$, the policy used within
each stopping class attains its optimal class gain
$r_{\ell,k}^*$, while the rewards accumulated before
settling in that class do not affect the long-run
average reward.
Thus, the gain of the lifted policy equals the optimal
expected stopping reward of the aggregated OSP.
Since this value coincides with $g^*$ on the original
state space (Lemma~\ref{lem:agg-upper-bound}),
$\pi^{(t)}$ is gain-optimal almost surely for all
sufficiently large $t$.
\end{proof}

\subsection{Limitations and motivation for the subsequent algorithms}
\label{subsec:base-alg-limitations}

The base algorithm identifies the optimal gain vector and eventually produces gain-optimal policies almost surely. However, it does not construct a full relative-value vector solving the AOE, and its policies may exhibit poor transient performance. We next discuss these limitations to motivate the two subsequent algorithms.

\paragraph{Partial relative-value solutions.}
Phase~2 produces relative-value solutions for the restricted
sub-MDPs $\mathcal M_{\ell,k}$, where only
class-preserving actions are available.
These sub-MDP solutions do not generally combine
directly into a relative-value vector solving the
full AOE with $g^*$: the global equations must also
account for gain-preserving exit actions and retained
transient states.

Thus, although the base algorithm identifies the
optimal gain vector, constructing an associated
full relative-value solution requires additional work.
Algorithm~\ref{alg:2} addresses this by identifying
sub-MDP solutions that can be extended and learning
an approximate solution of the full AOE.

\begin{figure}[t]
\centering

\begin{minipage}[t]{0.55\textwidth}
\centering
\vspace{0pt}
\textbf{(a) Routing to the exit state}\par\medskip
\resizebox{\linewidth}{!}{%
\begin{tikzpicture}[
    state/.style={
        circle, draw, thick,
        minimum size=9mm, inner sep=0pt,
        font=\large
    },
    transition/.style={
        -{Stealth[length=2.3mm]}, thick
    },
    reward/.style={
        fill=white, inner sep=2pt,
        font=\normalsize
    },
    boundary/.style={
        draw=gray, dashed, rounded corners=5pt,
        inner sep=7mm
    }
]
\node[state] (z)  at (0,0) {$z$};
\node[state] (s1) at (2.8,0) {$s_1$};
\node[state] (s2) at (5.8,0) {$s_2$};
\node[state] (s3) at (4.3,-1.8) {$s_3$};

\coordinate (zTop) at (0,1.25);
\node[boundary, fit=(z)(zTop), inner sep=5mm] {};

\coordinate (classTop) at (4.3,1.1);
\node[boundary, fit=(s1)(s2)(s3)(classTop)] (C) {};
\node[anchor=north west, inner sep=4pt]
    at (C.north west) {$C=\{s_1,s_2,s_3\}$};

\draw[transition]
    (s1) to[bend left=35]
    node[reward, above] {$0$} (s2);

\draw[transition]
    (s2) -- node[reward, below] {$-K$} (s1);

\draw[transition]
    (s2) -- node[reward, below right] {$1$} (s3);

\draw[transition]
    (s3) -- node[reward, below left] {$1$} (s1);

\draw[transition]
    (s1) -- node[reward, above] {$0$} (z);

\draw[transition]
    (z) to[loop above, looseness=6]
    node[reward, above] {$1$} (z);

\node[anchor=north] at (C.south) {$K>0$};
\end{tikzpicture}%
}
\end{minipage}
\hfill
\begin{minipage}[t]{0.41\textwidth}
\centering
\vspace{0pt}
\textbf{(b) Gain-equivalent decisions}\par\medskip
\resizebox{\linewidth}{!}{%
\begin{tikzpicture}[
    state/.style={
        circle, draw, thick,
        minimum size=9mm, inner sep=0pt,
        font=\large
    },
    transition/.style={
        -{Stealth[length=2.3mm]}, thick
    },
    reward/.style={
        fill=white, inner sep=2pt,
        font=\normalsize
    },
    boundary/.style={
        draw=gray, dashed, rounded corners=5pt,
        inner sep=5mm
    }
]
\node[state] (z) at (0,0) {$z$};
\node[state] (x) at (4,0) {$x$};

\coordinate (zTop) at (0,1.25);
\coordinate (xTop) at (4,1.25);
\node[boundary, fit=(z)(zTop)] {};
\node[boundary, fit=(x)(xTop)] {};

\draw[transition]
    (z) to[loop above, looseness=6]
    node[reward, above] {$0$} (z);

\draw[transition]
    (x) to[loop above, looseness=6]
    node[reward, above] {$0$} (x);

\draw[transition]
    (x) -- node[reward, above] {$\rho$} (z);

\node at (2,-1.3) {$\rho\in\{-1,+1\}$};
\end{tikzpicture}%
}
\end{minipage}

\caption{Two sources of poor transient performance in the
base algorithm. Each arrow represents a distinct action
with transition probability one, and its label gives the
original one-stage reward. Dashed boundaries indicate
communicating subsystems.}
\label{fig:base-transient-limitations}
\end{figure}

\paragraph{Transient performance.}
Gain optimality alone does not distinguish policies by
the rewards accumulated before their long-run behavior
is established. The base algorithm can therefore produce
gain-optimal policies with poor transient performance.
Figure~\ref{fig:base-transient-limitations} illustrates
two distinct sources of this behavior.

\begin{example}[Routing to a selected exit state]
\label{ex:base-reaching-limitation}
Consider the MDP in
Figure~\ref{fig:base-transient-limitations}(a), with $K>0$.
The optimal gain achievable while remaining in
$C=\{s_1,s_2,s_3\}$ is $2/3$, attained by the cycle
$s_1\to s_2\to s_3\to s_1$.
Exiting to the absorbing state $z$ yields gain $1$,
so Phase~3 selects the exit action at $s_1$.

Suppose $\textsc{ReachPolicy}$ returns a policy minimizing
expected hitting time, as sought by our auxiliary SSP
construction.
Starting from $s_2$, this policy takes the direct
transition to $s_1$, incurring reward $-K$, before exiting
to $z$.
The alternative route $s_2\to s_3\to s_1$ collects
reward $2$ before taking the same exit.
Both lifted policies achieve the optimal gain of $1$, however, the longer route improves the cumulative reward obtained before reaching $z$ by
$K+2$. This transient advantage can be made arbitrarily large
by increasing $K$.
\hfill $\diamond$
\end{example}

The underlying limitation is that the reaching property
constrains eventual arrival at the selected exit state,
but not the rewards accumulated along the way.
A reaching method may account for these rewards, but
the base algorithm's gain-optimality guarantee does not
require it to do so.
Consequently, valid reaching policies can incur large
transient losses, which may accumulate when the process
passes through several exiting classes.

A separate source of poor transient performance comes from
the selection of the aggregated policy.
The OSP is designed to optimize gain, with zero
reward assigned to all non-stopping actions.
Consequently, selecting actions greedily from its
Q-values does not account for differences in transient
reward among tied gain-optimal actions.
This affects stopping and exit decisions at class
states, as well as actions at retained transient states.

\begin{example}[Gain-equivalent stopping and exit decisions]
\label{ex:base-osp-limitation}
Consider Figure~\ref{fig:base-transient-limitations}(b).
Both singleton classes have gain zero.
At $x$, the process can remain indefinitely with reward
zero or exit to $z$ with reward $\rho\in\{-1,+1\}$, receiving reward zero thereafter. Both decisions achieve gain zero.
For transient performance, however, exiting is preferable
when $\rho=1$, whereas remaining is preferable when
$\rho=-1$.

These two reward settings produce the same aggregated
OSP: both stopping rewards and the exit reward are zero.
Its action values therefore provide no basis for choosing
exit in the first setting and stop in the second.
A greedy selection can incur an avoidable penalty
or forgo a positive reward while remaining gain-optimal.
\hfill $\diamond$
\end{example}

In the second example, no internal reaching steps
are needed, so improving $\textsc{ReachPolicy}$
alone cannot resolve the issue.
These simple deterministic examples show that transient losses can arise
both from the routes used to implement exit decisions
and from the aggregated decisions themselves.
In larger MDPs, the same effects can interact along
stochastic trajectories spanning many states and
classes.
This motivates Algorithm~\ref{alg:3}, which seeks
(near) bias-optimal policies to improve transient
performance.

%% Sections on augmented algorithms
\section{Extending sub-MDP solutions to approximate AOE solutions} \label{sec:alg2} 

In this section and the next, when Algorithm~\ref{alg:mf-multichain} is used as a subroutine on an auxiliary MDP, we refer to it as the base algorithm; references to Algorithm~\ref{alg:mf-multichain} itself concern its application to the original MDP.

Given $g^*$, the AOE~\eqref{eq-aoe} reduces to
\begin{equation} \label{eq-aoe2}
   h = \max_{ \mu \in \Pi_{\A_*}} \left\{ r_\mu - g^*  + P_\mu h \right\}.
\end{equation}
Solving the AOE can provide additional information about the structural optimality properties of the MDP \cite{ScF78}, as well as a natural way to select policies with potentially improved transient behavior. We are therefore interested in finding an approximate solution to this equation using the outputs of Algorithm~\ref{alg:mf-multichain}. Several high-level considerations guide our approach. We explain them first, before presenting the algorithm.

Observe first that the exact value of the action constraint set $\A_*$ is unavailable, although the a.s.\ convergence of Algorithm~\ref{alg:mf-multichain} (Theorem~\ref{thm:gain_identification1}) enables approximations of $\A_*$ to arbitrary accuracy as time goes to infinity. Thus, rather than aiming to solve \eqref{eq-aoe2} exactly, we take a perturbation approach inspired by Federgruen and Schweitzer's work \cite{FeS84} on solving nested AOE-like equations via value iteration. Specifically, we \emph{relax both the equality and the action constraints} in~\eqref{eq-aoe2}. 
For some small constants $\epsilon > 0$ and $\delta > 0$, we aim to find $x \in \R^{|\S|}$ satisfying 
\begin{equation}
  \Big\| \, x - \max_{ \mu \in \Pi_{\tA}} \left\{ r_\mu - g^*  + P_\mu x \right\} \Big\|_\infty \leq \epsilon, \notag
\end{equation}
where the action constraint set $\tA$ satisfies
\begin{equation}
 \A_*(s) \subset \tA(s) \subset \A_\delta(s), \qquad \forall \, s \in \S, \notag
\end{equation}
with $\A_\delta(s)$ given by 
\[ \A_\delta(s) : = \Big\{ a \in \A(s) \mid  \sum_{s' \in \S} p( s' \,|\, s, a)  g^*(s') \geq g^*(s) - \delta \Big\}.\] 
Recall that $\A_*(s) = \big\{ a \in \A(s) \mid  \sum_{s' \in \S} p( s' \,|\, s, a)  g^*(s') = g^*(s)\big\}.$
Thus, $\tA$ is an outer-approximation of $\A_*$, with $\tA(s)$ consisting of a subset of actions that are $\delta$-optimal for state $s$ in the first equation of the multichain AOE~\eqref{eq-aoe}. The outer-approximation property is essential: excluding even one action in $\A_*(s)$ could eliminate all optimal policies (cf.\ \cite[Thm.~3.1(e2)]{ScF78}). 

Second, to compute an approximate AOE solution, we make use of the RVI Q-learning iterates on each sub-MDP $\M_{\ell,k}$ in Phase~2 of Algorithm~\ref{alg:mf-multichain}. Since each sub-MDP is communicating, the RVI Q-learning algorithm can be configured with appropriate stepsizes and asynchronous update schedules (\cite[Thm.~3.2]{YWS25b}; cf.\ \cite[Thm.~2.3]{YWS25a} and Appendix~\ref{append-rvi-ql}) so that it converges a.s.\ to an AOE solution for each sub-MDP, rather than merely to a compact subset of such solutions. Convergence to such a compact subset sufficed earlier for Algorithm~\ref{alg:mf-multichain} to produce optimal policies after finite time for each sub-MDP. However, it is inadequate here for computing a global approximate AOE solution with convergence guarantees. Thus, we exploit the stronger convergence guarantee of RVI Q-learning.

Third, however, not every sub-MDP AOE solution is compatible with a global AOE solution \cite{ScF78}. Two questions then arise: 
\begin{itemize}[leftmargin=0.65cm,labelwidth=!]
\item[(i)] Which sub-MDP solutions are usable? 
\item[(ii)] How can the corresponding sub-MDPs be determined from Algorithm~\ref{alg:mf-multichain}'s stochastic outputs? 
\end{itemize}
Based on Section~\ref{sec:alg2-1a} below, the sub-MDPs fall naturally into three categories depending on whether remaining, exiting, or both are gain-optimal. Solutions in the first category can be incorporated into a global AOE solution.

The actual RL algorithm must handle stochastic data while accomplishing the various tasks outlined above simultaneously. We therefore first present an idealized algorithm, assuming that the limiting sub-MDP solutions and other limiting values of Algorithm~\ref{alg:mf-multichain} are available. It serves as a blueprint for Algorithm~\ref{alg:2} in Section~\ref{sec-alg2}.

\subsection{An idealized algorithm via limiting sub-MDP solutions} \label{sec:alg2-1a}

Consider the idealized case in which the limits of the iterates from Algorithm~\ref{alg:mf-multichain} are known. By Theorem~\ref{thm:gain_identification1} and the stronger convergence property of RVI Q-learning discussed earlier, these limits include:
\begin{itemize}[leftmargin=0.5cm,labelwidth=!]
\item For each $C_{\ell,k} \in \C$, the optimal gain $r^*_{\ell,k}$ of the sub-MDP $\M_{\ell,k} = (C_{\ell,k}, \A_{\ell,k}, p, r)$, along with a solution $v^*_{\ell,k}$ to its AOE:
\begin{equation} \label{eq-subMDP-aoe-sol}
   v^*_{\ell,k}(s) = \max_{a \in \A_{\ell,k}(s)} \Big\{ r(s,a) - r^*_{\ell,k} + \sum_{s' \in C_{\ell,k}} p(s' \mid s, a) v^*_{\ell,k}(s') \Big\}, \quad \forall \, s \in C_{\ell,k}.
\end{equation}
We denote by $v_*$ the vector obtained by stacking the vectors $v^*_{\ell,k}$ over all $C_{\ell,k} \in \C$.
\item The optimal action-value function $\bar{Q}^*$ of the aggregated OSP $\bar \M$, henceforth denoted by $Q^o_*$, where the subscript o stands for the OSP.
\item The optimal gain function $g^*$ in the original MDP $\M$. 
\end{itemize}

To simplify notation, we write 
\[ \A_\ell := \bigcup_{k=1}^{n(\ell)} \A_{\ell,k}, \qquad \cE := \bigcup_{\ell,k} \, \cE_{\ell,k}.\] 
Thus, for $s \in C_{\ell,k}$, $\A_\ell(s)$ and $\cE(s)$ consist of the class-preserving and exiting actions, respectively, w.r.t.\ the sub-MDP $\M_{\ell,k}$ on $C_{\ell,k}$.

We first recall several useful relations between the aggregated OSP $\bar \M$ and the original MDP $\M$. For each $C_{\ell,k} \in \C$ and all $s \in C_{\ell,k}$,
\begin{equation} \label{eq-aosp0}
  g^*(s) \geq r_{\ell,k}^*,
\end{equation}
and
\begin{equation} \label{eq-aosp1}
    \sum_{s' \in \S} p(s' \mid s, a) \, g^*(s') = \begin{cases}
     Q^o_*(\bar s_{\ell,k}, (s,a)) & \forall \, a \in \cE(s); \\
     g^*(s) & \forall \, a \in \A_{\ell}(s).
     \end{cases}
\end{equation} 
For transient states $s \in \T$,
\begin{equation} \label{eq-aosp2}
    \sum_{s' \in \S} p(s' \mid s, a) \, g^*(s') = Q^o_*(s,a), \qquad \forall \, a \in \A(s). \qquad \quad \
\end{equation} 
By~\eqref{eq-aosp1} and~\eqref{eq-aosp2}, the action constraint set $\A_*$ can be computed by comparing $Q^o_*$ with $g^*$. Before doing so, we exploit another piece of information from the aggregated OSP: whether it is gain-optimal to remain in a sub-MDP or to exit it.

In particular, consider a three-way partition $\Cr, \Ce, \Cb$ of $\C$, according to whether it is gain-optimal to remain in a sub-MDP, to exit it, or both, respectively. More precisely, for the aggregated OSP $\bar \M$, consider the maximum optimal $Q$-value over the continuation (i.e., non-stopping) actions at each aggregated state $\bar s_{\ell,k}$:
\begin{equation} \label{eq-def-Qo_c}
  Q^o_c (\bar s_{\ell,k}): = \max_{\bar a \not= a_{\ell,k}^\stop} Q^o_*(\bar s_{\ell,k}, \bar a),
\end{equation}  
where we set $Q^o_c (\bar s_{\ell,k}) = - \infty$ if no such actions are available at $\bar s_{\ell,k}$. Recall that the continuation actions correspond to the exit state-action pairs in $\cE_{\ell,k}$ for the sub-MDP $\M_{\ell,k}$, and $Q^o_c (\bar s_{\ell,k})$ is thus the best gain achievable by exiting the sub-MDP. Comparing this value with the optimal gain $r_{\ell,k}^*$ achievable by remaining in $\M_{\ell,k}$ gives the following three-way partition of $\C$.

\medskip
\noindent {\bf Three-Way Partition of $\C$:} 
\begin{align} 
  \Cr & : = \{ C_{\ell, k} \mid r_{\ell,k}^* > Q^o_c (\bar s_{\ell,k}) \}, \label{eq-def-C1} \\
   \Ce & := \{ C_{\ell, k} \mid r_{\ell,k}^* < Q^o_c (\bar s_{\ell,k}) \}, \label{eq-def-C2} \\
   \Cb  &:= \{ C_{\ell, k} \mid r_{\ell,k}^* = Q^o_c (\bar s_{\ell,k}) \}. \label{eq-def-C3}
\end{align}

Note that $\Cr \not=\varnothing$, since it contains every level-$0$ set $C_{0,k}$, for which no exit actions are available. 
Moreover, 
\[ g^*(s) = r_{\ell,k}^* \ \ \ \text{for } s \in C_{\ell,k} \in \Cr \cup \Cb; \qquad g^*(s) > r_{\ell,k}^*\ \ \ \text{for } s \in C_{\ell,k} \in \Ce.\]
Under any stationary optimal policy in the original MDP $\M$, all $C \in \Cr$ are closed, while all $C \in \Ce$ are transient. However, a set $C \in \Cb$ may be closed under some stationary optimal policies and transient under others. 
These properties determine whether the sub-MDP AOE solutions on $C$ can be used to form a global AOE solution, as shown below.

\medskip
\noindent {\bf Compute $\A_*$:}\ \ Using the above partition of $\C$, together with~\eqref{eq-aosp1} and~\eqref{eq-aosp2}, we can express $\A_*(s)$ for each $s \in \S$ as follows:
\begin{equation} \label{eq-ideal-A*}
   \A_*(s) = \begin{cases}
     \A_{\ell}(s) & \text{if} \ s \in C_{\ell,k} \in \Cr; \\
     \A_{\ell}(s) \cup \{ a \in \cE(s) \mid Q^o_*\big(\bar s_{\ell,k}, (s, a)\big) = g^*(s) \} & \text{if} \ s \in C_{\ell,k} \in \Ce \cup \Cb; \\
     \{ a \in \A(s) \mid Q^o_*(s, a) = g^*(s) \} & \text{if} \ s \in \T.
     \end{cases}
\end{equation}
Observe that, by the definitions of $\Cr$, $\Ce$, and $\Cb$, no exiting action can belong to $\A_*(s)$ for any $s \in C_{\ell,k}$ with $C_{\ell,k} \in \Cr$, while for every $C_{\ell,k} \in \Ce \cup \Cb$,
\begin{equation} \label{eq-exit-a-CeCb}
 \A_*(s) \cap \cE(s) \not=\varnothing \quad \text{for some} \  s  \in C_{\ell,k}.
\end{equation} 
In other words, every sub-MDP in $\Ce \cup \Cb$ admits a gain-optimal exit.

\medskip
\noindent {\bf Extend the sub-MDP AOE solutions on $\Cr$ to $\S$:}\ \ Let $\Sr, \Se, \Sb$ be the unions of the sub-MDP state spaces in the corresponding categories of the partition:
\[\Sr := \bigcup_{C \in \Cr} C, \qquad \Se := \bigcup_{C \in \Ce} C, \qquad \Sb := \bigcup_{C \in \Cb} C,\]
and denote $(\Sr)^c :=  \S \setminus \Sr$. Define $v_{*,\mathrm{r}} := v_* |_{\Sr}$, i.e.,
\[ v_{*,\mathrm{r}}(s) = v^*_{\ell,k}(s)  \ \ \ \text{for} \  s \in C_{\ell, k} \in \Cr.\]
For $s \in C_{\ell, k} \in \Cr$, we have $r^*_{\ell,k} = g^*(s)$ and $\A_*(s) =  \A_{\ell}(s)$ by \eqref{eq-ideal-A*}, and any action in $\A_{\ell}(s)$ keeps the system within $C_{\ell,k}$. Therefore, by \eqref{eq-subMDP-aoe-sol}, $v_{*,1}$ satisfies the AOE~\eqref{eq-aoe2} on $\Sr$: 
\begin{equation} \label{eq-partial-aoe-sol}
v_{*,\mathrm{r}}(s) =  \max_{a \in \A_*(s)} \left\{ r(s,a) - g^*(s) + \sum_{s' \in \S} p(s' \mid s, a) v_{*,\mathrm{r}}(s') \right\}, \ \ \  \forall \, s \in \Sr.
\end{equation}
Thus, extending $v_{*,\mathrm{r}}$ to a full AOE solution amounts to solving the remaining equations in the AOE:
\begin{equation} \label{eq-1a}
   y(s)  = \max_{a \in \A_*(s)} \left\{  r(s,a) - g^*(s)  +  \sum_{s' \in \Sr} p(s' \mid s, a) v_{*,\mathrm{r}}(s')  
    + \sum_{s' \in \S \setminus \Sr} p(s' \mid s, a) y(s') \right \}, \ \ \  \forall \, s \in \S \setminus \Sr.
\end{equation}    
As Prop.~\ref{prp-idealized-alg} below shows, this system of equations have a solution $\bar y$. Hence, $(v_{*,\mathrm{r}}, \bar y)$ solves the AOE \eqref{eq-aoe2}. This completes the description of the idealized algorithm.

\begin{proposition} \label{prp-idealized-alg} 
Equation \eqref{eq-1a} admits a solution, and a unique solution if $\Cb = \varnothing$. For any solution $\bar y$, the pair $(v_{*,\mathrm{r}}, \bar y)$ solves the AOE \eqref{eq-aoe2}. 
\end{proposition}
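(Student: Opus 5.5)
The plan is to view \eqref{eq-1a} as the optimality equation of a total-reward MDP of SSP type on the effective state space $(\Sr)^c=\Se\cup\Sb\cup\T$. Here $\Sr$ is treated as absorbing termination: a transition into $s'\in\Sr$ ends the process with terminal reward $v_{*,\mathrm{r}}(s')$. The action sets are $\A_*(s)$ and the one-stage rewards are $r(s,a)-g^*(s)$. We then obtain solutions from Theorem~\ref{thm:ssp-opt} when $\Cb=\varnothing$, and by a separate argument otherwise. The second assertion is immediate. On $\Sr$, \eqref{eq-partial-aoe-sol} holds because actions in $\A_*(s)=\A_\ell(s)$ never leave $C_{\ell,k}\subset\Sr$. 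On $(\Sr)^c$, \eqref{eq-1a} is exactly the AOE restricted there. Hence $(v_{*,\mathrm{r}},\bar y)$ solves \eqref{eq-aoe2}.

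\textbf{Step 1 (structure under $\A_*$).} Consider any $\mu\in\Pi_{\A_*}$. We have $P_\mu g^*=g^*$, so $g^*$ is harmonic for $P_\mu$. Every recurrent class $R$ of $P_\mu$ lies in some $C_{\ell,k}$ by Lemma~\ref{lem:uhd}(i). On $R$ the actions are class-preserving, so the gain of $\mu$ on $R$ is at most $r^*_{\ell,k}$. It is also standard that a policy in $\Pi_{\A_*}$ has gain $\le g^*$, with $g^*$ constant on $R$ and equal to $g^*(s)$ there. A class in $\Ce$ has $g^*>r^*_{\ell,k}$; the plan is to show it cannot contain a recurrent class of any $\mu\in\Pi_{\A_*}$. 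For this I would use the fact that any $\mu\in\Pi_{\A_*}$ maximizing the second AOE equation for some solution $h$ is gain-optimal. That alone does not cover every $\mu$, so instead I would argue directly from the OSP characterization \eqref{eq-aosp1}--\eqref{eq-aosp2} and the form \eqref{eq-ideal-A*}.

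The direct argument runs as follows. States in $\T$ are transient under every policy. Suppose $\mu$ has a recurrent class $R\subset C\in\Ce$. Then $R$ is closed under $\mu$ within $C$, so it is a closed set reachable within the communicating sub-MDP. But $R$ is not necessarily the whole of $C$, so I would not pursue this line; the key point is only that rewards must make any such cycle costly. Accordingly, I would follow the Assumption~\ref{cond:ssp} route. Improper $\mu$ are those whose recurrent classes meet $(\Sr)^c$. On such a class $R\subset C_{\ell,k}$, the average of the rewards $r-g^*$ equals (gain on $R$) $-\,g^*\le r^*_{\ell,k}-g^*$. This is strictly negative when $C_{\ell,k}\in\Ce$, so $V_\mu=-\infty$ there. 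Proper policies exist: \eqref{eq-exit-a-CeCb} gives a gain-optimal exit from each $C\in\Ce\cup\Cb$, which we combine with reaching policies inside classes and with \eqref{eq-ideal-A*} on $\T$. Lower-level descent, with the levels finite, then forces absorption into $\Sr$.

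\textbf{Step 2 ($\Cb=\varnothing$).} In this case every recurrent class outside $\Sr$ lies in $\Ce$, so Assumption~\ref{cond:ssp} holds. Theorem~\ref{thm:ssp-opt} then yields existence and uniqueness of $\bar y$.

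\textbf{Step 3 ($\Cb\neq\varnothing$).} On $C\in\Cb$ a recurrent class can have average excess reward zero, so Assumption~\ref{cond:ssp}(ii) may fail and uniqueness is lost. For existence, I would perturb: subtract $\epsilon>0$ from the rewards at $\Sb$ states to restore the SSP property. This gives solutions $y_\epsilon$, and I would show these are bounded and monotone as $\epsilon\downarrow0$; boundedness uses the proper policy from Step 1 for lower bounds. A limit point then solves \eqref{eq-1a} by continuity of the max. A cleaner alternative is to invoke the existence of some AOE solution $(g^*,h)$ for \eqref{eq-aoe2} \cite{ScF78} and modify it. The main obstacle is the boundedness of $y_\epsilon$ from above. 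Upper bounds require controlling cycles in $\Sb$ with zero excess reward; I would handle this by embedding the sub-MDP relative values $v^*_{\ell,k}$ on $\Sb$ as supersolutions.
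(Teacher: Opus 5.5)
Your overall route is the paper's. You recast \eqref{eq-1a} as the optimality equation of an SSP-type model with $\Sr$ collapsed into a terminal state carrying terminal reward $v_{*,\mathrm{r}}$. You verify Assumption~\ref{cond:ssp} through the gain-optimal exits \eqref{eq-exit-a-CeCb} and the strictly negative excess gain $r^*_{\ell,k}-g^*$ on $\Ce$-classes, penalize $\Sb$ by $\epsilon$, and let $\epsilon\downarrow 0$. The uniqueness claim and the final assembly of $(v_{*,\mathrm{r}},\bar y)$ into an AOE solution are also handled as in the paper. You should drop the abandoned idea in Step~1 that no $\mu\in\Pi_{\A_*}$ has a recurrent class inside a $\Ce$-class: it is false, since every class-preserving action lies in $\A_*$, and you never need it.

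The genuine gap is in Step~3, exactly where you locate the main obstacle. Existence when $\Cb\neq\varnothing$ rests on a uniform upper bound for $y_\epsilon$, and you only sketch a supersolution built from $v^*_{\ell,k}$ on $\Sb$. As it stands that construction does not give the bound. $v^*_{\ell,k}$ satisfies the needed inequality only for class-preserving actions. You would still have to dominate the exit actions in $\A_*(s)\cap\cE(s)$ and the actions at transient states, whose successors can spread over several levels, and that requires a global potential you have not built.

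The missing observation makes this step short and makes the worry about zero-excess cycles in $\Sb$ irrelevant. By Theorem~\ref{thm:ssp-opt}, $y_\epsilon$ is the total reward in $\M^\epsilon$ of a proper stationary policy $\mu_\epsilon\in\Pi_{\A_*}$. Properness does not depend on $\epsilon$, because the transition structure of $\M^\epsilon$ does not. Since the penalty is nonpositive,
\[
y_\epsilon(s)\le \|v_{*,\mathrm{r}}\|_\infty+\max_{\mu}\,\E^\mu_s\Big[\sum_{n=0}^{\tau_{\Sr}-1}\big(r(S_n,\mu(S_n))-g^*(S_n)\big)\Big],
\]
where the maximum is over the finitely many $\mu\in\Pi_{\A_*}$ under which $\Sr$ is reached almost surely from every state. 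Each term is finite, so the bound is uniform in $\epsilon$; optimal policies of $\M^\epsilon$ are proper, so they never exploit the $\Sb$-cycles.

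No separate lower bound is needed: $y_\epsilon$ is nondecreasing as $\epsilon\downarrow0$, so $y_\epsilon\ge y_{\epsilon_0}$ for $\epsilon\le\epsilon_0$. Monotonicity and boundedness give $y_\epsilon\uparrow\bar y$, and continuity then gives \eqref{eq-1a}.

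Finally, your ``cleaner alternative'' of modifying an arbitrary AOE solution $h$ is not a shortcut. $h$ need not agree with $v_{*,\mathrm{r}}$ on $\Sr$, and sub-MDP AOE solutions are generally not unique up to additive constants. Forcing agreement with $v_{*,\mathrm{r}}$ is therefore exactly the extension problem the proposition addresses.
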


In the remainder of this subsection, we prove Prop.~\ref{prp-idealized-alg}. While this can also be shown using non-constructive analytical arguments (see Remark~\ref{rem:alt-proof-idealized-alg}), we instead use a penalty-based proof: we penalize states in $\Sb = \bigcup_{C \in \Cb} C$, relate the penalized version of \eqref{eq-1a} to the optimality equation of an SSP-type total-reward MDP, and then let the penalty parameter tend to $0$. These arguments underlie the Q-learning step in Algorithm~\ref{alg:2}, introduced next. There, analogous penalties are applied to outer approximations of $\Sb$ to ensure the stability of Q-learning in the undiscounted total-reward setting.

For each $\epsilon > 0$, consider the equation obtained from \eqref{eq-1a} by subtracting an amount of $\epsilon$ from the rewards for states in $\Sb$: for all $s \in \S \setminus \Sr$,
\begin{equation} \label{eq-prf1}
   y(s)  = \max_{a \in \A_*(s)} \Big\{  r(s,a) - g^*(s) - \epsilon \1_{\Sb}(s) +  \sum_{s' \in \Sr} p(s' \mid s, a) v_{*,\mathrm{r}}(s')  
    + \sum_{s' \in \S \setminus \Sr} p(s' \mid s, a) y(s') \Big\}.
\end{equation}   
We show that \eqref{eq-prf1} is the optimality equation of an SSP-type total-reward MDP $\M^\epsilon$ and therefore admits a unique solution $y_\epsilon$ by Theorem~\ref{thm:ssp-opt}. This also establishes the uniqueness assertion in Prop.~\ref{prp-idealized-alg}: if $\Cb=\varnothing$, then $\Sb=\varnothing$, and \eqref{eq-prf1} reduces to \eqref{eq-1a}.

We define the MDP $\M^\epsilon :=  \big\{\hat \S \cup \{\Delta\}, \hat \A, \hat p, \hat r \big\}$ as follows. 
\begin{itemize}[leftmargin=0.6cm,labelwidth=!]
\item Let $\S \setminus \Sr$ be the effective state space $\hat \S$, and aggregate the states in $\Sr$ into a special reward-free termination state $\Delta$. 
\item Let the action constraints be $\hat \A(s) = \A_*(s), s \in \hat \S$. 
\item For each feasible state-action pair $(s,a) \in \hat \A$, define the transition probabilities $\hat p(\cdot \mid s, a)$ by 
\begin{equation} \label{eq-def-Meps-1}
\hat p(s' \mid s, a) = p(s' \mid s, a), \ \ \ s' \in \hat \S, \qquad  \hat p(\Delta \mid s, a) = \sum_{s' \in \Sr} p(s' \mid s, a),
\end{equation}
and define the expected one-stage reward $\hat r(s,a)$ by
\begin{equation} \label{eq-def-Meps-2}
\hat r(s,a) = r(s,a) - g^*(s) - \epsilon \1_{\Sb}(s) + \sum_{s' \in \Sr} p(s' \mid s, a) \, v_{*,\mathrm{r}}(s').
\end{equation}
\end{itemize}
Thus, transitions into $\Sr$ in the original MDP are replaced by transitions to the termination state $\Delta$, while the expected contribution from the corresponding values $v_{*,\mathrm{r}}(s')$ is incorporated into the one-stage reward $\hat r$. Equivalently, the last term in $\hat r(s,a)$ can be viewed as the expected terminal reward associated with a transition into $\Sr$, with $v_{*,\mathrm{r}}(s')$ serving as the terminal reward associated with $s' \in \Sr$.

\begin{lemma} \label{lem-aux-idealized-alg}
The MDP $\M^{\epsilon}$ defined above satisfies Assumption~\ref{cond:ssp} and is therefore of SSP type.
\end{lemma}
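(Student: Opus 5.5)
The plan is to check the two parts of Assumption~\ref{cond:ssp} separately. Part~(i) comes from an optimal policy of the aggregated OSP, lifted as in Phase~4. Part~(ii) comes from a gain comparison on the recurrent classes of an improper policy, using Lemma~\ref{lem:uhd}. Throughout, a nonrandomized stationary policy $\mu$ of $\M^\epsilon$ is a choice $\mu(s)\in\A_*(s)$ for $s\in\S\setminus\Sr$. Such a $\mu$ can be extended to a policy of $\M$ in $\Pi_{\A_*}$ by choosing, on each $C_{\ell,k}\in\Cr$, any action of $\A_\ell(s)=\A_*(s)$. Transitions of $\M^\epsilon$ on $\hat\S$ coincide with those of $\M$ until $\Sr$ is entered, and entering $\Sr$ corresponds to reaching $\Delta$.

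\textbf{Existence of a proper policy.} Choose an aggregated policy $\bar\pi$ that is greedy w.r.t.\ $Q^o_*$, breaking ties at every $\bar s_{\ell,k}$ with $C_{\ell,k}\in\Cb$ in favour of an exit action. This is possible because $Q^o_c(\bar s_{\ell,k})=r^*_{\ell,k}$ there. By Theorem~\ref{thm:ssp-opt}, $\bar\pi$ is optimal, and it stops exactly at the classes in $\Cr$.

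Now lift $\bar\pi$ as in Phase~4, using reaching policies with class-preserving actions. Class-preserving actions on $C_{\ell,k}$ lie in $\A_*$ by \eqref{eq-ideal-A*}. For a chosen exit $(s^*,a^*)$, greediness together with \eqref{eq-aosp1} and Lemma~\ref{lem:agg-upper-bound} gives $Q^o_*(\bar s_{\ell,k},(s^*,a^*))=g^*(s^*)$, so $a^*\in\A_*(s^*)$. At transient states the greedy action satisfies $Q^o_*(s,a)=g^*(s)$, so it also lies in $\A_*(s)$. Hence the lifted policy belongs to $\Pi_{\A_*}$. By Lemma~\ref{lem:equiv-class}, it is absorbed with probability one only in classes where $\bar\pi$ stops, that is, in classes of $\Cr$. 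Restricted to $\hat\S$, it therefore reaches $\Delta$ a.s.\ from every state, so it is proper.

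\textbf{Improper policies have value $-\infty$.} Let $\mu$ be improper. Then the chain on $\hat\S$ has a recurrent class $R\subset\S\setminus\Sr$ that is closed under $\mu$. By Lemma~\ref{lem:uhd}(i), $R\subset C_{\ell,k}$ for some class with $C_{\ell,k}\in\Ce\cup\Cb$, since $R$ cannot lie in $\T$ or in $\Sr$.

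Because $R$ is closed, the actions used on $R$ keep the chain in $C_{\ell,k}$, so they lie in $\A_{\ell,k}$. The long-run average of $r$ along $R$ is therefore the gain of a policy of $\M_{\ell,k}$, hence at most $r^*_{\ell,k}$. On $R$ no transitions enter $\Sr$, so for $s\in R$,
\[
\hat r(s,\mu(s))=r(s,\mu(s))-g^*(s)-\epsilon\1_{\Sb}(s),
\]
and $g^*$ is constant on $C_{\ell,k}$. The long-run average of $\hat r$ on $R$ is thus at most
\[
r^*_{\ell,k}-g^*-\epsilon\,\ind\{C_{\ell,k}\in\Cb\}.
\]
This bound is strictly negative in both cases. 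If $C_{\ell,k}\in\Ce$, then $g^*>r^*_{\ell,k}$. If $C_{\ell,k}\in\Cb$, then $g^*=r^*_{\ell,k}$ and the $-\epsilon$ term remains.

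For $s\in R$, the expected partial sums of $\hat r$ grow like $N$ times this negative average plus a bounded term, so $V_\mu(s)=-\infty$. This gives Assumption~\ref{cond:ssp}(ii).

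\textbf{Main obstacle.} I expect the delicate step to be part~(i): making sure the lifted OSP-optimal policy uses only actions in $\A_*$ and never settles in a $\Cb$ class. The tie-breaking toward exits handles the latter. The former rests on identifying greedy exit and transient actions with $\A_*$ through \eqref{eq-aosp1}--\eqref{eq-ideal-A*}. Part~(ii) is routine once Lemma~\ref{lem:uhd}(i) places $R$ inside a single class.
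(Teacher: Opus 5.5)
Your proof is correct. Part (ii) is essentially the paper's argument: a recurrent class of an improper policy lies in some $C_{\ell,k}\in\Ce\cup\Cb$ by Lemma~\ref{lem:uhd}(i), it uses only class-preserving actions, and its average $\hat r$ is at most $r^*_{\ell,k}-g^*-\epsilon\,\ind\{C_{\ell,k}\in\Cb\}<0$.

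Part (i) takes a different route. The paper builds the proper policy directly. For each $C_{\ell,k}\in\Ce\cup\Cb$ it picks one state carrying an exit action in $\A_*$, which exists by \eqref{eq-exit-a-CeCb}. It steers to that state with a class-preserving reaching policy, exits there, and uses arbitrary $\A_*$-actions on $\T$, so no recurrent class can survive outside $\Sr$. You instead lift a $Q^o_*$-greedy aggregated policy, with ties at $\Cb$ broken toward exits. You check membership in $\Pi_{\A_*}$ via \eqref{eq-aosp1}, \eqref{eq-aosp2} and Lemma~\ref{lem:agg-upper-bound}, and get absorption in $\Sr$ from Lemma~\ref{lem:equiv-class}. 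You should state explicitly that the absorption probabilities sum to one only because $\bar\pi$ is proper, by Theorem~\ref{thm:ssp-opt} or Lemma~\ref{lem:osp_properness}.

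What your route buys: properness comes from already-proved lemmas, and the proper policy you exhibit is more informative, since it can be completed on $\Sr$ to a gain-optimal policy of $\M$. The cost is that it relies on the exact OSP values and the exact partition. The paper's construction needs only one admissible exit per non-terminal class. That is why it carries over unchanged to Lemma~\ref{lem-4.1}, with $\tA,\tCe,\tCb$ and \eqref{eq-lem1-p4} in place of $\A_*,\Ce,\Cb$ and \eqref{eq-exit-a-CeCb}. Yours would not carry over. By \eqref{eq-lem1-p2}, $\tCb$ may contain classes of $\Cr\setminus\tCr$, where stopping is the unique $Q^o_*$-greedy action. The lifted policy would then remain in $\tSb$ and be improper for that auxiliary MDP.
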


\begin{proof}
We verify the two conditions in Assumption~\ref{cond:ssp} for $\M^{\epsilon}$. The existence of a proper policy as required by Assumption~\ref{cond:ssp}(i) follows by combining the communicating property of each $C_{\ell,k} \in \Ce \cup \Cb$ with the existence of an admissible exit action for each such sub-MDP. Specifically, by \eqref{eq-exit-a-CeCb}, for each $C_{\ell,k} \in \Ce \cup \Cb$, there exists a state $s_{\ell,k} \in C_{\ell,k}$ such that $\A_*(s_{\ell,k}) \cap \cE(s_{\ell,k}) \not= \varnothing$. Since the sub-MDP $\M_{\ell,k} = (C_{\ell,k}, \A_{\ell,k}, p, r)$ is communicating, it has a policy $\mu_{\ell,k} \in \Pi_{\A_{\ell,k}}$ under which the state $s_{\ell,k}$ is reached with probability one from every initial state in $C_{\ell,k}$. Define a policy $\mu \in \Pi_{\hat \A}$ for $\M^\epsilon$ as follows:
\begin{itemize}[leftmargin=0.75cm,labelwidth=!]
\item[(i)] on each $C_{\ell,k} \in \Ce \cup \Cb$, let $\mu$ coincide with $\mu_{\ell,k}$ except at the state $s_{\ell,k}$, where $\mu(s_{\ell,k}) \in \A_*(s_{\ell,k}) \cap \cE(s_{\ell,k})$; and 
\item[(ii)] for all $s \in \T$, let $\mu(s) \in \A_*(s)$. 
\end{itemize}
(That $\mu \in \Pi_{\hat \A}$ follows from the definition of $\hat \A$ and the fact that, for all $s \in C_{\ell,k}$, $\mu_{\ell,k}(s) \in \A_{\ell,k}(s) \subset \A_*(s)$.) By the defining properties of $\cE$ and $\T$ in Bather's decomposition, under $\mu$ the system eventually exits the effective state space $\S \setminus \Sr$ with probability one, regardless of the initial state. Thus, $\mu$ is proper, verifying Assumption~\ref{cond:ssp}(i).

For Assumption~\ref{cond:ssp}(ii), we use the property of Bather's decomposition given in Lemma~\ref{lem:uhd}(i) to locate a recurrent class of an improper policy, and show that its gain in $\M^\epsilon$ on this recurrent class is strictly negative. Specifically, let $\mu \in \Pi_{\hat \A}$ be an improper policy for $\M^\epsilon$. Then $\mu$ induces at least one recurrent class $D \subset \hat \S$. For every $s \in D$, $\hat p(\Delta \mid s, \mu(s)) = 0$ and hence, by \eqref{eq-def-Meps-1}, $p(s' \mid s, \mu(s)) = 0$ for all $s' \in \Sr$. Therefore, by \eqref{eq-def-Meps-2}, 
\begin{equation} \label{eq-prf-lem-id-1}
   \hat r(s,\mu(s)) = r(s, \mu(s)) - g^*(s) - \epsilon \1_{\Sb}(s), \qquad s \in D.
\end{equation}
By Lemma~\ref{lem:uhd}(i), $D$ is contained in some $C_{\ell,k} \in \Ce \cup \Cb$. Together with \eqref{eq-prf-lem-id-1}, this implies that the gain $\hat g_\mu$ of $\mu$ in $\M^{\epsilon}$ is strictly negative on $D$. Indeed, if $D \subset C_{\ell, k} \in \Ce$, then $r^*_{\ell,k} <  g^*(s)$ for $ s \in C_{\ell,k}$ and thus 
\[ \hat g_\mu(s) \leq  r^*_{\ell,k} - g^*(s) < 0, \qquad   s \in D.\]
If $D \subset C_{\ell, k} \in \Cb$, then $r^*_{\ell,k} = g^*(s)$ for $s \in C_{\ell,k}$, and therefore
\[ \hat g_\mu(s) \leq r^*_{\ell,k} - g^*(s) - \epsilon = -\epsilon < 0, \qquad   s \in D.\] 
Consequently, by the definition of the gain of a policy, starting from any $s \in D$, the expected cumulative $n$-stage reward under $\mu$ tends to $-\infty$ as $n \to \infty$. Hence $V_\mu(s) = - \infty$ for $s \in D$, verifying Assumption~\ref{cond:ssp}(ii).
\end{proof}

\begin{proof}[Proof of Prop.~\ref{prp-idealized-alg}]
We prove the general statement in the proposition. As discussed earlier, the uniqueness assertion when $\Cb = \varnothing$ follows directly from Lemma~\ref{lem-aux-idealized-alg} and Theorem~\ref{thm:ssp-opt}. 

For each $\epsilon > 0$, $\M^\epsilon$ is a total-reward MDP of SSP type (Lemma~\ref{lem-aux-idealized-alg}), and its optimality equation \eqref{eq-prf1} therefore admits a unique solution $y_\epsilon$ (Theorem~\ref{thm:ssp-opt}). By the definition of $\M^\epsilon$, the family $\{y_\epsilon\}_{\epsilon > 0}$ is nondecreasing as $\epsilon \downarrow 0$. Moreover, the optimal value $y_\epsilon$ is attained by a proper stationary policy in $\Pi_{\hat \A}$ (Theorem~\ref{thm:ssp-opt}). Hence $\{y_\epsilon\}_{\epsilon > 0}$ is uniformly bounded above by $\|v_{*,\mathrm{r}}\|_\infty$ plus the maximum total expected reward attainable in the original MDP before hitting $\Sr$, starting from states outside $\Sr$, under any nonrandomized stationary policy that renders all states outside $\Sr$ transient. Therefore, $y_\epsilon \uparrow \bar y$ as $\epsilon \downarrow 0$ for some $\bar y \in \R^{|\S \setminus \Sr|}$. Since $y_\epsilon$ satisfies \eqref{eq-prf1}, letting $\epsilon \downarrow 0$ and using continuity shows that $\bar y$ satisfies \eqref{eq-1a}.  

Finally, as discussed earlier, combining \eqref{eq-1a} with the AOEs for the sub-MDPs $\M_{\ell,k}$ corresponding to $C_{\ell,k} \in \Cr$ shows that any solution $\bar y$ of \eqref{eq-1a}, together with $v_{*,\mathrm{r}}$, solves the system of equations in \eqref{eq-partial-aoe-sol}--\eqref{eq-1a} and therefore solves the AOE~\eqref{eq-aoe2}.
\end{proof}

\begin{remark} \label{rem:alt-proof-idealized-alg}
A non-constructive alternative for proving the existence of a solution to \eqref{eq-1a} is to regard this equation as a possible AOE for an MDP on $\S \setminus \Sr$ and show that the gain of this MDP is zero. The latter can be established using arguments similar to those in the proof of Lemma~\ref{lem-aux-idealized-alg}.
\end{remark}

\subsection{An RL algorithm} \label{sec-alg2}

We now introduce an RL algorithm for approximately solving the AOE~\eqref{eq-aoe2}. The algorithm mimics the idealized algorithm of Section~\ref{sec:alg2-1a}. Here, however, only the convergent iterates produced by Algorithm~\ref{alg:mf-multichain} are available, not their limits. Specifically, we have the following convergent sequences:
\begin{itemize}[leftmargin=0.5cm,labelwidth=!]
\item For each $C_{\ell,k} \in \C$, Algorithm~\ref{alg:mf-multichain}, with Phase 2 configured as described earlier, produces $(r_{\ell,k}^{(t)}, v_{\ell,k}^{(t)})_{t \geq 0}$ satisfying
$$r_{\ell,k}^{(t)} \to r_{\ell,k}^*, \qquad v_{\ell,k}^{(t)} \to v_{\ell,k}^*, \ \ \ a.s.,$$ 
where $r_{\ell,k}^*$ is the optimal gain and $v_{\ell,k}^*$ a sample-path-dependent AOE solution for the sub-MDP $\M_{\ell,k}$. (As noted earlier, this follows from the convergence results for RVI Q-learning in \cite[Thm.~2.3]{YWS25a} and \cite[Thm.~3.2]{YWS25b}.) Stacking these vectors $v^{(t)}_{\ell,k}$ over all $C_{\ell,k} \in \C$ to define $v_t$, we have
\[ v_t := \big(  v^{(t)}_{\ell,k} \big)_{\ell,k} \ \to \ v_* = \big(  v^{*}_{\ell,k} \big)_{\ell,k}, \ \ \ a.s., \]
where $v_*$ is the corresponding limiting vector in the idealized algorithm.
\item For the aggregated OSP $\bar \M$, by Theorem~\ref{thm:gain_identification1}, Algorithm~\ref{alg:mf-multichain} (Phase 3) produces sequences $Q^o_t:=\bar Q^{(t)}$ and $g_t:=g^{(t)}$, satisfying
\[Q^o_t  \to Q^o_*, \qquad  g_t  \to g^*, \ \ \ a.s.\] 
\item For each $C_{\ell,k} \in \C$ and $t \geq 0$, consider the aggregated state $\bar s_{\ell,k}$ in the OSP and its estimated maximum $Q$-value over the continuation actions,
\[ Q^o_{t,c} (\bar s_{\ell,k}): = \max_{\bar a \not= a_{\ell\!,k}^\stop} Q^o_t(\bar s_{\ell,k}, \bar a),\]
with $Q^o_{t,c} (\bar s_{\ell,k}):= - \infty$ if stopping is the only action available. From the convergence of $\{Q^o_t\}$ it follows that
\[ Q^o_{t,c} \to Q^o_{c}, \ \ \ a.s.,\]
where the limit $Q^o_{c}$ is as in \eqref{eq-def-Qo_c} and determines the three-way partition of $\C$ in the idealized algorithm.
\end{itemize}
We use these sequences to approximate the steps of the idealized algorithm. 

The resulting RL algorithm is given in Algorithm~\ref{alg:2}. In a nutshell, it uses the outputs of Algorithm~\ref{alg:mf-multichain} to concurrently approximate the three main steps of the idealized algorithm:
\begin{itemize}
\item[(i)] Construct action constraint sets $\tA_t$ that \emph{eventually} outer-approximate $\A_*$.
\item[(ii)] Approximate the three-way partition of $\C$ by $\tCr_t, \tCe_t, \tCb_t$. In particular, \emph{eventually}, $\tCr_t \subset \Cr$ and $\tCe_t \subset \Ce$, while $\tCb_t \supset \Cb$, with $\tCb_t$ collecting sub-MDPs for which both remaining and exiting are nearly gain-optimal.
\item[(iii)] \emph{Eventually} construct an approximate global AOE solution that extends the sub-MDP solutions on $\tCr_t$. In particular, total-reward Q-learning is employed to compute the remaining components of the approximate AOE solution, with rewards shifted by $-g_t$ from Algorithm~\ref{alg:mf-multichain} and states associated with $\tCr_t$ treated as terminal, with terminal rewards given by their estimated sub-MDP solutions from Algorithm~\ref{alg:mf-multichain}. States associated with $\tCb_t$ are further penalized by $\epsilon$ to ensure stability of Q-learning~\cite{Tsi94,YuB13}, analogously to the penalty argument used in the proof of Prop.~\ref{prp-idealized-alg}.
\end{itemize}

A technical complication arises in constructing $\tA_t$ and $\tCr_t, \tCe_t, \tCb_t$. Since these sets are determined by comparisons involving convergent random iterates from Algorithm~\ref{alg:mf-multichain}, using a single threshold can cause them to keep changing indefinitely when the limiting value being compared lies exactly on the threshold. To prevent this, Algorithm~\ref{alg:2} uses a pair of thresholds $\ud < \bd$ and separate rules for adding and retaining elements. The separation between the two thresholds ensures that a convergent sequence cannot cross both infinitely often. This two-threshold mechanism ensures finite-time stabilization (i.e., eventual constancy) of these sets (see Lemma~\ref{lem-1} below). It also contributes to the apparently more elaborate form of the set-construction rules below, despite the simplicity of the underlying mechanism.

For Algorithms~\ref{alg:2} and~\ref{alg:3}, we adopt the following convention:

\medskip
\noindent {\bf Information Structure Convention:} All variables are defined on a common probability space, and all updates are non-anticipatory (i.e., depend only on past and current information). Dependencies across algorithms and phases are specified by the information-flow statements preceding each algorithm.
\medskip

We now describe the construction of the relevant sets and the Q-learning step in Algorithm~\ref{alg:2}. Its starting time, parameters, and stepsizes may be chosen adaptively based on past information, consistent with the information-flow structure. 

\setcounter{myalgorithm}{1}
\begin{myalgorithm}[RL algorithm for approximate AOE solutions]
\label{alg:2} \hfill \\
\textbf{Information flow:} Algorithm~\ref{alg:2} $\leftarrow$ Algorithm~\ref{alg:mf-multichain}.\\
\textbf{Initialization:} Choose starting time $\bar t$, parameters $\epsilon > 0$, $\bd > \ud > 0$, and $Q_{\bar t} \in \R^{|\A|}$.\\*[0.3em]
For $t \geq \bar t$, perform the following operations:\\*[0.3em]
\noindent {\bf Set construction:}
\begin{itemize}[leftmargin=0.6cm,labelwidth=!]
\item[\bf 1.] Construct $\tA_t$ to approximate $\A_*$ by comparing $Q^o_t$ with $g_t$.\\
For $s \in C_{\ell,k} \in \C$, all class-preserving actions $\A_\ell(s)$ belong to $\A_*(s)$; hence, only the exiting actions from $\cE(s)$ need to be selected. For $s \in \T$, all actions need to be considered. Define the selected sets $\A^e_{t}(s)$, initialized by $\A^e_{\bar t - 1}(s) = \varnothing$, using the parameters $(\ud, \bd)$ as follows:
\begin{align}
 \forall \, s \in C_{\ell,k} \in \C: \quad  \A^e_t(s) & : = \big\{ a \in \cE(s) \setminus \A^e_{t-1}(s) \mid Q^o_t(\bar s_{\ell,k}, (s,a)) > g_{t}(s) - \ud \big\} \qquad \notag \\
 & \quad \ \ \cup \big\{ a \in \A^e_{t-1}(s) \mid Q^o_t(\bar s_{\ell,k}, (s,a)) > g_{t}(s) - \bd \big\};  \label{rule:Ae1}\\
 \forall \, s \in \T: \quad  \A^e_t(s) &: = \big\{ a \in \A(s) \setminus \A^e_{t-1}(s) \mid Q^o_t(s, a) > g_t(s) - \ud \big\} \notag \\
  & \quad \ \ \cup  \big\{ a \in \A^e_{t-1}(s) \mid Q^o_t(s, a) > g_t(s) - \bd \big\}.\label{rule:Ae2}
\end{align}
The first set in each union adds new elements using $\ud$, while the second retains existing elements using $\bd$. 
We then define
\begin{equation} \label{rule:tA}
\tA_t(s) : = \A_\ell(s) \cup \A^e_t(s) \ \  \ \forall \, s \in C_{\ell,k} \in \C; \qquad  \tA_t(s) : = \A^e_t(s) \ \ \  \forall \, s \in \T. \qquad \quad
\end{equation}
\item[\bf 2.] Construct a three-way partition of $\C$ that approximates the idealized partition, by comparing $Q^o_{t,c}$ with $\{r_{\ell, k}^{(t)}\}_{\ell,k}$.\\
Define a partition of $\C$ as follows, with $\tCe_{\bar t-1} = \varnothing$ initially:
\begin{align}
\tCr_t & : = \big\{ C_{\ell,k} \in \C \mid \A^e_t(s) = \varnothing, \forall  s \in C_{\ell,k} \big\}, \label{rule:tC1} \\
 \tCe_t & : = \big\{ C_{\ell,k} \in \C \setminus \tCe_{t-1} \mid r_{\ell, k}^{(t)} < Q^o_{t,c} (\bar s_{\ell,k}) - \bd \big\} 
 \cup \big\{ C_{\ell,k} \in \tCe_{t-1} \mid r_{\ell, k}^{(t)} < Q^o_{t,c} (\bar s_{\ell,k}) - \ud \big\}, \label{rule:tC2} \\
 \tCb_t  & : = \C \setminus \big( \tCr_t \cup \tCe_t \big).  \label{rule:tC3}
\end{align}
The two-threshold rule for $\tCe_t$ uses $\bd$ for adding new elements and $\ud$ for retaining existing ones; $\tCb_t$ is then determined by complement.
\end{itemize}
\smallskip
\noindent {\bf Q-learning step:} We apply total-reward Q-learning, shifting rewards by $-g_t$, treating $\tCr_t$-states as terminal states with terminal reward $v_t$, and further penalizing $\tCb_t$-states by $\epsilon$. In particular:
\begin{itemize}[leftmargin=0.6cm,labelwidth=!]
\item[1.] Let $ \tS^i_t : = \bigcup_{C \in \tC^i_t} C$ for $i \in \{\mathrm{r}, \mathrm{e}, \mathrm{b}\}$.
\item[2.] Sample a subset $Y_t \subset \tG_t := \{ (s,a) \in \tA_t \mid s \notin \tSr_t\}$.\\
For each $(s,a) \in Y_t$, sample a transition  to obtain reward $R^{sa}_{t+1}$ and next state $S^{sa}_{t+1}$.
\item[3.] Update $Q_t$ for each $(s,a) \in Y_t$ by
\begin{align}
\hspace{-1em}  
Q_{t+1}(s, a)  = Q_t(s,a) + \alpha_t(s,a) & \Big(R^{sa}_{t+1} - g_t(s) - \epsilon \, \ind\big\{ s \in \tSb_t \big\}  + \ind \big\{S^{sa}_{t+1} \in \tSr_t\big\} \cdot v_t(S^{sa}_{t+1}) \notag \\
  & \ + \ind\big\{S^{sa}_{t+1} \not\in \tSr_t \big\} \cdot\!\! \max_{a' \in \tA_t(S^{sa}_{t+1})} Q_t(S^{sa}_{t+1}, a') \, - Q_t(s,a) \Big), \label{eq-alg2-ql} 
\end{align} 
where $\alpha_t(s,a) \geq 0$ is a possibly random stepsize that can depend on information available at time $t$, including $Y_t$ (but not on the transition data indexed by $t+1$).
\end{itemize}
This completes the update at iteration $t$. \hfill $\diamond$
\end{myalgorithm}

\subsection{Convergence and performance results}

Let Algorithm~\ref{alg:2} be run under the following conditions, which are assumed throughout this subsection unless stated otherwise:
\begin{enumerate}[
    label=\textbf{(C\arabic*)},
    ref=(C\arabic*),
    leftmargin=1cm,
    labelwidth=!
]
\item \label{cond:C1} 
Algorithm~\ref{alg:mf-multichain} satisfies its convergence conditions. Furthermore, its Phase~2 outputs converge a.s.\ to an AOE solution for each $\M_{\ell,k}$, as specified earlier.
\item \label{cond:C2}
The information-flow structure indicated in Algorithm~\ref{alg:2} is respected. (An explicit formal statement of this condition in terms of stochastic kernels is given in Assumption~\ref{cond-meas-caus}, Section~\ref{sec-conv-proofs}.)
\item \label{cond:C3} 
Standard Q-learning data and stepsize conditions hold a.s.:
\begin{itemize}[leftmargin=0.65cm,labelwidth=!]
\item[(i)] Any state-action pair that occurs in $\tG_t$ infinitely often also occurs in $Y_t$ infinitely often.
\item[(ii)] For any $(s, a)$ that occurs infinitely often in $Y_t$, the stepsizes satisfy $\sum_{t} \alpha_t(s,a) = \infty$ and $\sum_{t} \alpha^2_t(s,a) < \infty$, under the convention that $\alpha_t(s,a) = 0$ whenever $(s,a) \not\in Y_t$.
\end{itemize}
\end{enumerate}

The next lemma establishes finite-time stabilization of the sets constructed by Algorithm~\ref{alg:2} and summarizes the properties of their limiting sets. In particular, the approximation properties stated below are precisely those sought in the algorithmic construction outlined earlier and will also be used in the convergence analysis.
The lemma follows directly from the update rules \eqref{rule:Ae1}--\eqref{rule:tC3}, the convergences $Q^o_t \to Q^o_*$, $g_t \to g^*$, and $r_{\ell,k}^{(t)} \to r_{\ell,k}^*$ from Algorithm~\ref{alg:mf-multichain}, and the relations \eqref{eq-aosp0}--\eqref{eq-aosp2}.

\begin{lemma}[Finite-time set stabilization] \label{lem-1}
Under \ref{cond:C1}, almost surely, the sets $\tC^i_{t}$, $\tS^i_{t}$ ($i \in \{\mathrm{r}, \mathrm{e}, \mathrm{b}\}$), $\tA_t$, and $\tG_t$ remain unchanged after finitely many iterations. Denote the resulting sets by $\tC^i$, $\tS^i$, $\tA$, and $\tG$; then $\tCr, \tCe, \tCb$ form a partition of $\C$, and 
$$\tS^i = \bigcup_{C \in \tC^i} C,  \quad i \in \{\mathrm{r}, \mathrm{e}, \mathrm{b}\}; \qquad \tG = \{ (s, a) \in \tA \mid s \not\in \tSr \}.$$
Moreover, the following hold:
\begin{equation} \label{eq-lem1-p1}
   \tA(s) = \A_*(s) \ \ \forall \, s \in \tSr, \qquad  \A_*(s) \subset \tA(s) \subset \A_{\bd}(s) \ \ \  \forall \, s \not\in \tSr,
\end{equation}
\begin{equation} \label{eq-lem1-p2}
  \C_0 \subset  \tCr \subset \Cr, \qquad \tCe \subset \Ce, \qquad \tCb \supset \Cb. \qquad \qquad
\end{equation}
For each $C_{\ell,k} \in \C$,
\begin{equation} \label{eq-lem1-p3}
 r^*_{\ell,k} - Q^o_{c} (\bar s_{\ell,k}) \in \begin{cases}
  [\ud, \infty) & \text{if} \ \,  C_{\ell,k} \in \tCr;   \\
  (-\infty, - \ud]  & \text{if} \  \, C_{\ell,k} \in \tCe; \\
  [-\bd, \bd]  & \text{if} \  \, C_{\ell,k} \in \tCb.
  \end{cases}
\end{equation}
Furthermore, $\tA(s) = \A_\ell(s)$ for all $s \in C_{\ell,k}$ with $C_{\ell,k} \in \tCr$, 
and for every $C_{\ell,k} \in \tCe \cup \tCb$,
\begin{equation} \label{eq-lem1-p4}
    \tA(s) \cap \cE(s) \not= \varnothing \ \ \ \text{for some} \ s \in C_{\ell,k}.
\end{equation}
\end{lemma}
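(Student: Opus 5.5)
The proof will be carried out pathwise, on the probability-one event on which $Q^o_t \to Q^o_*$, $g_t \to g^*$ and $r^{(t)}_{\ell,k} \to r^*_{\ell,k}$ (guaranteed by \ref{cond:C1} and Theorem~\ref{thm:gain_identification1}). Consequently $Q^o_{t,c} \to Q^o_c$ as well, with the value $-\infty$ persisting for classes without exit actions. The core is an elementary \emph{two-threshold lemma}. Let $x_t \to x$ be a real sequence, and let a membership indicator $m_t$ be governed by the rule ``add if $x_t > -\ud$, retain if $x_t > -\bd$''. Then $m_t$ changes only finitely often. Moreover, if $x > -\ud$ it is eventually $1$, and if $x < -\bd$ it is eventually $0$. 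Finally, eventual membership forces $x \ge -\bd$, and eventual non-membership forces $x \le -\ud$.

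To prove the two-threshold lemma, I would argue by the position of the limit. If $x > -\bd$, then eventually $x_t > -\bd$, so once $m_t=1$ it stays $1$. If $x \le -\bd < -\ud$, then eventually $x_t < -\ud$, so no further additions can occur. In both cases at most one further change is possible. The same argument applies with the roles of the thresholds reversed, via the rule ``add if $> \bd$, retain if $> \ud$'', which is the pattern of \eqref{rule:tC2}.

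I would apply the lemma to each of the finitely many quantities $x_t = Q^o_t(\bar s_{\ell,k},(s,a)) - g_t(s)$ for exit pairs, and $x_t = Q^o_t(s,a) - g_t(s)$ for $s \in \T$. By \eqref{eq-aosp1}--\eqref{eq-aosp2} their limits are $\sum_{s'} p(s'\mid s,a)g^*(s') - g^*(s) \le 0$. This gives eventual constancy of $\A^e_t$, hence of $\tA_t$ and $\tCr_t$. The reversed version, applied to $Q^o_{t,c}(\bar s_{\ell,k}) - r^{(t)}_{\ell,k}$, gives constancy of $\tCe_t$, and then of $\tCb_t$, $\tS^i_t$ and $\tG_t$. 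The limit characterization yields: actions in $\A_*$ (limit $0 > -\ud$) are eventually included, and included actions satisfy the $\bd$-bound. Together with the fact that class-preserving actions always lie in $\A_*$, this gives $\A_*(s) \subset \tA(s) \subset \A_{\bd}(s)$.

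Next I would derive the structural claims. The inclusion $\C_0 \subset \tCr$ holds because level-$0$ classes have no exit actions. By \eqref{eq-exit-a-CeCb}, every class in $\Ce\cup\Cb$ has some exit action in $\A_*$, which is eventually in $\A^e$, so the class is not in $\tCr$; hence $\tCr \subset \Cr$. On $\tCr$, all exit actions are eventually excluded, so $\tA(s)=\A_\ell(s)=\A_*(s)$ there by \eqref{eq-ideal-A*}. Moreover, each exit limit is $\le -\ud$, which with $g^*=r^*_{\ell,k}$ on $\Cr$ gives $r^*_{\ell,k} - Q^o_c \ge \ud$. Eventual membership in $\tCe$ forces $Q^o_c - r^* \ge \ud$, giving the middle case of \eqref{eq-lem1-p3} and $\tCe \subset \Ce$. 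It follows that $\tCr$ and $\tCe$ are disjoint, so $\tCr,\tCe,\tCb$ partition $\C$. Since $\Cb$ is excluded from both $\tCr \subset \Cr$ and $\tCe \subset \Ce$, we get $\Cb \subset \tCb$. Finally, \eqref{eq-lem1-p4} is immediate from the definition of $\tCr$, because a class outside $\tCr$ has a nonempty $\A^e(s)$ for some $s$.

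The step requiring the most care is the $\tCb$ case of \eqref{eq-lem1-p3}. The lower bound $r^* - Q^o_c \ge -\bd$ comes from non-membership in $\tCe$. For the upper bound $r^* - Q^o_c \le \bd$, I would split cases. If $C\in\Ce\cup\Cb$, then $r^*\le Q^o_c$ directly. If $C \in \Cr$ but $C \notin \tCr$, then some included exit pair has $Q^o_*(\bar s_{\ell,k},(s,a)) \ge g^*(s) - \bd = r^* - \bd$, so $Q^o_c \ge r^*-\bd$.
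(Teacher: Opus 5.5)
Your proposal is correct and follows the route the paper indicates. The paper gives no detailed argument, only the remark that the lemma follows from the two-threshold update rules, the a.s.\ convergence of $Q^o_t$, $g_t$, $r^{(t)}_{\ell,k}$, and the relations \eqref{eq-aosp0}--\eqref{eq-aosp2}; your hysteresis lemma and case analysis supply exactly these details, and for the $\tCb$ upper bound the case split can be skipped, since \eqref{eq-aosp0} gives $Q^o_c(\bar s_{\ell,k}) \ge g^*(s)-\bd \ge r^*_{\ell,k}-\bd$ for any class outside $\tCr$.
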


The finite-time stabilization in Lemma~\ref{lem-1} allows the asymptotic behavior of the Q-learning step to be characterized in terms of the resulting fixed sets and a sample-path-dependent, SSP-type total-reward MDP determined by these sets, $v_*$, and the penalty parameter $\epsilon$. 
The resulting convergence theorem is stated below; its proof is given in Section~\ref{sec-conv-proofs}. 

\begin{theorem}[Convergence of Algorithm~\ref{alg:2}] \label{thm-1}
Let $\tC^i$, $\tS^i$ ($i \in \{\mathrm{r}, \mathrm{e}, \mathrm{b}\}$), $\tA$, and $\tG$ be the sample-path-dependent, eventually fixed sets guaranteed by Lemma~\ref{lem-1}.
Then the sequence $\{Q_t\}$ generated by Algorithm~\ref{alg:2} converges a.s.\ componentwise on $\tG$ to the unique solution $\hat Q^*$ of
\begin{align} 
   Q(s,a) & = r(s,a) - g^*(s) - \epsilon \1_{\tSb}(s) + \sum_{s' \in \tSr} p(s' \mid s, a) v_*(s') \notag \\
    & \quad + \sum_{s' \not\in \tSr} p(s' \mid s, a)  \max_{a' \in \tA(s')} Q(s',a'), \quad \qquad \qquad (s,a) \in \tG. \label{eq-thm1-Q-ssp-opt}
\end{align}
Consequently, if $x_t \in \R^{|\S|}$ is define by
\begin{equation} \label{def-xt}
  x_t(s) := \begin{cases}
   v_t(s)  & \text{if} \  s \in \tSr_t,\\
  \max_{a \in \tA_t(s)} Q_{t}(s, a)  &  \text{if} \  s \not\in \tSr_t,
  \end{cases}
\end{equation}
then $\{x_t\}$ converges a.s.\ to a solution $\bar x$ of the perturbed AOE,
\begin{equation} \label{eq-paoe}
    x  = \max_{\mu \in \Pi_{\tA}} \left\{ r_\mu - g^* + P_\mu x \right\} - \epsilon\1_{\tSb}, 
\end{equation}
and $\bar x$ is the unique solution that coincides with $v_*$ on $\tSr$. Finally, there exists a deterministic constant $\bd_0 > 0$ such that, for every $\bd < \bd_0$, $\tA = \A_*$ and the corresponding solution $\bar x$ converges, as $\epsilon \to 0$, to a solution of the AOE~\eqref{eq-aoe2}.
\end{theorem}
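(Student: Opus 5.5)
Fix a sample path on which the conclusions of Lemma~\ref{lem-1} and the convergences in \ref{cond:C1} hold. Let $t_0$ be a (random) time after which $\tC^i_t$, $\tS^i_t$, $\tA_t$ and $\tG_t$ are constant. For $t\ge t_0$, the update \eqref{eq-alg2-ql} is then asynchronous total-reward Q-learning on a fixed MDP $\hat\M$, defined as in $\M^\epsilon$ but built from the stabilized sets. Its effective state space is $\S\setminus\tSr$ and its action sets are $\tA$. Transitions into $\tSr$ go to $\Delta$ with terminal reward $v_*$. Its one-stage reward is $r(s,a)-g^*(s)-\epsilon\1_{\tSb}(s)$. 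The algorithm sees this MDP corrupted by vanishing perturbation terms: $g_t-g^*$ and $v_t-v_*$, both of which converge to $0$ a.s.

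\textbf{Step 1: $\hat\M$ is of SSP type.} I would redo the proof of Lemma~\ref{lem-aux-idealized-alg} with the stabilized sets in place of the exact ones.

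For existence of a proper policy, use \eqref{eq-lem1-p4}: each $C\in\tCe\cup\tCb$ has a state with an admissible exit action. Combine this with a reaching policy inside the communicating sub-MDP and with arbitrary actions in $\tA(s)$, $s\in\T$. Bather's structure (exits lead to strictly lower levels, and $\T$ states are transient under every policy) then forces eventual entry into $\tSr$. Since $\C_0\subset\tCr$ and all states outside $\tSr$ have lower-level descendants under this policy, this should be checked carefully, but it seems routine.

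For improper policies, a recurrent class $D$ of a policy avoiding $\Delta$ lies in some $C_{\ell,k}\in\tCe\cup\tCb$ by Lemma~\ref{lem:uhd}(i), and it uses only class-preserving actions there. If $C\in\tCe\subset\Ce$, the gain on $D$ is at most $r^*_{\ell,k}-g^*<0$. If $C\in\tCb$, the gain is at most $r^*_{\ell,k}-g^*-\epsilon\le-\epsilon$, using $g^*\ge r^*_{\ell,k}$ from \eqref{eq-aosp0}. Either way $V=-\infty$. This is where the penalty matters, since $\tCb$ may contain classes from $\Cr$ or $\Ce$ as well as $\Cb$. Theorem~\ref{thm:ssp-opt} then gives a unique solution $\hat Q^*$ of \eqref{eq-thm1-Q-ssp-opt}.

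\textbf{Step 2: convergence of $Q_t$; the main obstacle.} I would invoke the SSP Q-learning convergence result of Appendix~\ref{append-ql-ssp} (Tsitsiklis / Yu--Bertsekas) on $\tG$ from time $t_0$ on. Conditions \ref{cond:C2}--\ref{cond:C3} supply the asynchrony, martingale-noise and stepsize hypotheses. The difficulties are two:
\begin{itemize}
\item the random starting time $t_0$;
\item the extra reward error $e_t=-(g_t-g^*)(s)+\ind\{S'\in\tSr\}(v_t-v_*)(S')$, which tends to $0$ a.s.
\end{itemize}
I would handle both as in Lemma~\ref{lem: phase3_conv}, using a stochastic-approximation result allowing asymptotically vanishing biases. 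The key ingredient there is boundedness of the iterates. For SSP problems with improper policies this is the delicate part: I would use the Yu--Bertsekas boundedness result, which holds under Assumption~\ref{cond:ssp}, extended to vanishing bias. Such biases can be absorbed by comparison with iterates of perturbed reward $\pm\eta$ for arbitrarily small $\eta$, since $\hat\M$ remains SSP type under small reward perturbations by Step 1's strict negativity. The conclusion is $Q_t\to\hat Q^*$ on $\tG$.

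\textbf{Step 3: consequences.} $x_t\to\bar x$, where $\bar x=v_*$ on $\tSr$ and $\bar x=\max_a\hat Q^*$ elsewhere. On $\tSr$, \eqref{eq-lem1-p1} gives $\tA=\A_*=\A_\ell$, and \eqref{eq-lem1-p3} gives $r^*_{\ell,k}=g^*$, since $\tCr\subset\Cr$. So \eqref{eq-subMDP-aoe-sol} yields \eqref{eq-paoe} there, while off $\tSr$ \eqref{eq-paoe} is exactly \eqref{eq-thm1-Q-ssp-opt}. Uniqueness given $\bar x|_{\tSr}=v_*$ is the uniqueness in Theorem~\ref{thm:ssp-opt}.

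For the final claim, let $\bd_0$ be the minimum positive gap $g^*(s)-\sum_{s'}p(s'|s,a)g^*(s')$ over all $(s,a)$. Then $\A_{\bd}=\A_*$ for $\bd<\bd_0$, so \eqref{eq-lem1-p1} gives $\tA=\A_*$. Similarly, shrinking $\bd_0$ below the minimum nonzero value of $|r^*_{\ell,k}-Q^o_c|$, \eqref{eq-lem1-p3} forces $\tCr=\Cr$, $\tCe=\Ce$ and $\tCb=\Cb$. The equation then becomes \eqref{eq-prf1}, and the monotone-limit argument in the proof of Prop.~\ref{prp-idealized-alg} gives $\bar x\to$ a solution of \eqref{eq-aoe2} as $\epsilon\to0$.
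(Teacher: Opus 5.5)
\textbf{The gap: reducing the actual run to a fixed SSP.} Your Step~2 does not justify passing from Algorithm~\ref{alg:2} as actually run to Q-learning on a fixed SSP. The following objects are all random and determined by the \emph{entire future} trajectory of Algorithm~\ref{alg:mf-multichain}: the limiting sets $\tA$, $\tSr$, $\tSb$, $\tG$, the terminal values $v_*$, and the stabilization time $t_0$. In particular, $t_0$ is a last-change time rather than a stopping time, and $v_*$ is not adapted to the filtration of the iteration. So you cannot ``fix a sample path'' and then invoke an almost-sure Q-learning theorem from $t_0$ on.

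Lemma~\ref{lem: phase3_conv} gives no mechanism for this. There the aggregated OSP is a fixed deterministic MDP, updated on a fixed index set from time $0$, and only the reward perturbation has to be handled.

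The missing idea is the paper's Step~3 (Section~\ref{appsec:alg2-conv-prf-step3}):
\begin{itemize}
\item Use the one-way information flow \ref{cond:C2}, formalized in Assumption~\ref{cond-meas-caus}, to write $\P(d\omega)=\P^b(d\xi)\,\K(\xi,d\zeta)$.
\item Partition by the limiting configuration $j$ and the stabilization time $k$, both of which are functions of $\xi$ alone.
\item Condition on $(\xi,\zeta_{0:k})$. Under $\K_{k+1:\infty}\big((\xi,\zeta_{0:k}),\cdot\big)$, the sets, $g_t$, $v_t$ and $v_*$ are deterministic. The transition data still have the MDP's conditional distributions by Assumption~\ref{cond-meas-caus}(iii). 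The stepsize conditions \ref{cond:C3} still hold a.s.\ for $\P$-almost every conditioning point; this is the Fubini step that produces the sets $D_{j,k}$.
\item Only then apply a fixed-MDP result (the paper's Lemma~\ref{lem-4.1}, or your comparison argument below), and integrate back over $(j,k)$.
\end{itemize}

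\textbf{What is sound.} Step~1 matches the paper. You correctly note that classes of $\Cr$ or $\Ce$ may land in $\tCb$, which is exactly why \eqref{eq-aosp0} is needed. Step~3 matches the paper, and your explicit $\bd_0$ is a more concrete version of the paper's ``$\bd$ sufficiently small.''

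\textbf{Your treatment of the bias $e_t$ is a different route.} The paper's Prop.~\ref{prop-4.1} re-enters the Bertsekas--Tsitsiklis convergence proof and both Yu--Bertsekas boundedness proofs. Your $\pm\eta$ comparison instead uses the unbiased results as black boxes. It in fact yields convergence outright, so you do not need the unspecified ``SA result with vanishing bias,'' which is not off the shelf for SSPs with improper policies. The argument runs as follows:
\begin{itemize}
\item Once $\alpha_t\le 1$ and $|e_t|\le\eta$ for all $t$ after a deterministic time $m$, the update is monotone in $(Q,\text{reward})$. Hence $Q_t$ is sandwiched between unbiased Q-learning iterates for $\hat\M$ with rewards $\hat r\pm\eta$, both started from $Q_m$.
\item These perturbed MDPs are of SSP type for $\eta<\min\{\epsilon,\ud\}$, by your Step~1 margin together with $g^*-r^*_{\ell,k}\ge\ud$ on $\tCe$ (from \eqref{eq-lem1-p3}, since $g^*=Q^o_c$ there).
\item They converge to $Q^*_{\pm\eta}$ by the standard results, and $Q^*_{\pm\eta}\to\hat Q^*$ as $\eta\to0$ by monotonicity and the uniqueness in Theorem~\ref{thm:ssp-opt}.
\item A countable union over $m$ and rational $\eta$ finishes the argument.
\end{itemize}
Note, however, that these comparison processes are built from $v_*$ and the limiting sets. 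They therefore also make sense only after the conditioning described above.
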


Thus, the theorem shows that, as a consequence of the Q-learning convergence,
Algorithm~\ref{alg:2} asymptotically constructs a solution to a sample-path-dependent perturbed AOE that extends the sub-MDP AOE solutions on $\tSr$, and thereby approximately solves the AOE~\eqref{eq-aoe2}. The relaxation of the constraint set $\A_*$ to $\tA$ and the penalty $\epsilon$ on $\tSb$ both contribute to the approximation error. If $\Cb = \varnothing$, i.e., there is no sub-MDP in the third category where remaining and exiting are both gain-optimal, then for sufficiently small $\bd$, $\tSb = \varnothing$ by Lemma~\ref{lem-1}, $\tA = \A_*$, and the algorithm asymptotically solves the AOE exactly.

The approximate AOE solutions targeted by Algorithm~\ref{alg:2}, as well as its iterates, yield policies whose suboptimality in gain can be bounded in terms of the algorithmic parameter $\bd$, which controls the accuracy of the outer approximations of both $\A_*$ and $\Cb$. We next quantify these performance guarantees for sufficiently large $t$.

Define an induced greedy policy $\mu_t \in \Pi_\A$ as follows: 
\begin{itemize}[leftmargin=0.6cm,labelwidth=!]
\item for $s \not\in \tSr_t$, let $\mu_t(s) \in \argmax_{a \in \tA_t(s)} Q_t(s,a)$; 
\item for $s \in \tSr_t$, let $\mu_t$ coincide on each $C_{\ell,k} \in \tCr_t$ with $\pi^{(t)}_{\ell,k}$, the policy for the sub-MDP $\M_{\ell,k}$ produced by RVI Q-learning in Phase 2 of Algorithm~\ref{alg:mf-multichain}. 
\end{itemize}
For $B \subset \S$, let $\tau_B$ denote the hitting time of $B$ for the state process $\{S_n\}_{n \geq 0}$; let $\bar \tau^\mu_B(s)$ denote its expected value under a policy $\mu$, starting from $S_0 = s$. That is, 
\[\tau_B := \min \{n \geq 0 \mid S_n \in B \}, \qquad \bar \tau^\mu_B(s) := \E^\mu_s [ \tau_B ].\]
We define $\phi_\mu(s, B)$ as the expected number of times, before hitting $B$, that the state lies in $\T$ or $\mu$ applies an exit action of Bather's decomposition: 
\begin{equation}
  \phi_\mu(s, B) :=  \E^\mu_s \Big[ \sum_{n=0}^{\tau_B-1} \ind \big\{ S_n \in \T \ \text{or} \  \mu(S_n) \in \cE(S_n)  \big\} \Big].
\end{equation}  

\begin{theorem}[Eventual performance bounds for Algorithm~\ref{alg:2}] \label{thm-2}
In the setting of Theorem~\ref{thm-1}, almost surely, for all sufficiently large $t$, the induced policy $\mu:=\mu_t$ satisfies $g_\mu(s) = g^*(s)$ for $s \in \tSr$, while all states in $\S \setminus \tSr$ are transient under $\mu$, with
\[g_{\mu}(s) - g^*(s) \geq -  \bd \,  \phi_\mu(s,  \tSr), \qquad  s \in \S \setminus \tSr.
\]
Hence, for all sufficiently small $\bd$, $\mu_t$ is eventually gain-optimal.
\end{theorem}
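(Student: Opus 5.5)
We work on a fixed sample path on which the conclusions of Lemma~\ref{lem-1} and Theorem~\ref{thm-1} hold. We also use the Phase~2 guarantee (Lemma~\ref{lem:phase2_rvi_convergence}) that greedy RVI policies are eventually gain-optimal for every $\M_{\ell,k}$. After some random time, then, the sets $\tC^i,\tS^i,\tA,\tG$ are fixed. Each $\pi^{(t)}_{\ell,k}$ with $C_{\ell,k}\in\tCr$ is gain-optimal for $\M_{\ell,k}$. Moreover, $Q_t$ lies within half the minimal positive action gap of $\hat Q^*$ on $\tG$, so $\mu_t$ is greedy with respect to $\hat Q^*$ on $\S\setminus\tSr$. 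Fix such a $t$ and write $\mu=\mu_t$.

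\emph{Step 1: states in $\tSr$.} Since $\tCr\subset\Cr$ by \eqref{eq-lem1-p2}, we have $g^*(s)=r^*_{\ell,k}$ on each $C_{\ell,k}\in\tCr$. The policy $\mu$ uses only class-preserving actions there, so $C_{\ell,k}$ is closed under $\mu$. Gain-optimality of $\pi^{(t)}_{\ell,k}$ in the sub-MDP then gives $g_\mu=r^*_{\ell,k}=g^*$ on $C_{\ell,k}$.

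\emph{Step 2: transience of $\S\setminus\tSr$.} By Lemma~\ref{lem-aux-idealized-alg}, adapted to the sets $\tSr,\tSb,\tA$, the MDP defined by \eqref{eq-thm1-Q-ssp-opt} is of SSP type. The adaptation requires two checks. First, a proper policy exists by \eqref{eq-lem1-p4} and the communicating property. Second, improper policies have $-\infty$ value. For this, any recurrent class $D$ avoiding $\tSr$ lies in some $C_{\ell,k}\in\tCe\cup\tCb$ by Lemma~\ref{lem:uhd}(i). If $C_{\ell,k}\in\tCe$, then $r^*_{\ell,k}<g^*$ by \eqref{eq-lem1-p3}. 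If $C_{\ell,k}\in\tCb$, then $g^*\ge r^*_{\ell,k}$ by \eqref{eq-aosp0} and the $\epsilon$ penalty makes the gain negative. In either case, by Theorem~\ref{thm:ssp-opt}, $\mu$ greedy with respect to $\hat Q^*$ is proper. Hence from every $s\notin\tSr$ the process hits $\tSr$ a.s. and $\tau_{\tSr}$ has finite mean, so $\S\setminus\tSr$ is transient under $\mu$.

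\emph{Step 3: the gain bound.} Because $\tSr$ is closed under $\mu$ and hit a.s., we have $g_\mu(s)=\E^\mu_s[g^*(S_{\tau})]$ with $\tau=\tau_{\tSr}$, by Step~1. Since $g^*=P_\mu g^*$ would give equality, we write $g^*(S_{n+1})$ versus $g^*(S_n)$ via the martingale decomposition (optional stopping is valid since $\E\tau<\infty$):
\[
\E^\mu_s[g^*(S_\tau)]-g^*(s)=\E^\mu_s\Big[\sum_{n=0}^{\tau-1}\big((P_\mu g^*)(S_n)-g^*(S_n)\big)\Big].
\]
By \eqref{eq-lem1-p1}, $\mu(S_n)\in\tA(S_n)\subset\A_{\bd}(S_n)$, so each summand is at least $-\bd$. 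If $S_n\in C_{\ell,k}$ and $\mu(S_n)\in\A_\ell(S_n)$, the summand is zero by \eqref{eq-aosp1}. Nonzero terms therefore occur only when $S_n\in\T$ or an exit action is used, which yields $g_\mu(s)-g^*(s)\ge-\bd\,\phi_\mu(s,\tSr)$. The main obstacle is this step's bookkeeping together with Step~2's properness argument.

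\emph{Final claim.} The final claim follows from Theorem~\ref{thm-1}: for $\bd<\bd_0$ we have $\tA=\A_*$. All summands then vanish, since actions in $\A_*$ satisfy $P g^*=g^*$ componentwise, and $\mu_t$ is gain-optimal.
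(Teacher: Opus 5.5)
Your proof is correct and follows the same skeleton as the paper's proof:
\begin{itemize}
\item gain-optimality on $\tSr$ from the Phase~2 guarantee together with $\tCr\subset\Cr$;
\item transience of $\S\setminus\tSr$, because a policy attaining the maximum in the SSP optimality equation is proper (Theorem~\ref{thm:ssp-opt});
\item a lower bound on $g_\mu-g^*$ by the expected accumulated drift $(P_\mu g^*)-g^*$ before hitting $\tSr$.
\end{itemize}

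The differences are in the tools.

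\emph{Gain identity.} The paper obtains $g_\mu=P^*_\mu g^*$ by left-multiplying $\bar x=r_\mu-g^*+P_\mu\bar x-\epsilon\1_{\tSb}$ by $P^*_\mu$. On the rows in $\tSr$, this uses that $\pi^{(t)}_{\ell,k}$ eventually attains the maximum in the sub-MDP AOE for the limiting $v_*$, which holds under (C1). You obtain $g_\mu(s)=\E^\mu_s[g^*(S_\tau)]$ directly from Step~1 and absorption in $\tSr$. On $\tSr$ you therefore need only the gain-optimality of $\pi^{(t)}_{\ell,k}$.

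\emph{The bound.} The paper uses $P^*_\mu-I=Z_\mu(P_\mu-I)$ and the fact that $[Z_\mu]_{B,B}$ counts expected visits (Lemmas~\ref{lem-Z} and~\ref{lem-bound-gain}). Your optional-stopping identity is the probabilistic form of the same computation and is self-contained. The paper's lemma is stated in general form so it can be reused in Theorem~\ref{thm-3-gen} and Remark~\ref{rem-alg2-variants}.

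\emph{Final claim.} The paper argues that nonzero gain gaps are bounded away from zero while $\phi_\mu(s,\tSr)\le\bar\tau^\mu_{\tSr}(s)$ is uniformly bounded. You instead observe that $\tA=\A_*$ once $\bd$ is below the smallest positive gap $g^*(s)-\sum_{s'}p(s'\mid s,a)g^*(s')$, so every summand vanishes. This is simpler: it is the sharper $\phi^*_\mu$ bound of Remark~\ref{rem-sharp-est-bound} with $\phi^*_\mu=0$. It also gives a threshold that does not involve hitting times.
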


Before giving the proof, we make several remarks.

\begin{remark}[Sharpness and estimability of the bounds] \label{rem-sharp-est-bound}
In general, $\phi_\mu(s, B)$ can be bounded as
\begin{equation}
 \phi^*_\mu(s, B) :=  \E^\mu_s \Big[ \sum_{n=0}^{\tau_{B}-1} \ind \big\{ \mu(S_n) \not\in \A_*(S_n) \big\} \Big] \,  \leq \phi_\mu(s, B) \leq \bar \tau_{B}^\mu(s). \qquad \qquad
\end{equation} 
As the proof will show, the term $\phi_\mu(s, \tSr)$ in Theorem~\ref{thm-2} can be replaced by $\phi^*_\mu(s, \tSr)$, yielding a sharper performance bound. However, $\phi^*_\mu(s, \tSr)$ depends on the unknown set $\A_*$, whereas $\phi_\mu(s,  \tSr)$ is, in principle, estimable from sample trajectories under $\mu$, using Bather's decomposition.
\end{remark}

\begin{remark}[Effect of algorithmic parameters] \label{rem-alg2-param}
Theorem~\ref{thm-2} shows how the algorithmic parameters affect the induced policies $\mu_t$ for sufficiently large $t$. Smaller $\bd$, which controls the accuracy of the outer approximations of both $\A_*$ and $\Cb$, yields a tighter gain bound. In contrast, the penalty $\epsilon$ used to stabilize Q-learning does not affect this worst-case asymptotic bound, since it is applied only to $\tSb$, where remaining and exiting are nearly equally good in average reward (Lemma~\ref{lem-1}). It does affect policy behavior: the penalty ensures exit from the sub-MDPs in $\tSb$, with larger $\epsilon$ encouraging earlier exit.
\end{remark}

\begin{remark}[Alternative policies and algorithmic variants] \label{rem-alg2-variants}
\hfill \\
(a) Besides the greedy policies studied above, near-greedy policies induced by Algorithm~\ref{alg:2} can be analyzed similarly. We illustrate this with a case in which $\tSb$ need not be entirely transient under the induced policy. Let $\bar \epsilon$, $\epsilon$, $\ud$ satisfy $\epsilon \leq \bar \epsilon < \ud$. 
Define a policy $\mu'_t$ similarly to $\mu_t$, except that, for $s \not\in \tSr_t$, let $\mu'_t(s)$ be $\bar \epsilon$-greedy w.r.t.\ $Q_t$:
\begin{equation}
 \mu'_t(s) \in \Big\{ a \in \tA_t(s) \mid Q_t(s,a) > \max_{a \in \tA_t(s)} Q_t(s,a) - \bar \epsilon \Big\}, \qquad s \in \S \setminus \tSr_t.
 \end{equation}
Then, for sufficiently large $t$, $\mu:=\mu'_t$ is gain-optimal on $\tSr$, as before. On $\S \setminus \tSr$, the following additional properties hold under $\mu$: $\tSe$ is transient, while $\tSb$ may contain recurrent states. For the latter states,
\begin{equation} \label{eq-alg2var-bound1}
 g_\mu(s) - g^*(s) \geq - (\bar \epsilon - \epsilon), \quad s \in \Rec_\mu \cap \tSb;
\end{equation} 
and, for all transient states $s \in \S \setminus \tSr$,
\begin{equation} \label{eq-alg2var-bound2}
 g_\mu(s) - g^*(s) \geq - \bar \epsilon -  \bd \,  \phi_\mu(s, \tSr \cup \Rec_\mu).
\end{equation}
Derivations of these performance bounds are given at the end of Section~\ref{sec-proof-performance-alg2}.\\*[0.3em]
(b) Consider settings in which solving the AOE is not a primary goal and, for $C_{\ell,k} \in \tCb$, it is preferable to use the eventually optimal policies $\pi^{(t)}_{\ell,k}$ for $\M_{\ell,k}$, so as to remainwithin the sub-MDP rather than exit it. Algorithm~\ref{alg:2} can then be modified as follows.
In the Q-learning step, treat $\tCb_t$-states as terminal states as well, with terminal rewards specified by $v_t$, rather than penalizing them. The update rule \eqref{eq-alg2-ql} can be modified accordingly and carried out only for selected state-action pairs in $\{(s, a) \in \tA_t \mid s \not\in \tSr_t \cup \tSb_t\}$. On $\S \setminus (\tSr \cup \tSb)$, eventual performance bounds for the resulting greedy policies (omitted here) can be derived by similar arguments.\\*[0.3em]
(c) Finally, as an alternative to classical Q-learning, the Q-learning step can be implemented using a policy-iteration-like Q-learning variant \cite{BeY12,YuB12}, which avoids maximization over all actions at each iteration while retaining the same convergence guarantee.
\end{remark}

\subsection{Proof of Theorem~\ref{thm-2} and supporting derivations} \label{sec-proof-performance-alg2}

For a stationary policy $\mu$, let $P^*_\mu$ denote the \emph{limiting matrix} and $Z_\mu$ the \emph{fundamental matrix}. Recall that $P^*_\mu = \lim_{n \to \infty} P^n_\mu$ in the aperiodic case and in general equals the Ces\`{a}ro limit of $P_\mu^n$. By definition, $Z_\mu = (I - P_\mu + P^*_\mu)^{-1}$. In terms of these matrices, the gain and bias of $\mu$ are given, respectively, by 
\[g_\mu = P^*_\mu r_\mu, \qquad h_\mu = Z_\mu ( r_\mu - g_\mu).\]
See, e.g., \cite[Secs.\ 8.2 and A.5]{Put94} for these standard facts. We will use the following properties of $Z_\mu$ (see, e.g., \cite{Bla62,FeS84}):
\begin{align}
 & Z_\mu (P_\mu - I )  = P^*_\mu - I,  \label{eq-Z-1}\\
 & Z_\mu  = I + \lim_{\beta \uparrow 1} \sum_{n=1}^\infty \beta^n (P_\mu^n - P^*_\mu). \label{eq-Z-2}
\end{align}
The following lemma is a direct consequence of \eqref{eq-Z-2}.

\begin{lemma} \label{lem-Z}
For any $B \subset \T_\mu$ such that $B^c$ is closed under $\mu$, the submatrix $[Z_\mu]_{B,B}$ is nonnegative, and its row sums equal $\bar \tau^\mu_{B^c}(s)$ for $s \in B$. Moreover, for $D \subset B$ and $s \in B$,
\[   \big([Z_\mu]_{B,B} \1_D\big)(s) = \E_s^\mu \Big[ \sum_{n=0}^{\tau_{B^c} -1} \ind \{ S_n \in D \} \Big] \leq \bar \tau^\mu_{B^c}(s).\]
\end{lemma}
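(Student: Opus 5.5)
The plan is to use the series representation \eqref{eq-Z-2} together with the block structure of $P_\mu$ induced by the hypotheses. Since $B^c$ is closed under $\mu$, order the states as $(B, B^c)$. Then $P_\mu$ is block upper triangular with zero lower-left block, so $[P_\mu^n]_{B,B} = ([P_\mu]_{B,B})^n =: T^n$ for all $n$. Since $B \subset \T_\mu$, every state of $B$ is transient. Hence $[P^*_\mu]_{\cdot,B} = 0$, because the limiting matrix puts no mass on transient states. It follows that $[P^n_\mu - P^*_\mu]_{B,B} = T^n$.

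Substituting into \eqref{eq-Z-2} gives
\[
[Z_\mu]_{B,B} = I + \lim_{\beta\uparrow 1}\sum_{n\ge1}\beta^n T^n .
\]
Because $B$ consists of transient states and $B^c$ is closed, the chain started in $B$ leaves $B$ a.s. and never returns. So $T$ is substochastic with spectral radius $<1$, and $\sum_n T^n = (I-T)^{-1}$ converges. By monotone convergence, the Abel limit equals $\sum_{n\ge0} T^n$. This matrix has nonnegative entries, which is the first claim.

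For the probabilistic identities, I would interpret $(T^n)(s,s') = \mathbb P^\mu_s(S_n = s',\, S_0,\dots,S_n\in B)$. Since $B^c$ is closed, this equals $\mathbb P^\mu_s(S_n=s',\, n<\tau_{B^c})$. Summing over $n\ge 0$ and $s'\in D$ gives
\[
\big([Z_\mu]_{B,B}\1_D\big)(s)=\E^\mu_s\Big[\sum_{n=0}^{\tau_{B^c}-1}\ind\{S_n\in D\}\Big],
\]
by Tonelli. Taking $D=B$ turns the summand into $1$ for every $n<\tau_{B^c}$, so the row sum is $\E^\mu_s[\tau_{B^c}]=\bar\tau^\mu_{B^c}(s)$. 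The inequality for general $D\subset B$ then follows from monotonicity in $D$.

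The only delicate step is justifying $[P^*_\mu]_{\cdot,B}=0$ and the finiteness of $\sum_n T^n$, i.e., that $\bar\tau^\mu_{B^c}<\infty$. Both are standard for finite chains: transient states have zero Cesàro-limit mass, and absorption into the closed set $B^c$ occurs geometrically fast.
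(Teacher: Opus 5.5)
Your proposal is correct and takes the same route as the paper, which simply states that the lemma is a direct consequence of the series representation \eqref{eq-Z-2}. Your argument supplies the details the paper leaves implicit: the block-triangular structure from closedness of $B^c$, the vanishing of $[P^*_\mu]_{B,B}$ on transient states, and the identification $[Z_\mu]_{B,B}=\sum_{n\ge 0}([P_\mu]_{B,B})^n$ as an expected visit count before $\tau_{B^c}$.
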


By the definition of $\A_*$, $(P_\mu g^*)(s) = g^*(s)$ if and only if $\mu(s) \in \A_*(s)$.
If $s$ is recurrent under $\mu$, then $(P_\mu g^*)(s) = (P^*_\mu g^*)(s) = g^*(s)$. The next lemma bounds $(P^*_\mu g^*)(s) - g^*(s)$ for transient states under $\mu$ and will be used repeatedly in the performance-bounds analysis.

\begin{lemma} \label{lem-bound-gain}
Let $\mu$ be a policy in $\Pi_{\A_\delta}$ with $\delta > 0$. Let $B \subset \T_\mu$ be such that $B^c$ is closed under $\mu$ and $(P_\mu g^*)(s) = g^*(s)$ for all $s \in B^c$.
Then, for $s \in B$,
\begin{align}
  (P^*_\mu g^*)(s) - g^*(s) & \geq - \delta \, \E^\mu_s \Big[ \sum_{n=0}^{\tau_{B^c}-1} \ind \big\{ \mu(S_n) \not\in \A_*(S_n) \big\} \Big] \label{eq-bound-gain1} \\
  & \geq - \delta \, \E^\mu_s \Big[ \sum_{n=0}^{\tau_{B^c}-1} \ind \big\{ S_n \in \T \ \text{or} \  \mu(S_n) \in \cE(S_n)  \big\} \Big], \label{eq-bound-gain2}
\end{align}  
or equivalently, $(P^*_\mu g^*)(s) - g^*(s) \geq - \delta \, \phi^*_\mu (s, B^c) \geq - \delta \, \phi_\mu (s, B^c)$.
\end{lemma}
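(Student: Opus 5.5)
We use the fundamental-matrix identity \eqref{eq-Z-1} applied to $g^*$. It gives $P^*_\mu g^* - g^* = Z_\mu (P_\mu g^* - g^*)$. Set $w := P_\mu g^* - g^*$. Since $\mu \in \Pi_{\A_\delta}$, we have $w(s) \geq -\delta$ for every $s$. Moreover $w(s) = 0$ exactly when $\mu(s) \in \A_*(s)$, so $w \geq -\delta\, \1_{\{s:\, \mu(s) \notin \A_*(s)\}}$. By hypothesis, $w = 0$ on $B^c$.

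Next we restrict to rows in $B$ and use the block structure of $Z_\mu$. Because $B^c$ is closed under $\mu$, $P_\mu$ has $[P_\mu]_{B^c,B}=0$, and the same holds for $P^*_\mu$ and for every power $P^n_\mu$. The series \eqref{eq-Z-2} then shows $[Z_\mu]_{B^c,B}=0$. This is not directly needed, since we need the rows indexed by $B$. For $s\in B$ we have
\[
(Z_\mu w)(s) = ([Z_\mu]_{B,B} w_B)(s) + ([Z_\mu]_{B,B^c} w_{B^c})(s) = ([Z_\mu]_{B,B} w_B)(s),
\]
because $w_{B^c}=0$. This step avoids any sign issue for the block $[Z_\mu]_{B,B^c}$, which need not be nonnegative.

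We then apply Lemma~\ref{lem-Z}. The block $[Z_\mu]_{B,B}$ is nonnegative, so with $D := \{s \in B : \mu(s)\notin \A_*(s)\}$,
\[
([Z_\mu]_{B,B} w_B)(s) \geq -\delta\, ([Z_\mu]_{B,B}\1_D)(s) = -\delta\, \E^\mu_s \Big[ \sum_{n=0}^{\tau_{B^c}-1} \ind\{S_n \in D\}\Big].
\]
Before $\tau_{B^c}$ the chain stays in $B$, so the event $S_n\in D$ coincides with $\mu(S_n)\notin\A_*(S_n)$. This gives \eqref{eq-bound-gain1}.

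For \eqref{eq-bound-gain2} it suffices to show pointwise that $\mu(s)\notin\A_*(s)$ implies $s\in\T$ or $\mu(s)\in\cE(s)$. Equivalently, if $s\in C_{\ell,k}$ and $\mu(s)\in\A_\ell(s)$, then $\mu(s)\in\A_*(s)$. This holds by \eqref{eq-aosp1}, because class-preserving actions satisfy $\sum_{s'}p(s'\mid s,a)g^*(s')=g^*(s)$; equivalently, it is the inclusion $\A_\ell(s)\subset\A_*(s)$ in \eqref{eq-ideal-A*}. The indicator in \eqref{eq-bound-gain1} is therefore dominated by the one in \eqref{eq-bound-gain2}, and since $\delta>0$ the inequality follows.

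The main point needing care is the block decomposition. One must use only $[Z_\mu]_{B,B}$, whose nonnegativity Lemma~\ref{lem-Z} supplies, and justify that the $B^c$ columns contribute nothing through $w_{B^c}=0$ rather than through any sign property. The rest is routine.
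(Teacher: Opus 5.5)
Your proposal is correct and follows essentially the same route as the paper: write $P^*_\mu g^* - g^* = Z_\mu (P_\mu - I) g^*$ via \eqref{eq-Z-1}, bound $(P_\mu - I) g^* \geq -\delta\, \1_{\{s:\, \mu(s)\notin \A_*(s)\}}$ with its components vanishing on $B^c$, and invoke the nonnegativity and visit-count interpretation of $[Z_\mu]_{B,B}$ from Lemma~\ref{lem-Z}. For \eqref{eq-bound-gain2} you use the same inclusion $\A_\ell(s) \subset \A_*(s)$ as the paper, and your remark that the $[Z_\mu]_{B,B^c}$ block drops out because $w_{B^c}=0$ (not through any sign property) makes explicit a step the paper leaves implicit.
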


\begin{proof}
Let $E := \big\{ s \in \S \mid \mu(s) \not\in \A_*(s) \big\}$ and $E' := \big\{ s \in \S \mid s \in \T \ \text{or} \  \mu(s) \in \cE(s) \big\}.$
Since each $s$ lies either in $\T$ or in some $C_{\ell,k} \in \C$, and in the latter case $\A(s) = \A_\ell(s) \cup \cE(s)$ with $\A_\ell(s) \subset \A_*(s)$, we have $E \subset E'$.

By~\eqref{eq-Z-1}, 
\[ P^*_\mu g^* - g^* = Z_\mu (P_\mu - I ) g^*.\]
From the definition of $\A_\delta$ and $E$, 
\[ (P_\mu - I ) g^* \geq -\delta \1_E. \] 
Moreover, $[(P_\mu - I ) g^*](s) = 0$ for $s \in B^c$ by assumption. Since $[Z_\mu]_{B,B}$ is nonnegative, it follows that, for $s \in B$,
\[ [Z_\mu (P_\mu - I ) g^*](s) \geq -\delta \, [Z_\mu \1_{E \cap B} ](s).\]
Lemma~\ref{lem-Z} identifies $[Z_\mu \1_{E \cap B} ](s)$ with the expected number of visits to $E \cap B$ before hitting $B^c$, giving \eqref{eq-bound-gain1}. The inclusion $E \subset E'$ then yields \eqref{eq-bound-gain2}.
\end{proof} 

We now prove Theorem~\ref{thm-2}. For brevity, we omit the qualifier ``a.s.'' throughout. 

\begin{proof}[Proof of Theorem~\ref{thm-2}]
Let $t$ be sufficiently large so that the sets constructed in Algorithm~\ref{alg:2} remain unchanged and identify with the respective limiting sets given by Lemma~\ref{lem-1}.

\medskip
\noindent {\bf (i) Eventual gain-optimality of $\mu_t$ on $\tSr$.}\\*[0.5em]
The claim follows from the convergence of RVI Q-learning. In particular, for each $C_{\ell,k} \in \tCr$, RVI Q-learning converges in the communicating sub-MDP $\M_{\ell,k}$, producing policies $\pi^{(t)}_{\ell,k}$ that are gain-optimal for the sub-MDP for all sufficiently large $t$; see \cite{WYS24,YWS25b} and the asynchronous stochastic approximation results in \cite{YWS25a}. 

Since $\tCr \subset \Cr$ (Lemma~\ref{lem-1}), the defining property for $\Cr$ (cf.\ \eqref{eq-def-C1}) implies that these policies $\pi^{(t)}_{\ell,k}$ are also eventually gain-optimal in the original MDP $\M$ for states in $C_{\ell,k}$. As $\mu_t$ eventually coincides with these policies on each $C_{\ell,k} \in \tCr$, it follows that $\mu_t$ is eventually gain-optimal on $\tSr$.

\medskip
\noindent {\bf (ii) Bounding $g_{\mu_t}$ on $\S \setminus \tSr$.}\\*[0.5em] 
We now bound the eventual performance of $\mu_t$ for initial states in $\S \setminus \tSr$.
By Theorem~\ref{thm-1}, $Q_t$ converges to $\hat Q^*$ on $\tG$, where $\hat Q^*$ is the optimal action-value function of an associated SSP-type total-reward MDP $\hat \M$ defined on $\tG$ (Lemma~\ref{lem-4.1}). Therefore, for all sufficiently large $t$, the policy $\mu_t$ is greedy w.r.t.\ $\hat Q^*$ on $\tG$ and satisfies the optimality equation of $\hat \M$.
Hence, writing $\mu := \mu_t$, we have
\begin{equation} \label{eq-prf-thm2-1}
 \bar x = r_\mu - g^* +  P_\mu \bar x - \epsilon \1_{\tSb}.
\end{equation}
Left-multiplying \eqref{eq-prf-thm2-1} by $P^*_\mu$, and noting that $P^*_\mu \bar x = P^*_\mu P_\mu \bar x$ and $g_\mu = P^*_\mu r_\mu$, we obtain
\begin{equation} \label{eq-prf-thm2-2}
   g_\mu = P^*_\mu g^* + \epsilon P^*_\mu \1_{\tSb}.
\end{equation}   

Under $\mu$, $\tSr$ is closed, while $\S \setminus \tSr \subset \T_\mu$. Indeed, since $\mu$ restricted to $\S \setminus \tSr$ solves the optimality equation of the SSP-type MDP $\hat \M$, it is optimal and hence proper for $\hat \M$. Thus, the entire effective state space $\S \setminus \tSr$ of $\hat \M$ is transient under $\mu$. In particular, $\tSb \subset \T_\mu$ and hence $P^*_\mu \1_{\tSb} = 0$, reducing \eqref{eq-prf-thm2-2} to
\begin{equation} \label{eq-prf-thm2-2b}
   g_\mu = P^*_\mu g^*.
\end{equation}  

We now use Lemma~\ref{lem-bound-gain} to bound $P^*_\mu g^* - g^*$ on $\S \setminus \tSr$. By \eqref{eq-lem1-p1} from Lemma~\ref{lem-1}, we have $\mu \in \Pi_{\A_{\bd}}$ and $(P_\mu g^*)(s) = g^*(s)$ for $s \in \tSr$. Applying Lemma~\ref{lem-bound-gain} with $B =  \S \setminus \tSr$ then yields the desired bound
\[ g_\mu(s) - g^*(s) = (P^*_\mu g^*)(s) - g^*(s) \geq - \bd \, \phi_\mu(s, \tSr), \qquad \forall \, s \in \S \setminus \tSr.\]

Since $\Pi_\A$ and $\S$ are finite, the non-zero gain gaps are bounded away from zero. Moreover, $\phi_\mu(s, \tSr) \leq \bar{\tau}_{\tSr}(s)$, and the latter is uniformly bounded over all possible $s$, $\mu$, and $\tSr$. Hence, if the parameter $\bd$ is below a deterministic $\M$-dependent threshold, the above bound implies $g_\mu = g^*$, so $\mu$ is gain-optimal.
\end{proof}

\medskip
\noindent {\bf Derivations of \eqref{eq-alg2var-bound1}--\eqref{eq-alg2var-bound2} for the near-greedy policies in Remark~\ref{rem-alg2-variants}(a):}\\*[0.5em]
Consider now the near-greedy policies $\mu'_t$ in Remark~\ref{rem-alg2-variants}(a). Their definition together with the convergence of Algorithm~\ref{alg:2} (Theorem~\ref{thm-2}) implies that, for sufficiently large $t$, the policy $\mu := \mu'_{t}$ satisfies
\begin{equation} \label{eq-prf-thm2-4a}
  r_\mu - g^* +  P_\mu \bar x - \epsilon \1_{\tSb}  \geq \bar x - \bar \epsilon \1_{\S \setminus \tSr},
\end{equation}
where $\bar x$ is the solution of the perturbed AOE in Theorem~\ref{thm-2}. 
As in the preceding proof, left-multiplying \eqref{eq-prf-thm2-4a} by $P^*_\mu$ and rearranging terms yields
\begin{equation} \label{eq-prf-thm2-4b}
   g_\mu \geq P^*_\mu g^* + \epsilon P^*_\mu \1_{\tSb} - \bar \epsilon P^*_\mu \1_{\S \setminus \tSr}.
\end{equation}

Suppose that $\mu$ induces a recurrent class $E$ outside $\tSr$. Then $g^*$ is constant on $E$ and hence $(P^*_\mu g^*)(s) = g^*(s)$ for all $s \in E$. Moreover, by the properties of Bather's decomposition, $E \subset C_{\ell,k}$ for some $C_{\ell,k} \in \tCe \cup \tCb$. It then follows from \eqref{eq-prf-thm2-4b} that, for recurrent states $s \in \S \setminus \tSr$,
\begin{equation} \label{eq-prf-thm2-5}
     g_\mu(s) - g^*(s) \geq \begin{cases}
      - \bar \epsilon & \text{if} \  s \in \Rec_\mu \cap \tSe; \\
      - (\bar \epsilon - \epsilon) & \text{if} \ s \in \Rec_\mu \cap \tSb.
     \end{cases}
\end{equation}
The second bound establishes \eqref{eq-alg2var-bound1}.

The first bound in \eqref{eq-prf-thm2-5} implies that $\Rec_\mu \cap \tSe = \varnothing$. Indeed, for $s \in \Rec_\mu \cap \tSe$, that bound gives $g_\mu(s) - g^*(s) \geq - \bar \epsilon$, whereas \eqref{eq-lem1-p3} in Lemma~\ref{lem-1}, applied to $C_{\ell,k} \in \tCe$, gives $g_\mu(s) \leq g^*(s) - \ud$, contradicting $\bar \epsilon < \ud$.

Finally, for transient states $s \in \S \setminus \tSr$, note that $\tSr \cup \Rec_\mu$ is closed and $(P_\mu g^*)(s) = g^*(s)$ on this set. We can therefore apply Lemma~\ref{lem-bound-gain} with $B = \S \setminus (\tSr \cup \Rec_\mu) = \T_\mu \setminus \tSr$ to obtain
\[  \big(P^*_\mu g^* - g^*\big)(s) \geq - \bd \, \phi_\mu(s, \tSr \cup \Rec_\mu), \qquad s \in \T_\mu \setminus \tSr.\]
Combining this bound with \eqref{eq-prf-thm2-4b} establishes \eqref{eq-alg2var-bound2}.

\section{Toward bias-optimal policies via approximation} \label{sec:alg3}

In this section, we consider the computation of (near) bias-optimal policies, building on the base algorithm. Whereas gain optimality concerns long-run average reward, bias provides a stronger criterion for comparing gain-optimal policies by accounting for their transient reward behavior. When multiple policies attain the same optimal gain, bias optimality can therefore provide a further criterion for distinguishing among them.

Let $\mu$ be a stationary policy. By standard MDP theory \cite[Sec.\ 8.2]{Put94}, its bias $h_\mu = Z_\mu (r_\mu - g_\mu)$, or equivalently,
\[ 
    h_\mu(s) =  \Clim_{n \to \infty} \E^\mu_s \Big[  \sum_{t=0}^n \big( r_\mu(s_t) - g_\mu(s_t) \big) \Big], \qquad s \in \S,
\]    
where $\Clim$ denotes the Ces\`{a}ro limit and can be replaced by the ordinary limit in the aperiodic case. The terms $\tfrac{g_\mu}{1 - \beta}$ and $h_\mu$ are the first two terms in the Laurent series expansion of the $\beta$-discounted value function $v_{\beta,\mu}$ of $\mu$. We call $\mu$ \emph{bias-optimal} if it is gain-optimal and $h_\mu \geq h_\pi$ for all stationary gain-optimal policies $\pi$. Such a policy $\mu$ exists in $\Pi_\A$. The \emph{optimal bias function} $h^* : = \sup_{\mu:\,g_\mu = g^*} h_\mu$ is a solution of the AOE~\eqref{eq-aoe2}. 

The bias optimality criterion is closely related to two other equivalent criteria: the \emph{$0$-discount optimality} criterion and the \emph{average-overtaking optimality} criterion. Any bias-optimal policy is also optimal w.r.t.\ these criteria, meaning that, for all policies $\pi$ and $s \in \S$, 
\[ 
\liminf_{\beta \uparrow 1} \big(v_{\beta, \mu}(s) - v_{\beta,\pi}(s) \big) \geq 0, \qquad 
\liminf_{n \to \infty} \frac{1}{n} \sum_{k=1}^n (v_{k, \mu}(s) - v_{k,\pi}(s)) \geq 0,
\]
where $v_{k, \pi}$ denotes the $k$-stage value function of $\pi$. See \cite[Chap.\ 10]{Put94} for details.

MDP theory \cite[Sec.\ 10.2]{Put94} tells us that, given $g^*$, a bias-optimal policy can be found by solving the following pair of nested equations, which uniquely determines the optimal bias function $h^*$ and involves an associated vector $y \in \R^{|\S|}$:
\begin{equation} \label{eq-aoe-bias}
  h^* = \max_{\mu \in \Pi_{\A_*}} \{ r_\mu - g^* + P_\mu h^* \}, \qquad y = \max_{\mu \in \Pi_{\A_*[h^*]}} \{ - h^* + P_\mu y \}.
\end{equation}  
Here, $\A_*[h^*] = \big\{\A_*[h^*](s)\big\}_{s\in \S}$, where $\A_*[h^*](s)$ consists of the actions in $\A_*(s)$ that attain the maximum in the first equation for state $s$. Any policy in $\Pi_{\A_*[h^*]}$ attaining the maximum in the second equation is bias-optimal. Rather than tackling these nested equations directly, we shall again take an approximation approach inspired by Federgruen and Schweitzer \cite{FeS84}. The main idea is, roughly speaking, as follows. 

We seek an approximation $\tilde{h}$ of the optimal bias function $h^*$ by leveraging the convergent estimates of $g^*$ and outer approximations of $\A_*$ that can be obtained from the outputs of Algorithm~\ref{alg:mf-multichain}. One option, pursued below, is to do this indirectly: we approximate $h^*$ by a function $\tilde{h} \approx \tilde{h}^*$, where $\tilde{h}^*$ is the optimal bias function of another MDP $\widetilde{\M} = (\S, \tA, p, r - g^*)$ with $\tA$ an outer approximation of $\A_*$ constructed as in Algorithm~\ref{alg:2}. We then apply the base algorithm to solve an average-reward MDP $\M'$ whose one-stage reward is $- \tilde{h}$ and whose action constraints approximate $\A_*[h^*]$. When these approximations are sufficiently accurate, we expect the gain-optimal policies of $\M'$ to be near bias-optimal in the original MDP.

A crucial step in the approach just outlined is to approximate the optimal bias function $\tilde{h}^*$ of $\widetilde{\M}$. In this section, we consider a straightforward way to do so: discounted approximation with large discount factors, applied to the gain-shifted MDP $\widetilde{\M}$. (Alternative approaches are possible but appear more challenging from both computational and analytical perspectives.) The next lemma gives the properties of the discounted approximations used here and largely follows from the results of \citet{Bla62}.

\begin{lemma} \label{lem-approx-hatM}
For any action constraint set $\tA$ with $\A_* \subset \tA \subset \A$, consider the MDP $\widetilde{\M} = (\S, \tA, p, r - g^*)$. Then the optimal gain in $\widetilde{\M}$ is $0$, and its optimal bias function $\tilde{h}^*$ satisfies $\tilde{h}^* \geq h^*$, with $\tilde{h}^* = h^*$ if $\tA = \A_*$. Moreover, for $\beta \in [0,1)$, let $v_\beta$ denote the $\beta$-discounted optimal value function of $\widetilde{\M}$. Then 
\begin{equation} v_\beta \to \tilde{h}^* \ \ \ \text{as} \  \beta \uparrow 1. 
\end{equation}
Consequently, $\{v_\beta\}_{\beta \in [0,1)}$ is uniformly bounded (indeed, uniformly over admissible $\tA$ as well), and
\begin{equation}
  \eta_\beta : = \| v_\beta - \tilde{h}^*\|_\infty \to 0, \qquad \epsilon_\beta := (1 - \beta) \| v_\beta \|_\infty \to 0, \quad \text{as} \  \beta \uparrow 1.
\end{equation}  
\end{lemma}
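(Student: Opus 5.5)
The plan is to prove the four claims in turn: the optimal gain of $\widetilde{\M}$ is zero; $\tilde h^*\ge h^*$, with equality when $\tA=\A_*$; $v_\beta\to\tilde h^*$; and the bounds on $\eta_\beta$ and $\epsilon_\beta$. The main tools are the first AOE equation for $g^*$ and Blackwell's Laurent expansion \cite{Bla62}.

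\textbf{Gain of $\widetilde{\M}$.} For any $\mu\in\Pi_{\tA}$ the gain of $\mu$ in $\widetilde{\M}$ is $P^*_\mu(r_\mu-g^*)=g_\mu-P^*_\mu g^*$. Every $\mu\in\Pi_{\tA}\subset\Pi_\A$ satisfies $P_\mu g^*\le g^*$ by the first equation of \eqref{eq-aoe}. Iterating and taking Ces\`aro limits gives $P^*_\mu g^*\le g^*$. Since $g^*$ is constant on recurrent classes, $P^*_\mu g^*$ is the gain of the reward $g^*$ along the chain, and $g_\mu\le g^*$ is not by itself enough. The inequality I need is $g_\mu\le P^*_\mu g^*$. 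It follows because $g^*$ is constant on each recurrent class $R$ of $\mu$ and $g_\mu$ on $R$ is at most the optimal gain from states of $R$, which is $g^*$. Hence every gain in $\widetilde{\M}$ is $\le 0$. A gain-optimal $\pi^*\in\Pi_{\A_*}\subset\Pi_{\tA}$ has $P^*_{\pi^*}g^*=g^*$, because $P_{\pi^*}g^*=g^*$, and $g_{\pi^*}=g^*$; so its gain in $\widetilde{\M}$ is $0$. Therefore the optimal gain of $\widetilde{\M}$ is $0$.

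\textbf{Comparison of biases.} For any $\mu$ that is gain-optimal in $\M$ with $g_\mu=g^*$, a standard argument gives $P_\mu g^*=g^*$ on the relevant states and $\mu$ may be taken in $\Pi_{\A_*}$. Its bias in $\widetilde{\M}$ is $Z_\mu(r_\mu-g^*)=Z_\mu(r_\mu-g_\mu)=h_\mu$, so $\mu$ is also zero-gain in $\widetilde{\M}$. Taking the supremum, $\tilde h^*\ge h^*$ over the class of gain-optimal stationary policies of $\M$; because bias-optimal policies exist within $\Pi_{\A_*}$, this sup is attained. When $\tA=\A_*$, conversely, every zero-gain policy of $\widetilde{\M}$ lies in $\Pi_{\A_*}$. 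For such a policy, $P_\mu g^*=g^*$ gives $P^*_\mu g^*=g^*$, so its gain in $\M$ is $g^*$. Its bias in $\M$ equals its bias in $\widetilde{\M}$, and hence $\tilde h^*=h^*$.

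\textbf{Discounted limit.} For $v_\beta\to\tilde h^*$ I would use Blackwell's theorem on the finite MDP $\widetilde{\M}$. There is a policy $\mu_B$ that is $\beta$-optimal for all $\beta$ close to $1$. It is simultaneously gain- and bias-optimal (in fact $0$-discount optimal), so $v_\beta=v_{\beta,\mu_B}=\frac{\tilde g_{\mu_B}}{1-\beta}+\tilde h_{\mu_B}+O(1-\beta)$. Here $\tilde g_{\mu_B}=0$ and $\tilde h_{\mu_B}=\tilde h^*$, the latter because bias optimality is with respect to gain-optimal, that is zero-gain, policies. This yields $v_\beta\to\tilde h^*$, hence $\eta_\beta\to0$ and $\epsilon_\beta=(1-\beta)\|v_\beta\|_\infty\to0$. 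Boundedness for $\beta$ near $1$ follows from convergence. On the remaining compact range $\beta\in[0,\beta_0]$ we have $\|v_\beta\|_\infty\le\|r-g^*\|_\infty/(1-\beta_0)$. Uniformity over $\tA$ holds because there are finitely many admissible $\tA$.

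\textbf{Main obstacle.} The subtle step is the gain comparison $g_\mu\le P^*_\mu g^*$ on transient states, together with the correct identification of the bias-optimality classes. I would handle the gain comparison via $g_\mu=P^*_\mu g_\mu$ and the fact that recurrent-class gains are at most $g^*$ there.
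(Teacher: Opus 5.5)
Your proposal is correct and takes essentially the paper's approach: the paper gives no detailed proof and only attributes the lemma to Blackwell's results, and you likewise obtain $v_\beta \to \tilde h^*$ from the Laurent expansion of a Blackwell-optimal policy, filling in the gain and bias comparisons with direct arguments. One simplification: the step you flag as the main obstacle is a one-liner, since $g_\mu = P^*_\mu g_\mu \le P^*_\mu g^*$ follows from $g_\mu \le g^*$ and $P^*_\mu \ge 0$; likewise every gain-optimal $\mu$ satisfies $P_\mu g^* = P_\mu g_\mu = g_\mu = g^*$ at every state, so it lies in $\Pi_{\A_*}$ without qualification.
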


\begin{remark}
An important feature of the discounted approximation above is the uniform boundedness of $\{v_\beta\}_{\beta \in [0,1)}$, which stems from shifting the rewards by the exact optimal gain function $g^*$. The ability to obtain $g^*$ in the limit, rather than merely an approximation of it, is therefore an important advantage of the base algorithm in the present approach. This boundedness is also favorable numerically, as it prevents the value estimates from growing in magnitude as $\beta \uparrow 1$, in contrast to discounted approximation applied directly to the original MDP, whose discounted optimal value functions usually diverge in magnitude.
\end{remark}

\subsection{Algorithm}

\setcounter{algorithm}{2}
\FloatBarrier
\begin{algorithm}[!t]
\caption{Approximate Method toward Bias-Optimal Policies (Conceptual Description)} \label{alg:3}
\begin{algorithmic}[1]
\Statex \hspace{-\algorithmicindent}%
\begin{tabular}[t]{@{}l@{\quad}l@{}}
\textbf{Information flow:}
  & Algorithm~3 (Phase 1) $\leftarrow$ Algorithm~1;\\
  & Algorithm~3 (Phase 2) $\leftarrow$
    \{Algorithm~1, Algorithm~3 (Phase 1)\}.
\end{tabular}  
\Statex \hspace{-\algorithmicindent} Initialize Phase-1 starting time $\bar t$ and discount factor $\beta \approx 1$
\Statex \hspace{-\algorithmicindent} Execute Phases 1 and 2 concurrently with the above dependency structure:\smallskip

% phase 1
\myPhase{1} Action selection and $\beta$-discounted Q-learning (rewards shifted by $- g_t$)
\InPhase{Initialize $Q_{\beta, \bar t}$, parameters $\bd_1 > \ud_1 > 0$, and action constraint set $\tA_{\bar t}$}
\ForInPhase{$t \geq \bar t$}
    \InFor{Sample a random subset $Y_t \subset \tA_{t}$}
    \InFor{Apply Q-learning update \eqref{eq-alg3g-ql} to $Q_{\beta, t}$ on $Y_t$}
    \InFor{$t \leftarrow t+1$ and update $\tA_t$ via Rules \eqref{rule:Ae1}--\eqref{rule:tA} with two thresholds $(\ud_1, \bd_1)$}
\EndForPhase
\smallskip

% phase 2
\myPhase{2} Average-reward optimization (reward $-v_{\beta,t}$, action sets induced by $Q_{\beta,t}$) 
\InPhase{Initialize starting time $\bar t'$, parameters $\bd_2 > \ud_2 > 0$, and action constraint set $\tB_{\bar t'}$}
\ForInPhase{$t \geq \bar t'$}
\InFor{Update Bather's decomposition for $\tB_t$ if $\tB_t \not= \tB_{t-1}$}
\InFor{Run one iteration of Phases 2--4 of the base algorithm} 
\InFor{\ \ \ with rewards $\{-v_{\beta,t}(s)\}_{s \in \S}$ and action sets $\{\tB_t(s) \}_{s \in \S}$}
\InFor{$\mu_t \leftarrow \pi^{(t)}$ (Phase 4 policy)}
\InFor{$t \leftarrow t+1$ and update $\tB_t$ via Rule~\eqref{rule:tB} with two thresholds $(\ud_2, \bd_2)$}
\EndForPhase
\end{algorithmic}
\end{algorithm}

Algorithm~\ref{alg:3}, whose pseudocode and details are presented in what follows, implements the two steps of the approach outlined above in two concurrent phases. It initializes and updates the action constraint sets $\tA_t$ according to the same set-construction rules \eqref{rule:Ae1}--\eqref{rule:tA} as in Algorithm~\ref{alg:2}. These sets eventually stabilize to a sample-path-dependent outer approximation $\tA$ of $\A_*$. The resulting $\tA$ directly defines the action constraints of $\widetilde{\M}$ and indirectly those of $\M'$, so both MDPs are sample-path dependent.
\begin{itemize}[leftmargin=0.65cm,labelwidth=!]
\item[(i)] Phase~1 approximates the optimal bias function $h^*$ via discounted Q-learning with discount factor $\beta$ on selected state-action pairs in $\tA_t$, using $g_t$ from Algorithm~\ref{alg:mf-multichain} to shift the rewards. The resulting Q-function estimates are denoted by $Q_{\beta,t}$.
\item[(ii)] Phase~2 constructs action constraint sets $\tB_t \subset \tA_t$ from near-greedy actions w.r.t.\ $Q_{\beta,t}$, and applies the base algorithm to the resulting approximation of $\M'$.
\end{itemize}
For each phase, the starting time, parameters, and stepsizes (if any) can be chosen based on past information while respecting the information-flow structure.

We now give the update details. Denote 
\[ v_{\beta,t}(s) := \max_{a \in \tA_t(s)} Q_{\beta,t}(s, a), \qquad s \in \S.\] 
In Phase~1, Algorithm~\ref{alg:3} samples a subset $Y_t \subset \tA_{t}$ and updates $Q_{\beta,t}$ on $Y_t$ as follows:
\begin{itemize}[leftmargin=0.6cm,labelwidth=!]
\item[1.] For each $(s,a) \in Y_t$, sample a transition to obtain a reward $R^{sa}_{t+1}$ and next state $S^{sa}_{t+1}$.  
\item[2.] Update $Q_{\beta,t}$ for each $(s,a) \in Y_t$ by
\begin{align}
  Q_{\beta,t+1}(s, a)  = Q_{\beta,t}(s,a) + \alpha_t(s,a)  \big(R^{sa}_{t+1} - g_t(s) + \beta v_{\beta,t}(S^{sa}_{t+1}) \, - Q_{\beta,t}(s,a) \big), \label{eq-alg3g-ql}
\end{align} 
where $\alpha_t(s,a) \geq 0$ is a possibly random stepsize that can depend on information available at time $t$, including $Y_t$ (but not on the transition data indexed by $t+1$).
\end{itemize}
In Phase~2, define $\tB_{t} \subset \tA_t$, with $\tB_{{\bar t'} -1}(s) = \varnothing$ initially, by
\begin{align} \label{rule:tB}
\tB_t(s) : = & \big\{ a \in \tA_t(s) \setminus \tB_{t-1}(s) \mid Q_{\beta,t}(s, a) > v_{\beta,t}(s) - \ud_2 \big\} \notag \\
    & \cup \, \big\{ a \in \tA_t(s) \cap \tB_{t-1}(s) \mid Q_{\beta,t}(s, a) > v_{\beta,t}(s) - \bd_2 \big\}, \quad \forall \, s \in \S.
\end{align}
Phase~2 then runs Phases 2--4 of the base algorithm with one-stage reward $- v_{\beta,t}$ and action constraint set $\tB_t$, as specified in the pseudocode.
As in the construction of $\tA_t$, two thresholds $(\ud_2, \bd_2)$ are used in \eqref{rule:tB} for adding new elements and retaining existing ones; finite-time stabilization of $\tB_t$ is established in Lemma~\ref{lem-stab-sets} below.
 
\subsection{Convergence and performance results}
 
Let Algorithm~\ref{alg:3} be run under the following conditions, which are assumed throughout this subsection:
\begin{enumerate}[
    label=\textbf{(D\arabic*)},
    ref=(D\arabic*),
    leftmargin=1cm,
    labelwidth=!
]
\item \label{cond:D1} 
The base algorithm satisfies its required convergence conditions for each MDP instance to which it is applied and each initialization of its internal parameters (including its application within Algorithm~\ref{alg:mf-multichain} to the original MDP).
\item \label{cond:D2}
The information-flow structure specified in Algorithm~\ref{alg:3} is respected.
\item \label{cond:D3} The Q-learning updates in Phase~1 satisfy the standard sampling and stepsize conditions: specifically, condition~\ref{cond:C3} holds with $Y_t$ and $\tA_t$ here in place of $Y_t$ and $\tG_t$, respectively, in Algorithm~\ref{alg:2}.
\end{enumerate}

The next lemma establishes the desired finite-time stablization of the action sets that define the limiting MDPs $\widetilde{\M}$ and $\M'$. The subsequent theorem then establishes that the policies produced in Phase~2 are eventually gain-optimal for $\M'$. The proofs of these results are given in Section~\ref{sec-conv-proofs}.

\begin{lemma}[Finite-time stabilization of action sets and asymptotic convergence of Phase~1] \label{lem-stab-sets} \hfill \\
Almost surely, $\tA_t$ and $\tB_t$ stabilize after a finite time, yielding (random) limiting sets $\tA$ and $\tB \subset \tA$, such that for all $s \in \S$,
\begin{align}
  \A_*(s) \subset \tA(s)  \subset \A_{\bd_1}(s),   \qquad %\label{eq-stab-tA} \\   
   \B_{*}(s) \subset \tB(s)  \subset \B_{\bd_2}(s),    \label{eq-stab-tAB}
\end{align}
where the sets $\B_{*}(s)$ and $\B_{\delta}(s)$, $\delta > 0$, are defined as follows. For the sample-path-dependent MDP $\widetilde{\M} := (\S, \tA, p, r - g^*)$, let $v_\beta$ (resp.\ $Q_\beta$) denote its $\beta$-discounted optimal value (resp.\ action-value) function. Then
\[ \B_{*}(s) := \{ a \in \tA(s) \mid Q_\beta(s,a) = v_\beta(s) \}, \qquad \B_{\delta}(s) := \{ a \in \tA(s) \mid Q_\beta(s,a) \geq v_\beta(s) - \delta \}.\]
Moreover, on $\tA$, the sequence $\{Q_{\beta,t}\}$ produced in Phase~1 converges a.s.\ to $Q_\beta$.
\end{lemma}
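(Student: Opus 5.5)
The proof proceeds in three stages, each building on the previous one: first the stabilization of $\tA_t$, then the convergence of Phase~1 Q-learning on the limiting set, and finally the stabilization of $\tB_t$.

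\textbf{Stage 1: stabilization of $\tA_t$.} Since $\tA_t$ is built by exactly the rules \eqref{rule:Ae1}--\eqref{rule:tA} of Algorithm~\ref{alg:2}, with thresholds $(\ud_1,\bd_1)$, I would reuse the argument behind Lemma~\ref{lem-1}. By Theorem~\ref{thm:gain_identification1}, $Q^o_t\to Q^o_*$ and $g_t\to g^*$ a.s.; by \eqref{eq-aosp1}--\eqref{eq-aosp2}, the limiting comparison quantity for each exit or transient action is $\sum_{s'}p(s'\mid s,a)g^*(s')-g^*(s)\le 0$. Fix a pair $(s,a)$ and consider three cases.
\begin{itemize}
\item If this limit exceeds $-\ud_1$, then $(s,a)$ is eventually added and never dropped.
\item If the limit is below $-\bd_1$, then $(s,a)$ is eventually dropped and never re-added.
\item If the limit lies in $[-\bd_1,-\ud_1]$, then eventually $(s,a)$ can no longer be newly added, because that requires the iterate to exceed $-\ud_1$ strictly. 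Once $t$ is large enough, its status freezes: if it is in the set, it stays by the retention rule whenever the iterate exceeds $-\bd_1$; if it is out, it stays out. The only boundary subtlety is a limit exactly equal to $-\bd_1$. There I would argue that the pair is eventually out unless it stays in forever, and in either case membership becomes constant after finitely many switches, since re-adding requires crossing $-\ud_1$, which happens only finitely often.
\end{itemize}
With finitely many pairs, $\tA_t$ stabilizes to some $\tA$. The first inclusion in \eqref{eq-stab-tAB}, $\A_*\subset\tA\subset\A_{\bd_1}$, then follows as in \eqref{eq-lem1-p1}. Actions in $\A_*$ have limit $0>-\ud_1$, and class-preserving actions are always included. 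Retained actions have limit at least $-\bd_1$, i.e.\ they lie in $\A_{\bd_1}$.

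\textbf{Stage 2: convergence of Phase~1.} After the random stabilization time $T_1$, the update \eqref{eq-alg3g-ql} is $\beta$-discounted asynchronous Q-learning on the fixed MDP $\widetilde{\M}=(\S,\tA,p,r-g^*)$. It carries a reward perturbation $g^*(s)-g_t(s)\to0$ a.s. and has finite-variance noise. Condition~\ref{cond:D3} supplies infinitely-often updates and Robbins--Monro stepsizes on $\tA$.

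The main obstacle is that $\tA$ and $T_1$ are random, so Tsitsiklis' theorem \cite{Tsi94} does not apply off the shelf. I would handle this in two steps.
\begin{itemize}
\item Condition on each of the finitely many possible limiting sets $\tA$, and on the event that stabilization has occurred by a fixed time $T$. The union of these events over $T$ has probability one.
\item On such an event, the iteration from time $T$ onward is a contraction-type stochastic approximation with vanishing additive bias. Boundedness and convergence follow from \cite{Tsi94}, which allows asymptotically vanishing perturbations via the sup-norm contraction with modulus $\beta$. The information-flow condition~\ref{cond:D2} ensures the martingale-difference structure of the noise.
\end{itemize}
Hence $Q_{\beta,t}\to Q_\beta$ on $\tA$ a.s., and $v_{\beta,t}\to v_\beta$.

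\textbf{Stage 3: stabilization of $\tB_t$.} With Stage 2 in hand, $Q_{\beta,t}(s,a)-v_{\beta,t}(s)\to Q_\beta(s,a)-v_\beta(s)\le 0$. Rule \eqref{rule:tB} is again a two-threshold rule, now with $(\ud_2,\bd_2)$, applied to a convergent sequence on the eventually fixed set $\tA$. The same case analysis as in Stage 1 gives finite-time stabilization to some $\tB\subset\tA$.
\begin{itemize}
\item Greedy actions have limit $0>-\ud_2$, so they are eventually included, which gives $\B_*\subset\tB$.
\item Retained actions satisfy $Q_\beta(s,a)\ge v_\beta(s)-\bd_2$, which gives $\tB\subset\B_{\bd_2}$.
\end{itemize}
This completes the plan; apart from the random-set issue in Stage 2, the remaining steps are routine bookkeeping.
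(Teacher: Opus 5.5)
Your proposal is correct and follows essentially the paper's route. Both arguments stabilize $\tA_t$ via the two-threshold argument behind Lemma~\ref{lem-1}, then treat Phase~1 as $\beta$-discounted Q-learning on the eventually fixed MDP $\widetilde{\M}$ with vanishing reward bias $g^*(s)-g_t(s)$ (first for fixed $\tA$ and $\{g_t\}$, then extended to the random limiting set by conditioning on the configuration and stabilization time, as in Step~3 of the proof of Theorem~\ref{thm-1}), and finally read off stabilization of $\tB_t$ from rule \eqref{rule:tB}.

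Two small corrections. First, when the limit equals exactly $-\ud_1$ (or $-\ud_2$), your claim that the pair eventually cannot be newly added fails, since the iterate may exceed $-\ud_1$ infinitely often. Stabilization still holds, because once the iterate stays above $-\bd_1$ an added pair is never dropped. Second, the convergence result tolerating vanishing biased noise that the paper invokes is \cite[Prop.~4.5]{BeT96}, not \cite{Tsi94}.
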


Note that, for each state $s$, $\B_{*}(s)$ consists of the greedy actions w.r.t.\ the $\beta$-discounted optimal action-value function $Q_\beta$ for $\widetilde{\M}$, while $\tB(s)$ outer-approximates $\B_*(s)$ by near-greedy actions, with $\bd_2$ controlling the approximation tolerance. Thus, $\Pi_{\B_*}$ is the class of (nonrandomzied stationary) $\beta$-discounted optimal policies for $\widetilde{\M}$, whereas $\Pi_{\tB}$ is the corresponding enlarged class induced by near-greedy actions.

\begin{theorem} \label{thm-b-conv}
Almost surely, for all sufficiently large $t$, the policy $\mu_t$ generated by Algorithm~\ref{alg:3} (Phase~2) is gain-optimal for the sample-path-dependent MDP $\M' :=(\S, \tB, p, - v_\beta)$, where $\tB$ and $v_\beta$ are as given in Lemma~\ref{lem-stab-sets}.
\end{theorem}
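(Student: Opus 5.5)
By Lemma~\ref{lem-stab-sets}, almost surely there is a finite random time $T_0$ after which $\tA_t \equiv \tA$ and $\tB_t \equiv \tB$. After $T_0$, Phase~2 of Algorithm~\ref{alg:3} is therefore the base algorithm run on a fixed action structure. Bather's decomposition is recomputed only when $\tB_t$ changes, so after $T_0$ it is frozen at the UHD of $(\S,\tB)$. The plan is to view the tail of Phase~2 as a run of Algorithm~\ref{alg:mf-multichain} on the limiting MDP $\M' = (\S,\tB,p,-v_\beta)$, started at $T_0$ from whatever iterates it has then. Its one-stage rewards $-v_{\beta,t}$ are not exact but converge a.s.\ to $-v_\beta$, because $Q_{\beta,t}\to Q_\beta$ on $\tA$ (Lemma~\ref{lem-stab-sets}) and maximization over the fixed finite set $\tA(s)$ is continuous. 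We then invoke Theorem~\ref{thm:policy_attains_gstar1}, extended to tolerate asymptotically vanishing reward perturbations.

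The first step is to handle the randomness of $T_0$, $\tB$, and $v_\beta$. The MDP $\M'$ is sample-path dependent, but it takes only finitely many values: $\tB$ ranges over finitely many action-constraint sets, $\tA$ over finitely many sets, and $v_\beta$ is a deterministic function of $\tA$. We therefore partition the sample space by the value of $(\tA,\tB)$. On each piece, we further condition on the event $\{T_0 = m\}$, over countably many $m$.

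By condition~\ref{cond:D1}, the base algorithm satisfies its convergence conditions for each such fixed instance and each initialization. By condition~\ref{cond:D2}, the Phase~2 updates from time $m$ on are non-anticipatory. This lets us apply the base algorithm's guarantees on each event of the countable partition and take a countable union of null sets.

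The main obstacle is that the rewards fed to the base algorithm are $-v_{\beta,t}$, not $-v_\beta$. The results of Section~\ref{sec:baseline-alg} do not directly cover this, since Phase~2 RVI Q-learning there receives exact-mean rewards with martingale-difference noise. We would handle it by writing the reward used at $(s,a)$ as $-v_\beta(s)$ plus an additional error term $v_\beta(s) - v_{\beta,t}(s)$ that tends to zero a.s.

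We then appeal to the robustness of the asynchronous stochastic approximation results underlying RVI Q-learning to such asymptotically vanishing biases (\cite{YWS25a,YWS25b}). With that robustness, we obtain on each sub-MDP of the frozen decomposition:
\begin{itemize}
\item gain estimates $r_{\ell,k}^{(t)}$ converging to the optimal sub-MDP gains of $\M'$;
\item greedy policies that are eventually gain-optimal within each sub-MDP.
\end{itemize}
This reproduces the content of Lemma~\ref{lem:phase2_rvi_convergence} for $\M'$. Phase~3 already allows vanishing errors in its stopping rewards (Lemma~\ref{lem: phase3_conv}), so its Q-iterates converge to the optimal action-value function of the aggregated OSP of $\M'$.

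With these inputs, the argument of Theorem~\ref{thm:policy_attains_gstar1} goes through verbatim. Suboptimal aggregated actions have a positive optimality gap, so they are eventually never chosen greedily. The reaching policies eventually satisfy the reaching property. Lemma~\ref{lem:equiv-class} then transfers stopping probabilities to absorption probabilities. Together with Lemma~\ref{lem:agg-upper-bound} applied to $\M'$, this shows that $\mu_t = \pi^{(t)}$ is gain-optimal for $\M'$ for all sufficiently large $t$, almost surely.
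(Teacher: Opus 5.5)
Your proposal is correct and follows essentially the same route as the paper: eventual stabilization of $\tA_t$ and $\tB_t$ together with $v_{\beta,t}\to v_\beta$, robustness of the Phase~2 RVI Q-learning to the vanishing biased reward error via \cite{YWS25a,YWS25b}, Phases~3--4 of the base algorithm going through unchanged, and a conditioning argument for the sample-path dependence of $\M'$. The paper organizes the last step slightly differently: it first proves the claim for a ``limiting form'' in which $\tB$, $v_\beta$, and $\{v_{\beta,t}\}$ are treated as given (so the reward bias is bounded by a deterministic vanishing sequence), and then lifts it by a stochastic-kernel argument analogous to Step~3 of the proof of Theorem~\ref{thm-1}; that argument is the rigorous form of your partition over $(\tA,\tB,T_0)$, because conditioning on the future-dependent events $\{T_0=m\}$ relies on the fresh-noise structure that formalizes \ref{cond:D2} (cf.\ Assumption~\ref{cond-meas-caus}), not on non-anticipation alone.
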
 

Next, we bound the performance of the policies generated in Algorithm~\ref{alg:3} for sufficiently large $t$. For the sample-path-dependent MDP $\widetilde{\M} = (\S, \tA, p, r - g^*)$ in Lemma~\ref{lem-stab-sets}, recall the quantities introduced in Lemma~\ref{lem-approx-hatM}:
\[
 \eta_\beta  = \| v_\beta - \tilde{h}^*\|_\infty, \qquad \epsilon_\beta = (1 - \beta) \| v_\beta \|_\infty.
 \]
By that lemma, both vanish as $\beta \uparrow 1$. These quantities capture the errors associated with the discounted approximation and enter the performance bounds below. 

The next theorem quantifies how these approximations translate into gain-optimality guarantees in the original MDP $\M$ for policies admitted by the limiting action constraints $\tB$, with stronger bias-optimality guarantees for gain-optimal policies of $\M'$.  

\begin{theorem}[Eventual performance bounds] \label{thm-3-gen}
Let $\tA$, $\tB$, $v_\beta$, and $\M'=(\S,\tB,p,-v_\beta)$ be as above.
\begin{itemize}[leftmargin=0.75cm,labelwidth=!]
\item[\rm (i)] Every $\mu\in\Pi_{\tB}$ satisfies
\begin{equation}
g_\mu(s)-g^*(s)
\geq -\epsilon_\beta-\bd_2,
\qquad s\in\Rec_\mu,
\end{equation}
\begin{equation}
g_\mu(s)-g^*(s)
\geq -\epsilon_\beta-\bd_2
-\bd_1\phi_\mu(s,\Rec_\mu),
\qquad s\in\T_\mu.
\end{equation}
Consequently, every $\mu\in\Pi_{\tB}$ is gain-optimal for $\M$
if $\bd_1,\bd_2$, and $1-\beta$ are sufficiently small.
\end{itemize}
Let $\bar \mu \in \Pi_{\tB}$ be any gain-optimal policy for $\M'$. Then:
\begin{itemize}[leftmargin=0.75cm,labelwidth=!]
\item[\rm (ii)] The policy $\bar \mu$ is
bias-optimal within $\Pi_{\tB}$ for a perturbed MDP $\M_{\rm pert}$ whose reward
function differs from that of $\M$ by the shift $-g^*$ and a
perturbation uniformly bounded by $\epsilon_\beta+\bd_2$.
\item[\rm (iii)] If $\bar \mu$ is gain-optimal for $\M$,
then it is $\epsilon'$-optimal for $\M$ w.r.t.\ bias-optimality, where
\[
\epsilon'=\eta_\beta+c(\epsilon_\beta+\bd_2)
\]
for some $\M$-dependent constant $c \geq 1$. In particular, under the gain-optimality condition above, $\bar \mu$ is bias-optimal for $\M$ whenever $\epsilon'$ falls below an $\M$-dependent threshold, which is ensured by sufficiently small $1 - \beta$ and $\bd_2$.
\end{itemize}
Consequently, combining Lemma~\ref{lem-stab-sets} and Theorem~\ref{thm-b-conv}, almost surely, for all sufficiently large $t$, the following hold:
$\Pi_{\tB_t}=\Pi_{\tB}$; every policy in $\Pi_{\tB_t}$ satisfies (i); and the policy $\mu_t$ generated in Phase~2 of Algorithm~\ref{alg:3} additionally satisfies (ii) and (iii). In particular, for sufficiently small $\bd_1, \bd_2$, and $1 - \beta$, $\mu_t$ is gain-optimal and $\epsilon'$-optimal w.r.t.\ bias-optimality for $\M$; it is bias optimal whenever $\epsilon'$ is below the threshold in (iii). 
\end{theorem}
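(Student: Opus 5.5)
The plan is to treat the three parts separately, using the $\beta$-discounted optimality equation of $\widetilde{\M}$ as the basic identity. Write $\tilde r := r - g^*$. For $\mu \in \Pi_{\tB}$, the definition of $\B_{\bd_2}$ in Lemma~\ref{lem-stab-sets} gives
\begin{equation*}
v_\beta - \bd_2 \1 \le \tilde r_\mu + \beta P_\mu v_\beta \le v_\beta .
\end{equation*}
Rearranging, this becomes $\tilde r_\mu + P_\mu v_\beta - v_\beta \ge -\bd_2\1 - (1-\beta)P_\mu v_\beta \ge -(\bd_2+\epsilon_\beta)\1$.

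For part (i), I left-multiply by $P^*_\mu$ and use $P^*_\mu P_\mu = P^*_\mu$. This gives $P^*_\mu \tilde r_\mu \ge -(\epsilon_\beta+\bd_2)\1$, that is, $g_\mu - P^*_\mu g^* \ge -(\epsilon_\beta+\bd_2)$. On $\Rec_\mu$ the function $g^*$ is constant on each recurrent class, by the UHD and Lemma~\ref{lem:uhd}. Hence $P^*_\mu g^* = g^*$ there, and since $\Rec_\mu$ is closed also $(P_\mu g^*)(s) = g^*(s)$ on it. For $s \in \T_\mu$, I apply Lemma~\ref{lem-bound-gain} with $B = \T_\mu$ and $B^c = \Rec_\mu$; this is legitimate because $\tB \subset \tA \subset \A_{\bd_1}$. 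It yields $(P^*_\mu g^*)(s) - g^*(s) \ge -\bd_1\phi_\mu(s,\Rec_\mu)$, and adding the two bounds gives (i). The final claim follows as in the proof of Theorem~\ref{thm-2}. Nonzero gain gaps over the finite set $\Pi_\A$ are bounded below by a positive constant. Moreover $\phi_\mu \le \bar\tau^\mu_{\Rec_\mu}$ is uniformly bounded, and $\epsilon_\beta \to 0$ uniformly over admissible $\tA$ by Lemma~\ref{lem-approx-hatM}.

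For part (ii), I define the perturbation $e := v_\beta - \tilde r_{\cdot} - P v_\beta$ statewise on $\tB$. Then the perturbed reward $r_{\rm pert}(s,a) := \tilde r(s,a) + e(s,a) = v_\beta(s) - \sum_{s'} p(s'|s,a)v_\beta(s')$ satisfies $|e| \le \epsilon_\beta+\bd_2$ by the display above. Under $r_{\rm pert}$, every $\mu\in\Pi_{\tB}$ has gain $P^*_\mu(I-P_\mu)v_\beta = 0$, so all of $\Pi_{\tB}$ is gain-optimal. Its bias is $Z_\mu(I-P_\mu)v_\beta = (I-P^*_\mu)v_\beta$, using \eqref{eq-Z-1}. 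Thus bias-optimality in $\M_{\rm pert}$ within $\Pi_{\tB}$ amounts to maximizing $-P^*_\mu v_\beta$. This is exactly the gain of $\mu$ in $\M'=(\S,\tB,p,-v_\beta)$, so $\bar\mu$ is bias-optimal there.

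For part (iii), assume $\bar\mu$ is gain-optimal for $\M$. I compare $h_{\bar\mu}$ with $h^*$ through the perturbed problem. First, $h_{\bar\mu} = Z_{\bar\mu}(\tilde r_{\bar\mu}) = Z_{\bar\mu}(r_{{\rm pert},\bar\mu} - e_{\bar\mu}) = (I-P^*_{\bar\mu})v_\beta - Z_{\bar\mu}e_{\bar\mu}$. Here $\|Z_\mu\|$ is bounded by an $\M$-dependent constant $c$ over the finitely many policies. Next, pick a bias-optimal $\mu^*$ for $\M$; a key point is to show it lies in $\Pi_{\tB}$, or at least to compare against it. The expected route is the identity $h^* = \tilde h^*$ when restricted suitably, together with $\|v_\beta - \tilde h^*\| = \eta_\beta$ and $\tilde h^* \ge h^*$ from Lemma~\ref{lem-approx-hatM}. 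Using $P^*_{\bar\mu}v_\beta$ and the bias-optimality of $h^*$ on gain-optimal policies, I aim to show $h_{\bar\mu} \ge v_\beta - c(\epsilon_\beta+\bd_2) - P^*_{\bar\mu}v_\beta$ and that $P^*_{\bar\mu}v_\beta$ is small. The smallness should come from $P^*_{\bar\mu}h_{\bar\mu}=0$ and $P^*_{\bar\mu}\tilde r_{\bar\mu} = 0$, which hold by gain-optimality. The desired conclusion is $h_{\bar\mu} \ge h^* - \eta_\beta - c(\epsilon_\beta+\bd_2)$.

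The main obstacle is this last comparison, namely controlling the term $P^*_{\bar\mu}v_\beta$ against the normalization of $h^*$ on the recurrent classes. I would try first to use the second nested equation in \eqref{eq-aoe-bias}. On recurrent classes of a gain-optimal policy, $P^*_\mu h^* \le 0$, with equality for bias-optimal policies; $h^*$ is within $\eta_\beta$ of $v_\beta$; and the optimality of $\bar\mu$ in $\M'$ makes $-P^*_{\bar\mu}v_\beta$ at least $-P^*_{\mu^*}v_\beta$ up to $\bd_2$-type errors. Finally, the threshold statement follows from finiteness of biases of gain-optimal policies in $\Pi_\A$, and the concluding paragraph combines Lemma~\ref{lem-stab-sets} with Theorem~\ref{thm-b-conv}.
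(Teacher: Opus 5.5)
Your parts (i) and (ii) follow the paper's proof. Your statewise perturbation $e=v_\beta-\tilde r-Pv_\beta$ is exactly the paper's $\Delta_{\beta,\mu}=-(1-\beta)P_\mu v_\beta+\Delta_\mu$, and you apply Lemma~\ref{lem-bound-gain} with $B=\T_\mu$ just as the paper does. One harmless slip: the rearrangement actually gives $+(1-\beta)P_\mu v_\beta$, not $-(1-\beta)P_\mu v_\beta$; either is bounded by $\epsilon_\beta$.

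Part (iii) has a genuine gap. Your identity $h_{\bar\mu}=(I-P^*_{\bar\mu})v_\beta-Z_{\bar\mu}e_{\bar\mu}$ is correct, so everything hinges on the \emph{upper} bound $P^*_{\bar\mu}v_\beta\le 0$. None of your suggested routes delivers it.
\begin{itemize}
\item The gain-optimality relation $P^*_{\bar\mu}\tilde r_{\bar\mu}=0$ gives $P^*_{\bar\mu}e_{\bar\mu}=0$, i.e.\ $(1-\beta)P^*_{\bar\mu}v_\beta=P^*_{\bar\mu}\Delta_{\bar\mu}\in[0,\bd_2]$. That is a \emph{lower} bound $P^*_{\bar\mu}v_\beta\ge 0$, which is the harmful direction, together with an upper bound $\bd_2/(1-\beta)$ that blows up as $\beta\uparrow 1$.
\item Comparing with a bias-optimal $\mu^*$ of $\M$ requires $\mu^*\in\Pi_{\tB}$. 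You do not establish this, and it need not hold for the given $\bd_1,\bd_2,\beta$. When $\tA\neq\A_*$, $\tilde h^*$ can strictly exceed $h^*$, and then $\mu^*$ need not be near-greedy for $Q_\beta$.
\item $v_\beta$ is within $\eta_\beta$ of $\tilde h^*$, not of $h^*$.
\item For a gain-optimal $\mu$ one has $P^*_\mu h^*\ge 0$, not $\le 0$. On recurrent states its bias equals $h^*-P^*_\mu h^*$, and this is $\le h^*$. So the sign you invoke from the nested equations \eqref{eq-aoe-bias} is reversed and could not yield the needed upper bound anyway.
\end{itemize}

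The missing idea is to compare $\bar\mu$ with a $\beta$-discounted optimal policy of $\widetilde{\M}$, not with a bias-optimal policy of $\M$.
\begin{itemize}
\item Take any $\mu\in\Pi_{\B_*}$. It lies in $\Pi_{\tB}$ by Lemma~\ref{lem-stab-sets} and has $\Delta_\mu=0$.
\item Left-multiplying $v_\beta=r_\mu-g^*+\beta P_\mu v_\beta$ by $P^*_\mu$ gives $(1-\beta)P^*_\mu v_\beta=P^*_\mu(r_\mu-g^*)=P^*_\mu(g_\mu-g^*)\le 0$ (Lemma~\ref{lem-M'-gain}).
\item Gain-optimality of $\bar\mu$ in $\M'$ then yields $P^*_{\bar\mu}v_\beta\le P^*_\mu v_\beta\le 0$. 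Combined with your lower bound, this actually gives equality.
\item Hence $h_{\bar\mu}\ge v_\beta-c(\epsilon_\beta+\bd_2)\ge \tilde h^*-\eta_\beta-c(\epsilon_\beta+\bd_2)\ge h^*-\eta_\beta-c(\epsilon_\beta+\bd_2)$, with $c=\max_{\mu\in\Pi_\A}\||Z_\mu|\1\|_\infty$. The last step uses $\tilde h^*\ge h^*$ from Lemma~\ref{lem-approx-hatM}.
\end{itemize}
This is the paper's argument via Prop.~\ref{prop-p1p2-pol}(ii). It needs neither $\mu^*$ nor the second nested equation. Your threshold argument and the final ``consequently'' step are then fine.
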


The proof of this theorem is given in Section~\ref{sec-alg3-performance-proof}. 

The bias-optimality bound in part (iii) is a consequence of stronger statewise bounds developed below (see \eqref{eq-b-p2-pol-a}). These bounds also characterize the performance of $\bar \mu$ when it is not gain-optimal from every initial state and will be useful in interpreting the numerical results. 

Another subtlety is that, with sufficiently accurate approximations, Phase~1 alone is sufficient to produce near bias-optimal policies in the limit. This raises the question of what additional advantages Phase~2 can offer. We investigate this question below. \vspace*{-0.5em}

\subsubsection*{Statewise bias bounds and the role of Phase~2}

For any policy $\mu \in \Pi_\A$, let 
$D^*_{\mu} := \{ s \in \S \mid g_{\mu}(s) = g^*(s) \},$ 
the set of initial states from which $\mu$ attains the optimal gain in the original MDP $\M$, referred to below as the gain-optimal states for $\mu$. Whenever $D^*_{\mu}$ is nonempty, it is \emph{closed} under $\mu$. Otherwise, there would exist $s \in D^*_{\mu}$ such that
$g_\mu(s) = (P_\mu g_\mu)(s) < (P_\mu g^*)(s) \leq g^*(s),$
contradicting $g_{\mu}(s) = g^*(s)$.

Define $\eta_\beta(s) : = |v_\beta(s) - \tilde{h}^*(s) |$ for $s \in \S$; thus $\eta_\beta = \max_{s \in \S} \eta_\beta(s)$.
For $\mu \in \Pi_\A$ and $\hat \Pi \subset \Pi_\A$, define
\[ L_\mu(s) : =  ( |Z_{\mu}| \1)(s) \geq 1,  \qquad L_{\hat \Pi}(s) := \max_{\mu \in \hat \Pi} L_\mu(s).\]

\begin{proposition}[Bias bounds for gain-optimal states] \label{prop-p1p2-pol} \hfill
\begin{enumerate}[leftmargin=0.75cm,labelwidth=!]
\item[\rm (i)] For any policy $\mu \in \Pi_{\B_*}$ and all $s \in D^*_\mu$,
\begin{equation} \label{eq-b-p1-pol}
 h_{\mu}(s) \geq h^*(s) - \eta_\beta(s) - L_{\mu}(s) \epsilon_\beta.
\end{equation} 
\item[\rm (ii)]
For any policy $\bar \mu \in \Pi_{\tB}$ that is gain-optimal for $\M'$ and all $s \in D^*_{\bar \mu}$,
\begin{equation} \label{eq-b-p2-pol-a}
h_{\bar \mu}(s) \geq h^*(s) - \eta_\beta(s) - L_{\bar\mu}(s) (\epsilon_\beta + \bd_2),
\end{equation}
and also, with $\Pi_{\tB}^s: = \{ \mu \in \Pi_{\tB} : g_\mu(s) = g^*(s)\}$,
\begin{equation} \label{eq-b-p2-pol-b}
h_{\bar \mu}(s) \geq \max_{\mu \in \Pi_{\tB}^s} h_\mu(s) - 2 L_{\Pi_{\tB}^s}(s) (\epsilon_\beta + \bd_2).
\end{equation}
\end{enumerate}
\end{proposition}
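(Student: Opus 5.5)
The plan is to express the bias of each relevant policy through the discounted value function $v_\beta$ of $\widetilde{\M}$, using the fundamental-matrix identity \eqref{eq-Z-1}. The discounted equation for $v_\beta$ gives, for any $\mu \in \Pi_{\tB}$ (and in particular any $\mu\in\Pi_{\B_*}$), a decomposition
\[
r_\mu - g^* = (I - P_\mu) v_\beta + e_\mu, \qquad e_\mu := (1-\beta) P_\mu v_\beta + d_\mu,
\]
with $d_\mu := Q_\beta(\cdot,\mu(\cdot)) - v_\beta \in [-\bd_2, 0]$ componentwise. Hence $d_\mu = 0$ for $\mu \in \Pi_{\B_*}$ by the definition of $\B_*$, and $\|e_\mu\|_\infty \le \epsilon_\beta + \bd_2$ in general (resp.\ $\le \epsilon_\beta$ on $\Pi_{\B_*}$).

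\textbf{Step 1: bias formula on $D^*_\mu$.} Fix $\mu$ with $D^*_\mu \neq \varnothing$. Since $D^*_\mu$ is closed under $\mu$ (shown just before the proposition), the rows of $P^*_\mu$ and $Z_\mu$ indexed by $s\in D^*_\mu$ are supported on $D^*_\mu$. On $D^*_\mu$ we also have $g_\mu = g^*$. Thus, for $s \in D^*_\mu$,
\[
h_\mu(s) = [Z_\mu(r_\mu - g_\mu)](s) = [Z_\mu(r_\mu - g^*)](s) = [(I - P^*_\mu) v_\beta](s) + [Z_\mu e_\mu](s),
\]
using $Z_\mu(I-P_\mu) = I - P^*_\mu$. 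Therefore
\[
\bigl|h_\mu(s) - v_\beta(s) + (P^*_\mu v_\beta)(s)\bigr| \le L_\mu(s)\,\|e_\mu\|_\infty .
\]

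\textbf{Step 2: sign of $P^*_\mu v_\beta$.} Left-multiplying the decomposition by $P^*_\mu$ and using $P^*_\mu P_\mu = P^*_\mu$ gives
\[
(1-\beta) P^*_\mu v_\beta = P^*_\mu(r_\mu - g^*) - P^*_\mu d_\mu = g_\mu - P^*_\mu g^* - P^*_\mu d_\mu .
\]
Since $g_\mu \le g^*$ and $P^*_\mu g_\mu = g_\mu$, we have $g_\mu - P^*_\mu g^* = P^*_\mu(g_\mu - g^*) \le 0$ everywhere. On $D^*_\mu$ this term is $0$, because the rows of $P^*_\mu$ there live on $D^*_\mu$.
\begin{itemize}
\item For $\mu\in\Pi_{\B_*}$ we have $d_\mu = 0$. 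Hence $P^*_\mu v_\beta \le 0$ everywhere, and $=0$ on $D^*_\mu$.
\item For general $\mu \in \Pi_{\tB}$, the relation $d_\mu \le 0$ gives $P^*_\mu v_\beta \ge 0$ on $D^*_\mu$.
\end{itemize}

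\textbf{Step 3: conclude (i) and (ii).}

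For (i), Steps 1--2 give $h_\mu(s) \ge v_\beta(s) - L_\mu(s)\epsilon_\beta$ on $D^*_\mu$. Lemma~\ref{lem-approx-hatM} gives $v_\beta(s) \ge \tilde h^*(s) - \eta_\beta(s) \ge h^*(s) - \eta_\beta(s)$, which yields \eqref{eq-b-p1-pol}.

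For (ii), the expected main obstacle is that for $\bar\mu \in \Pi_{\tB}$, Step 2 only gives $P^*_{\bar\mu} v_\beta \ge 0$ on $D^*_{\bar\mu}$, which is the wrong direction. Here gain-optimality of $\bar\mu$ in $\M'$ is used. The gain of a policy $\mu$ in $\M'$ is $-P^*_\mu v_\beta$. Pick any $\mu \in \Pi_{\B_*} \subset \Pi_{\tB}$; this inclusion holds by Lemma~\ref{lem-stab-sets}. Statewise gain-optimality of $\bar\mu$ in $\M'$ then gives
\[
-P^*_{\bar\mu} v_\beta \ge -P^*_\mu v_\beta \ge 0,
\]
by Step 2 applied to $\mu$. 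Hence $P^*_{\bar\mu} v_\beta \le 0$, and Step 1 gives
\[
h_{\bar\mu}(s) \ge v_\beta(s) - L_{\bar\mu}(s)(\epsilon_\beta + \bd_2).
\]
Combined with $v_\beta \ge h^* - \eta_\beta$, this is \eqref{eq-b-p2-pol-a}.

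For \eqref{eq-b-p2-pol-b}, take $\mu \in \Pi^s_{\tB}$, so that $s \in D^*_\mu$. Steps 1--2 (with $P^*_\mu v_\beta(s) \ge 0$) give the upper bound $h_\mu(s) \le v_\beta(s) + L_\mu(s)(\epsilon_\beta+\bd_2)$. Subtracting it from the lower bound for $h_{\bar\mu}(s)$ gives
\[
h_{\bar\mu}(s) \ge h_\mu(s) - (L_\mu(s) + L_{\bar\mu}(s))(\epsilon_\beta+\bd_2).
\]
Finally, $s \in D^*_{\bar\mu}$ means $\bar\mu \in \Pi^s_{\tB}$, so both $L$'s are bounded by $L_{\Pi^s_{\tB}}(s)$. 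Maximizing over $\mu\in\Pi^s_{\tB}$ completes the proof.
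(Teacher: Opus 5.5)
Your proof is correct and follows essentially the same route as the paper: your $e_\mu$ is the paper's $-\Delta_{\beta,\mu}$, Step 1 is the paper's combination of \eqref{eq-bg-new-prf2} with \eqref{eq-prf-pwb5} (using closedness of $D^*_\mu$ and $Z_\mu(I-P_\mu)=I-P^*_\mu$), and your argument that $P^*_{\bar\mu}v_\beta\le 0$ via a policy in $\Pi_{\B_*}\subset\Pi_{\tB}$ is exactly Lemma~\ref{lem-M'-gain}. The differences are cosmetic. For \eqref{eq-b-p2-pol-a} you use only the one-sided bound $P^*_{\bar\mu}v_\beta\le 0$, whereas the paper also shows this quantity equals $0$ on $D^*_{\bar\mu}$. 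For \eqref{eq-b-p2-pol-b} you bound both biases against $v_\beta(s)$, whereas the paper compares the two bias expressions directly via $P^*_{\bar\mu}v_\beta\le P^*_\mu v_\beta$.
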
 

The next proposition provides a necessary and sufficient characterization of the condition $s \in D^*_{\bar \mu}$. In particular, it makes explicit several necessary conditions that may not be immediately obvious.

\begin{proposition} \label{prop-cond-Dst}
Let $\bar \mu \in \Pi_{\tB}$ be a gain-optimal policy for $\M'$. Then $s \in D^*_{\bar \mu}$ if and only if the following two conditions hold:
 \begin{enumerate}
\item[\rm (i)] $\bar{\mu}(s) \in \A_*(s)$, and 
\begin{equation} \label{eq-cond-D*1b}
 (P^*_{\bar \mu} v_\beta)(s) = \inf_{\mu \in \Pi_{\tB}} (P^*_{\mu} v_\beta)(s) = 0,
\end{equation} 
i.e., the optimal gain in $\M'$ starting from state $s$ is $0$.
\item[\rm (ii)] If $s \in \Rec_{\bar \mu}$, then $\bar \mu(s) \in \B_*(s)$. Otherwise,  $\bar \mu(s') \in \B_*(s')$ for every recurrent state $s' \in \Rec_{\bar \mu}$ that is reachable from $s$ under $\bar \mu$.
\end{enumerate}
Furthermore, \eqref{eq-cond-D*1b} implies the following necessary property:
\begin{enumerate}
\item[\rm (iii)] Every policy $\mu \in \Pi_{\B_*}$ is gain-optimal for state $s$ in the MDP $\widetilde{\M} = (\S, \tA, p, r - g^*)$, and therefore also in the original MDP whenever $\tA(s) = \A_*(s)$.
\end{enumerate} 
\end{proposition}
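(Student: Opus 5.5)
The proof rests on one identity linking the $\beta$-discounted optimality equation of $\widetilde{\M}$ with the average-reward quantities of $\M$ and $\M'$. For $\mu \in \Pi_{\tA}$, let $d_\mu := v_\beta - (r_\mu - g^* + \beta P_\mu v_\beta) \geq 0$ denote the discounted Bellman gap. By definition of $\B_*$, $d_\mu(s) = 0$ if and only if $\mu(s) \in \B_*(s)$. Rewrite this as
\[ r_\mu - g^* + P_\mu v_\beta - v_\beta = -d_\mu + (1-\beta) P_\mu v_\beta, \]
and left-multiply by $P^*_\mu$, using $P^*_\mu P_\mu = P^*_\mu$ and $g_\mu = P^*_\mu r_\mu$. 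This gives the key relation
\[ (1-\beta)\, P^*_\mu v_\beta = \big(g_\mu - P^*_\mu g^*\big) + P^*_\mu d_\mu, \qquad \mu \in \Pi_{\tA}. \]
The two terms on the right have definite signs. First, $g_\mu - P^*_\mu g^* = P^*_\mu(r_\mu - g^*)$ is the gain of $\mu$ in $\widetilde{\M}$, which is $\leq 0$ by Lemma~\ref{lem-approx-hatM}. Second, $P^*_\mu d_\mu \geq 0$. Since $\B_* \subset \tB$ and $d_\mu = 0$ for $\mu \in \Pi_{\B_*}$, we get $\inf_{\mu \in \Pi_{\tB}} (P^*_\mu v_\beta)(s) \leq 0$ for every $s$. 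The gain of $\mu$ in $\M'$ is $-P^*_\mu v_\beta$, so $\bar\mu$, being gain-optimal for $\M'$, attains this infimum at every state.

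\emph{Necessity.} Let $s \in D^*_{\bar\mu}$. Because $D^*_{\bar\mu}$ is closed under $\bar\mu$, we have $g_{\bar\mu}(s) = (P_{\bar\mu} g_{\bar\mu})(s) = (P_{\bar\mu} g^*)(s) \leq g^*(s)$, where the inequality comes from the first AOE equation. Equality holds, so $\bar\mu(s) \in \A_*(s)$. Closedness also implies that every recurrent state $s'$ reachable from $s$ lies in $D^*_{\bar\mu}$. On each recurrent class $g^*$ is constant and equal to $g_{\bar\mu}$, hence $(P^*_{\bar\mu} g^*)(s) = g_{\bar\mu}(s)$ and the first term of the key relation vanishes at $s$. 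The relation then gives $(1-\beta)(P^*_{\bar\mu} v_\beta)(s) = (P^*_{\bar\mu} d_{\bar\mu})(s) \geq 0$. Combined with the infimum bound above, this yields \eqref{eq-cond-D*1b} and $(P^*_{\bar\mu} d_{\bar\mu})(s) = 0$. The latter says exactly that $d_{\bar\mu}$ vanishes on the recurrent states reachable from $s$, which is condition (ii).

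\emph{Part (iii).} Given \eqref{eq-cond-D*1b}, any $\mu \in \Pi_{\B_*}$ satisfies $(1-\beta)(P^*_\mu v_\beta)(s) = (g_\mu - P^*_\mu g^*)(s) \leq 0$. Since $\mu \in \Pi_{\tB}$, the infimum in \eqref{eq-cond-D*1b} also gives $(P^*_\mu v_\beta)(s) \geq 0$. Hence the $\widetilde{\M}$-gain of $\mu$ at $s$ is $0$, which is optimal by Lemma~\ref{lem-approx-hatM}. When $\tA(s) = \A_*(s)$ I would transfer this to $\M$ through $\tilde h^* = h^*$ and the fact that, from such $s$, $\widetilde{\M}$-gain $0$ forces $P^*_\mu g^*(s) = g_\mu(s)$ together with the $\A_*$-structure.

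\emph{Sufficiency.} Assume (i) and (ii). Then $P^*_{\bar\mu} d_{\bar\mu}(s) = 0$, so by the key relation and \eqref{eq-cond-D*1b} we get $g_{\bar\mu}(s) = (P^*_{\bar\mu} g^*)(s)$. It remains to show $(P^*_{\bar\mu} g^*)(s) = g^*(s)$, and this is the step I expect to be the main obstacle. By the first AOE equation, $g^*(S_n)$ is a supermartingale under $\bar\mu$. Equality therefore requires $\bar\mu(s'') \in \A_*(s'')$ at every transient state $s''$ reachable from $s$, not only at $s$ itself. To propagate, I would first use that the optimal gain function of $\M'$ is superharmonic and is harmonic under the optimal $\bar\mu$. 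This implies that every state reachable from $s$ also satisfies \eqref{eq-cond-D*1b}, and by (ii) at $s$ also the recurrent-state part of (ii). Applying the key relation at those states gives $g_{\bar\mu} = P^*_{\bar\mu} g^*$ along the whole reachable set. What remains is to upgrade this to $\bar\mu(s'') \in \A_*(s'')$. First attempt: compare $\bar\mu$ with a policy that switches to $\A_*$-actions, and exploit that $\tB \subset \tA \subset \A_{\bd_1}$ makes any strict drop of $g^*$ visible as a strictly negative $\widetilde{\M}$-gain contribution. If that fails, I would have to read the condition $\bar\mu(s) \in \A_*(s)$ in (i) as being needed along reachable states, which is the content closedness provides in the necessity direction.
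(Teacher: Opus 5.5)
Your key relation is the paper's \eqref{eq-prf-pwb5b} (with $d_\mu=\Delta_\mu$), and your infimum bound is Lemma~\ref{lem-M'-gain}. Your necessity argument and the $\widetilde{\M}$-part of (iii) match the paper's proof.

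\textbf{The sufficiency direction cannot be closed, because it is false as stated.} From (i)--(ii) you only get $g_{\bar\mu}(s)=(P^*_{\bar\mu}g^*)(s)$. The one-step identity $(P_{\bar\mu}g^*)(s)=g^*(s)$ does not give $(P^*_{\bar\mu}g^*)(s)=g^*(s)$. The paper's own proof makes exactly this inference (``together with the third equality implies $g_{\bar\mu}(s)=g^*(s)$'').

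Your first attempt cannot work, for a structural reason. The $\widetilde{\M}$-gain $g_\mu-P^*_\mu g^*$ measures only suboptimality inside the recurrent classes finally reached. A drop of $g^*$ at a transient state is therefore invisible to $\widetilde{\M}$, $\B_*$, $\tB$ and $\M'$.

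Here is a concrete counterexample.
\begin{itemize}
\item $s$ has a single action, reward $0$, leading to $x$.
\item $x$ has $a_1\to y$ with reward $0$ and $a_2\to z$ with reward $1$.
\item $y$ and $z$ are absorbing with rewards $1$ and $1-\delta$, where $0<\delta<\ud_1$.
\end{itemize}
Then $g^*(s)=g^*(x)=1$ and $\A_*(x)=\{a_1\}$. Yet $a_2$ ends up in $\tA(x)$, since its OSP value $1-\delta$ exceeds $g^*(x)-\ud_1$.

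In $\widetilde{\M}$ the rewards at $y$ and $z$ vanish. So $v_\beta(y)=v_\beta(z)=0$, $Q_\beta(x,a_1)=-1$ and $Q_\beta(x,a_2)=0$. For $\bd_2<1$ this gives $\B_*(x)=\tB(x)=\{a_2\}$.

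The unique $\bar\mu\in\Pi_{\tB}$ therefore has the following properties:
\begin{itemize}
\item it is trivially gain-optimal for $\M'$;
\item $\bar\mu(s)\in\A_*(s)$;
\item $(P^*_{\bar\mu}v_\beta)(s)=v_\beta(z)=0=\inf_{\mu\in\Pi_{\tB}}(P^*_\mu v_\beta)(s)$;
\item (ii) holds at $z$, the only reachable recurrent state.
\end{itemize}
Yet $g_{\bar\mu}(s)=1-\delta<g^*(s)$.

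Here also $\tA(s)=\A_*(s)$ and $\bar\mu\in\Pi_{\B_*}$. So the same example refutes the ``original MDP'' clause of (iii), and the corollary after the proposition. Your appeal to $\tilde h^*=h^*$ could not have rescued that clause: the identity needs $\tA=\A_*$ globally, and it concerns bias rather than gain.

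\textbf{Your fallback is the right repair.} In (i), replace $\bar\mu(s)\in\A_*(s)$ by $\bar\mu(s'')\in\A_*(s'')$ at every state $s''$ reachable from $s$ under $\bar\mu$, including $s$ itself. On recurrent states this holds automatically, since $g^*$ is constant on each recurrent class.
\begin{itemize}
\item The stronger condition is still necessary: $D^*_{\bar\mu}$ is closed, and your one-step argument applies at each of its states.
\item It is also sufficient. $g^*$ is then $P_{\bar\mu}$-harmonic on the set reachable from $s$, so $(P^n_{\bar\mu}g^*)(s)=g^*(s)$ for all $n$, and hence $(P^*_{\bar\mu}g^*)(s)=g^*(s)$. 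Combined with $g_{\bar\mu}(s)=(P^*_{\bar\mu}g^*)(s)$, this gives $s\in D^*_{\bar\mu}$.
\end{itemize}
Likewise, the transfer in (iii) to the original MDP holds when $\mu(s'')\in\A_*(s'')$ at all states reachable from $s$ under $\mu$ (e.g.\ $\tA=\A_*$ on that set). It does not hold merely because $\tA(s)=\A_*(s)$.
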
 

The optimal gain function in $\M'=(\S, \tB, p, - v_\beta)$ is always nonnegative (see Lemma~\ref{lem-M'-gain} below). A larger outer approximation $\tB$ of $\B_*$ yields higher optimal gain values, which deviate further from the zero gain required by the necessary condition \eqref{eq-cond-D*1b}. The degree of departure from this condition translates directly into degradation in the gain of $\bar \mu$ for the original MDP $\M$ (cf.\ \eqref{eq-prf-pwb5b} below).

Thus, the preceding propositions appear to suggest that policies in $\Pi_{\B_*}$, being optimal for the $\beta$-discounted limiting MDP $\widetilde{\M}$ arising from Phase~1, may already suffice from the viewpoint of the preceding performance guarantees and can, in fact, have stronger theoretical guarantees than policies produced by Phase~2. In particular:

\begin{cor}
For a state $s \in \S$, suppose that $\tA(s) = \A_*(s)$ and \eqref{eq-cond-D*1b} holds. Then every policy $\mu \in \Pi_{\B_*}$ is gain-optimal from $s$ in $\M'$. Moreover, $\mu$ is also gain-optimal for $s$ in the original MDP so that its bias $h_\mu(s)$ can be bounded as in~\eqref{eq-b-p1-pol}. 
\end{cor}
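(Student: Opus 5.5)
The corollary should follow by combining Proposition~\ref{prop-cond-Dst}(iii) with a direct computation of the $\M'$-gain of policies in $\Pi_{\B_*}$, and then invoking Proposition~\ref{prop-p1p2-pol}(i). I will first settle gain-optimality in the original MDP and then deduce gain-optimality in $\M'$. Throughout, fix $s$ with $\tA(s)=\A_*(s)$ and \eqref{eq-cond-D*1b}, and let $\mu\in\Pi_{\B_*}$.

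\emph{Step 1 (gain-optimality in $\M$).} Since \eqref{eq-cond-D*1b} holds, Proposition~\ref{prop-cond-Dst}(iii) says that $\mu$ is gain-optimal for $s$ in $\widetilde{\M}$. Because $\tA(s)=\A_*(s)$, the same proposition gives gain-optimality for $s$ in $\M$, i.e.\ $s\in D^*_\mu$. Proposition~\ref{prop-p1p2-pol}(i) then applies directly and yields the bias bound \eqref{eq-b-p1-pol} at $s$. This settles the second assertion.

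\emph{Step 2 (gain-optimality in $\M'$).} Note first that $\B_*\subset\tB$ by Lemma~\ref{lem-stab-sets}, so $\mu$ is admissible in $\M'$. Its $\M'$-gain at $s$ is $-(P^*_\mu v_\beta)(s)$. By \eqref{eq-cond-D*1b}, the optimal $\M'$-gain at $s$ is $0$, so it suffices to show $(P^*_\mu v_\beta)(s)=0$. On a recurrent class $R$ of $\mu$ reachable from $s$, I use that $\mu$ is $\beta$-greedy for $\widetilde{\M}$: $v_\beta=r_\mu-g^*+\beta P_\mu v_\beta$. Left-multiplying by $P^*_\mu$ gives $(1-\beta)P^*_\mu v_\beta=P^*_\mu(r_\mu-g^*)=g_\mu-P^*_\mu g^*$, where $g_\mu$ is the gain of $\mu$ in $\M$.

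From Step 1 and closedness of $D^*_\mu$, every recurrent state $s'$ reachable from $s$ lies in $D^*_\mu$, so $g_\mu=g^*$ there. Since $g^*$ is constant on recurrent classes, $(P^*_\mu g^*)(s')=g^*(s')$ for such $s'$. Hence the right-hand side vanishes on these recurrent classes, and therefore $(P^*_\mu v_\beta)(s')=0$ for them. Finally, $(P^*_\mu v_\beta)(s)$ is a convex combination of the values $(P^*_\mu v_\beta)(s')$ over recurrent states $s'$ reachable from $s$. So it equals $0$, matching the infimum in \eqref{eq-cond-D*1b}, and $\mu$ is gain-optimal from $s$ in $\M'$.

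The main obstacle is Step 2: the identity $(1-\beta)P^*_\mu v_\beta=g_\mu-P^*_\mu g^*$ must be applied only on recurrent classes reachable from $s$, and the values there must be tied to $D^*_\mu$. If the closedness argument for $D^*_\mu$ (stated before Proposition~\ref{prop-p1p2-pol}) causes trouble, the fallback is to appeal to Proposition~\ref{prop-cond-Dst}(ii). That result suggests the $\B_*$-property on recurrent states is exactly what forces zero $\M'$-gain, but it is stated for gain-optimal $\bar\mu$ of $\M'$, so the direct computation above is preferable.
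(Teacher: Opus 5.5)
\textbf{Gap in Step~1.} Your Step~1 is how the paper itself obtains the corollary: it states it as an ``in particular'' of Proposition~\ref{prop-cond-Dst}(iii) together with Proposition~\ref{prop-p1p2-pol}(i). However, the passage from $\widetilde{\M}$ to $\M$ fails under the local hypothesis $\tA(s)=\A_*(s)$ as literally stated. Citing Proposition~\ref{prop-cond-Dst}(iii) does not help, because its last clause has the same local hypothesis.

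Gain-optimality for $s$ in $\widetilde{\M}$ only gives $[P^*_\mu(r_\mu-g^*)](s)=0$, i.e.\ $g_\mu(s)=(P^*_\mu g^*)(s)$. Since $P_\mu g^*\le g^*$, one has $(P^*_\mu g^*)(s)=g^*(s)$ if and only if $\mu(y)\in\A_*(y)$ at every state $y$ reachable from $s$ under $\mu$. The condition at $s$ alone controls only the first step.

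Here is a counterexample.
\begin{itemize}
\item Let $z$ be absorbing with reward $c:=1-\ud_1/2$. Let $x$ have a self-loop with reward $1$ and an action to $z$ with reward $2$. Let $s$ have a single action to $x$ with reward $0$.
\item Then $g^*(s)=g^*(x)=1$ and $g^*(z)=c$. The exit at $x$ is not in $\A_*(x)$, yet it is eventually added to and kept in $\tA(x)$, because $c>g^*(x)-\ud_1$. Also $\tA(s)=\A(s)=\A_*(s)$.
\item In $\widetilde{\M}$, $v_\beta(z)=0$ and $v_\beta(x)=1>\beta=Q_\beta(x,\text{loop})$. So every $\mu\in\Pi_{\B_*}$ exits at $x$, and $(P^*_\mu v_\beta)(s)=0=\inf_{\pi\in\Pi_{\tB}}(P^*_\pi v_\beta)(s)$.
\item Thus \eqref{eq-cond-D*1b} holds, with $\bar\mu=\mu$ gain-optimal for $\M'$. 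But $g_\mu(s)=c<g^*(s)$, so $s\notin D^*_\mu$ and Proposition~\ref{prop-p1p2-pol}(i) cannot be invoked.
\end{itemize}

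The second assertion of the corollary therefore needs $\tA(y)=\A_*(y)$ for all $y$ reachable from $s$ under $\mu$; for example, $\tA=\A_*$, which holds once $\bd_1$ is small. With that, $P_\mu g^*=g^*$ on that closed set, and
$g_\mu(s)-g^*(s)=[P^*_\mu(r_\mu-g^*)](s)=(1-\beta)(P^*_\mu v_\beta)(s)$.

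\textbf{Step~2 runs in the wrong direction.} Everything now hinges on $(P^*_\mu v_\beta)(s)=0$. Deriving $\M'$-optimality from $\M$-optimality is a valid implication, but it makes the first assertion, which is true as stated, depend on the second, which is not.

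The paper's argument in the proof of Proposition~\ref{prop-cond-Dst}(iii) is a direct sandwich:
\begin{itemize}
\item $\mu\in\Pi_{\B_*}\subset\Pi_{\tB}$ (Lemma~\ref{lem-stab-sets}) gives $(P^*_\mu v_\beta)(s)\ge 0$ by \eqref{eq-cond-D*1b}.
\item \eqref{eq-prf-pwb1} and \eqref{eq-bg-new-prf0} give $(1-\beta)(P^*_\mu v_\beta)(s)=[P^*_\mu(r_\mu-g^*)](s)\le 0$.
\end{itemize}
This yields gain-optimality from $s$ in $\M'$ with no hypothesis on $\tA$, and gain-optimality for $s$ in $\widetilde{\M}$ at the same time. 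Neither recurrent-class bookkeeping nor closedness of $D^*_\mu$ is needed. Prove this first, then pass to $\M$ and to \eqref{eq-b-p1-pol} using the strengthened action-set condition above.
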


In the remainder of this subsection, we further compare the policies produced by the two phases and discuss what Phase~2 can offer in practice, using an illustrative example.

\begin{remark}[Practical role of Phase~2]
Although all greedy policies w.r.t.\ $Q_{\beta,t}$ eventually lie in $\Pi_{\B_*}$ by the convergence of Phase~1 (Lemma~\ref{lem-stab-sets}), it could be numerically challenging to find a policy in $\Pi_{\B_*}$ via Phase~1 alone in the situation described below. Such a situation can arise, for example, in an important class of total-reward MDPs known as positive bounded models \cite[Sec.\ 7.2]{Put94}. 

Consider the pair of nested equations in \eqref{eq-aoe-bias} for finding a bias-optimal policy:
\[ h^* = \max_{\mu \in \Pi_{\A_*}} \{ r_\mu - g^* + P_\mu h^* \}, \qquad y = \max_{\mu \in \Pi_{\A_*[h^*]}} \{ - h^* + P_\mu y \}.\]
Suppose that the first equation alone cannot identify a bias-optimal policy: $\Pi_{\A_*[h^*]}$ consists of multiple gain-optimal policies, not all of which are bias-optimal. For simplicity, suppose that $\tA = \A_*$, as desired in Algorithm~\ref{alg:3}. Then, as we take $\beta$ close to $1$ so that $v_\beta$ approximates $h^*$ increasingly accurately, all policies $\mu$ in $\Pi_{\A_*[h^*]}$, including those that are not bias-optimal, approximately attain the statewise maxima in the $\beta$-discounted optimality equation for $\widetilde{\M}$, 
\[ v_\beta \approx r_\mu - g^* + \beta P_\mu v_\beta = Q_\beta \big(\cdot, \mu(\cdot)\big),\]
with the corresponding errors vanishing as $\beta \uparrow 1$. 

Consequently, when only the stochastic iterates $Q_{\beta, t}$ computed in Phase~1 are used, distinguishing non-bias-optimal policies from bias-optimal ones based on greediness w.r.t.\ $Q_{\beta, t}$ may require $Q_{\beta, t}$ to approximate its limit $Q_\beta$ extremely accurately. As $\beta \uparrow 1$, the number of iterations needed to attain such accuracy may become prohibitively large. Moreover, the required accuracy may itself be difficult to achieve reliably in practice.

These observations suggest that, although one could in theory rely solely on the discounted approximation in Phase~1 to find (near-)bias-optimal policies for $\M$ by taking $\beta$ close to $1$, Phase~2 may provide a less numerically demanding and more practical approach for computing such policies. 

As noted earlier, positive bounded models provide examples in which the numerical challenges of using Phase~1 alone can arise. To illustrate this, we run Algorithm~\ref{alg:3} on an example adapted from \citet[Ex.~7.2.3]{Put94}, with zero-mean random noise added to the rewards. The results in Figure~\ref{fig:alg3-phase-comparison}
support the observations above.
\hfill $\diamond$
\end{remark}

\begin{figure}[htbp]
    \centering
    \includegraphics[width=\linewidth]{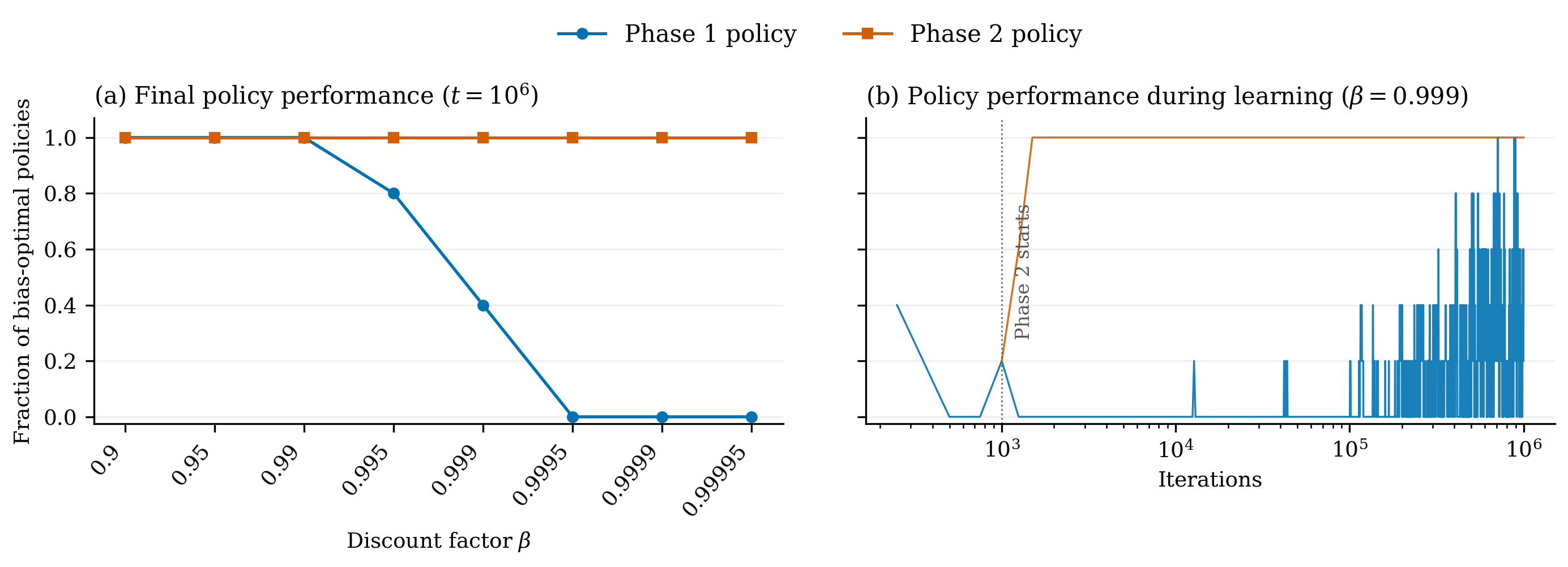}
    \caption{
        Comparison of the Phase 1 greedy policies and Phase 2 policies
        of Algorithm~\ref{alg:3}.
        Curves show the fraction of runs producing a bias-optimal policy
        across five random seeds.
        (a) Performance after $10^6$ iterations for different discount
        factors $\beta$.
        (b) Performance during learning for $\beta=0.999$.
    }
    \label{fig:alg3-phase-comparison}
\end{figure}

\begin{remark}
Finally, since the optimal gain in $\M'$ is always nonnegative, Prop.~\ref{prop-cond-Dst} suggests the possibility of using the optimal gain estimates for $\M'$ produced by the base algorithm in Phase 2 of Algorithm~\ref{alg:3} to adaptively tune the parameters $\bd_1$, $\bd_2$, and $\beta$. The development and convergence analysis of such an adaptive algorithm are beyond the scope of the present work and remain a direction for future research.
\end{remark}

\subsection{Performance-bound analysis} \label{sec-alg3-performance-proof}

\subsubsection{Proof of Theorem~\ref{thm-3-gen}}

\noindent {\bf (i) Bounding $g_\mu$ for $\mu \in \Pi_{\tB}$:}\\*[0.5em]
Let $\Delta_\mu$ be the nonnegative vector defined through the equality
\begin{equation} \label{eq-bg-ave-prf1}
  v_\beta = r_\mu - g^* + \beta P_\mu v_\beta + \Delta_\mu.
\end{equation}  
Since $\mu(s) \in \tB(s) \subset \B_{\bd_2}(s)$ for all $s \in \S$ (Lemma~\ref{lem-stab-sets}), we have
$\|\Delta_\mu \|_\infty \leq \bd_2.$
Left-multiplying \eqref{eq-bg-ave-prf1} by $P^*_\mu$ and rearranging the terms, we obtain
\begin{align} \label{eq-bg-ave-prf2}
P^*_\mu( r_\mu - g^*) & = ( 1- \beta) P^*_\mu v_\beta - P^*_\mu \Delta_\mu  
\geq - ( 1- \beta) \| v_\beta \|_\infty \1 - \|\Delta_\mu\|_\infty \1 \geq - (\epsilon_\beta + \bd_2) \1.
\end{align}

Next, we bound the term $P^*_\mu g^* - g^*$. For recurrent states under $\mu$,
\begin{equation} \label{eq-bg-ave-prf3}
(P^*_\mu g^*)(s) = g^*(s), \qquad \forall \, s \in \Rec_\mu.
\end{equation} 
For transient states under $\mu$, Lemma~\ref{lem-bound-gain} applies with $B = \T_\mu$, $B^c = \Rec_\mu$, and $\delta = \bd_1$, since $\mu \in \Pi_{\A_{\bd_1}}$ by $\tB \subset \tA \subset \A_{\bd_1}$ (Lemma~\ref{lem-stab-sets}). Hence 
\begin{equation} \label{eq-bg-ave-prf4}
 (P^*_\mu g^*)(s) - g^*(s)  \geq - \bd_1 \, \phi_\mu(s , \Rec_\mu), \quad \forall \, s \in \T_\mu.
\end{equation}

Since $g_\mu - g^* = P^*_\mu( r_\mu - g^*) + P^*_\mu g^* - g^*$, it follows from \eqref{eq-bg-ave-prf2}--\eqref{eq-bg-ave-prf4} that 
\[g_\mu(s) - g^*(s) \geq - \epsilon_\beta - \bd_2, \qquad \forall \, s \in \Rec_\mu,\]
\[g_\mu(s) - g^*(s) \geq - \epsilon_\beta - \bd_2 -  \bd_1 \phi_\mu(s , \Rec_\mu) , \qquad \forall \, s \in \T_\mu.\]
Since $\Pi_{\A}$ and $\S$ are finite, these bounds imply that for sufficiently small $\bd_1, \bd_2$, and $1- \beta$, $\mu$ is gain-optimal.
This proves part (i) of Theorem~\ref{thm-3-gen}. 

\medskip
\noindent {\bf (ii) Bias-optimality of $\bar \mu$ in a perturbed MDP:}\\*[0.5em]
For $\mu \in \Pi_{\tB}$, let $\Delta_{\beta,\mu} : = - P_\mu (1-\beta) v_\beta + \Delta_\mu$. Note that
$\|  \Delta_{\beta,\mu} \|_\infty \leq  \epsilon_\beta + \bd_2.$
Left-multiplying \eqref{eq-bg-ave-prf1} separately by $P^*_\mu$ and $Z_\mu$, and rearranging, we obtain
\begin{align} 
 P^*_\mu(r_\mu + \Delta_{\beta, \mu} - g^*) & = 0,   \label{eq-bg-prf1} \\
Z_\mu(r_\mu + \Delta_{\beta, \mu} - g^*) & = Z_\mu ( I - P_\mu ) v_\beta  = v_\beta - P^*_\mu v_\beta, \label{eq-bg-prf2} 
\end{align}
where we used \eqref{eq-Z-1} in \eqref{eq-bg-prf2}. Equations~\eqref{eq-bg-prf1}-- \eqref{eq-bg-prf2} show that, under the reward perturbation $\Delta_{\beta, \mu}$ and shift $- g^*$, the gain of $\mu$ is $0$ and its bias is $v_\beta - P^*_\mu v_\beta$.

Since $v_\beta$ is independent of $\mu$ and $- P^*_\mu v_\beta$ is the gain of $\mu$ in the MDP $\M' = (\S, \tB, p, - v_\beta)$, the gain-optimality of $\bar \mu$ for $\M'$ implies
\begin{equation} \label{eq-bg-prf3} 
  Z_{\bar \mu}(r_{\bar \mu} + \Delta_{\beta, \bar \mu} - g^*) \geq Z_{\mu}(r_{\mu} + \Delta_{\beta, \mu} - g^*), \qquad \forall \, \mu \in \Pi_{\tB}.
\end{equation}  
This shows that $\bar \mu$ is bias-optimal within $\Pi_{\tB}$ for a perturbed MDP whose rewards differ from those of the original MDP by the shift $-g^*$ and a perturbation uniformly bounded by $\epsilon_\beta + \bd_2$.

\medskip
\noindent {\bf (iii) Near bias-optimality of $\bar \mu$ in the original MDP:}\\*[0.5em]
Under the assumption in part (iii) that $\bar \mu$ is gain-optimal for $\M$, $D^*_{\bar \mu} = \S$. The assertion then follows directly from the statewise bias bound \eqref{eq-b-p2-pol-a} in Prop.~\ref{prop-p1p2-pol}(ii) (established below), upon noting that $\max_{s \in \S} \eta_\beta(s) = \eta_\beta$ and 
\[ \max_{s \in \S} L_{\bar \mu}(s) \leq \max_{\mu \in \Pi_\A} \big\| |Z_\mu| \1\big\|_\infty =:c,\]
where $c$ is a finite $\M$-dependent constant since $\Pi_\A$ is finite.

This completes the proof of Theorem~\ref{thm-3-gen}.

\subsubsection{Proofs of Propositions~\ref{prop-p1p2-pol} and~\ref{prop-cond-Dst}}

We begin with several useful relations. For any $\mu \in \Pi_\A$, recall that $g_\mu = P^*_\mu r_\mu$ and satisfies $g_\mu = P_\mu g_\mu = P^*_\mu g_\mu$, which implies $P^*_\mu(r_\mu - g_\mu) = 0$ and 
\begin{equation} \label{eq-bg-new-prf0}
 P^*_{\mu}(r_{\mu}  - g^*) = P^*_{\mu}(g_{\mu}  - g^* ) \leq 0.
\end{equation} 
Suppose $D^*_\mu$ is nonempty. Then, since $D^*_\mu$ is closed under $\mu$, for all $s \in D^*_{\mu}$, 
\begin{equation} \label{eq-bg-new-prf1a}
 [P^*_{\mu}(g_{\mu}  - g^* )](s) = [P^*_{\mu}(r_{\mu}  - g^*)](s) = 0,
\end{equation} 
\begin{equation} \label{eq-bg-new-prf1b}
[Z_{\mu}(g_{\mu}  - g^* )](s) = 0,
\end{equation}
\begin{equation} \label{eq-bg-new-prf2}
   h_\mu(s) := [ Z_{\mu}(r_{\mu}  - g_{\mu})](s) = [ Z_{\mu}(r_{\mu}  - g^*)](s),
\end{equation}
where \eqref{eq-bg-new-prf1b} follows from the expression $Z_\mu  = I + \lim_{\beta \uparrow 1} \sum_{n=1}^\infty \beta^n (P_\mu^n - P^*_\mu)$ and implies \eqref{eq-bg-new-prf2}.

For $\mu \in  \Pi_{\tB}$, left-multiplying \eqref{eq-bg-ave-prf1} by $P^*_{\mu}$ and $Z_\mu$, respectively, yields
\begin{equation} \label{eq-prf-pwb5b}
 (1 - \beta) P^*_\mu v_\beta = P^*_\mu(r_\mu - g^*) + P^*_\mu \Delta_{\mu},
\end{equation} 
\begin{equation} \label{eq-prf-pwb5}
 Z_{\mu}(r_{\mu} - g^*) = (I - P^*_{\mu}) v_\beta + (1 - \beta) Z_{\mu} P_{\mu} v_\beta - Z_{\mu} \Delta_{\mu},
\end{equation}
where we used $Z_\mu(I - P_\mu) = I - P^*_\mu$ in deriving \eqref{eq-prf-pwb5}.
If $\mu \in \Pi_{\B_*}$, then $\Delta_\mu = 0$ by the definition of $\B_*$ and we have
\begin{equation} \label{eq-prf-pwb0}
 v_\beta  = r_{\mu} - g^* + \beta P_{\mu} v_\beta,
\end{equation} 
\begin{equation} \label{eq-prf-pwb1}
 (1 - \beta) P^*_\mu v_\beta = P^*_\mu(r_\mu - g^*),
\end{equation}
\begin{equation} \label{eq-prf-pwb2}
 Z_\mu(r_{\mu} - g^*) = (I - P^*_\mu) v_\beta + (1 - \beta) Z_\mu P_\mu v_\beta.
\end{equation}

\begin{lemma} \label{lem-M'-gain}
The optimal gain function in $\M' = (\S, \tB, p, - v_\beta)$ is nonnegative. Equivalently, 
\[\inf_{\mu \in \Pi_{\tB}} P^*_\mu v_\beta \leq 0.\]
\end{lemma}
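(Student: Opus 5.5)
The plan is to exhibit a single policy $\mu \in \Pi_{\tB}$ with $P^*_\mu v_\beta \leq 0$ componentwise. The gain of $\mu$ in $\M'$ is $-P^*_\mu v_\beta$, and the optimal gain dominates the gain of every policy in $\Pi_{\tB}$ (and the infimum over the finite set $\Pi_{\tB}$ is attained statewise by a single policy, by standard multichain theory). So the statement reduces to finding one such $\mu$. The natural candidate is any $\mu \in \Pi_{\B_*} \subset \Pi_{\tB}$, i.e., a $\beta$-discounted optimal policy for $\widetilde{\M}$; the inclusion $\B_* \subset \tB$ is given by Lemma~\ref{lem-stab-sets}.

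For such $\mu$, identity \eqref{eq-prf-pwb1} gives
\[
(1-\beta) P^*_\mu v_\beta = P^*_\mu (r_\mu - g^*).
\]
By \eqref{eq-bg-new-prf0}, the right-hand side is $P^*_\mu(g_\mu - g^*) \leq 0$, since $g_\mu \leq g^*$ for every policy of $\M$. Note that $\mu \in \Pi_{\tA} \subset \Pi_\A$ is indeed a policy of $\M$. Since $\beta < 1$, dividing by $1-\beta$ yields $P^*_\mu v_\beta \leq 0$. Hence $\inf_{\mu' \in \Pi_{\tB}} (P^*_{\mu'} v_\beta)(s) \leq 0$ for every $s$, which is the claim. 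The equivalence with nonnegativity of the optimal gain of $\M'$ is immediate from the gain formula $g'_{\mu} = P^*_\mu(-v_\beta)$ and the attainment of the statewise supremum of gains by stationary policies.

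There is no real obstacle here. The only points to check are that $\B_*(s)$ is nonempty, which holds because the maximum defining $v_\beta$ is attained over the finite set $\tA(s)$, and that \eqref{eq-prf-pwb1} applies, which requires $\Delta_\mu = 0$ for $\mu \in \Pi_{\B_*}$.
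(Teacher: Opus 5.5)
Your proposal is correct and follows the paper's own argument: take any $\mu \in \Pi_{\B_*} \subset \Pi_{\tB}$ and use \eqref{eq-prf-pwb1} together with \eqref{eq-bg-new-prf0} to get $P^*_\mu v_\beta = \tfrac{1}{1-\beta} P^*_\mu(r_\mu - g^*) \leq 0$. The additional checks you list ($\B_*(s) \neq \varnothing$, $\Delta_\mu = 0$, and $\mu \in \Pi_\A$ so that $g_\mu \leq g^*$) are correct and are left implicit in the paper.
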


\begin{proof}
Since $\Pi_{\B_*} \subset \Pi_{\tB}$ and, by \eqref{eq-prf-pwb1}, 
\begin{equation} \label{eq-prf-M'-gain}
  P^*_\mu v_\beta = \tfrac{1}{1 - \beta} P^*_\mu(r_\mu - g^*) \leq 0, \qquad \mu \in \Pi_{\B_*},
\end{equation}   
the result follows.
\end{proof}

We now proceed to prove Props.~\ref{prop-p1p2-pol} and~\ref{prop-cond-Dst}.

\begin{proof}[Proof of Prop.~\ref{prop-p1p2-pol}]
(i) Let $\mu  \in \Pi_{\B_*}$ and $s \in D^*_\mu$. Then, by \eqref{eq-bg-new-prf1a} and \eqref{eq-prf-pwb1},
\begin{equation} \label{eq-prf-pwb3}
   (P^*_\mu v_\beta)(s) = 0.
\end{equation}
Combining this with \eqref{eq-bg-new-prf2} and \eqref{eq-prf-pwb2}, we obtain
\begin{align} 
h_\mu(s) & = v_\beta(s) +  [Z_\mu P_\mu (1 - \beta) v_\beta](s)  \notag \\
   & = \tilde{h}^*(s) + (v_\beta(s) - \tilde{h}^*(s)) + [Z_\mu P_\mu (1 - \beta) v_\beta](s) \notag \\
   & \geq h^*(s) - \eta_\beta(s) - L_{\mu}(s) \epsilon_\beta, \label{eq-prf-pwb4}
\end{align}   
where, in deriving \eqref{eq-prf-pwb4}, we used the fact that $\tilde{h}^* \geq h^*$, implied by $\tA \supset \A_*$ (cf.\ Lemma~\ref{lem-approx-hatM}), and also the fact that $\| P_\mu (1 - \beta) v_\beta \|_\infty \leq \| (1 - \beta) v_\beta \|_\infty = \epsilon_\beta$. This proves \eqref{eq-b-p1-pol}.

\medskip
\noindent (ii) Let $\bar \mu \in \Pi_{\tB}$ be gain-optimal for the MDP $\M' = (\S, \tB, p, - v_\beta)$. Then
\begin{equation} \label{eq-prf-bmu-opt}
 P^*_{\bar \mu} v_\beta \leq P^*_{\mu} v_\beta, \qquad \forall \, \mu \in \Pi_{\tB}.
\end{equation} 

We prove the bound \eqref{eq-b-p2-pol-b} first. Let $s \in D^*_{\bar \mu}$, and consider any $\mu \in \Pi_{\tB}^s$ so that $s \in D^*_{\mu}$. By \eqref{eq-bg-new-prf2} and \eqref{eq-prf-pwb5},
\begin{align*}
h_{\bar \mu}(s) & = v_\beta(s) - (P^*_{\bar \mu} v_\beta)(s)  +  [Z_{\bar \mu} P_{\bar \mu} (1 - \beta) v_\beta](s) -  (Z_{\bar \mu} \Delta_{\bar \mu})(s), \\
h_{\mu}(s) & = v_\beta(s) - (P^*_{\mu} v_\beta)(s)  +  [Z_{\mu} P_{\mu} (1 - \beta) v_\beta](s) -  (Z_{\mu} \Delta_{\mu})(s).
\end{align*}
Since $- (P^*_{\bar \mu} v_\beta)(s) \geq - (P^*_{\mu} v_\beta)(s)$, this implies
\begin{align*}
 h_{\bar \mu}(s)  \geq h_\mu(s) - [Z_{\mu} P_{\mu} (1 - \beta) v_\beta](s) +  (Z_{\mu} \Delta_{\mu})(s) + [Z_{\bar \mu} P_{\bar \mu} (1 - \beta) v_\beta](s) -  (Z_{\bar \mu} \Delta_{\bar \mu})(s).
 \end{align*}
The bound \eqref{eq-b-p2-pol-b} then follows by taking maximum over $\mu \in \Pi_{\tB}^s$:
\[ h_{\bar \mu}(s)  \geq \max_{ \mu \in \Pi_{\tB}^s} h_\mu(s) - 2 L_{\Pi_\tB^s}(s) (\epsilon_\beta + \bd_2).\]

Finally, we prove the bound \eqref{eq-b-p2-pol-a}. For $s \in D^*_{\bar \mu}$, by \eqref{eq-bg-new-prf1a} and \eqref{eq-prf-pwb5b},
\begin{equation} \label{eq-prf-bmu-0}
 (P^*_{\bar \mu} v_\beta)(s) = \tfrac{1}{1 - \beta} (P^*_{\bar \mu} \Delta_{\bar \mu})(s) \geq 0,
\end{equation} 
whereas $(P^*_{\bar \mu} v_\beta)(s) \leq 0$ by Lemma~\ref{lem-M'-gain}. 
Therefore,  
\begin{equation} \label{eq-prf-bmu-1}
 (P^*_{\bar \mu} v_\beta)(s) = 0. 
\end{equation} 
Combining this with \eqref{eq-bg-new-prf2} and \eqref{eq-prf-pwb5} for $\bar \mu$, it follows similarly to the derivation of \eqref{eq-prf-pwb4} that 
\begin{align*} 
h_{\bar \mu}(s) = [Z_{\bar \mu}(r_{\bar \mu} - g^*)](s) \geq h^*(s) - \eta_\beta(s) - L_{\bar \mu}(s) (\epsilon_\beta + \bd_2).
\end{align*}
This proves \eqref{eq-b-p2-pol-a}.
\end{proof}

\begin{proof}[Proof of Prop.~\ref{prop-cond-Dst}]
Let $s \in D^*_{\bar \mu}$. Then $\bar \mu(s) \in \A_*(s)$; otherwise, $(P_{\bar \mu} g^*)(s) < g^*(s)$, whereas $(P_{\bar \mu} g^*)(s) \geq (P_{\bar \mu} g_{\bar \mu})(s) = g_{\bar \mu}(s) = g^*(s)$, a contradiction. Equality \eqref{eq-cond-D*1b} has already been established (see \eqref{eq-prf-bmu-1}). This proves (i). Moreover, \eqref{eq-prf-bmu-0} and \eqref{eq-prf-bmu-1} yield
$(P^*_{\bar \mu} \Delta_{\bar \mu})(s) = 0,$
which is equivalent to (ii).

Conversely, under conditions (i) and (ii), we have 
\[ (P^*_{\bar \mu} v_\beta)(s) = 0, \quad (P^*_{\bar \mu} \Delta_{\bar \mu})(s) = 0, \quad \text{and } \ (P_{\bar \mu} g^*)(s) = g^*(s),\]
where the last equality follows from $\bar \mu(s) \in \A_*(s)$. Combining the first two equalities with \eqref{eq-prf-pwb5b} yields $[P^*_{\bar \mu} (r_{\bar \mu} - g^*)](s) = 0$, which together with the third equality implies $g_{\bar \mu}(s) = g^*(s)$. Hence, $s \in D^*_{\bar \mu}$, completing the proof of the equivalence.

For part (iii), suppose that \eqref{eq-cond-D*1b} holds: $(P^*_{\bar \mu} v_\beta)(s) = 0$. Then, for every $\mu \in \Pi_{\B_*}$, $(P^*_{\mu} v_\beta)(s) \geq (P^*_{\bar \mu} v_\beta)(s) =0$, whereas, by \eqref{eq-prf-pwb1}, 
$(1 - \beta) (P^*_{\mu} v_\beta)(s) = [P^*_{\mu} (r_\mu - g^*)](s) \leq 0$. Therefore,
\[(P^*_{\mu} v_\beta)(s) = 0 \quad \text{and} \ \  [P^*_{\mu} (r_\mu - g^*)](s) =  0,\]
which is equivalent to (iii). Thus, \eqref{eq-cond-D*1b} implies (iii).
\end{proof}

% Experimental results section
\section{Experimental results}
\label{sec:main-results}

\begin{figure}[t]
    \centering
    
    \begin{subfigure}[t]{0.48\linewidth}
        \centering
        \includegraphics[width=\linewidth]{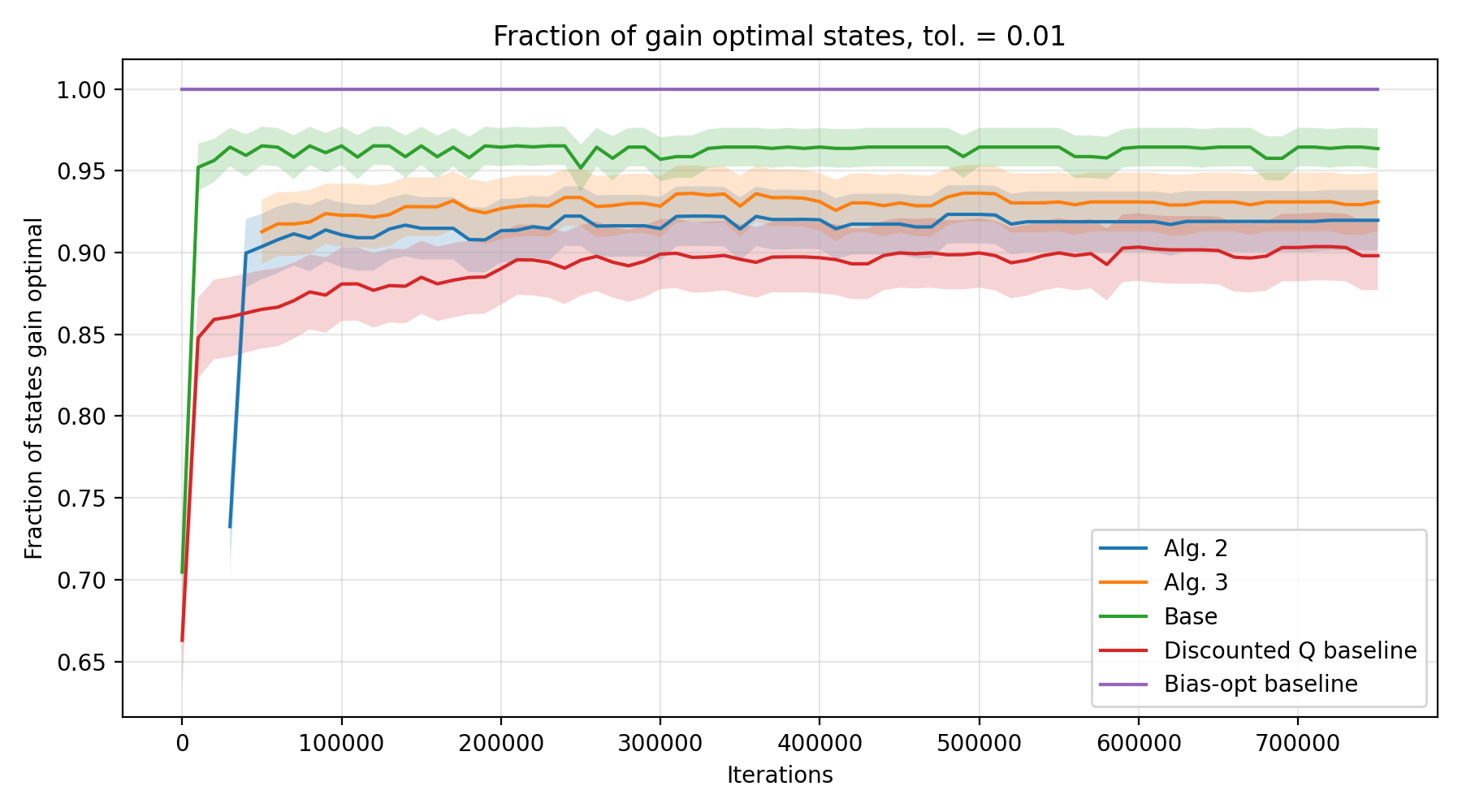}
        \caption{Fraction of gain-optimal states}
        \label{fig:gain}
    \end{subfigure}
    \hfill
    \begin{subfigure}[t]{0.48\linewidth}
        \centering
        \includegraphics[width=\linewidth]{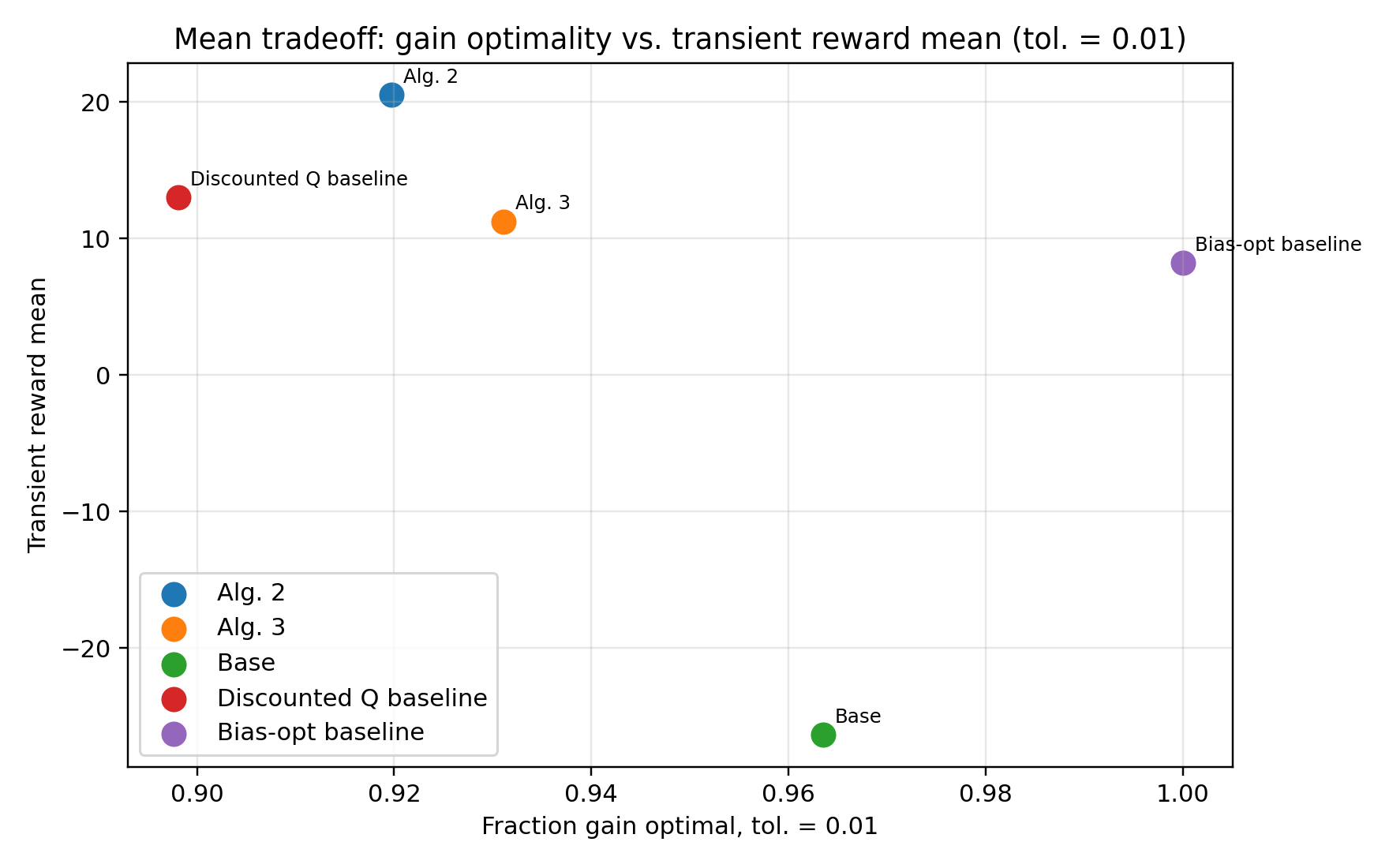}
        \caption{Gain-transient tradeoff}
        \label{fig:tradeoff}
    \end{subfigure}
    
    \caption{Comparison of gain optimality and transient performance across
    algorithms. In panel (a), the shaded region denotes one standard
    deviation across the 100 MDPs.}
    \label{fig:main_results}
\end{figure}

The following results are preliminary; a more extensive empirical evaluation is in progress and will be included in a future version of this work. We evaluate our algorithms on 100 randomly generated multichain MDPs with hierarchical structure, constructed using Bather's UHD framework. The number of levels, communicating classes, and transient states are randomized, and rewards and transitions are sampled to induce diverse gain structures and transient behavior. Full details are provided in Appendix~\ref{subsec:app-mdp-gen}.

We compare the base algorithm with Algorithms \ref{alg:2} and \ref{alg:3}, along with discounted Q-learning as a baseline. We run the $0$-discount policy iteration algorithm \cite{Vei66} (see also \cite[Sec.~10.3]{Put94}) using the true MDP model to compute a bias-optimal policy, which serves as an exact benchmark. For reproducibility, all algorithms are updated using a round-robin scheme, in which each outer iteration advances every phase by one full, deterministic sweep over its state-action pairs (full details in Appendix~\ref{subsec:app-alg-implementation-details}). We emphasize that our theoretical results hold under general asynchronous exploration schemes; round-robin updates were adopted here purely for implementation simplicity and ease of comparison across algorithms.

We say a state $s$ is \emph{gain-optimal under a policy $\pi$} if
$g_\pi(s) = g^*(s)$. Figure~\ref{fig:gain} shows, at each training iteration, the fraction of states that are gain-optimal under each algorithm's current policy, within a tolerance of $10^{-2}$ (i.e., $|g_{\pi^{(t)}}(s) - g^*(s)| < 10^{-2}$), averaged across the 100 MDPs, with the shaded band denoting one standard deviation across MDPs. This tolerance is chosen to account for numerical precision in our computation of $g_\pi$ and $g^*$ rather than as a substantive design choice; results are similar under smaller tolerances. The
base algorithm achieves the highest coverage, stabilizing near $96\%$. Algorithm~\ref{alg:3} follows closely, while Algorithm~\ref{alg:2} achieves slightly lower coverage. These results confirm that the base algorithm is most effective at finding gain-optimal policies, while extensions introduce a modest degradation in gain performance.

Comparing bias across policies with different gains
requires care. For this preliminary evaluation, we
compare transient performance using the expected
cumulative reward obtained prior to entering a recurrent class.
Figure~\ref{fig:tradeoff} reports this quantity for
each algorithm's final policy after $750{,}000$
iterations, averaged across all initial states within
each MDP and then across the 100 MDPs.
The $x$-axis gives the fraction of gain-optimal states,
and the $y$-axis measures this transient reward.
We leave a direct comparison of policy bias to future work.

The base algorithm achieves the highest gain-optimality but exhibits
significantly worse transient performance, stemming from the policy
$\pi_{\ell, k}^{\mathrm{reach}}$ for states in classes where exiting occurs. Algorithm~\ref{alg:2} achieves the best transient performance but with reduced gain-optimality. Algorithm~\ref{alg:3} provides a strong compromise: although its aggregate gain-optimality in Figure~\ref{fig:gain} sits between the base algorithm and Algorithm~\ref{alg:2}, its transient performance remains close to Algorithm~\ref{alg:2}'s and to the bias-optimal benchmark. Since gain-optimality is the primary objective in our setting, and transient performance is only meaningfully compared among near-gain-optimal policies, we view this as evidence that Algorithm~\ref{alg:3} achieves a favorable tradeoff between the two objectives.

To validate these aggregated results, we examine per-MDP performance (Appendix~\ref{subsec:app-extra-exp-results}). The trends remain consistent: the base algorithm is strongest in terms of gain-optimality but exhibits large transient failures, while the extensions enhance transient performance while maintaining a high degree of gain-optimality.

% Section 7: base algorithm convergence proofs
\section{Convergence proofs for the base algorithm}
\label{sec:conv-proofs-base-alg}

In this section, we prove the convergence results for
Algorithm~\ref{alg:mf-multichain}.
The aggregated OSP is already known to be a transient
total-reward problem, and its optimal values coincide
with the optimal gains of the original MDP.
These properties underlie the reduced formulation of
Schweitzer~\cite{Schweitzer1984} and the explicit OSP
formulation of Ohno~\cite{Ohno}.
We include proofs for completeness in the notation
of our formulation.

Section~\ref{subsec:supporting-lemmas-base} contains
the supporting results.
We first establish convergence of the Phase~2 and
Phase~3 learning procedures using existing results
for RVI Q-learning and total-reward Q-learning.
We then relate the optimal values and policy behavior
in the original MDP and the aggregated OSP.
Section~\ref{subsec:main-thms-base} combines these
results to prove convergence of the gain estimates
and eventual gain optimality of the constructed policies.

For the comparison between the original and aggregated problems, we use the following componentwise form of the AOE:

\begin{align}
g(s) &= \max_{a \in \mathcal{A}(s)} \sum_{s' \in \mathcal{S}} p(s' \mid s,a)\, g(s'), \quad \forall s \in \mathcal{S}, \label{eq:aoe1} \\
g(s) + h(s) &= \max_{a \in \mathcal{A}_*(s)} \left\{ r(s,a) + \sum_{s' \in \mathcal{S}} p(s' \mid s,a)\, h(s') \right\}, \quad \forall s \in \mathcal{S}, \label{eq:aoe2}
\end{align}
where
\[
\mathcal{A}_*(s) := \left\{ a \in \mathcal{A}(s) :
g(s) = \sum_{s' \in \mathcal{S}} p(s' \mid s,a)\, g(s')
\right\}.
\]
This system of equations is equivalent to the vector form given in \eqref{eq-aoe}.

\subsection{Supporting lemmas for base algorithm convergence}
\label{subsec:supporting-lemmas-base}

We first consider the learning procedures in Phases~2 and~3. Since each
restricted sub-MDP obtained from Bather's UHD is communicating, the
convergence theory for RVI Q-learning
\cite{WYS24,YWS25a,YWS25b} directly yields the Phase~2 result below.

\begin{lemma}[Phase~2 convergence of RVI Q-learning]
\label{lem:phase2_rvi_convergence}
Fix a communicating sub-MDP $\mathcal M_{\ell,k}$
obtained in Phase~1.
Suppose its Phase~2 learner uses a reference function
$f_{\ell,k}$ satisfying Assumption~\ref{cond-f},
a stepsize sequence satisfying Assumption~\ref{cond-ss},
and an update schedule satisfying
Assumption~\ref{cond-us}.
Then, almost surely:
\begin{enumerate}[label=(\roman*)]
\item
$r_{\ell,k}^{(t)}
=f_{\ell,k}(Q_{\ell,k}^{(t)})
\to r_{\ell,k}^*$, where $r_{\ell,k}^*$ is the
optimal gain of $\mathcal M_{\ell,k}$;
\item
for all sufficiently large $t$, every policy greedy
with respect to $Q_{\ell,k}^{(t)}$ is gain-optimal
for $\mathcal M_{\ell,k}$.
\end{enumerate}
\end{lemma}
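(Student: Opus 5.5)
The plan is to reduce Lemma~\ref{lem:phase2_rvi_convergence} to the existing convergence theory for RVI Q-learning in weakly communicating MDPs, specifically \cite[Thm.~3.1]{YWS25b} (cf.\ Appendix~\ref{append-rvi-ql}). The key structural fact comes from Phase~1: under the class-preserving action sets $\A_{\ell,k}$, every action taken in $C_{\ell,k}$ keeps the process in $C_{\ell,k}$. Hence $\M_{\ell,k}=(C_{\ell,k},\A_{\ell,k},p,r)$ is a well-defined finite MDP, with transition kernel $p(\cdot\mid s,a)$ supported on $C_{\ell,k}$. By Bather's UHD it is communicating, and so in particular weakly communicating. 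I will first record this, noting that each $\A_{\ell,k}(s)$ is nonempty: otherwise $C_{\ell,k}$ could not be a communicating class under the restricted action sets, and Lemma~\ref{lem:uhd}(ii) guarantees a policy making it recurrent.

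Next I will check that the Phase~2 learner is exactly the asynchronous RVI Q-learning scheme \eqref{eq: q-ite} of Appendix~\ref{append-rvi-ql} applied to $\M_{\ell,k}$. Three points need confirming.
\begin{enumerate}
\item The sampled successors $S^{sa}_{t+1}$ lie in $C_{\ell,k}$ almost surely, so the maximization over $\A_{\ell,k}(S^{sa}_{t+1})$ is well defined.
\item The rewards have mean $r(s,a)$ and finite variance, as assumed in the RL setting.
\item The stepsize, reference-function and update-schedule hypotheses are precisely Assumptions~\ref{cond-ss}, \ref{cond-f} and~\ref{cond-us}, with the local counter $\nu(t,(s,a))$.
\end{enumerate}
A subtlety is that the learners run concurrently and share a global time index. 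Since each learner's updates depend only on its own iterates and its own samples, the noise terms remain martingale differences with respect to the joint filtration. The cited theorem applies learner by learner, and the finitely many a.s.\ events can then be intersected.

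With this in place, \cite[Thm.~3.1]{YWS25b} gives $Q^{(t)}_{\ell,k}$ converging a.s.\ to a compact, connected subset of solutions $Q$ of the action-value AOE, on which $f_{\ell,k}(Q)=r^*_{\ell,k}$. Continuity of $f_{\ell,k}$ (part of Assumption~\ref{cond-f}) then gives (i). For (ii), I will use that every limit point $Q$ satisfies
\[ Q(s,a) = r(s,a) - r^*_{\ell,k} + \textstyle\sum_{s'} p(s'\mid s,a)\max_{a'}Q(s',a'). \]
On the compact solution set there is a uniform gap $\kappa>0$ separating actions that are greedy w.r.t.\ some solution from those that are not.

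This uniform gap is the main obstacle, and there are two ways to handle it. If \cite[Thm.~3.1]{YWS25b} already states eventual gain optimality of greedy policies, I will simply cite it. If not, I will argue as follows. Any policy greedy w.r.t.\ a solution $Q$ satisfies the single AOE of a communicating MDP with constant gain $r^*_{\ell,k}$, and is therefore gain-optimal. Let $E(s)$ be the set of actions that are greedy w.r.t.\ $Q$ for some $Q$ in the limit set. For $a\notin E(s)$, the function $Q\mapsto\max_{a'}Q(s,a')-Q(s,a)$ is continuous and strictly positive on the compact limit set, so it has a positive minimum. Once $\|Q^{(t)}_{\ell,k}-\text{limit set}\|$ falls below half that minimum, greedy actions lie in the union of greedy sets at nearby solutions. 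The remaining issue is that a greedy policy for $Q^{(t)}$ might mix actions greedy for different solutions. To handle this, I will show that every policy using only actions $a$ with $Q(s,a)=\max_{a'}Q(s,a')$ for a single nearby solution $Q$ is gain-optimal. Choosing the nearby solution as the projection of $Q^{(t)}$, an $\epsilon$-argument over the finitely many actions then closes the proof.
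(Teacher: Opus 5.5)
Your main route is to note that $\M_{\ell,k}$ is communicating (hence weakly communicating) under the class-preserving action sets and then invoke \cite[Thm.~3.1]{YWS25b}. This is exactly the paper's proof, which consists of that single citation. The well-posedness and per-learner checks you add are harmless elaborations.

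The weak point is your fallback for (ii), which you would need if the cited theorem gives only convergence of $Q^{(t)}_{\ell,k}$ to a set of AOE solutions. As described, the closing step does not go through. An action greedy for $Q^{(t)}_{\ell,k}$ need not be exactly greedy for the projection of $Q^{(t)}_{\ell,k}$ onto the limit set: a small perturbation can reverse two actions that are nearly tied under that solution. So you cannot reduce to ``greedy with respect to a single solution.'' Nor does the uniform gap for actions outside $E(s)$ control mixing, as you yourself noticed.

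The standard argument avoids all of this. Suppose $\|Q^{(t)}_{\ell,k}-Q\|_\infty\le\epsilon$ for some solution $Q$, and set $h(s):=\max_{a}Q(s,a)$. Then any policy $\pi$ greedy for $Q^{(t)}_{\ell,k}$ satisfies $Q(s,\pi(s))\ge h(s)-2\epsilon$ for all $s$, i.e.,
\[
r_\pi-r^*_{\ell,k}\1+P_\pi h\ge h-2\epsilon\1 .
\]
Left-multiplying by $P^*_\pi$ gives $g_\pi\ge(r^*_{\ell,k}-2\epsilon)\1$. Since $\Pi_{\A_{\ell,k}}$ and $C_{\ell,k}$ are finite, there is $\gamma>0$ such that $g_\pi(s)<r^*_{\ell,k}$ implies $g_\pi(s)\le r^*_{\ell,k}-\gamma$. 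Hence $\pi$ is gain-optimal once $2\epsilon<\gamma$. This needs only that the distance from $Q^{(t)}_{\ell,k}$ to the solution set tends to zero, with no compactness or action-gap argument.
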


\begin{proof}
By Bather's UHD, each $\mathcal M_{\ell,k}$ is communicating. The result follows directly from Theorem 3.1 in \cite{YWS25b}.
\end{proof}

We next consider Phase~3.
Properness of every stationary policy makes the aggregated
OSP a transient total-reward problem and provides the
structural condition used in the Q-learning convergence
argument.

\begin{lemma}[Properness of all stationary policies on the aggregated OSP]\label{lem:osp_properness}
Let $\bar{\mathcal M}=(\bar{\mathcal S},\bar{\mathcal A},\bar p,\bar r)$ be the limiting
aggregated OSP constructed in Phase~3 of Algorithm \ref{alg:mf-multichain}. Then every stationary policy $\bar{\pi}$ on $\bar{\mathcal{M}}$ is proper; that is, it reaches the terminal state $\Delta$ with probability 1 from every initial state. 
\end{lemma}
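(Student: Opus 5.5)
The plan is to rank the aggregated states by decomposition level and show that every non-stopping transition either reaches $\Delta$ or moves strictly "downward" in a well-founded order on $\bar{\mathcal S}\setminus\{\Delta\}$. Concretely, order the levels so that level-$0$ classes are the bottom (where only stopping is allowed). Within a level $\ell$, the transient states $\mathcal T_\ell$ sit above $\mathcal C_\ell$. The argument then has two ingredients. First, at a class state $\bar s_{\ell,k}$, any exit action $(s,a)\in\mathcal E_{\ell,k}$ leads, by the definition of exit actions, to a lower level with positive probability. Second, a stopping action goes to $\Delta$ with probability one. So for class states the only way to avoid termination is to keep exiting, and I want to show this cannot happen forever.

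\textbf{Step 1 (structure of transitions).} From Bather's construction, I would establish the following facts about any transition from $\bar s_{\ell,k}$ via an exit pair, or from $s\in\mathcal T_\ell$ via any action.
\begin{itemize}
\item It stays within $\bigcup_{j\le \ell}(\mathcal C_j\cup\mathcal T_j)$. This holds because $\mathcal S_{\ell}$ is obtained after removing lower levels, and the restricted action set at level $\ell$ excludes transitions back to higher levels. Equivalently, higher levels are never re-entered: states of $\mathcal S_{\ell+1}$ are not reachable from $\mathcal C_\ell\cup\mathcal T_\ell$.
\item Any exit action reaches a strictly lower level with positive probability.
\item From $\mathcal T_\ell$, under every policy of the level-$\ell$ subproblem, $\mathcal C_0\cup\cdots\cup\mathcal C_\ell$ (or a lower level) is reached almost surely.
\end{itemize}
The first item is the point I expect to be the main obstacle, since the paper's description of the UHD is informal. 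I would argue it from the recursive definition: $\mathcal S_{\ell+1}$ consists of states that under some policy avoid $\mathcal C_{\le\ell}$ forever, so no state of $\mathcal C_\ell\cup\mathcal T_\ell$ can transition into $\mathcal S_{\ell+1}$. Otherwise, that state would itself admit a policy avoiding $\mathcal C_{\le \ell}$ with positive probability, contradicting its membership in $\mathcal C_\ell\cup\mathcal T_\ell$. Lemma~\ref{lem:uhd} can also be invoked for the statement that recurrent classes lie in some $C_{\ell,k}$.

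\textbf{Step 2 (no recurrent class avoiding $\Delta$).} Fix a stationary $\bar\pi$ and suppose some recurrent class $R\subset\bar{\mathcal S}\setminus\{\Delta\}$ exists. Let $\ell^*$ be the minimal level met by $R$. By Step 1, $R$ lies entirely at level $\ell^*$, since lower levels are reached only by going down, and once there the chain can never return. Class states in $R$ must use exit actions because $R$ avoids $\Delta$; each such exit goes down with positive probability, which contradicts closedness at level $\ell^*$. Hence $R\subset\mathcal T_{\ell^*}$.

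Now lift $\bar\pi$ on $R$ to a policy of the original MDP: use the same actions on $R$, and arbitrary class-preserving actions elsewhere. Under this lifted policy, $R$ is a closed set of $\mathcal M$ contained in $\mathcal T$. It therefore contains a recurrent class of the lifted policy, which must lie in some $C_{\ell,k}$ by Lemma~\ref{lem:uhd}(i). This is a contradiction, since $R\subset\mathcal T$.

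\textbf{Step 3 (conclusion).} A finite Markov chain whose only recurrent class is $\{\Delta\}$ reaches $\Delta$ with probability one from every state. Hence $\bar\pi$ is proper. The subtle part remains confirming the "no upward transitions" property of the UHD in Step 1; Steps 2 and 3 are then routine.
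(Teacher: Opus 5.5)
\textbf{There is a genuine gap.} The first bullet of your Step~1 (no transition from $\mathcal C_\ell\cup\mathcal T_\ell$ into $\mathcal S_{\ell+1}$) is false for the decomposition as defined, and Step~2 depends on it.

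Membership in $\mathcal C_\ell$ or $\mathcal T_\ell$ constrains only the actions that keep the process inside $\mathcal S_\ell$. Thus $C_{\ell,k}$ is closed under the restricted actions, and from $\mathcal T_\ell$ the set $\mathcal C_\ell$ is reached a.s.\ under every \emph{restricted} policy. The restriction at level $\ell$ excludes transitions to \emph{lower} levels, not higher ones. An exit action only has to put some mass strictly below level $\ell$, and the rest of its mass may land in $\mathcal S_{\ell+1}$. Your contradiction argument silently treats a policy using such an action as an admissible level-$\ell$ policy, which it is not.

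Here is a counterexample. Take states $z,s,u$ with $z$ absorbing. At $s$, action $a_1$ is a self-loop and $a_2$ moves to $z$ or $u$ with probability $1/2$ each. At $u$, $b_1$ is a self-loop and $b_2$ moves to $s$. The procedure gives:
\begin{itemize}
\item $\mathcal C_0=\{\{z\}\}$ and $\mathcal T_0=\varnothing$;
\item $\mathcal S_1=\{s,u\}$, with restricted actions $\{a_1\}$ at $s$ and $\{b_1,b_2\}$ at $u$;
\item hence $\mathcal C_1=\{\{s\}\}$, $\mathcal T_1=\varnothing$, and $\mathcal C_2=\{\{u\}\}$.
\end{itemize}
The exit pair $(s,a_2)\in\mathcal E_{1,1}$ reaches level $2$. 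Under the aggregated policy that exits at both $\bar s_{1,1}$ and $\bar s_{2,1}$, the chain moves back and forth between levels $1$ and $2$ before terminating.

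The claim also fails for transient states. Add a state $t$ with one action to $z$ or $u$ (probability $1/2$ each) and one action to $s$. Then $t\in\mathcal T_1$, and it has an action reaching level $2$.

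Consequently, neither ``$R$ lies entirely at level $\ell^*$'' nor ``$R\subset\mathcal T_{\ell^*}$'' follows. Moreover, your lift ``use the same actions on $R$'' is not even defined if $R$ contains class states.

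\textbf{How the paper avoids this, and how to repair your argument.} The paper never needs levels to be monotone along trajectories. It takes a state of minimal level in the putative recurrent class and observes that its exit action moves strictly lower with positive probability, hence leaves the class.

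You can repair your proof along the same lines. Let $R_{\ell^*}$ be the level-$\ell^*$ part of $R$.
\begin{itemize}
\item Neither a class state nor a transient state using an action not restricted to $\mathcal S_{\ell^*}$ can lie in $R_{\ell^*}$, since each moves strictly below $\ell^*$ with positive probability and so would leave $R$.
\item Hence $R_{\ell^*}\subset\mathcal T_{\ell^*}$, and it uses restricted actions only. For restricted actions your contradiction argument \emph{is} valid: it shows all successors lie in $\mathcal C_{\ell^*}\cup\mathcal T_{\ell^*}$.
\item Since $R$ is closed and contains no level-$\ell^*$ class state, these successors lie in $R_{\ell^*}$.
\end{itemize}
Thus $R_{\ell^*}$ is a nonempty closed subset of the recurrent class $R$, so $R=R_{\ell^*}\subset\mathcal T$. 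Your lifting step together with Lemma~\ref{lem:uhd}(i) then gives the contradiction.

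With this repair, your route handles recurrent classes meeting $\mathcal T$ explicitly. The paper covers that case only through its one-line reduction to aggregated starting states.
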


\begin{proof}

By Bather's UHD, all states in $\mathcal T$ are transient under every stationary
policy. Hence, starting from any $s\in\mathcal T$, the process reaches an aggregated state
$\bar s_{\ell,k}$ with probability $1$. Therefore, it suffices to prove properness when the process is started from an aggregated state.

Fix a stationary policy $\bar{\pi}$ on $\bar{\mathcal M}$ and suppose, for the sake of contradiction, that the Markov chain induced by $\bar{\pi}$ has a recurrent class $\bar R \neq \{\Delta\}$. Since stopping actions send the system to $\Delta$ with probability $1$, no state at level $0$ can belong to $\bar R$, as level $0$ states admit only the stopping action. Consequently, every state in $\bar R$ must have level $\ell \ge 1$, and the actions selected by $\bar{\pi}$ at such states must belong to the set of exit actions ${\mathcal E}_{\ell,k}$.

Let $\bar s_{\ell,k}^{\min}$ be a state of minimal level among the states in $\bar R$. By construction of the aggregated OSP, every action in ${\mathcal E}_{\ell,k}$ has positive probability of transitioning to a state at a strictly lower level. In particular, the action $\bar{\pi}(\bar s_{\ell,k}^{\min})$ leads to a lower level with positive probability. However, by the choice of $\bar s_{\ell,k}^{\min}$, no state of lower level belongs to $\bar R$. Hence, the process leaves $\bar R$ with positive probability, contradicting the assumption that $\bar R$ is a recurrent class.

Therefore, the induced Markov chain has no recurrent class other than $\{\Delta\}$. Equivalently, every stationary policy on $\bar{\mathcal M}$ is proper. \qedhere
\end{proof}

The stopping rewards used in Phase~3 are the gain estimates produced by Phase~2 and therefore vary during learning. By Lemma~\ref{lem:phase2_rvi_convergence}, their deviations from the limiting stopping rewards vanish almost surely. Together with the properness of all stationary policies established in Lemma~\ref{lem:osp_properness}, this allows us to apply Q-learning convergence results with vanishing reward perturbations.

\begin{lemma}[Phase~3 convergence of Q-learning]
\label{lem: phase3_conv}
Let $\bar{\mathcal M}$ be the limiting aggregated OSP
constructed in Phase~3 of Algorithm~\ref{alg:mf-multichain}.
Under the stepsize and asynchronous update conditions
of Assumption~\ref{cond-ssp-ql}, the total-reward
Q-learning iterates $\bar Q^{(t)}$ generated by the
sampling and update scheme in
Section~\ref{subsec:phase3} satisfy
\[
\bar Q^{(t)}\to\bar Q^*
\qquad\text{almost surely},
\]
where $\bar Q^*$ is the optimal action-value function
of $\bar{\mathcal M}$.
\end{lemma}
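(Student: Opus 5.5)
The plan is to cast Phase~3 as asynchronous total-reward Q-learning on a fixed SSP-type MDP, namely the limiting OSP $\bar{\mathcal M}$, driven by a perturbed reward sequence. We then invoke the known convergence theory for such iterations with vanishing reward errors (Appendix~\ref{append-ql-ssp}, following \cite{Tsi94,YuB13}).

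\textbf{Step 1: rewrite the update.} For every nonterminal pair $(\bar s,\bar a)$ we write the update as
\[
\bar Q^{(t+1)}(\bar s,\bar a)=\bar Q^{(t)}(\bar s,\bar a)+\bar\alpha_t(\bar s,\bar a)\bigl[(\bar T\bar Q^{(t)})(\bar s,\bar a)-\bar Q^{(t)}(\bar s,\bar a)+w_{t+1}(\bar s,\bar a)+e_t(\bar s,\bar a)\bigr].
\]
The three new terms are as follows.
\begin{itemize}
\item $\bar T$ is the Bellman operator of $\bar{\mathcal M}$ with the exact stopping rewards $r^*_{\ell,k}$.
\item $w_{t+1}$ is the martingale-difference noise. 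It arises only at non-stopping pairs, from sampling $s'\sim p(\cdot\mid s,a)$ and mapping it through $\phi$. Its conditional mean is zero because $\bar p$ is exactly the pushforward of $p$ under $\phi$ by \eqref{eq:base-aggregated-exit}--\eqref{eq:base-aggregated-transient}.
\item $e_t$ is the reward perturbation. It equals $r^{(t)}_{\ell,k}-r^*_{\ell,k}$ at stopping pairs and $0$ elsewhere.
\end{itemize}
Since all non-stopping rewards are zero, the conditional second moment of $w_{t+1}$ is bounded by $C(1+\|\bar Q^{(t)}\|_\infty^2)$. The reward-variance assumption is not even needed here, because stopping rewards are deterministic given the past.

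\textbf{Step 2: contraction.} By Lemma~\ref{lem:osp_properness}, every stationary policy of $\bar{\mathcal M}$ is proper. The standard SSP result \cite{BeT96} then gives a positive weight vector $\xi$ such that $\bar T$ is a contraction in the weighted sup-norm $\|\cdot\|_\xi$. Theorem~\ref{thm:ssp-opt} identifies its unique fixed point as $\bar Q^*$.

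\textbf{Step 3: apply asynchronous stochastic approximation.} By Lemma~\ref{lem:phase2_rvi_convergence}(i), $e_t\to0$ almost surely, and $e_t$ is measurable with respect to the information at time $t$. Under Assumption~\ref{cond-ssp-ql}, Tsitsiklis' theorem for weighted max-norm pseudo-contractions applies \cite[Thms.~1--3]{Tsi94}: every component is updated infinitely often, the stepsizes satisfy the Robbins--Monro conditions, and the noise and error terms are as in Step~1. Its extension allows an additive error term that vanishes almost surely; in its boundedness part, the term $e_t$ is absorbed because it is eventually bounded by any fixed constant. The theorem first yields almost-sure boundedness of $\{\bar Q^{(t)}\}$ and then $\bar Q^{(t)}\to\bar Q^*$ almost surely. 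Alternatively, the ODE approach of \cite{YuB13} can be used, treating $e_t$ as an asymptotically negligible perturbation.

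\textbf{Main obstacle.} The delicate point is the boundedness of the iterates in the undiscounted setting with random, time-varying rewards. Tsitsiklis' stability argument needs $\bar T$ to be a pseudo-contraction, and here it is one by Step~2 precisely because all policies are proper. One must also check that an adapted, almost-surely vanishing perturbation does not spoil this argument. I would handle it by a pathwise comparison. For almost every sample path and each $\varepsilon>0$, there is a time $t_\varepsilon$ after which $|e_t|\le\varepsilon$. From then on the iterates are sandwiched between those of Q-learning on OSPs with stopping rewards $r^*\pm\varepsilon$. These have fixed points within $\varepsilon\max_{\bar s}\mathbb E[\text{steps to }\Delta]$ of $\bar Q^*$, a bound that is finite by properness. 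Letting $\varepsilon\downarrow0$ gives the claimed convergence.
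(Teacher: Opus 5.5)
Your proposal is correct and follows the paper's route: properness of every stationary policy (Lemma~\ref{lem:osp_properness}) makes the aggregated Bellman operator a weighted sup-norm contraction. The vanishing stopping-reward errors $r^{(t)}_{\ell,k}-r^*_{\ell,k}$ are then handled by the robustness result for contraction-type stochastic approximation that the paper cites as \cite[Prop.~4.5]{BeT96}, used together with \cite[Prop.~5.5(a)]{BeT96}; this is exactly the \emph{extension} you invoke in Step~3.

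Your fallback sandwich argument is also sound. In fact the comparison OSPs have optimal action-value functions exactly $\bar Q^*\pm\varepsilon$, because each trajectory collects exactly one stopping reward. Two details need care. First, start the comparison processes at deterministic times $t_0$, working on the events $\{|e_t|\le\varepsilon,\ \bar\alpha_t\le 1 \text{ componentwise } \forall\, t\ge t_0\}$, rather than at the random, non-stopping time $t_\varepsilon$. Second, the bound $\bar\alpha_t\le 1$ is what makes the update monotone, which the sandwich needs.
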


\begin{proof}

First note that the rewards $\bar r^{(t)}$ used in the aggregated OSP are time-varying because the stopping rewards are given by the estimates $r_{\ell,k}^{(t)}$ produced by RVI Q-learning in Phase~2 of Algorithm \ref{alg:mf-multichain}. By Lemma~\ref{lem:phase2_rvi_convergence}, these estimates converge almost surely to the optimal gains $r_{\ell,k}^*$. Hence, we may write
\[
\bar r^{(t)}
=
\bar r+ \varepsilon^{(t)},
\]
where $\bar r$ is the limiting reward function obtained by replacing each $r_{\ell,k}^{(t)}$ with $r_{\ell,k}^*$ and $\varepsilon^{(t)} \to 0$ almost surely. Thus, the Q-learning recursion is a vanishing perturbation of standard Q-learning. By Lemma \ref{lem:osp_properness}, all stationary policies on $\bar{\mathcal{M}}$ are proper. By robustness results for stochastic approximation (see Proposition~4.5 of \cite{BeT96}), the convergence guarantees of Proposition 5.5 part (a) of \cite{BeT96} hold. Thus, $\bar Q^{(t)} \to \bar Q^*$ almost surely. \qedhere
\end{proof}

We next relate the optimal values of the aggregated OSP
to the optimal gains of the original MDP, beginning
with the aggregated optimality equations.

\begin{lemma}[Optimality equations of the aggregated OSP]
\label{lem: solve_agg}
Let $\bar Q^*$ denote the optimal action-value function of the limiting aggregated OSP $\bar{\mathcal{M}}$, and define its optimal value function by 
\[
v^*(\bar{s}) = \max_{\bar{a}\in \bar{\mathcal{A}}(\bar{s})}{\bar Q^*(\bar{s}, \bar{a})}, \qquad \forall \bar{s} \in \bar{\mathcal{S}}.
\]
Then $\bar g^*:=v^*$ satisfies
$\bar g^*(\Delta)=0$ and the following optimality equations. For each class state $\bar s_{\ell,k}$,

\begin{equation}
\label{eq:agg-eq-class}
\bar{g}^*(\bar s_{\ell,k})
=
\max\Bigg\{
r^*_{\ell,k},\;
\max_{\bar a \in{\mathcal E}_{\ell,k}}
\Big[
\sum_{\ell',k'} \bar p(\bar s_{\ell',k'}\mid \bar s_{\ell,k},\bar a)\,\bar g^*(\bar s_{\ell',k'})
+
\sum_{s\in\mathcal T} \bar p(s\mid \bar s_{\ell,k},\bar a)\,\bar g^*(s)
\Big]
\Bigg\}.
\end{equation}
When $\mathcal E_{\ell,k}=\varnothing$, the exit
term in~\eqref{eq:agg-eq-class} is omitted, so
$\bar g^*(\bar s_{\ell,k})=r_{\ell,k}^*$.
For retained transient states,
\begin{equation}
\label{eq:agg-eq-transient}
\bar g^*(s)
=
\max_{a\in \mathcal{A}(s)}
\Bigg\{
\sum_{\ell,k} \bar p(\bar s_{\ell,k}\mid s,a)\,\bar{g}^*(\bar s_{\ell,k})
+
\sum_{s'\in\mathcal T} \bar p(s'\mid s, a)\,\bar g^*(s')
\Bigg\},
\qquad s \in \mathcal{T}.
\end{equation}
\end{lemma}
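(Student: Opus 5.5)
The plan is to obtain these equations directly from the SSP optimality theory of Theorem~\ref{thm:ssp-opt}, specialized to the structure of $\bar{\mathcal M}$. By Lemma~\ref{lem:osp_properness}, every stationary policy of $\bar{\mathcal M}$ is proper. Hence Assumption~\ref{cond:ssp}(i) holds, and Assumption~\ref{cond:ssp}(ii) holds vacuously since there are no improper policies. So $\bar{\mathcal M}$ is a total-reward MDP of SSP type with effective state space $\bar{\mathcal S}\setminus\{\Delta\}$. Theorem~\ref{thm:ssp-opt} then gives that $V^*$ is real-valued and satisfies the Bellman equation
\[
V^*(\bar s)=\max_{\bar a\in\bar{\mathcal A}(\bar s)}\Big\{\bar r(\bar s,\bar a)+\sum_{\bar y\neq\Delta}\bar p(\bar y\mid\bar s,\bar a)V^*(\bar y)\Big\}.
\]
By the definition of $\bar Q^*$, this maximand is exactly $\bar Q^*(\bar s,\bar a)$, so $v^*=\max_{\bar a}\bar Q^*(\cdot,\bar a)=V^*$ on nonterminal states. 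Setting $\bar g^*(\Delta)=0$ is consistent with the conventions that $\Delta$ is absorbing and reward-free and that $\bar Q^{(t)}(\Delta,a_\Delta)=0$.

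Next I expand the Bellman equation action by action, using the definitions \eqref{eq:base-aggregated-exit}--\eqref{eq:base-aggregated-reward}. At a class state $\bar s_{\ell,k}$ there are two kinds of actions.
\begin{itemize}
\item The stopping action moves to $\Delta$ with probability one and yields reward $r^*_{\ell,k}$. Its $\bar Q^*$-value is therefore $r^*_{\ell,k}$.
\item An exit action $\bar a\in\mathcal E_{\ell,k}$ has zero reward. Its successors lie in $\bar{\mathcal S}\setminus\{\Delta\}$, because $\bar p$ from \eqref{eq:base-aggregated-exit} sums the original probabilities over the fibres of $\phi$ and puts no mass on $\Delta$. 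Splitting the successor sum into class states $\bar s_{\ell',k'}$ and retained transient states $s\in\mathcal T$ gives the bracketed term.
\end{itemize}
Taking the maximum over the stop action and the exit actions yields \eqref{eq:agg-eq-class}. When $\mathcal E_{\ell,k}=\varnothing$, only the stop action remains, and the equation reduces to $\bar g^*(\bar s_{\ell,k})=r^*_{\ell,k}$. For $s\in\mathcal T$, all actions have zero reward, and the same split of the successor sum gives \eqref{eq:agg-eq-transient}.

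I do not expect a real obstacle here. The one point needing care is that the transition kernels in \eqref{eq:base-aggregated-exit}--\eqref{eq:base-aggregated-transient} are genuine probability distributions on $\bar{\mathcal S}\setminus\{\Delta\}$. This holds because $\phi$ maps $\mathcal S$ onto $\bar{\mathcal S}\setminus\{\Delta\}$, so the probabilities sum to one and no mass leaks to $\Delta$ from non-stopping actions. The other point is to state explicitly that the SSP hypotheses come from Lemma~\ref{lem:osp_properness}. Uniqueness of the real-valued solution, which is also part of Theorem~\ref{thm:ssp-opt}, can be noted for later use in Lemma~\ref{lem:agg-upper-bound}.
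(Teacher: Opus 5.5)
Your proposal is correct and follows essentially the same route as the paper: write $v^*=\max_{\bar a}\bar Q^*$ through the one-step recursion, then split the actions into stop, exit, and retained-transient cases. Your explicit appeal to Lemma~\ref{lem:osp_properness} and Theorem~\ref{thm:ssp-opt} to justify $v^*=V^*$ is a step the paper's proof uses only implicitly. The same holds for the uniqueness you note, which Lemma~\ref{lem:agg-upper-bound} later relies on.
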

\begin{proof}

For all $\bar{s} \in \bar{\mathcal{S}}$, let
\begin{align}
v^*(\bar{s})
&= \max_{\bar{a}\in \bar{\mathcal{A}}(\bar{s})}{\bar Q^*(\bar{s}, \bar{a})} \nonumber\\
&= \max_{\bar{a}\in \bar{\mathcal{A}}(\bar{s})}
\sum_{\bar{s}'} \bar{p}(\bar{s}' \mid \bar{s}, \bar{a})
\big(\bar{r}(\bar{s}, \bar{a}) + v^*(\bar{s}')\big).
\label{eq:v_star}
\end{align}

We now show that setting $\bar{g}^*(\bar{s}) = v^*(\bar{s})$ solves the optimality equations \eqref{eq:agg-eq-class} and~\eqref{eq:agg-eq-transient} by specializing to the two types of non-terminal states in $\bar{\mathcal{S}}$: aggregated and transient states. Note that $v^*(\Delta) = 0$ by definition.

\emph{(i) Aggregated states $\bar s_{\ell,k}$.} In these states, there are two types of admissible actions: the stopping action $a_{\ell, k}^{stop}$ and exit actions $\bar{a} \in {\mathcal{E}}_{\ell,k}$. If $a_{\ell, k}^{stop}$ is selected, then $\bar p(\Delta \mid \bar s_{\ell,k},a_{\ell, k}^{stop})=1$ and
\(\bar r(\bar s_{\ell,k}, a_{\ell, k}^{stop})=r^*_{\ell,k}\),
thus,
\[
\sum_{\bar{s}'} \bar p(\bar{s}' \mid \bar s_{\ell,k},a_{\ell,k}^{stop})\big(\bar r(\bar s_{\ell,k},a_{\ell, k}^{stop})+v^*(\bar{s}')\big)
=
r^*_{\ell,k} + v^*(\Delta)
=
r^*_{\ell,k}.
\]

If instead $\bar{a} \in {\mathcal{E}}_{\ell,k}$ is selected, then $\bar r(\bar s_{\ell,k},\bar a)=0$
for all $\bar{s}$ and transitions can only lead to aggregated states $\bar s_{\ell',k'}$
or transient states $s\in\mathcal T$. Hence,
\[
\sum_{\bar{s}'} \bar p(\bar{s}' \mid \bar s_{\ell,k},\bar a)\big(\bar r(\bar s_{\ell,k},\bar a)+v^*(\bar{s}')\big)
=
\sum_{\ell',k'} \bar p(\bar s_{\ell',k'}\mid \bar s_{\ell,k},\bar a)v^*(\bar s_{\ell',k'})
+\sum_{s\in\mathcal T}\bar p(s\mid \bar s_{\ell,k},\bar a)v^*(s).
\]
Thus, maximizing over these two scenarios yields \eqref{eq:agg-eq-class}.

\emph{(ii) Transient states $s \in \mathcal{T}$.} Here, $\bar{\mathcal{A}}(s) = \mathcal{A}(s)$ and all one-step rewards are zero by definition. Hence, equation \eqref{eq:v_star} reduces to
\[
v^*(s)=\max_{a\in\mathcal A(s)} \sum_{\bar{s}\in\bar{\mathcal S}} \bar p(\bar{s} \mid s,a)v^*(\bar{s}),
\]
and splitting the sum over $\bar{s}$ into aggregated and transient states yields \eqref{eq:agg-eq-transient}. \qedhere
    
\end{proof}

We now relate the optimal OSP values to the original
optimal gains.
Recall the aggregation map $\phi$ from
Section~\ref{subsec:phase3}:
\[
\phi(s)=
\begin{cases}
\bar s_{\ell,k}, & s\in C_{\ell,k},\\
s, & s\in\mathcal T.
\end{cases}
\]
The following lemma shows that the optimal gain at
each original state equals the optimal OSP value
at its corresponding aggregated state.

\begin{lemma}[Equivalence of optimal gains and OSP values]
\label{lem:agg-upper-bound}
Let $g^*$ be the optimal gain vector of $\mathcal M$,
and let $\bar g^*$ be the optimal value function
of the limiting aggregated OSP $\bar{\mathcal M}$.
Then
\[
g^*(s)=\bar g^*(\phi(s)),
\qquad \forall s\in\mathcal S.
\]
\end{lemma}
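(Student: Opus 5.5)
We prove the two inequalities $g^*(s)\le \bar g^*(\phi(s))$ and $g^*(s)\ge \bar g^*(\phi(s))$ separately. Throughout we use two ingredients: the AOE characterization of $g^*$ in \eqref{eq:aoe1}, and the fact that $\bar g^*$ is the unique solution of the aggregated optimality equations \eqref{eq:agg-eq-class}--\eqref{eq:agg-eq-transient}. Uniqueness holds because every stationary policy of $\bar{\mathcal M}$ is proper (Lemma~\ref{lem:osp_properness}), so Theorem~\ref{thm:ssp-opt} applies.

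\emph{Lower bound $g^*(s)\ge\bar g^*(\phi(s))$.} Fix an optimal stationary policy $\bar\pi$ of $\bar{\mathcal M}$; it is proper, and by Theorem~\ref{thm:ssp-opt} we have $\bar g^*(\bar s)=\sum_{\ell,k}\bar{\mathbb P}_{\bar\pi}(\text{stop at }\bar s_{\ell,k}\mid\bar s_0=\bar s)\,r^*_{\ell,k}$. Lift $\bar\pi$ to a policy $\pi$ on $\mathcal M$ exactly as in Phase~4:
\begin{itemize}
\item in each stopping class, use a gain-optimal policy of $\mathcal M_{\ell,k}$, which exists and keeps the class closed;
\item in each exiting class, use a reaching policy to $s^*$ followed by $a^*$;
\item at transient states, use $\bar\pi$ directly.
\end{itemize}
Under $\pi$, each visit to an exiting class ends almost surely with the exit pair being executed. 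The next aggregated state is then distributed as $\bar p(\cdot\mid\bar s_{\ell,k},(s^*,a^*))$ by \eqref{eq:base-aggregated-exit}, and similarly at transient states by \eqref{eq:base-aggregated-transient}. Hence the embedded sequence of aggregated states visited by $\pi$ has the law of the $\bar\pi$-chain, and absorption in $C_{\ell,k}$ under $\pi$ has the same probability as stopping at $\bar s_{\ell,k}$ under $\bar\pi$. This is the content of Lemma~\ref{lem:equiv-class}; if that lemma is not yet available, we verify it directly by a strong Markov argument at the successive entrance times into classes and transient states. Because $\bar\pi$ is proper, $\pi$ is absorbed almost surely in some stopping class. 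The finite pre-absorption rewards do not affect the average, so $g_\pi(s)=\sum_{\ell,k}\mathbb P(\text{absorbed in }C_{\ell,k})\,r^*_{\ell,k}=\bar g^*(\phi(s))$. Therefore $g^*(s)\ge g_\pi(s)=\bar g^*(\phi(s))$.

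\emph{Upper bound $g^*(s)\le\bar g^*(\phi(s))$.} Define $w(s):=\bar g^*(\phi(s))$ on $\mathcal S$. First, $w$ is $p$-superharmonic, i.e.\ $w(s)\ge\sum_{s'}p(s'\mid s,a)w(s')$ for all feasible $(s,a)$. We check this in three cases.
\begin{itemize}
\item For $s\in\mathcal T$, it is \eqref{eq:agg-eq-transient} after un-aggregating the sums via \eqref{eq:base-aggregated-transient}.
\item For $s\in C_{\ell,k}$ and $a\in\mathcal E_{\ell,k}(s)$, it is the exit term of \eqref{eq:agg-eq-class} combined with \eqref{eq:base-aggregated-exit}.
\item For $a\in\mathcal A_{\ell,k}(s)$, all successors lie in $C_{\ell,k}$, where $w$ is constant, so equality holds.
\end{itemize}
Second, $w(s)\ge r^*_{\ell,k}$ for $s\in C_{\ell,k}$, by the stopping term of \eqref{eq:agg-eq-class}.

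Now take any stationary policy $\mu$. Superharmonicity gives $P^*_\mu w\le w$. On each recurrent class $R$ of $\mu$, Lemma~\ref{lem:uhd}(i) gives $R\subset C_{\ell,k}$ for some class, and $\mu$ restricted to $R$ uses only class-preserving actions. Hence $g_\mu\le r^*_{\ell,k}\le w$ on $R$. Since $g_\mu=P^*_\mu g_\mu$ and $P^*_\mu$ puts mass only on recurrent states, we obtain $g_\mu\le P^*_\mu w\le w$. Stationary policies attain $g^*$, so $g^*\le w$.

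The main obstacle is the lower bound. The delicate point is making rigorous that the lifted policy reproduces the absorption probabilities of $\bar\pi$: the reaching policy must keep the process inside the class, must almost surely reach $s^*$, and the random sojourn times must be shown not to distort the embedded aggregated chain. We would handle this with the strong Markov property at the successive class-entrance times. The upper bound is routine once superharmonicity of $w$ and Lemma~\ref{lem:uhd}(i) are in hand.
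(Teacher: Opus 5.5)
Your argument is correct, but it follows a different route from the paper.

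The paper works analytically with $g^*$. It first argues that $g^*$ is constant on each $C_{\ell,k}$. It then uses the first AOE equation \eqref{eq:aoe1} and a case split on a gain-optimal policy $\pi^*$ (staying in $C_{\ell,k}$ versus exiting it) to show that the lifted function $\tilde g^*$ satisfies the aggregated optimality equations \eqref{eq:agg-eq-class}--\eqref{eq:agg-eq-transient}. It concludes $\tilde g^*=\bar g^*$ from uniqueness of solutions to the SSP optimality equation, since all aggregated policies are proper.

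You instead sandwich $g^*$ between two bounds. The lower bound lifts an optimal proper aggregated policy and uses the stopping-distribution preservation of Lemma~\ref{lem:equiv-class}. The upper bound is a superharmonic-majorant (verification) argument, $g_\mu=P^*_\mu g_\mu\le P^*_\mu w\le w$. It rests only on Lemma~\ref{lem:uhd}(i) and the observation that a recurrent class inside $C_{\ell,k}$ can use only actions in $\mathcal A_{\ell,k}$.

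What your route buys:
\begin{itemize}
\item It needs neither the multichain AOE for $g^*$ nor uniqueness for the aggregated equations. You list both as ingredients, but neither is actually invoked.
\item It delivers constancy of $g^*$ on each class as a by-product rather than an input.
\item It avoids steps the paper passes over quickly, such as why the two cases combine into a single maximum over $r^*_{\ell,k}$ and the exit values.
\end{itemize}

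The cost is that your lower bound depends on the embedded-chain argument of Lemma~\ref{lem:equiv-class} and on the existence of class-preserving reaching policies. It therefore essentially contains the exact-limit version of the proof of Theorem~\ref{thm:policy_attains_gstar1}. The paper's proof is shorter given classical AOE theory, and it keeps the policy-lifting argument separate for that theorem.
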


\begin{proof}

First note that for every recurrent class $R$ induced by a policy in the original MDP, there exists a unique class $C_{\ell, k}$ such that $R \subseteq C_{\ell, k}$. Moreover, because every $C_{\ell, k}$ is communicating, from any state $s \in C_{\ell, k}$, there exists a policy that can reach any other state $s' \in C_{\ell, k}$. This implies that the optimal gain from $s$ must equal the optimal gain from $s'$. Otherwise, a higher gain from one state could be leveraged to improve the other, leading to a contradiction. Thus, we have that $g^*(s) = g^*(s') \; \forall \; s, s' \in C_{\ell, k}$ \cite{Schweitzer1984}.

We therefore define a lifted function $\tilde{g}^*$ on $\bar{\mathcal{S}}$ such that 
\begin{align*}
& \tilde{g}^*(\bar{s}_{\ell, k}) \coloneq g^*(s), \qquad s \in C_{\ell, k}, \\
& \tilde{g}^*(s) \coloneqq g^*(s), \qquad s \in \mathcal{T}.
\end{align*}

Let $\pi^*$ be an optimal policy of $\mathcal{M}$. Fix a class $C_{\ell, k}$ and consider two cases based on the behavior of $\pi^*$ within it.

\medskip
\noindent
\textbf{Case 1:} $\pi^*$ never exits $C_{\ell,k}$.  

Then $\pi^*(s) \in \mathcal{A}_{\ell,k}(s)$ for all $s \in C_{\ell,k}$.
By Lemma \ref{lem:phase2_rvi_convergence}, $r^*_{\ell,k}$ is the maximal gain
attainable under such policies. Therefore,
\[
g^*(s) = r^*_{\ell,k},
\qquad \forall s \in C_{\ell,k}.
\]

\medskip
\noindent
\textbf{Case 2:} $\pi^*$ exits $C_{\ell,k}$. 

Then there exists $s_0 \in C_{\ell,k}$ such that
$\pi^*(s_0) \in \mathcal{E}_{\ell,k}(s_0)$.
Since $\pi^*$ is optimal, its gain vector satisfies the AOE \eqref{eq:aoe1} - \eqref{eq:aoe2}. Hence, we must have
\[
g^*(s_0)
= \max_{a \in \mathcal{E}_{\ell, k}(s_0)}
\sum_{s \in \mathcal S} p(s \mid s_0, a)\, g^*(s).
\]

Given that optimal gains are constant within $C_{\ell, k}$, this maximum extends uniformly across all states in the class. Thus, for any $s \in C_{\ell, k}$,

\[
g^*(s)
= g^*(s_0) =
\max_{(s,a) \in {\mathcal{E}}_{\ell,k}}
\sum_{s' \in \mathcal S} p(s' \mid s, a)\, g^*(s').
\]

Combining cases 1 and 2, we have that for all $s \in C_{\ell, k}$,

\begin{align*}
g^*(s)
&=
\max\Bigg\{r^*_{\ell, k}, \max_{(s,a) \in {\mathcal{E}}_{\ell,k}}
\sum_{s' \in \mathcal S} p(s' \mid s, a)\, g^*(s') \Bigg\}.
\end{align*}

Now, splitting the inner sum into classes and transient states we have

\begin{align*}
\sum_{s' \in S} p(s' \mid s, a)\, g^*(s') &= \sum_{{\ell', k'}}{\sum_{s' \in C_{\ell', k'}}} p(s' \mid s, a)\, g^*(s') + \sum_{s' \in \mathcal{T}}{p(s' \mid s, a)\, g^*(s')} \\
&= \sum_{{\ell', k'}} \bar{p}(\bar{s}_{\ell', k'} \mid \bar{s}_{\ell, k}, \bar{a})\, \tilde{g}^*(\bar{s}_{\ell', k'}) + \sum_{s' \in \mathcal{T}}{\bar{p}(s' \mid \bar{s}_{\ell, k}, \bar{a})\, \tilde{g}^*(s')},
\end{align*}

where the last equality follows from the definition of $\bar{p}$ and $\tilde{g}^*$.

Thus, for every aggregated state $\bar{s}_{\ell, k}$,

\begin{equation}
\label{eq: tildeg-class}
\tilde{g}^*(\bar{s}_{\ell, k})
=
\max\Bigg\{r^*_{\ell, k}, \max_{\bar{a} \in {\mathcal{E}}_{\ell,k}}
\Big[
\sum_{\ell',k'} \bar p(\bar s_{\ell',k'}\mid \bar s_{\ell,k},\bar a)\,\tilde{g}^*(\bar s_{\ell',k'})
+
\sum_{s\in\mathcal T} \bar p(s\mid \bar s_{\ell,k},\bar a)\,\tilde{g}^*(s)
\Big] \Bigg\}.
\end{equation}

This is precisely equation \eqref{eq:agg-eq-class}.

For any transient state $s \in \mathcal{T}$, the AOE \eqref{eq:aoe1} - \eqref{eq:aoe2} in the original MDP gives
\[
g^*(s)=
\max_{a \in \mathcal{A}(s)}
\sum_{s' \in S} p(s' \mid s, a)\, g^*(s').
\]

Splitting the sum and using the definitions of $\bar{p}$ and $\tilde{g}^*$ we have

\begin{equation}
\label{eq: tildeg-transient}
\tilde{g}^*(s) = \max_{a \in \mathcal{A}(s)} 
\Big[
\sum_{{\ell, k}} \bar{p}(\bar{s}_{\ell, k} \mid s, a)\, \tilde{g}^*(\bar{s}_{\ell, k}) + \sum_{s' \in \mathcal{T}}{\bar{p}(s' \mid s, a)\, \tilde{g}^*(s')}
\Big],
\end{equation}

which is equation \eqref{eq:agg-eq-transient}.

Equations \eqref{eq: tildeg-class} and \eqref{eq: tildeg-transient} show that $\tilde{g}^*$ satisfies the optimality equations \eqref{eq:agg-eq-class}--\eqref{eq:agg-eq-transient} of $\bar{\mathcal{M}}$. By Lemma \ref{lem: solve_agg}, the optimal value function $\bar{g}^*$ of the aggregated OSP $\bar{\mathcal{M}}$ also solves this system. Therefore, 
\[
\tilde{g}^* = \bar{g}^*.
\]

By the definition of $\tilde{g}^*$, this implies

\[
g^*(s) = \bar g^*(\bar s_{\ell,k}), \quad \forall s \in C_{\ell,k},
\]
\[
g^*(s) = \bar g^*(s), \quad \forall s \in \mathcal T. \qedhere
\]

\end{proof}

It remains to relate an aggregated policy to its
Phase~4 lifted policy.
The following lemma shows that the probability of
eventually remaining in a class under the lifted
policy equals the probability of stopping at the
corresponding aggregated state.
When the Phase~2 sub-MDP policies are optimal, this
correspondence equates the gain of the lifted policy
with the total-reward value of the aggregated policy.

\begin{lemma}[Preservation of stopping-class distributions]
\label{lem:equiv-class}
Fix an iteration $t$, and let $\pi^{(t)}$ be the
Phase~4 lifted policy corresponding to $\bar\pi^{(t)}$.
Suppose every reaching policy used in this construction
is class-preserving and reaches its selected exit state
almost surely from every state in the corresponding class.
Then, for every initial state $s\in\mathcal S$
and every class $C_{\ell,k}$,
\[
\mathbb P_{\pi^{(t)}}\!
\left(
\text{absorbed in }C_{\ell,k}
\mid s_0=s
\right)
=
\mathbb P_{\bar\pi^{(t)}}\!
\left(
\text{stops at }\bar s_{\ell,k}
\mid \bar s_0=\phi(s)
\right).
\]
\end{lemma}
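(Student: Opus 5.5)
We couple the lifted chain with the aggregated chain by observing the original process only at selected times and showing that the projected process is a realization of the aggregated chain. Let $\{S_n\}$ be the state process under $\pi^{(t)}$ from $s_0=s$. The plan is to define an increasing sequence of random times $\sigma_0=0<\sigma_1<\cdots$ and show that $\bar S_m:=\phi(S_{\sigma_m})$ evolves as a Markov chain under $\bar\pi^{(t)}$ started at $\phi(s)$, with stopping at $\bar s_{\ell,k}$ corresponding exactly to absorption in $C_{\ell,k}$.

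The times are defined by cases on where the process sits. If $S_{\sigma_m}\in\mathcal T$, set $\sigma_{m+1}=\sigma_m+1$. In that case $\pi^{(t)}(S_{\sigma_m})=\bar\pi^{(t)}(S_{\sigma_m})$, so by \eqref{eq:base-aggregated-transient} the conditional law of $\phi(S_{\sigma_{m+1}})$ is $\bar p(\cdot\mid S_{\sigma_m},\bar\pi^{(t)}(S_{\sigma_m}))$. If $S_{\sigma_m}\in C_{\ell,k}$ and $\bar\pi^{(t)}(\bar s_{\ell,k})=(s^*,a^*)$, we use the reaching property. All actions used at $C_{\ell,k}\setminus\{s^*\}$ are class-preserving, and $s^*$ is hit a.s. Let $\sigma_{m+1}$ be one step after that hitting time. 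Then $S_{\sigma_{m+1}-1}=s^*$ and action $a^*$ is taken, so by the strong Markov property and \eqref{eq:base-aggregated-exit} the law of $\phi(S_{\sigma_{m+1}})$ is $\bar p(\cdot\mid\bar s_{\ell,k},(s^*,a^*))$. This law does not depend on where in $C_{\ell,k}$ the process entered, which is exactly what makes the aggregated chain Markov. If instead $\bar\pi^{(t)}(\bar s_{\ell,k})$ is the stop action, then $\pi^{(t)}=\pi^{(t)}_{\ell,k}$ uses only actions in $\mathcal A_{\ell,k}$. Hence $C_{\ell,k}$ is closed and the process is absorbed there forever. We set $\bar S_{m+1}=\Delta$, which matches $\bar p(\Delta\mid\bar s_{\ell,k},a^{\rm stop}_{\ell,k})=1$.

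By the strong Markov property applied at each $\sigma_m$, $\{\bar S_m\}$ is a Markov chain with transition kernel $\bar p(\cdot\mid\cdot,\bar\pi^{(t)}(\cdot))$ started at $\phi(s)$. Lemma~\ref{lem:osp_properness} then gives that it reaches $\Delta$ a.s. in finitely many steps. Consequently the event ``stops at $\bar s_{\ell,k}$'' coincides, up to null sets, with the event that the original process eventually enters some $C_{\ell,k}$ whose aggregated action is stop, and then stays there forever.

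It remains to check the converse. The original process can be absorbed in $C_{\ell,k}$ only through this route: in an exiting class the process leaves a.s., and transient states are visited finitely often. Equating the probabilities of the two events then proves the identity.

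The main obstacle is the exiting-class step. We need the strong Markov property to show that the exit-state distribution depends only on $(s^*,a^*)$, and we must confirm that $\sigma_{m+1}<\infty$ a.s. The latter is where the hypothesis that the reaching policy reaches $s^*$ from every state of the class is used.
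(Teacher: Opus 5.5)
Your proposal is correct and takes essentially the same route as the paper. Like the paper, you observe the lifted chain only when the aggregated state can change (one step at transient states, one step after hitting $s^*$ in exiting classes, and a halt in stopping classes), show that these one-step laws equal $\bar p$ under $\bar\pi^{(t)}$, and identify absorption in $C_{\ell,k}$ with stopping at $\bar s_{\ell,k}$; your explicit use of Lemma~\ref{lem:osp_properness} to get a.s.\ stopping of the projected chain makes the converse direction a bit more transparent than the paper's ``by construction'' remark.
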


\begin{proof}
Let $(s_q)_{q\ge0}$ and $(\bar s_q)_{q\ge0}$ be the Markov chains under the fixed policies $\pi^{(t)}$ and $\bar\pi^{(t)}$ in the original MDP and the aggregated OSP, respectively, with $\bar s_0=\phi(s_0)$.
We construct a sub-process of $(s_q)_{q\ge0}$ by observing it only at times when the aggregated state $\phi(s_q)$ can change. Formally, define $\tau_0 = 0$ and $\tau_{q+1}$ for $q \ge 0$ recursively as follows:
\begin{itemize}
    \item If $s_{\tau_q} \in \mathcal{T}$, set $\tau_{q+1} = \tau_q + 1$.
    \item If $s_{\tau_q} \in C_{\ell,k}$ and $\bar{\pi}^{(t)}(\bar{s}_{\ell,k}) = (s^*, a^*)$,
    let $\sigma_q > \tau_q$ be the first time after $\tau_q$ at which $\pi^{\mathrm{reach}}_{\ell,k}$
    hits $s^*$ starting from $s_{\tau_q}$, and set $\tau_{q+1} = \sigma_q + 1$, where action $a^*$ is taken at time
    $\sigma_q$.
    \item If $s_{\tau_q} \in C_{\ell,k}$ and $\bar{\pi}^{(t)}(\bar{s}_{\ell,k}) =
    a^{\mathrm{stop}}_{\ell,k}$, the sub-process stops and the chain is absorbed
    in $C_{\ell,k}$.
\end{itemize}
By construction, the sub-process $(\phi(s_{\tau_q}))_{q\ge0}$ evolves only when the aggregated state can change. A direct calculation (using the definition of $\bar p$ in Section~\ref{subsec:phase3}) shows that for every aggregated state $\bar s$ and action $\bar a$, the one-step transition probabilities of $\phi(s_{\tau_{q+1}})$ under $\pi^{(t)}$ coincide with those of $\bar s_{q+1}$ under $\bar\pi^{(t)}$, conditional on $\bar s_q = \phi(s_{\tau_q})$.

Since $\bar s_0 = \phi(s_0)$, the one-step transition distributions coincide, and the evolution depends only on the current aggregated state, it follows that $(\phi(s_{\tau_q}))_{q\ge0}$ and $(\bar s_q)_{q\ge0}$ have the same law. In particular, the probability that the sub-process stops in $C_{\ell,k}$ equals the probability that the aggregated process stops at $\bar s_{\ell,k}$.

Finally, by construction, absorption of the original chain $(s_q)_{q\ge0}$ in $C_{\ell,k}$ is equivalent to the sub-process stopping in $C_{\ell,k}$. This yields
\[
\mathbb P_{\pi^{(t)}}\!
\left(
\text{absorbed in }C_{\ell,k}
\mid s_0=s
\right)
=
\mathbb P_{\bar\pi^{(t)}}\!
\left(
\text{stops at }\bar s_{\ell,k}
\mid \bar s_0=\phi(s)
\right),
\]
as claimed.
\end{proof}

\subsection{Proofs of the main convergence results for the base algorithm}
\label{subsec:main-thms-base}

We now combine the supporting lemmas to prove
Theorems~\ref{thm:gain_identification1}
and~\ref{thm:policy_attains_gstar1}. Theorem~\ref{thm:gain_identification1} asserts that the
Phase~3 Q-values converge to the optimal action-value
function of the limiting aggregated OSP and that the
resulting gain estimates converge to $g^*$.

\begin{proof}[Proof of Theorem~\ref{thm:gain_identification1}]
By Lemma~\ref{lem:phase2_rvi_convergence}, for every class $C_{\ell,k}$, Phase~2 RVI Q-learning
converges almost surely and produces the optimal gain $r^*_{\ell,k}$ when actions are restricted to $\mathcal{A}_{\ell,k}$.

Consider the limiting aggregated OSP $\bar{\mathcal M}$ constructed in Phase~3.
By Lemma~\ref{lem: phase3_conv}, Q-learning on $\bar{\mathcal M}$
converges almost surely to the optimal action-value function $\bar Q^*$. Define
\[
\bar{g}^*(\bar s)=\max_{\bar a\in\bar{\mathcal A}(\bar s)} \bar Q^*(\bar s,\bar a), \qquad \bar s\in\bar{\mathcal S}.
\]
By Lemma~\ref{lem: solve_agg}, the function
$\bar g^*$ satisfies the optimality equations \eqref{eq:agg-eq-class}--\eqref{eq:agg-eq-transient} of $\bar{\mathcal{M}}$. Now define $g:\mathcal S\to\mathbb R$ by lifting $\bar g^*$ back to the original state space: $g(s)=\bar g^*(s)$ for $s\in\mathcal T$ and $g(s)=\bar g^*(\bar s_{\ell,k})$ for $s\in C_{\ell,k}$. Then by Lemma~\ref{lem:agg-upper-bound}, we have $g = g^*$. Consequently, by the definition of $g^{(t)}$ and
the convergence $\bar Q^{(t)}\to\bar Q^*$,
we obtain $g^{(t)}\to g^*$ almost surely. \qedhere
\end{proof}

We next prove Theorem~\ref{thm:policy_attains_gstar1},
which establishes that, almost surely, every policy constructed in Phase~4 is gain-optimal after some finite time.

\begin{proof}[Proof of Theorem~\ref{thm:policy_attains_gstar1}]
By Bather's UHD, each $C_{\ell, k}$ is communicating under the restricted action set $\mathcal{A}_{\ell, k}$. Hence, each restricted MDP $\mathcal{M}_{\ell, k}$ is communicating. By Lemma \ref{lem:phase2_rvi_convergence}, for each $(\ell, k)$, there exists a finite time $T_{\ell, k}$ such that for all $t \ge T_{\ell, k}$ the policy $\pi_{\ell, k}^{(t)}$ is optimal for $\mathcal{M}_{\ell, k}$. In particular,
\[
g_{\pi_{\ell, k}^{(t)}}(s) = r^*_{\ell, k}, \qquad \forall s \in C_{\ell, k}, \;t \ge T_{\ell, k}.
\]

By Lemma \ref{lem: phase3_conv}, total-reward Q-learning on the aggregated OSP $\bar{\mathcal{M}}$ converges almost surely. As Q-learning converges, the estimates $\bar{Q}^{(t)}$ approach the optimal action-values $\bar{Q}^*$. Because the state and action spaces are finite, the difference between optimal and suboptimal action values at each state is bounded away from zero. Once the estimation error becomes smaller than this separation, the action that maximizes $\bar{Q}^{(t)}$ must coincide with an action that maximizes $\bar{Q}^*$. Thus, after some finite time $\bar{T}$, we have that for all $t \ge \bar{T}$, the policy $\bar{\pi}^{(t)}$ is optimal for $\bar{\mathcal{M}}$. Hence, 
\[
\bar{g}_{\bar{\pi}^{(t)}}(\bar{s}) = \bar{g}^*(\bar{s}), \qquad \forall \bar{s} \in \bar{\mathcal{S}}, \;t \ge \bar{T}.
\]

Thus, by Lemma \ref{lem:agg-upper-bound}, the optimal gain vector of the original MDP $\mathcal{M}$ satisfies

\begin{equation}
\label{eq: thm2-g*}
g^*(s)=
\begin{cases}
\bar{g}_{\bar{\pi}^{(t)}}(s), & s\in\mathcal T,\\[2pt]
\bar{g}_{\bar{\pi}^{(t)}}(\bar s_{\ell,k}), & s\in C_{\ell,k},
\end{cases}
\qquad \forall t \ge \bar{T}.
\end{equation}

Define 
\[
T \coloneq \max{\big( \bar{T}, \max_{\ell, k}{T_{\ell, k}} \big)}.
\]
Since this is the maximum of finitely many almost surely finite random variables, $T < \infty$ almost surely.

Fix any $t \ge T$. We show that the policy $\pi^{(t)}$ constructed by Algorithm~\ref{alg:mf-multichain} attains $g^*$ in \eqref{eq: thm2-g*}.

In a finite MDP, the gain of a stationary policy is determined by the distribution of possible recurrent classes eventually entered under that policy; transient states contribute only finitely many rewards almost surely and therefore do not affect the long run gain.

All states in $\mathcal{T}$ are transient under every stationary 
policy by Bather's UHD. Since $\bar{\pi}^{(t)}$ is proper on $\bar{\mathcal{M}}$ 
by Lemma~\ref{lem:osp_properness}, all policies reach a state $\bar{s}_{\ell, k}$ where the system stops with probability $1$. Since the absorption and stopping 
distributions coincide by Lemma~\ref{lem:equiv-class}, the original chain under 
$\pi^{(t)}$ is also absorbed with probability $1$. Therefore, all non-stopping 
class states are transient under $\pi^{(t)}$. At stopping classes $C_{\ell,k}$, 
$\pi^{(t)}$ coincides with $\pi^{(t)}_{\ell,k}$, which achieves gain $r^*_{\ell,k}$ 
for all $s \in C_{\ell,k}$ by Lemma~\ref{lem:phase2_rvi_convergence}. Thus, for any $s \in \mathcal{S}$
\begin{equation}
\label{eq:g-sum}
    g_{\pi^{(t)}}(s) = \sum_{(\ell, k)}{\mathbb{P}_{\pi^{(t)}}\!\left(\text{absorbed in } C_{\ell,k} \mid s_0 = s\right)}r^*_{\ell, k},
\end{equation}
where ${\mathbb{P}_{\pi^{(t)}}\!\left(\text{absorbed in } C_{\ell,k} \mid s_0 = s\right)}$ is the probability of being absorbed into $C_{\ell, k}$ when starting from $s_0$ under $\pi^{(t)}$.

In the aggregated OSP, all rewards are zero except for stopping actions in states $\bar{s}_{\ell, k}$ which receive reward $r^*_{\ell, k}$.  Thus, for any $\bar s \in \bar{\mathcal{S}}$
\begin{equation}
\label{eq:bar-g-sum}
    \bar{g}_{\bar{\pi}^{(t)}}(\bar s) = \sum_{(\ell, k)}{\mathbb{P}_{\bar\pi^{(t)}}\!\left(\text{stop in } \bar{s}_{\ell,k} \mid \bar{s}_0 = \bar{s}\right)}r^*_{\ell, k},
\end{equation}
where ${\mathbb{P}_{\bar\pi^{(t)}}\!\left(\text{stop in } \bar{s}_{\ell,k}\mid \bar{s}_0 = \bar{s}\right)}$ is the probability of stopping in $\bar{s}_{\ell, k}$ when starting from $\bar{s}_0$ under $\bar{\pi}^{(t)}$.

By Lemma~\ref{lem:equiv-class}, for any $s_0 \in \mathcal{S}$ and all $(\ell,k)$,
\[
    \mathbb{P}_{\pi^{(t)}}\!\left(\text{absorbed in } C_{\ell,k} \mid s_0\right)
    = \mathbb{P}_{\bar{\pi}^{(t)}}\!\left(\text{stop in } \bar{s}_{\ell,k} 
    \mid \bar{s}_0 = \phi(s_0)\right).
\]

Substituting into equations \eqref{eq:g-sum} -- \eqref{eq:bar-g-sum}, we have that for all $s \in \mathcal{S}$

\begin{equation*}
g_{\pi^{(t)}}(s)=
\begin{cases}
\bar{g}_{\bar{\pi}^{(t)}}(s), & s\in\mathcal T,\\[2pt]
\bar{g}_{\bar{\pi}^{(t)}}(\bar s_{\ell,k}), & s\in C_{\ell,k},
\end{cases}
\qquad \forall t \ge T.
\end{equation*}
Hence, $\pi^{(t)}$ attains the optimal gain vector $g^*$ for all $t \ge T$. \qedhere

\end{proof}

The theorem does not require the policy sequence
$\{\pi^{(t)}\}$ to stabilize. It asserts only that, after an almost
surely finite time, every policy produced by the algorithm is
gain-optimal.

\section{Convergence proofs for Algorithms~\ref{alg:2} and~\ref{alg:3}} 
\label{sec-conv-proofs}

In this section, we prove the convergence theorems for Algorithms~\ref{alg:2} and~\ref{alg:3}. The proof of Theorem~\ref{thm-1} for Algorithm~\ref{alg:2} is given in Section~\ref{appsec-alg2-prf-conv}. The proofs of Lemma~\ref{lem-stab-sets} and Theorem~\ref{thm-b-conv} for Algorithm~\ref{alg:3} are given in Section~\ref{appsec-alg3-prf-conv}.

To begin with, we restate the information structure condition \ref{cond:C2} of Algorithm~\ref{alg:2} in a more formal manner and introduce several conventions to simplify the presentation of the main convergence proof in the next subsection.

\subsection{Formal statement of assumptions and conventions} \label{appsec:alg2-1c}

We first fix a few conventions that impose no restriction on the algorithm and are adopted without loss of generality. Although the algorithm starts at time $\bar t$, we extend the definitions of $Q_t$ and $Y_t$ to all $t \geq 0$ by setting $Q_t$ to a fixed value and $Y_t = \varnothing$ for $t < \bar t$. We define $\alpha_t(s, a)$ for all state-action pairs and all $t \geq 0$, with $\alpha_t(s,a) = 0$ whenever $(s, a) \not\in Y_t$. Under this convention, $Y_t = \{(s,a) : \alpha_t(s,a) \not=0\}$, so specifying $\alpha_t$ uniquely determines $Y_t$. Finally, we define the transition data $(R^{sa}_{t}, S^{sa}_{t})$ for all state-action pairs $(s,a)$ and all $t \geq 0$, even though some of them are not used in the algorithm.

Assumption \ref{cond-meas-caus} below summarizes a set of non-restrictive regularity conditions. These conditions set up a basic measurability structure for treating the randomized algorithm probabilistically and reaffirm its non-anticipatory nature. 

Throughout, all spaces are assumed to be standard Borel spaces (e.g., $[0,1]$) with their Borel $\sigma$-algebras, and all measurability is understood in the Borel sense. While the conditions could be formulated more generally, we restrict attention to Borel spaces and Borel measurability for clarity and to avoid unnecessary technicalities.

\begin{assumption}[Measurability, non-anticipation, and correct conditional distributions] \label{cond-meas-caus} \hfill 
\begin{itemize}[leftmargin=0.4cm,labelwidth=!] 
\item {\rm \bfseries Initialization:} The starting time $\bar t$ of Algorithm~\ref{alg:2} is an a.s.\ finite stopping time relative to the histories of Algorithm~\ref{alg:mf-multichain}. The parameters $(\epsilon, \ud, \bd)$ and the initial $Q_{\bar t}$ are measurable functions of the history of Algorithm~\ref{alg:mf-multichain} up to time $\bar t$. 
\item {\rm \bfseries Stochastic evolution:} All other random elements in Algorithm~\ref{alg:2}, namely $\alpha_t$ and $(R^{sa}_{t}, S^{sa}_{t})_{(s,a) \in \A}$, admit measurable and non-anticipatory realizations driven by exogenous noises. Specifically, there exist i.i.d.\ exogenous noise sequences $\{w_t\}_{t \geq 0}$ and $\{w'_t\}_{t \geq 0}$, taking values in standard Borel spaces, such that:
\begin{itemize}[leftmargin=0.75cm,labelwidth=!] 
\item[\rm (i)] For each $t \geq 0$, 
$\alpha_t = \Psi_t\big(w_t, (g_n, v_n, Q_n)_{n \leq t}, (\alpha_n)_{n < t}, (R^{sa}_{n}, S^{sa}_{n})_{(s, a) \in \A, n \leq t} \big)$,
for some Borel-measurable map $\Psi_t$. 
\item[\rm (ii)] For each $t \geq 0$, 
$(R^{sa}_{t}, S^{sa}_{t})_{(s,a) \in \A} = \Psi'_t\big(w'_t, (g_n, v_n, Q_n, \alpha_n)_{n  < t}, (R^{sa}_{n}, S^{sa}_{n})_{(s, a) \in \A, n < t} \big)$,
for some Borel-measurable map $\Psi'_t$. 
\end{itemize}
Moreover, the following conditional distribution requirement holds:
\begin{itemize}[leftmargin=0.75cm,labelwidth=!]
\item[\rm (iii)] For each $t \geq 0$ and each $(s, a) \in \A$, for any given realization of the past outcomes, the distribution of $(R^{sa}_{t}, S^{sa}_{t})$ induced by the fresh noise $w'_t$ through the map $\Psi'_t$ coincides with the reward and transition distribution of the MDP at state $s$ under action $a$.
\end{itemize}
\end{itemize}
\end{assumption}

%\smallskip
\begin{remark} \rm
In Assumption~\ref{cond-meas-caus}(ii,iii), we do not require the transition data $(R^{sa}_{t}, S^{sa}_{t})_{(s,a) \in \A}$ to be mutually independent across different state-action pairs $(s,a)$. These conditions accommodate situations in which one exploits the MDP structure to improve the efficiency of Q-learning by using a shared random outcome to generate transition data for multiple state-action pairs involved in the Q-learning updates.
\end{remark}

We also restate condition \ref{cond:C3} of Algorithm~\ref{alg:2} under the conventions introduced above:

\begin{assumption}[Adequate transition data and stepsize conditions] \label{cond-data-stepsize} 
The following hold a.s.:
\begin{itemize}[leftmargin=0.55cm,labelwidth=!] 
\item[\rm (i)] Any state-action pair that occurs in $\tG_t$ infinitely often also occurs in $Y_t$ infinitely often.
\item[\rm (ii)] For any $(s, a)$ that appears infinitely often in $Y_t$, the stepsizes satisfy 
\[\sum_{t} \alpha_t(s,a) = \infty, \qquad \sum_{t} \alpha^2_t(s,a) < \infty.\] 
\end{itemize}
\end{assumption}

\subsection{Algorithm~\ref{alg:2}: proof of Theorem~\ref{thm-1}} \label{appsec-alg2-prf-conv}

Lemma~\ref{lem-1} establishes finite-time stabilization of the sets constructed in Algorithm~\ref{alg:2}. Thus, the limiting sets $\tS^i$ ($i\in\{\mathrm{r},\mathrm{e},\mathrm{b}\}$), $\tA$, and $\tG$ appearing in Theorem~\ref{thm-1}, as well as the associated partition $\C=\tCr\cup\tCe\cup\tCb$, are all well-defined.

Theorem~\ref{thm-1} concerns the convergence of the Q-learning iterates in Algorithm~\ref{alg:2} and properties of their limits. We prove it in three steps:
\begin{itemize}[leftmargin=0.75cm,labelwidth=!]
\item[(i)] First, we extend existing stability and convergence results for Q-learning in SSP-type total-reward MDPs (\cite{Tsi94,YuB13} and \cite[Sec.\ 4.3]{BeT96}).
\item[(ii)] Using this extension, we establish the convergence of a Q-learning algorithm corresponding to a limiting form of Algorithm~\ref{alg:2} that solves a given SSP-type MDP. 
\item[(iii)] We then tackle Algorithm~\ref{alg:2} in its original form, establish the a.s.\ convergence of its Q-learning iterates to solutions of random SSP-type MDPs induced by the outputs of Algorithm~\ref{alg:mf-multichain}.  
\end{itemize}

\subsubsection{Step 1: extending Q-learning results for SSP problems} \label{sec-ssp-analysis-extension}
We now begin the proof by presenting an extended convergence theorem for Q-learning in SSP problems, building on prior work. Let $\hat \M := \big\{\hat \S \cup \{\Delta\}, \hat \A, \hat p, \hat r \big\}$ by an SSP-type total-reward MDP as defined in Section~\ref{sec-background-ssp}. We consider Q-learning iterates for solving $\hat \M$, generated from a given initial condition $\hat Q_0$ and for all $(s,a) \in \hat \A$, by 
\begin{align} 
  \hat Q_{t+1}(s, a)   = \hat Q_t(s,a) + \hat \alpha_t(s,a) \Big( & \hat r(s,a) + \omega_{t+1}(s,a) + \delta_{t+1}(s,a) \notag \\
  &  + \ind\{\hat{S}^{sa}_{t+1} \not= \Delta \} \cdot \max_{a' \in \hat \A(\hat{S}^{sa}_{t+1})} \hat Q_{t}(\hat{S}^{sa}_{t+1}, a') - \hat Q_t(s,a) \Big). \label{eq-gen-ql-ssp}
\end{align}
Here, the term $\hat r(s,a) + \omega_{t+1}(s,a) + \delta_{t+1}(s,a)$ represents a random reward deviating from the expected reward $\hat r(s,a)$, and the stepsizes $\hat \alpha_t(s,a)$ at iteration $t$ implicitly determine which components of $\hat Q_t$ are updated: if $\hat \alpha_t(s, a) = 0$, then $\hat Q_{t+1}(s, a)   = \hat Q_t(s,a)$. 

Moreover, there is an associated increasing sequence of $\sigma$-algebras $\F_t$, together with which, the random variables involved in \eqref{eq-gen-ql-ssp} satisfy the properties listed below:
\begin{itemize}
\item[(A1)] $\F_t \supset \sigma \big(\hat Q_0, \hat \alpha_k(s,a), \omega_k(s,a), \delta_k(s,a), \hat{S}^{sa}_k; \, k \leq t, \, (s, a) \in \hat \A \big)$.
\end{itemize}
For each $(s,a) \in \hat \A$:
\begin{itemize}
\item[(A2)] $\sum_t \hat \alpha_t(s,a) = \infty$ and $\sum_t \hat \alpha^2_t(s,a) < \infty$ a.s.
\item[(A3)] For all $t \geq 0$, $\E [ \, \omega_{t+1}(s,a) \mid \F_t] =  0$ and $ \E \big[ \omega^2_{t+1}(s,a) \mid \F_t\big] < c$ a.s., where $c$ is a deterministic constant.
\item[(A4)] As $t \to \infty$, $\delta_{t+1}(s,a) \to 0$ a.s.
\item[(A5)] Conditional on $\F_t$, $\hat{S}^{sa}_{t+1}$ follows the transition probability distribution $\hat p(\cdot \mid s, a)$ of $\hat \M$.
\end{itemize}

Iteration \eqref{eq-gen-ql-ssp} differs from the standard Q-learning iteration for $\hat \M$ only in that the reward includes an additional term $\delta_{t+1}(s,a)$, which vanishes by (A4) but need not have conditional zero mean given $\F_t$. Because of these \emph{biased} diminishing noise terms $\delta_{t+1}$, we extend prior results in \cite{Tsi94,YuB13} on the boundedness and convergence of Q-learning in SSP-type MDPs to establish convergence of \eqref{eq-gen-ql-ssp} (see also \cite[Sec.\ 4.3]{BeT96} for a presentation of the results in \cite{Tsi94} in a more general formulation, albeit without communication delays). 

We give the extended theorem below, along with an outline of the proof highlighting the minor modifications needed to make the prior results applicable here. We omit the full proof to avoid repeating the lengthy and involved arguments in those works.

\begin{proposition} \label{prop-4.1}
For the SSP-type MDP $\hat \M$, the sequence $\{\hat Q_t\}$ generated by \eqref{eq-gen-ql-ssp} under (A1)--(A5) converges a.s.\ to $\hat{Q}^*$, the optimal action-value function of $\hat \M$.
\end{proposition}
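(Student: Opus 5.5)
Following the two-stage structure of \cite{Tsi94,YuB13} (see also \cite[Sec.~4.3]{BeT96}), we would first prove that $\{\hat Q_t\}$ is bounded a.s., and then show convergence to $\hat Q^*$ by the standard contraction-type argument for asynchronous stochastic approximation. In both stages the biased term $\delta_{t+1}$ is handled as an asymptotically negligible perturbation. To set this up, write the iteration as
\[
\hat Q_{t+1} = \hat Q_t + \hat\alpha_t\bigl(T\hat Q_t - \hat Q_t + \omega_{t+1} + \delta_{t+1} + \xi_{t+1}\bigr),
\]
where $T$ is the Bellman operator of $\hat\M$ and $\xi_{t+1}(s,a)$ is the martingale-difference term from sampling $\hat S^{sa}_{t+1}$. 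By (A5), $\xi_{t+1}(s,a)$ has zero conditional mean given $\F_t$, and its conditional second moment is at most $C(1+\|\hat Q_t\|_\infty^2)$. Together with (A2), (A3) and (A1), this places $\omega+\xi$ exactly within the noise framework of \cite[Prop.~4.5]{BeT96}.

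\textbf{Boundedness.} We would follow \cite{YuB13}, which handles SSP-type models in which improper policies may exist (proper-only models would allow the weighted-sup-norm argument of \cite{Tsi94}). The key step is to compare $\hat Q_t$ with the iterates of auxiliary Q-learning processes driven by the same noise, whose rewards are increased or decreased by a constant $\bar c$. Choose a random time $t_0$ after which $|\delta_{t+1}|\le \bar c$ componentwise; this is possible a.s.\ by (A4) and finiteness of $\hat\A$. By monotonicity of $T$, the iterates after $t_0$ are then sandwiched between the iterates of the two perturbed processes, started from $\hat Q_{t_0}$. The perturbed model with reward $\hat r+\bar c$ remains of SSP type for $\bar c$ small, because improper policies still have value $-\infty$ at some state: their gain on some recurrent class is strictly negative, exactly as in the proof of Lemma~\ref{lem-aux-idealized-alg}. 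The perturbed model with reward $\hat r-\bar c$ is clearly also of SSP type. The boundedness results for unperturbed noise then apply to both auxiliary processes, and hence $\hat Q_t$ is bounded.

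This is the step we expect to be the main obstacle. The lower and upper bounds in \cite{YuB13} are derived through several specific constructions, and we must verify that each one tolerates the bounded, non-martingale term $\delta$ once $t\ge t_0$. In particular, the random start time $t_0$ must be handled correctly, since the bounds are applied conditionally from time $t_0$ onward.

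\textbf{Convergence.} Once boundedness holds, $\xi_{t+1}$ has bounded conditional variance on events where $\sup_t\|\hat Q_t\|_\infty\le K$; we would localize to these events. We would then invoke the robust convergence result \cite[Prop.~4.5]{BeT96}, which allows an additional noise term that vanishes a.s. Its hypotheses are: the iteration mapping $T$ is monotone with unique fixed point $\hat Q^*$ (Theorem~\ref{thm:ssp-opt}); $T$ satisfies the sup-norm shift property $T(Q+c\1)\le TQ+c\1$; and the standard sandwich argument applies, with the monotone deterministic sequences $T^k(\hat Q^*\pm K\1)$ converging to $\hat Q^*$ (see \cite[Prop.~4.6]{BeT96} and \cite{YuB13} for the SSP case). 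Under these hypotheses, the result yields $\hat Q_t\to\hat Q^*$ a.s. Finally, taking the union over $K$ gives the conclusion of Proposition~\ref{prop-4.1}.
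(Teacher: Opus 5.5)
Your boundedness argument is valid, and it takes a different route from the paper. The paper works inside the existing proofs. On one side it bounds $\{\hat Q_t\}$ by auxiliary iterates for a fixed proper policy; that mapping is a weighted sup-norm contraction, so \cite[Prop.~4.5]{BeT96} absorbs $\delta$. On the other side it observes that the construction of \cite[Prop.~3.3]{YuB13} only needs running averages of the random rewards to converge to $\hat r$. Your comparison with Q-learning for rewards $\hat r\pm\bar c$, driven by the same samples and stepsizes, removes $\delta$ from the auxiliary processes entirely. So \cite{YuB13} is used as a black box, and your worry that each of its constructions must tolerate $\delta$ is unfounded. Two details need fixing:
\begin{enumerate}
\item Argue the SSP property of the $\hat r+\bar c$ model for a general SSP-type $\hat\M$, not by appeal to Lemma~\ref{lem-aux-idealized-alg}. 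If every recurrent class in $\hat\S$ of an improper $\pi$ had gain $\ge 0$, the partial sums $\sum_{t\le N}P_\pi^t r_\pi$ would be bounded below and $V_\pi>-\infty$ everywhere. So some class has strictly negative gain, and finiteness of the policy set gives a uniform margin $\gamma>0$; any $\bar c<\gamma$ works.
\item $t_0$ is not a stopping time. Start the auxiliary processes at deterministic times $n$ from $\hat Q_n$, with stepsizes truncated at $1$ (the pathwise comparison needs $\hat\alpha_t\le 1$). Then take the union over $n$ of the events $\{\sup_{t\ge n}\|\delta_{t+1}\|_\infty\le\bar c,\ \sup_{t\ge n}\hat\alpha_t\le 1\}$.
\end{enumerate}

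The genuine gap is in the convergence step. You cite \cite[Prop.~4.5]{BeT96}, which does allow an extra vanishing noise term, but it is a theorem about weighted max-norm pseudo-contractions. When $\hat\M$ has improper policies, the Q-learning mapping is monotone but not such a pseudo-contraction, which is why \cite{YuB13} is needed at all. The hypotheses you actually list are those of the monotone-mapping result \cite[Prop.~4.6]{BeT96}, which assumes boundedness but has no biased term $\delta_{t+1}$. So neither result yields $\hat Q_t\to\hat Q^*$ as you state it.

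The paper closes this by rerunning the proof of \cite[Prop.~4.6]{BeT96} with the recursion $Z_{t+1}=(1-\hat\alpha_t)Z_t+\hat\alpha_t\delta_{t+1}$. This tends to $0$ a.s.\ and is added to the upper bound in \cite[Lem.~4.7]{BeT96}.

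Your own sandwich also closes the gap without reopening any proofs. The auxiliary processes converge a.s.\ to the optimal Q-functions $Q^*_{\pm\bar c}$ of the shifted models \cite{Tsi94,YuB13}, so $Q^*_{-\bar c}\le\liminf_t\hat Q_t\le\limsup_t\hat Q_t\le Q^*_{\bar c}$ a.s. Since $Q^*_{-\bar c}\le\hat Q^*\le Q^*_{\bar c}$ and both depend monotonically on $\bar c$, their limits as $\bar c\downarrow 0$ exist, are finite, and solve the optimality equation of $\hat\M$. Uniqueness in Theorem~\ref{thm:ssp-opt} then makes both limits equal $\hat Q^*$, and intersecting over $\bar c=1/m$ gives the proposition.
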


\begin{proof}[Proof (outline)]
We first show how to modify prior results to establish convergence of $\{\hat Q_t\}$, assuming its a.s.\ boundedness. We apply the convergence proof of \cite[Prop.\ 4.6]{BeT96} (which assumes a.s.\ boundedness), together with the following extension to account for the additional terms $\delta_{t+1}$ not considered in the original proof.  Define
\begin{equation} \label{eq-prop4.1-prf1}
  Z_{t+1}(s,a) = (1 - \hat \alpha_t(s,a)) Z_t(s,a) +  \hat \alpha_t(s,a) \delta_{t+1}(s,a),
\end{equation}  
with $Z_0(\cdot) \equiv 0$. Since $\delta_t \to 0$ a.s.\ by (A4) and $\{\hat \alpha_t\}$ satisfies (A3), it follows that $Z_t \to 0$ a.s. We follow the same proof steps as in \cite[Prop.~4.6, pp.~168--171]{BeT96} until near the end. There, we modify \cite[Lem.\ 4.7, p.\ 170]{BeT96} by adding the term $Z_t(s,a)$ to its upper bound on $\hat Q_{t+1}(s,a)$, using the same reasoning as in the original proof. Using this modified lemma and the fact that $Z_t \to 0$ a.s., the final step \cite[Lem.~4.8, pp.~170--171]{BeT96} continues to hold for $\{\hat Q_t\}$, completing the convergence proof under the boundedness condition.

For the boundedness of $\{\hat Q_t\}$, we explain how the boundedness proof in \cite{YuB13} extends to our setting. Following the analysis in \cite{YuB13}, we show separately that $\{\hat Q_t\}$ is bounded above a.s.\ and bounded below a.s. The two parts rely on substantially different arguments. 

For boundedness from above, we apply the reasoning in the proof of \cite[Prop.\ 3.1]{YuB13}, where we bound $\{\hat Q_t\}$ from above by an auxiliary sequence of Q-learning iterates constructed using a proper policy and argue that the auxiliary sequence is bounded a.s. The proof of \cite[Prop.\ 3.1]{YuB13} carries over to the present setting once we note that, under a proper policy, the mapping underlying Q-learning is a contraction w.r.t.\ a weighted sup-norm. Consequently, by \cite[Prop.\ 4.5, pp.~157--158]{BeT96}, even in the presence of the additional diminishing biased noise terms $\{\delta_{t}\}$, the auxiliary iterates remain bounded a.s. Therefore, $\{\hat Q_t\}$ is bounded above a.s.

For boundedness from below, the proof of \cite[Prop.~3.3, Section~3.3]{YuB13} carries over to the present setting. While the proof involves the construction of auxiliary processes and other arguments, which we do not repeat here due to their complexity and length, it does not depend critically on the particular form of the random one-stage rewards appearing in Q-learning. Instead, it only requires the following estimates $\{\hat r_t(s,a)\}$, defined by the random rewards, to converge to the expected one-stage reward $\hat r(s,a)$ for each $(s,a)$: 
\[ \hat r_{t+1}(s,a) = (1 - \hat \alpha_t(s,a)) \hat r_{t}(s,a) + \hat \alpha_t(s,a) (\hat r(s,a) + \omega_{t+1}(s,a) + \delta_{t+1}(s,a)).\]
By standard arguments under (A2)--(A4),%footnote starts
\footnote{Express $\hat r_{t+1}(s,a)$ as the sum of three iterates: $\hat r_{t+1}(s,a) = X_{t+1}(s,a) + W_{t+1}(s,a) + Z_{t+1}(s,a)$, where 
\begin{align*}
X_{t+1}(s,a) & = (1 - \hat \alpha_t(s,a)) X_{t}(s,a) + \hat \alpha_t(s,a) \hat r(s,a), \quad \text{with} \  X_0(s,a) = \hat r_0(s,a); \\
W_{t+1}(s,a) & = (1 - \hat \alpha_t(s,a)) W_{t}(s,a) + \hat \alpha_t(s,a) \omega_{t+1}(s,a), \quad \text{with} \ W_0(s,a) = 0;
\end{align*}
and $Z_{t+1}(s,a)$ is given by \eqref{eq-prop4.1-prf1}. By the stepsize condition (A3), $X_t(s,a) \to \hat r(s,a)$ a.s. By \cite[Cor.~4.1, p.\ 161]{BeT96}, $W_t(s,a) \to 0$ a.s.\ under (A2) and (A3). As shown earlier, $Z_t(s,a) \to 0$ a.s.\ under (A2) and (A4). It then follows that $\hat r_{t+1}(s,a) \to \hat r(s,a)$ a.s.}
%footnote ends,
$\hat r_{t+1}(s,a) \to \hat r(s,a)$ holds a.s.\ despite the additional diminishing noise terms $\delta_{t+1}$. Consequently, \cite[Prop.~3.3]{YuB13} extends to our setting and implies that $\{\hat Q_t\}$ is bounded below a.s.
Algorithm~\ref{alg:2}
Combining the two parts yields the a.s.\ boundedness of $\{\hat Q_t\}$.
\end{proof}

\subsubsection{Step 2: convergence of a limiting form of Algorithm~\ref{alg:2}}

We now return to the original MDP $\M$ and consider a limiting form of Algorithm~\ref{alg:2}. 
In particular, fix arbitrary sets satisfying the limiting properties \eqref{eq-lem1-p1}--\eqref{eq-lem1-p4} in Lemma~\ref{lem-1}: a partition of $\C$ into $\tCr, \tCe, \tCb$, the associated state sets $\tS^i := \bigcup_{C \in \tC^i} C$, and action constraint set $\tA$. Define 
\[ \tG := \{ (s,a) \in \tA \mid s \not\in \tSr \}. \]
Let $\{g_t\} \subset \R^{|\S|}$ and $\{v_t\} \subset \R^{|\tSr|}$ be two convergent sequences with $g_t \to g^*$ and $v_t \to v_*$. All the objects just introduced are treated as given and nonrandom---an assumption that we will relax later.

Given initial $\hat Q_0$, consider Q-learning iterates defined for all $(s,a) \in \tG$ by
\begin{align}
  \hat Q_{t+1}(s, a)  = \hat Q_t(s,a) + \hat \alpha_t(s,a) & \Big(R^{sa}_{t+1} - g_t(s) - \epsilon \, \ind\big\{ s \in \tSb \big\}  + \ind \big\{S^{sa}_{t+1} \in \tSr\big\} \cdot v_t(S^{sa}_{t+1}) \notag \\
  & \ + \ind\big\{S^{sa}_{t+1} \not\in \tSr\big\} \cdot\!\! \max_{a' \in \tA(S^{sa}_{t+1})} \hat Q_t(S^{sa}_{t+1}, a') \, - \hat Q_t(s,a) \Big), \label{eq-prf-ql-ite}
\end{align} 
together with these associated conditions:
\begin{itemize}[leftmargin=0.75cm,labelwidth=!]
\item[(i)] There is an increasing sequence of $\sigma$-algebras $\F_t$ with 
\[\F_t \supset \sigma \big(\hat Q_0, \hat \alpha_k(s,a), R^{sa}_{k}, S^{sa}_k; \, k \leq t, \, (s, a) \in \tG \big).\]
\item[(ii)] The stepsizes $\hat \alpha_t(s,a)$ satisfy (A2).
\item[(iii)] Conditional on $\F_t$, $R^{sa}_{t+1}$ and $S^{sa}_{t+1}$ are distributed according to the reward and transition distributions of $\M$ at state $s$ under action $a$.
\end{itemize}

The next lemma establishes the convergence of $\{\hat Q_t\}$ to a limiting vector $\hat Q^*$ satisfying the same equation as \eqref{eq-thm1-Q-ssp-opt} in Theorem~\ref{thm-1}, which concerns the limits of the Q-learning iterates of Algorithm~\ref{alg:2}.

\begin{lemma} \label{lem-4.1}
In the above setting, the sequence $\{\hat Q_t\}$ generated by \eqref{eq-prf-ql-ite} converges a.s.\ to the unique solution $\hat Q^*$ of the following equation: for all $(s, a) \in \tG$, 
\begin{equation} \label{eq-Q-ssp-opt}
   \hat Q^*(s,a) = r(s,a) - g^*(s) - \epsilon \1_{\tSb}(s) + \sum_{s' \in \tSr} p(s' \mid s, a) v_*(s') + \sum_{s' \in \S \setminus \tSr} p(s' \mid s, a)  \max_{a' \in \tA(s')} \hat Q^*(s',a').
\end{equation}
Moreover, this equation is the optimality equation of an SSP-type total-reward MDP defined on $\tG$, for which $\hat Q^*$ is the optimal action-value function.
\end{lemma}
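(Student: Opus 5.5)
We will prove Lemma~\ref{lem-4.1} in two parts. First we identify \eqref{eq-Q-ssp-opt} as the optimality equation of an SSP-type MDP $\hat\M$ and show uniqueness. Then we cast \eqref{eq-prf-ql-ite} in the form \eqref{eq-gen-ql-ssp} and invoke Proposition~\ref{prop-4.1}.

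\textbf{Step 1: the SSP structure.} Define $\hat\M$ exactly as $\M^\epsilon$ in Section~\ref{sec:alg2-1a}, but with the sets $\Sr,\Sb,\A_*,v_{*,\mathrm{r}}$ replaced by $\tSr,\tSb,\tA,v_*$:
\begin{itemize}
\item the effective state space is $\hat\S=\S\setminus\tSr$, with $\hat\A(s)=\tA(s)$, so the feasible pairs are precisely $\tG$;
\item transitions into $\tSr$ are redirected to $\Delta$;
\item the expected reward is $\hat r(s,a)=r(s,a)-g^*(s)-\epsilon\1_{\tSb}(s)+\sum_{s'\in\tSr}p(s'\mid s,a)v_*(s')$.
\end{itemize}
Then \eqref{eq-Q-ssp-opt} is the Q-form of the optimality equation in Theorem~\ref{thm:ssp-opt}.

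We verify Assumption~\ref{cond:ssp} by repeating the proof of Lemma~\ref{lem-aux-idealized-alg}, using \eqref{eq-lem1-p1}--\eqref{eq-lem1-p4} in place of the idealized properties.

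\emph{Properness.} By \eqref{eq-lem1-p4}, every $C_{\ell,k}\in\tCe\cup\tCb$ has a state with an admissible exit action in $\tA$. Combine it with a class-preserving reaching policy, which is admissible because $\A_\ell(s)\subset\A_*(s)\subset\tA(s)$. On $\T$, use any action in $\tA(s)\supset\A_*(s)\neq\varnothing$. Bather's structure then forces exit from $\hat\S$.

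\emph{Improper policies.} Let $D$ be a recurrent class of an improper policy. By Lemma~\ref{lem:uhd}(i), $D\subset C_{\ell,k}\in\tCe\cup\tCb$. On $D$ no mass goes to $\tSr$, so the gain on $D$ is at most $r^*_{\ell,k}-g^*(s)-\epsilon\1_{\tSb}(s)$.
\begin{itemize}
\item If $C_{\ell,k}\in\tCe\subset\Ce$, this is strictly negative by the definition of $\Ce$.
\item If $C_{\ell,k}\in\tCb$, we use \eqref{eq-aosp0}, $g^*\ge r^*_{\ell,k}$, so the gain is at most $-\epsilon<0$.
\end{itemize}
Hence $V_\mu=-\infty$ on $D$. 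Theorem~\ref{thm:ssp-opt} then gives a unique real solution $V^*$ of the state-form equation, and therefore a unique $\hat Q^*$ solving \eqref{eq-Q-ssp-opt}. For uniqueness in Q-form, note that any solution $\hat Q$ yields $V=\max_a\hat Q$ solving the $V$-equation, so $V=V^*$ and hence $\hat Q=\hat Q^*$.

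\textbf{Step 2: reduction to Proposition~\ref{prop-4.1}.} Define $\hat S^{sa}_{t+1}=S^{sa}_{t+1}$ if $S^{sa}_{t+1}\notin\tSr$, and $\Delta$ otherwise; (A5) then follows from condition (iii). Decompose the bracketed reward in \eqref{eq-prf-ql-ite} as $\hat r(s,a)+\omega_{t+1}+\delta_{t+1}$, where
\begin{itemize}
\item $\omega_{t+1}(s,a)=\big(R^{sa}_{t+1}-r(s,a)\big)+\ind\{S^{sa}_{t+1}\in\tSr\}v_*(S^{sa}_{t+1})-\sum_{s'\in\tSr}p(s'\mid s,a)v_*(s')$;
\item $\delta_{t+1}(s,a)=g^*(s)-g_t(s)+\ind\{S^{sa}_{t+1}\in\tSr\}\big(v_t-v_*\big)(S^{sa}_{t+1})$.
\end{itemize}
By condition (iii), $\omega_{t+1}$ has zero conditional mean. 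Its conditional second moment is uniformly bounded because rewards have finite variance and $v_*$ is a fixed finite vector, giving (A3). Since $g_t\to g^*$ and $v_t\to v_*$ deterministically and the state space is finite, $\delta_{t+1}\to0$, giving (A4). (A1) and (A2) are conditions (i)--(ii).

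Proposition~\ref{prop-4.1} then yields $\hat Q_t\to\hat Q^*$ a.s.

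\textbf{Main obstacle.} The step needing care is Step 1's improper-policy argument, specifically the strict negativity on classes in $\tCe$. It uses $\tCe\subset\Ce$ from \eqref{eq-lem1-p2}, which is what makes the Q-learning iterates stable without a penalty there. The remainder is bookkeeping: matching the notation of \eqref{eq-prf-ql-ite} to \eqref{eq-gen-ql-ssp}, including the terminal-reward term.
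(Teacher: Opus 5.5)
Your proposal is correct and follows essentially the same route as the paper. It builds $\hat\M$ from $\M^\epsilon$ by the set replacements, verifies Assumption~\ref{cond:ssp} via \eqref{eq-lem1-p4} and $\tCe\subset\Ce$, then rewrites the iteration in the form \eqref{eq-gen-ql-ssp} with the same $\omega_{t+1}$, $\delta_{t+1}$, $\hat S^{sa}_{t+1}$ and applies Prop.~\ref{prop-4.1}. Two details you make explicit are left implicit in the paper: your uniform bound on $\tCb$ via $g^*\ge r^*_{\ell,k}$, which covers $\tCb$-classes coming from $\Cr$ or $\Ce$, and your argument for uniqueness in Q-form.
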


\begin{proof}
We view $\{\hat Q_t\}$ as Q-learning iterates for an SSP-type MDP $\hat \M =  \big\{\hat \S \cup \{\Delta\}, \hat \A, \hat p, \hat r \big\}$ defined below. This MDP $\hat \M$ is similar to the MDP $\M^{\epsilon}$ introduced before Lemma~\ref{lem-aux-idealized-alg} for the idealized algorithm in Section~\ref{sec:alg2-1a}, except that $\hat \M$ is defined using $\tSr$, $\tSb$, $\tCe$, $\tCb$, and $\tA$ in place of $\Sr$, $\Sb$, $\Ce$, $\Cb$, and $\A_*$, respectively. In particular, we aggregate the states in $\tSr$ into a special termination state $\Delta$, and take $\hat \S = \S \setminus \tSr$ as the effective state space. Let $\hat \A(s) = \tA(s)$ for $s \in \hat \S$; thus, $\hat \A = \tG$.
For each feasible state-action pair $(s,a) \in \tG$, the transition probabilities $\hat p(\cdot \mid s, a)$ are given by 
\begin{equation} \label{eq-prf-lem4.1-1}
\hat p(s' \mid s, a) = p(s' \mid s, a), \ \ \ s' \in \hat \S, \qquad  \hat p(\Delta \mid s, a) = \sum_{s' \in \tSr} p(s' \mid s, a),
\end{equation}
and the expected one-stage reward $\hat r(s,a)$ is given by
\begin{equation} \label{eq-prf-lem4.1-2}
\hat r(s,a) = r(s,a) - g^*(s) - \epsilon \1_{\tSb}(s) + \sum_{s' \in \tSr} p(s' \mid s, a) \, v_*(s').
\end{equation} 
(Compare \eqref{eq-def-Meps-1}--\eqref{eq-def-Meps-2} for $\M^{\epsilon}$.)
 
That $\hat \M$ is of SSP type follows from the same arguments as in the proof of Lemma~\ref{lem-aux-idealized-alg}, with the set replacements noted above. In particular, by~\eqref{eq-lem1-p4}, for each $C_{\ell,k} \in \tCe \cup \tCb$, there exists at least one state $s_{\ell,k} \in C_{\ell,k}$ with $\tA(s_{\ell,k}) \cap \cE(s_{\ell,k}) \not= \varnothing$. Hence a proper policy for $\hat \M$ can be constructed as in the proof of Lemma~\ref{lem-aux-idealized-alg}. Since $\tCe \subset \Ce$ and $\tCb \supset \Cb$ by \eqref{eq-lem1-p2}, the same reasoning as in the second half of the proof of Lemma~\ref{lem-aux-idealized-alg} shows that any improper stationary policy in $\Pi_{\hat \A}$ has expected total reward $-\infty$ in $\hat \M$ on each recurrent class it induces within the effective state space. 

We can express the iterates $\hat Q_t$ from \eqref{eq-prf-ql-ite} in the general form \eqref{eq-gen-ql-ssp} of Q-learning for $\hat \M$ as follows. 
Take the $\sigma$-algebras associated with \eqref{eq-gen-ql-ssp} to be the $\sigma$-algebras $\F_t$ associated with iterates from \eqref{eq-prf-ql-ite}. 
Define the noise terms $\omega_{t+1}(s,a)$ and $\delta_{t+1}(s,a)$ in \eqref{eq-gen-ql-ssp} as
\begin{align}
  \omega_{t+1}(s,a) & =  R^{sa}_{t+1} - r(s,a) +  \ind\{S^{sa}_{t+1} \in \tSr\} \cdot v_*(S^{sa}_{t+1}) -   \sum_{s' \in \tSr} p(s' \mid s, a) v_*(s'),  \\
\delta_{t+1}(s,a) & = g^*(s) - g_t(s) +  \ind\{S^{sa}_{t+1} \in \tSr\} \cdot \big(v_t(S^{sa}_{t+1}) -  v_*(S^{sa}_{t+1})  \big).
\end{align} 
Recall that $v_*$ lies in a given bounded set, $g_t \to g^*$, and $v_t \to v_*$. Consequently, $\omega_{t+1}$ and $\delta_{t+1}$ satisfy (A3) and (A4). Define the random states $\hat S^{sa}_{t+1}$ in \eqref{eq-gen-ql-ssp} as
\begin{equation}
  \hat S^{sa}_{t+1} = \begin{cases}
   S^{sa}_{t+1} & \text{if} \ S^{sa}_{t+1} \not\in \tSr; \\
   \Delta & \text{otherwise}.
   \end{cases}
\end{equation}
With this definition, (A5) holds. The conditions in Prop.~\ref{prop-4.1} are therefore all satisfied.

Then, by Prop.~\ref{prop-4.1}, $\{\hat Q_t\}$ converges a.s.\ to the optimal action-value vector $\hat Q^*$ of $\hat \M$. Finally, by Theorem~\ref{thm:ssp-opt} for SSP-type MDPs, $\hat Q^*$ is the unique solution to the action-value optimality equation \eqref{eq-Q-ssp-opt} for $\hat \M$.
\end{proof}

\subsubsection{Step 3: convergence of Algorithm~\ref{alg:2} in original form} \label{appsec:alg2-conv-prf-step3}

We apply Lemma~\ref{lem-4.1} to establish Theorem~\ref{thm-1}. However, because the outputs of Algorithm~\ref{alg:mf-multichain} are random, the parameter $\epsilon$ and the limiting sets $\tC^i, \tS^i$, and $\tA$ in Theorem~\ref{thm-1} are also random, preventing a direct application of Lemma~\ref{lem-4.1}. Instead, we start with the joint probability measure for Algorithm~\ref{alg:mf-multichain} and Algorithm~\ref{alg:2}, and then work with the associated conditional distributions in explicit form, as shown below.  

First, we define the probability space $(\Omega, \Sigma_\Omega, \P)$ for analyzing Algorithm~\ref{alg:2}.  Let $(\Xi, \Sigma_{\Xi}, \P^b)$ be the probability space associated with Algorithm~\ref{alg:mf-multichain}. By Assumption~\ref{cond-meas-caus}, at initialization of Algorithm~\ref{alg:2},  the starting time $\bar t$, the parameters $(\epsilon, \ud, \bd)$, and the initial $Q_{\bar t}$ are measurable functions of $\xi \in \Xi$. We take the sample space for Algorithm~\ref{alg:2} to be $\Omega := \Xi \times \Z$, where $\Z$ consists of sequences $\zeta := (\zeta_t)_{t \geq 0}$, with each $\zeta_t$ representing a realization of the collection of random variables 
\[\big\{ Q_t(s,a), \, \alpha_t(s,a), \, (R^{sa}_{t}, S^{sa}_t) \mid (s, a) \in \A \big\}.\]
We write $\omega = (\xi, \zeta)$ for a generic element of $\Omega$. We equip $\Z$ with its Borel $\sigma$-algebra and take the $\sigma$-algebra $\Sigma_\Omega$ on $\Omega$ to be the corresponding product $\sigma$-algebra. By construction, under the regularity conditions in Assumption~\ref{cond-meas-caus}(i,ii), Algorithm~\ref{alg:2} induces a stochastic kernel $\K(\xi, d\zeta)$ on $\Z$ given $\Xi$. Together with Algorithm~\ref{alg:mf-multichain}, this defines a unique probability measure $\P$ on $(\Omega, \Sigma_\Omega)$ by 
\begin{equation} \label{eq-ker-rep1}
\P(d\omega) = \P^b(d\xi)\, \K(\xi, d \zeta).
\end{equation}

By the sequential and non-anticipatory nature of Algorithm~\ref{alg:2}, under the regularity conditions in Assumption~\ref{cond-meas-caus}(i,ii), $\K(\xi, d\zeta)$ admits an explicit representation as a composition of an infinite sequence of stochastic kernels, corresponding to the successive randomized updates of the algorithm based on the history up to each iteration. We now introduce notation to express this representation, for each $t \geq 0$, as a composition of two stochastic kernels corresponding to the iterations up to time $t$ and after time $t$, respectively. These representations will be used in the proof below.

For each $t \geq 0$, let $\Z_t$ denote the space of $\zeta_t$, and let $\Z_{0:t}$ and $\Z_{t+1:\infty}$ denote the spaces of $\zeta_{0:t}:= (\zeta_n)_{n \leq t}$ and $\zeta_{t+1:\infty}:= (\zeta_n)_{n \geq t+1}$, respectively (each equipped with its Borel $\sigma$-algebra). 
Under the regularity conditions stated in Assumption~\ref{cond-meas-caus}(i,ii), Algorithm~\ref{alg:2} induces, for each $t \geq 0$, a stochastic kernel $\K_{t}((\xi, \zeta_{0:t-1}), d \zeta_{t})$ on $\Z_{t}$ given $\Xi \times \Z_{0:t-1}$. 
Consequently, for each $t \geq 0$, 
$\K(\xi, d \zeta)$ can be written in the product form 
\begin{equation} \label{eq-ker-rep2}
\K (\xi, d \zeta) = \K_{0:t} \big( \xi, d \zeta_{0:t}\big) \K_{t+1:\infty} \big((\xi, \zeta_{0:t}), d \zeta_{t+1:\infty}\big),
\end{equation}
where $\K_{0:t}$ is the stochastic kernel on $\Z_{0:t}$ given $\Xi$ obtained by composing $\K_0, \K_1, \ldots \K_t$, and $\K_{t+1:\infty}$ is the stochastic kernel on $\Z_{t+1:\infty}$ given $\Xi \times \Z_{0:t}$ obtained by composing $\K_{t+1}, \K_{t+2}, \ldots$. 

Furthermore, for a given state-action pair $(s,a)$, by Assumption~\ref{cond-meas-caus}(iii), for each $t \geq 0$ and each $(\xi, \zeta_{0:t}) \in \Xi \times  \Z_{0:t}$, the marginal distribution of $(R^{sa}_{t+1}, S^{sa}_{t+1})$ under the probability measure $\K_{t+1}((\xi, \zeta_{0:t}), d \zeta_{t+1})$ on $\Z_{t+1}$ coincides with the joint distribution of the reward and next state in the MDP at state $s$ under action $a$, denoted by
\begin{equation} \label{eq-marg-trans-data}
   \P^{sa}_o\big(d(r^{sa}_{t+1}, s^{sa}_{t+1})\big).
\end{equation}   

With these preparations, we are now ready to prove Theorem~\ref{thm-1}.

\begin{proof}[Proof of Theorem~\ref{thm-1}]
There are finitely many distinct configurations for the collection of sets $\{\tA, \tC^i, \tS^i  \,|\, i \in \{\mathrm{r}, \mathrm{e}, \mathrm{b}\}  \}$ satisfying \eqref{eq-lem1-p1}--\eqref{eq-lem1-p2}. Denote the set of all such configurations by $\J$.
The smallest random time $\tau$ after which the sets $\tA_t$, $\tC_{i,t}$ and $\tS_{i,t}$ remain unchanged for all $i \in \{\mathrm{r}, \mathrm{e}, \mathrm{b}\} $ may be $\infty$ or a finite integer $k \geq 0$. These random outputs, as well as the other random outputs of Algorithm~\ref{alg:mf-multichain} are measurable functions on the sample space $\Xi$, and this dependence on $\xi$ will be made explicit below by writing, for example, $\tA$ as $\tA(\xi)$ and $\tA(s)$ as $\tA(s; \xi)$.

For each $j \in \J$ and $k \geq 0$, let $\Xi_{j,k}$ denote the subset of all sample points $\xi \in \Xi$ for which:
\begin{itemize}[leftmargin=0.6cm,labelwidth=!]
\item[(i)] $\tau(\xi) = k$ and the limiting sets $\tC^i(\xi), \tS^i(\xi)$, and $\tA(\xi)$ match the $j$th configuration in $\J$;
\item[(ii)] as $t \to \infty$, $g_t(\xi) \to g^*$, and $v_t(\xi)$ converges to a limit $v_*(\xi)$ such that the restriction of $v_*(\xi)$ to each $C_{\ell,k} \in \tC$ solves the AOE of the sub-MDP $\M_{\ell,k}$.
\end{itemize}
Our conditions on Algorithm~\ref{alg:mf-multichain}, together with Lemma~\ref{lem-1}, ensure that 
\begin{equation} \label{eq-prf-thm1-1}
\textstyle{\sum_{j \in \J} \sum_{k \geq 0} \P^b( \Xi_{j,k}) = 1.}
\end{equation}

Let $\hat \Xi : = \cup_{j \in \J} \cup_{k\geq 0} \Xi_{j,k}$. For $\xi \in \hat \Xi$, with $\tG(\xi) := \big\{ (s,a) \in \tA(\xi) \mid s \not\in  \tSr(\xi) \big\}$,
define the event
\[ E_\alpha : = \Big\{ \omega = (\xi, \zeta) \in \hat \Xi \times \Z  \, \Big|\, \textstyle{\sum_{t} \alpha_t(s,a; \omega) = \infty, \   \sum_{t} \alpha^2_t(s,a;\omega) < \infty,} \ \forall  (s, a) \in \tG(\xi) \Big\} \]
(where we make explicit the dependence of $\alpha_t$ on the sample point $\omega$).
By \eqref{eq-prf-thm1-1}, Lemma~\ref{lem-1}, and Assumption~\ref{cond-data-stepsize} on Algorithm~\ref{alg:2}, we have $\P(E_\alpha) = 1$, or equivalently, using the stochastic kernel representations for $\P$ discussed earlier [see \eqref{eq-ker-rep1}, \eqref{eq-ker-rep2}],
\[ \sum_{j \in \J} \sum_{k \geq 0}  \int_{\Xi_{j,k}} \!\P^b(d\xi) \int_{\Z_{0:k}} \!\! \K_{0:k}\big(\xi, d\zeta_{0:k}) \int_{\Z_{k+1: \infty}} \!\! \1_{E_\alpha}\big(\xi, \zeta_{0:k}, \zeta_{k+1:\infty} \big) \,  \K_{k+1:\infty}\big((\xi, \zeta_{0:k}),\, d \zeta_{k+1:\infty} \big) = 1.\]
Together with \eqref{eq-prf-thm1-1}, this implies that there exist measurable sets $D_{j,k} \subset \Xi_{j,k} \times \Z_{0:k}$ for $j \in \J$ and $k\geq 0$ such that 
\begin{equation}\label{eq-prf-thm1-2}
 \textstyle{\sum_{j \in \J} \sum_{k \geq 0}  \P \big\{ (\xi, \zeta_{0:k}) \in  D_{j,k} \big\} = 1},
\end{equation}  
and, for all $(j,k)$ with $\P^b(\Xi_{j,k}) > 0$,
\begin{equation}\label{eq-prf-thm1-3}
 \int \1_{E_\alpha}\big(\xi, \zeta_{0:k}, \zeta_{k+1:\infty} \big) \, \K_{k+1:\infty}\big((\xi, \zeta_{0:k}),\, d \zeta_{k+1:\infty} \big) = 1, \quad \forall \, (\xi, \zeta_{0:k}) \in D_{j,k}.
\end{equation} 
Note that since the sets $\Xi_{j,k}$ are disjoint, the events $\{ (\xi, \zeta_{0:k}) \in  D_{j,k}\}$, $j \in \J, k \geq 0$, are disjoint.

For each set $D_{j,k}$ such that $\P^b(\Xi_{j,k}) > 0$, consider the stochastic kernel $\K_{k+1:\infty}((\xi, \zeta_{0:k}), \, d \zeta_{k+1:\infty})$. Specifically, fix an arbitrary pair $(\xi, \zeta_{0:k}) \in D_{j,k}$, and consider the probability measure $\hat \P (\cdot) := \K_{k+1:\infty}\big((\xi, \zeta_{0:k}), \cdot \,\big)$ on $\Z_{k+1:\infty}$. 
For $n \geq 0$, denote
\[\hat Q_n = Q_{k+n}, \quad \hat \alpha_n = \alpha_{k+n}, \quad (\hat R^{sa}_{n}, \hat S^{sa}_{n}) = (R^{sa}_{k+n}, S^{sa}_{k+n}),\]
and note that, for $n =0$, the values of these variables are fully determined by $\zeta_k$.
By \eqref{eq-prf-thm1-3} and the definition of $E_\alpha$, for all $(s,a) \in \tG(\xi)$,
\begin{equation}\label{eq-prf-thm1-4}
 \textstyle{\sum_n \hat \alpha_n(s,a) = \infty, \quad  \sum_n \hat \alpha^2_n(s,a) < \infty,} \ \ \ \text{$\hat \P$-a.s.}
\end{equation} 
Define an increasing sequence of $\sigma$-algebras $\{\hat \F_n\}_{n \geq 0}$ on $\Z_{k+1:\infty}$ by 
\[\hat \F_n : = \sigma\big(\hat Q_m, \hat \alpha_m, \hat R^{sa}_{m}, \hat S^{sa}_{m}; \, m \leq n, (s, a) \in \A\big).\] 
in particular, $\hat \F_0$ is the trivial $\sigma$-algebra $\{ \varnothing, \Z_{k+1:\infty}\}$.

Define a probability measure $\kappa_0$ and stochastic kernels $\kappa_n$, $n \geq 1$, by
\begin{align*}
 \kappa_0(d \zeta_{k+1}) & := \K_{k+1}\big((\xi, \zeta_{0:k}), d \zeta_{k+1}\big), \\\kappa_n\big((\zeta_{k+1}, \ldots, \zeta_{k+n}),\, d \zeta_{k+n+1} \big) & := \K_{k+n+1}\big((\xi, \zeta_{0:k+n}), d \zeta_{k+n+1}\big), \quad n \geq 1.
\end{align*} 
For each $n \geq 0$, $\kappa_n$ is a version of the conditional distribution of $\big(\hat Q_{n+1}, \hat \alpha_{n+1}, (\hat R^{sa}_{n+1}, \hat S^{sa}_{n+1})_{(s, a) \in \A} \big)$ under $\hat \P$ given $\hat \F_n$. The discussion leading to \eqref{eq-marg-trans-data} then shows that, under Assumption~\ref{cond-meas-caus}(iii), for all $n \geq 0$ and each $(s, a) \in \A$, the conditional distribution of $(\hat R^{sa}_{n+1}, \hat S^{sa}_{n+1})$ given $\hat \F_n$ coincides the reward and transition distribution of the MDP at state $s$ under action $a$. 
Consequently, Lemma~\ref{lem-4.1} is applicable to the iterates $\{\hat Q_n\}$ on $\tG(\xi)$ under $\hat \P$, and hence
\begin{equation} \label{eq-prf-thm1-6}
  \hat Q_{n}\big|_{\tG(\xi)} \, \to \, \hat Q^*(\xi)\ \ \text{as}  \ n \to \infty,  \quad \text{$\hat \P$-a.s.},
\end{equation}  
where $\hat Q^*(\xi)$ is the unique solution to \eqref{eq-thm1-Q-ssp-opt} associated with $\big\{v_*(\xi), \epsilon(\xi), \tSr(\xi), \tSb(\xi), \tG(\xi) \big\}$
Since this holds for all $(\xi, \zeta_{0:k}) \in D_{j,k}$, using the stochastic kernel representation of $\P$, we obtain
\begin{equation} \label{eq-prf-thm1-7}
  \P \{ (\xi, \zeta_{0:k}) \in D_{j,k}, \,  Q_t |_{\tG(\xi)} \to \hat Q^*(\xi) \} = \P\{ (\xi, \zeta_{0:k}) \in D_{j,k} \}.
\end{equation}

Finally, consider the probability of the event $\big\{ Q_t |_{\tG(\xi)} \to \hat Q^*(\xi) \big\}$ under $\P$, where we interpret the event as false whenever $\tG(\xi)$ or $\hat Q^*(\xi)$ is not well-defined. By \eqref{eq-prf-thm1-2} and \eqref{eq-prf-thm1-7}, we have
\begin{align*}
 \P \big\{ Q_t |_{\tG(\xi)} \to \hat Q^*(\xi) \big\} & = \sum_{j \in \J, \, k \geq 0} \P \big\{ (\xi, \zeta_{0:k}) \in D_{j,k}, \,  Q_t |_{\tG(\xi)} \to \hat Q^*(\xi) \big\} \\
 & = \sum_{j \in \J, \, k \geq 0}  \P\{ (\xi, \zeta_{0:k}) \in D_{j,k} \} = 1.
\end{align*} 
Hence, $Q_t\big|_{\tG(\xi)} \to \hat Q^*(\xi)$ a.s.

Consider now the sequence $\{x_t\}$ defined from $v_t$ and $Q_t$ via \eqref{def-xt}. By Lemma~\ref{lem-1} and the a.s.\ convergences of $v_t \to v_*$ and $Q_t\big|_{\tG} \to \hat Q^*$, 
we have $x_t \to \bar x$ a.s., where
\[  \bar x(s) := \begin{cases}
   v_*(s), &  s \in \tSr, \\
   \max_{a \in \tA(s)} \hat Q^*(s, a), & s \in \S \setminus \tSr.
   \end{cases}
\]
Since $\hat Q^*$ is the unique solution to \eqref{eq-thm1-Q-ssp-opt}, it follows that $\bar x$ restricted to $\S \setminus \tSr$ satisfies
\begin{equation} \notag
  \bar x(s) = \max_{ a \in \tA(s)} \Big\{ r(s,a) - g^*(s) - \epsilon \1_{\tSb}(s) + \sum_{s' \in \tSr} p(s' \mid s, a) v_*(s') + \sum_{s' \not\in \tSr} p(s' \mid s, a) \, \bar x(s') \Big\}, \ \ \ \forall\, s \in \S \setminus \tSr,
\end{equation} 
and is the unique solution to this equation. Combining this with the AOEs for the sub-MDPs $\M_{\ell,k}$ corresponding to $C_{\ell,k} \in \tCr$, which are satisfied by the corresponding components of $v_*$, yields the assertion of Theorem~\ref{thm-1} for \eqref{eq-paoe} and $\bar x$ (for fixed $\epsilon$).

Finally, view $\bar x$ as a function of $\epsilon$ and write $\bar x_\epsilon$. If $\bd$ is sufficiently small, then $\tA = \A_*$ by \eqref{eq-lem1-p1}, and hence $\bar x_\epsilon$ solves
\[  x  = \max_{\mu \in \Pi_{\A_*}} \left\{ r_\mu - g^* + P_\mu x \right\} - \epsilon\1_{\tSb}. \] 
By the final step of the proof of Prop.~\ref{prp-idealized-alg}, as $\epsilon \to 0$, $\bar x_\epsilon$ converges to a solution $x^*$ of the AOE~\eqref{eq-aoe2}, with $x^* = v_*$ on $\tSr$.
\end{proof}

\subsection{Algorithm~\ref{alg:3}: proofs of Lemma~\ref{lem-stab-sets} and Theorem~\ref{thm-b-conv}} \label{appsec-alg3-prf-conv}

We outline the proofs of Lemma~\ref{lem-stab-sets} and Theorem~\ref{thm-b-conv}, omitting lengthy arguments analogous to those in Step~3 of the proof of Theorem~\ref{thm-1} (Section~\ref{appsec:alg2-conv-prf-step3}), arguments that involve explicit representations of (conditional) probability measures as products of stochastic kernels.

\subsubsection{Proof of Lemma~\ref{lem-stab-sets} (outline)} 

The finite-time stabilization of $\tA_t$ and property \eqref{eq-stab-tAB} of its limit $\tA$ were established in Lemma~\ref{lem-1}, following from the convergence of Algorithm~\ref{alg:mf-multichain} and the update rules~\eqref{rule:Ae1}--\eqref{rule:tA} for $\tA_t$.

For the remaining assertions of Lemma~\ref{lem-stab-sets} concerning the convergence of $\{Q_{\beta,t}\}$ and the finite-time stabilization of $\tB_t$, we first prove the following lemma.

\begin{lemma} \label{lem-alg3-conv-phase1}
The sequence $\{Q_{\beta,t}\}$ and $\{v_{\beta,t}\}$ generated in Phase~1 of Algorithm~\ref{alg:3} converge a.s., satisfying
\begin{equation} \label{eq-prf-stab-tB-1}
  Q_{\beta,t} \to Q_\beta, \qquad v_{\beta,t} \to v_\beta,
\end{equation} 
where $Q_\beta$ is the $\beta$-discounted optimal action-value function, and $v_\beta$ is the $\beta$-discounted optimal value function, of the sample path-dependent MDP $\widetilde{\M} = (\S, \tA, p, r - g^*)$.
\end{lemma}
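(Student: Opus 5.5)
The plan is to mirror the three-step structure used for Theorem~\ref{thm-1}, replacing the SSP-type total-reward setting with the simpler $\beta$-discounted setting, where no boundedness subtleties arise. I will proceed in three steps.

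First, I isolate a limiting form of Phase~1. I fix a deterministic action constraint set $\tA$ with $\A_*\subset\tA\subset\A_{\bd_1}$, and a deterministic sequence $g_t\to g^*$. I then consider the discounted Q-learning iterates
\[
\hat Q_{t+1}(s,a)=\hat Q_t(s,a)+\hat\alpha_t(s,a)\Big(R^{sa}_{t+1}-g_t(s)+\beta\max_{a'\in\tA(S^{sa}_{t+1})}\hat Q_t(S^{sa}_{t+1},a')-\hat Q_t(s,a)\Big)
\]
on $\tA$. I rewrite the reward term as $r(s,a)-g^*(s)+\omega_{t+1}(s,a)+\delta_{t+1}(s,a)$, where $\omega_{t+1}:=R^{sa}_{t+1}-r(s,a)$ is a martingale difference with bounded conditional variance (finite reward variance), and $\delta_{t+1}:=g^*(s)-g_t(s)\to0$ is a biased but vanishing term.

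Second, I prove convergence of this limiting form to $Q_\beta$. The underlying mapping is a $\beta$-contraction in the sup-norm on $\R^{|\tA|}$, and $Q_\beta$ is its unique fixed point. A.s.\ boundedness therefore follows from \cite[Prop.~4.7]{BeT96}, or from the robustness result \cite[Prop.~4.5]{BeT96}, which tolerates diminishing bias. Convergence then follows from \cite[Prop.~4.4 and 4.5]{BeT96}, using the same $Z_t$ device as in the proof of Prop.~\ref{prop-4.1} to absorb $\delta_{t+1}$. The stepsize and sampling requirements are exactly condition~\ref{cond:D3}. Continuity of the max then gives $v_{\beta,t}=\max_{a\in\tA(s)}Q_{\beta,t}(s,a)\to v_\beta$.

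Third, I pass to the actual algorithm, where $\tA_t$ and $g_t$ are random. By Lemma~\ref{lem-1}, $\tA_t$ stabilizes a.s.\ at a finite random time $\tau$ to a limit $\tA$ among finitely many configurations. I partition the sample space of Algorithm~\ref{alg:mf-multichain} into the events $\Xi_{j,k}$ (configuration $j$, stabilization time $k$, and $g_t\to g^*$), exactly as in Step~3 of the proof of Theorem~\ref{thm-1}. For each $(j,k)$ I condition on $(\xi,\zeta_{0:k})$ and use the product-kernel representation \eqref{eq-ker-rep2}. After time $k$ the iterates coincide with the limiting form from Step~1, with $\tA$ fixed and $\hat Q_0=Q_{\beta,k}$. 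Components outside $\tA$ never enter the updates after $k$, so they are irrelevant.

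The main obstacle is this third step: making rigorous the application of a fixed-set convergence result on a random, sample-path-dependent limiting MDP. The difficulty is that the conditional noise structure and the stepsize conditions \ref{cond:D3} must hold under the conditional kernels $\K_{k+1:\infty}$ almost surely. I would handle this by the same measure-theoretic disintegration argument (sets $D_{j,k}$ and summation over $(j,k)$) used for Theorem~\ref{thm-1}, citing it rather than repeating it. Summing over $(j,k)$ then yields $Q_{\beta,t}\to Q_\beta$ a.s.\ on $\tA$, and hence $v_{\beta,t}\to v_\beta$.
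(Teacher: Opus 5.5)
Your proposal is correct and follows the same route as the paper. You fix $\tA$ and $\{g_t\}$ to obtain a limiting form of Phase~1, invoke the robustness of $\beta$-discounted Q-learning to vanishing biased reward noise (\cite[Prop.~4.5]{BeT96}), and transfer the result to the actual random setting via the conditioning/product-kernel argument of Step~3 in the proof of Theorem~\ref{thm-1}; your extra details (the $\omega/\delta$ decomposition, the irrelevance of components outside $\tA$ after stabilization, and continuity of the max for $v_{\beta,t}$) fill in what the paper leaves implicit.
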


\begin{proof}
Since $\tA_t=\tA$ for all sufficiently large $t$, the Q-learning updates \eqref{eq-alg3g-ql} in Phase~1 of Algorithm~\ref{alg:3} eventually correspond to $\beta$-discounted Q-learning iterates for the MDP $\widetilde{\M} = (\S, \tA, p, r - g^*)$, with rewards involving a vanishing biased noise term $g^*(s) - g_t(s)$. We proceed as in the proof of Theorem~\ref{thm-1} and first consider the limiting form of Phase~1, where $\{g_t\}$ and $\tA$ are fixed (and thus so is the MDP $\widetilde{\M}$). 

By \cite[Prop.~4.5]{BeT96}, $\beta$-discounted Q-learning converges a.s.\ in the presence of vanishing biased reward noise. Therefore, the limiting form produces iterates $Q_{\beta,t}$ and $v_{\beta,t}$ satisfying \eqref{eq-prf-stab-tB-1}. 

By arguments analogous to Step~3 of the proof of Theorem~\ref{thm-1} (see~Section~\ref{appsec:alg2-conv-prf-step3}), this convergence extends to the original form of Phase~1 of Algorithm~\ref{alg:3}, yielding the desired conclusion.
\end{proof}

From Lemma~\ref{lem-alg3-conv-phase1} and the update rule \eqref{rule:tB} for the sets $\tB_t$, we conclude that $\tB_t$ eventually becomes constant, and that the resulting set $\tB$ satisfies property \eqref{eq-stab-tAB}. This establishes Lemma~\ref{lem-stab-sets}.

\subsubsection{Proof of Theorem~\ref{thm-b-conv} (outline)} 

By Lemmas~\ref{lem-alg3-conv-phase1} and~\ref{lem-stab-sets}, almost surely, $v_{\beta,t} \to v_\beta$ and $\tB_t = \tB$ for all sufficiently large $t$. Thus, in Phase~2 of Algorithm~\ref{alg:3}, the base algorithm is eventually applied to the (sample path-dependent) MDP $\M' = (\S, \tB, p, - v_\beta)$, but with RVI Q-learning in Phase~2 using rewards $- v_{\beta,t}(s)$, which satisfy $v_{\beta,t}(s) \to v_\beta(s)$ for all $s \in \S$.

As in the proof of Theorem~\ref{thm-1}, we first consider the limiting form of Phase~2, where $\tB$ and $v_\beta$, as well as the convergent sequence $\{v_{\beta,t}\}$, are treated as given. 

\begin{lemma}  \label{lem-alg3-conv-phase2}
Suppose $\tB$, $v_\beta$, and $\{v_{\beta,t}\}$ are given, with $v_{\beta,t} \to v_\beta$. Run the base algorithm on the MDP $\M' = (\S, \tB, p, - v_\beta)$, except that in Phase~2, RVI Q-learning uses rewards $- v_{\beta,t}(s)$ instead of $- v_\beta(s)$. Then, almost surely for all sufficiently large $t$, the policy $\pi^{(t)}$ produced in Phase~4 of the base algorithm is gain-optimal for $\M'$.
\end{lemma}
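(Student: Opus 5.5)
The plan is to reduce the statement to the base-algorithm convergence theorems (Theorems~\ref{thm:gain_identification1} and~\ref{thm:policy_attains_gstar1}) applied to the fixed MDP $\M'=(\S,\tB,p,-v_\beta)$. The only difference from the standard setting is that the one-stage rewards used by RVI Q-learning in Phase~2 carry a vanishing, possibly biased, error $v_\beta(s)-v_{\beta,t}(s)\to 0$. Since $\tB$ is fixed, Phase~1 of the base algorithm (Bather's UHD of $\M'$) is computed once and is deterministic. It yields classes $C'_{\ell,k}$ with communicating sub-MDPs $\M'_{\ell,k}$, transient sets, and the limiting aggregated OSP $\bar\M'$. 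Lemmas~\ref{lem:osp_properness}, \ref{lem: solve_agg}, \ref{lem:agg-upper-bound} and~\ref{lem:equiv-class} are purely structural statements about $\M'$, so they apply verbatim. The work therefore lies in re-establishing Lemma~\ref{lem:phase2_rvi_convergence} and Lemma~\ref{lem: phase3_conv} under the perturbed rewards.

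\textbf{Step 1 (Phase~2 with vanishing reward bias).} Write the RVI Q-learning update on $\M'_{\ell,k}$ as the exact update with reward $-v_\beta(s)$ plus an extra term $\delta_{t+1}(s,a)=v_\beta(s)-v_{\beta,t}(s)$. This term is $\F_t$-measurable and tends to $0$ a.s. I would invoke the asynchronous stochastic approximation framework underlying \cite[Thm.~3.1]{YWS25b} and \cite{YWS25a}. That ODE-based analysis tolerates asymptotically vanishing bias terms in the noise, as in the standard Borkar-type robustness: a bias $\delta_t\to0$ does not alter the limiting ODE or the stability argument. This yields $r'^{(t)}_{\ell,k}\to r'^*_{\ell,k}$ and convergence of $Q'^{(t)}_{\ell,k}$ to the compact set of AOE solutions.

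The finite-time gain optimality of greedy policies is then the same deduction as before. Every limit point of $Q'^{(t)}_{\ell,k}$ has greedy actions contained in the set of gain-optimal greedy actions. Because the action set is finite, a uniform positive gap eventually excludes all other actions.

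\textbf{Step 2 (Phase~3).} The stopping rewards are $r'^{(t)}_{\ell,k}\to r'^*_{\ell,k}$ and all other rewards are zero. Properness of every stationary policy of $\bar\M'$ (Lemma~\ref{lem:osp_properness}) therefore lets the proof of Lemma~\ref{lem: phase3_conv} go through unchanged. That proof uses \cite[Prop.~4.5, Prop.~5.5(a)]{BeT96}, or alternatively Prop.~\ref{prop-4.1} with $\delta_{t+1}$ equal to the stopping-reward errors, and gives $\bar Q'^{(t)}\to\bar Q'^*$ a.s.

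\textbf{Step 3 (lifting).} Repeat the proof of Theorem~\ref{thm:policy_attains_gstar1} word for word. First, $\bar\pi^{(t)}$ is eventually optimal for $\bar\M'$ by the action-gap argument. Next, the reaching policies satisfy the reaching property, either by assumption or via the eventual validity argument of Example~\ref{ex:base-reach-policy}, whose SSP learners use no rewards of $\M'$ and so are unaffected. Then Lemma~\ref{lem:equiv-class} equates absorption and stopping distributions. Finally, \eqref{eq:g-sum}--\eqref{eq:bar-g-sum} together with Lemma~\ref{lem:agg-upper-bound} give $g_{\pi^{(t)}}=g'^*$ for all large $t$.

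The main obstacle is Step~1. The cited RVI Q-learning results are stated with martingale-difference noise only, so I would need to verify that their stability argument (boundedness of iterates via the scaled ODE) and their convergence argument survive an additive bias $\delta_t\to0$. My first attempt would decompose the iterate into the unperturbed recursion plus an auxiliary averaged sequence $Z_{t+1}=(1-\alpha_t)Z_t+\alpha_t\delta_{t+1}$, exactly as in the proof of Prop.~\ref{prop-4.1}. Since $Z_t\to0$, the bias enters as an asymptotically negligible perturbation of the ODE trajectories.
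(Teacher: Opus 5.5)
Your proposal is correct and follows essentially the paper's route. The only change from the base setting is the vanishing reward bias $v_\beta - v_{\beta,t}$ in the Phase~2 RVI Q-learning on each communicating sub-MDP of $\M'$; once the Phase~2 guarantees are re-established, Phases~3 and~4 go through as before.

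The obstacle you flag is already handled in the cited literature. The paper invokes \cite[Thm.~2.2 and Cor.~2.1]{YWS25a} and the proof of \cite[Thm.~3.1]{YWS25b}, whose noise model includes a biased term $\epsilon_{n+1}$ alongside $M_{n+1}$ and requires only that the bias be bounded by a deterministic diminishing sequence, which holds here since $\{v_{\beta,t}\}$ is given. So your $Z_t$ decomposition is unnecessary, and on its own it would not remove the bias, because shifting by $Z_t$ reintroduces an $O(\|Z_t\|)$ term through $f$ and the max.
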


\begin{proof}[Proof (outline)]
The only difference from the earlier setting in Section~\ref{sec:baseline-alg} is that, on each communicating sub-MDP $\M_{\ell,k}$ in Bather's decomposition of $\M'$, the RVI Q-learning iterates involve additional biased reward noise bounded by a deterministic diminishing sequence. This scenario is covered by extended analyses of RVI Q-learning under more general noise conditions \cite{YWS25b}, using asynchronous stochastic approximation results from \cite{YWS25a}. In particular, the desired convergence of RVI Q-learning in this setting follows from \cite[Thm.~2.2 and Cor.\ 2.1]{YWS25a} and \cite[Proof of Thm.~3.1]{YWS25b}. (The latter applies straightforwardly after specializing from SMDPs to MDPs and redefining the noise terms $M_{n+1}$ and $\epsilon_{n+1}$ \cite[Lem.~4.2]{YWS25b} to match the present setting.)

This convergence of RVI Q-learning in Phase~2 ensures that Phases~3 and~4 of the base algorithm retain the same convergence properties as in the previous setting. In particular, the policy $\pi^{(t)}$ from Phase~4 is eventually gain-optimal for $\M'$ a.s.
\end{proof}

Finally, by arguments analogous to Step 3 of the proof of Theorem~\ref{thm-1} (see~Section~\ref{appsec:alg2-conv-prf-step3}), the conclusion of Lemma~\ref{lem-alg3-conv-phase2} extends from the limiting form to the original form of Phase~2 of Algorithm~\ref{alg:3}. This establishes Theorem~\ref{thm-b-conv}.

\section{Concluding remarks} \label{sec:conclusion}
In this work, we developed three asynchronous value-iteration-based RL algorithms for solving general multichain MDPs under the average-reward criterion. The algorithms leverage Bather's UHD to recast the global decision problem into structured subproblems and require no model knowledge beyond the MDP's transition graph. They address, respectively, optimal gain estimation and gain-optimal policy learning, approximate solution of the multichain AOE with near gain-optimality guarantees, and the computation of policies approaching bias optimality. We provided almost-sure convergence and eventual performance guarantees for all three algorithms, and our preliminary experiments illustrate their different tradeoffs in gain optimality and transient performance. The ideas underlying these algorithms can also be combined in different ways to yield a broader family of essentially model-free RL algorithms, as illustrated by several variants discussed in the paper.

The performance guarantees established here are eventual: although the policies produced by our algorithms attain the stated (near) optimality properties after finite time a.s., the algorithms do not certify when this has occurred. This differs from high-probability policy identification results in the RL literature. We mention two such results for comparison. For communicating MDPs, the model-based policy iteration algorithm of \cite{BoT25} identifies a bias-optimal policy with high probability at a finite stopping time under strong policy uniqueness conditions that ensure robustness to model perturbations. For multichain MDPs, high-probability identification of near gain-optimal policies, along with sample complexity analysis, has been obtained in \cite{ZuC24} through a discounted reduction and a model-based approach. These results provide a different type of guarantee from the almost-sure eventual guarantees studied here.

An important direction for future work is to remove the transition-graph assumption. This assumption is considerably weaker than full model knowledge and is natural in many simulation-based settings, where possible successor states can be enumerated from a simulator's code or interface even when their transition probabilities are difficult to compute or depend on unobserved factors, such as other agents' behavior or physical interactions. Nevertheless, it remains a genuine restriction. Within the almost-sure convergence framework considered here, the transition graph could in principle be learned in parallel with our algorithms through sufficiently persistent exploration.

Rather than first learning the entire transition graph, it would be more interesting to discover and maintain the relevant structure online. Since Bather's UHD depends only on which transitions are possible, an estimated decomposition could be dynamically refined as new transitions are observed. This also raises broader questions about how qualitative structural information about complex stochastic dynamics can be learned through exploration and exploited for control, separately from learning quantitative transition probabilities. We plan to investigate these questions in future work.

There are several other directions we plan to explore. The subproblem structure induced by Bather's UHD suggests connections to the options framework and, in particular, the possibility of reusing learned sub-policies as options. We also plan more extensive experiments to better understand the relative strengths of the algorithms and their variants, as well as more flexible and adaptive choices of algorithmic parameters in place of the fixed thresholds used in our theoretical analysis.

\section*{Acknowledgments}
\addcontentsline{toc}{section}{Acknowledgments}
This research was supported by the Alberta Machine Intelligence Institute (Amii) and the Natural Sciences and Engineering Research Council of Canada (NSERC) under grants RGPIN-2024-04939 and DGECR-2024-00312. H.Y.\ thanks Professor Eugene Feinberg for helpful discussions on average-reward MDPs and for drawing her attention to Bather’s work. Both authors thank Professors Csaba Szepesvari and Rich Sutton for valuable feedback. We thank anonymous reviewers of an earlier version of this paper for helpful comments and suggestions that led to improvements in the presentation of this work. In preparing this paper, we used OpenAI's ChatGPT (GPT-4 and GPT-5) to help refine the writing style.

Both authors contributed equally to this work.

\appendix
\section{Background: RVI Q-learning} \label{append-rvi-ql}

RVI Q-learning, introduced by Abounadi, Bertsekas, and Borkar \cite{ABB01}, is a stochastic analogue of the classical RVI algorithm for average-reward MDPs \cite{Whi63,ScF77}. Its development and analysis build on a ``self-regulating'' algorithmic design (reflected in the function $f$ and its conditions discussed below), stochastic approximation theory \cite{Bor98,BoM00,Bor23,YWS25a}, and Borkar and Soumyanath's general results concerning continuous-time analogs of fixed-point iteration involving nonexpansive mappings \cite{BoS97}. We use the asynchronous version of RVI Q-learning in our base algorithm for solving communicating sub-MDPs. Below we briefly explain the algorithm and the conditions and convergence guarantees needed for our purpose.

Consider a \emph{weakly communicating} MDP $\M = (\S, \A, p, r)$. Let $\{\alpha_n\}$ be a given setpsize sequence and $f : \R^{|\A|} \to \R$ a Lipschitz continuous function. Further conditions on $\{\alpha_n\}$ and $f$ will be given shortly. RVI Q-learning generates a sequence $\{Q_t\}$ by iteratively updating selected components of $Q_t$ as follows. At iteration $t \geq 0$: 
\begin{itemize}[leftmargin=0.5cm,labelwidth=!]
\item Select a nonempty subset $Y_t \subset \A$. For each $(s,a) \in Y_t$, sample a random state transition and reward  $(S_{t+1}^{sa}, R_{t+1}^{sa})$ from $(s, a)$ according to $\M$.
\item For $(s, a) \in Y_t$, let
\begin{equation}
   Q_{t+1}(s, a) = Q_{t}(s, a)  +  \alpha_{\nu(t, (s, a))}  \Big( R_{t+1}^{sa}  -  f(Q_t) + \max_{a' \in \A(S_{t+1}^{sa}) } Q_t (S_{t+1}^{sa}, a') - Q_t(s, a) \Big).  \label{eq: q-ite}  
\end{equation}
For $(s, a) \not\in Y_t$, let $Q_{t+1}(s, a) = Q_{t}(s, a)$. In \eqref{eq: q-ite}, 
\[ 
\nu(t,(s,a))  := \sum_{k=0}^{t-1} \ind \{ (s,a) \in Y_k \} 
\]
counts how many times the state-action pair $(s,a)$ has been chosen prior to iteration $t$. Thus, if a pair has previously been selected $n$ times, the update at iteration $t$ uses the stepsize $\alpha_n$. \end{itemize}
We next explain what the algorithm aims to compute and state the conditions needed for its convergence.

The goal is to solve an equivalent form of the AOE for $\M$ in terms of Q-factors:
\begin{equation}   \label{eq-aoe-Q}
Q(s,a) = r(s,a) - \bar r + \sum_{s' \in \S} p(s' \mid s, a) \, \max_{a' \in \A(s') } Q (s', a'), \qquad (s,a) \in \A,
\end{equation}
where $\bar r$ is a scalar variable playing the role of the unknown optimal gain $g^*$. (This equation can also be interpreted as the AOE of an MDP on enlarged state and action spaces, so average-reward MDP theory directly applies to it.)  For any solution $(\bar r, Q)$ of this equation, necessarily $\bar r = g^*$, and the function $h(s) = \max_{a \in \A(s)} Q(s, a)$, $s \in \S$, solves the AOE~\eqref{eq-aoe2} for $\M$.

The function $f$ employed in RVI Q-learning serves to ``self-regulate'' and normalize the iterates, with $f(Q_t)$ intended to converge to $g^*$. This normalization is what makes convergence of the $Q_t$ iterates to the AOE solution set possible. Recently, \citet{YWS25b} introduced the following general monotonicity conditions on $f$ and its scaling limit $f_\infty$.

Let $d = |\A|$. We call a function $\phi: \R^d \to \R$ \emph{strictly increasing under scalar translation} (SISTr) if, for every $x \in \R^d$, the function $c \in \R \mapsto \phi(x + c \1)$ is strictly increasing and maps $\R$ onto $\R$. If this condition holds at a specific point $x$, we say that $\phi$ is \emph{SISTr at} $x$.

\begin{assumption}[Conditions on function $f$; {\cite{YWS25b}}]\label{cond-f} \hfill
\begin{itemize}[leftmargin=0.7cm,labelwidth=!]
\item[\rm (i)] $f$ is Lipschitz continuous and SISTr.
\item[\rm (ii)] As $c \uparrow \infty$, the functions $x \mapsto f(c x)/c$ converge pointwise to a real-valued function $f_\infty(x)$, and $f_\infty$ is SISTr at the origin.
\end{itemize}
\end{assumption}

Functions satisfying this assumption include the following family of Lipschitz continuous functions considered in \cite{ABB01,WNS21a}: 
\begin{itemize}[leftmargin=0.6cm,labelwidth=!]
\item For some scalar $u > 0$, $f(x + c \1) = f(x) + c u$ for all $x \in \R^d$ and $c \in \R$; and 
\item for $c \geq 0$, $f(c x) = f(0) + c (f(x) - f(0))$.
\end{itemize}
For these functions, the scaling limit is $f_\infty(x) = f(x) - f(0)$, i.e., $f_\infty$ coincides with $f$ after subtracting $f(0)$. Examples include $f(x) = b + \theta^\top x$, where $b \in \R$, $\theta \in \R^d$, and $\sum_{i=1}^d \theta_i > 0$; and $f(x)  = b + \beta \max_{i \in D} x_i$ or $f(x)  = b + \beta \min_{i \in D} x_i$, where $b \in \R$, $\beta > 0$, and $D \subset \{1, 2, \ldots, d\}$. Examples beyond this family but admissible under the more general Assumption~\ref{cond-f} include $f(x) = \max \{ g_1(x), \ldots, g_m(x)\}$ or $\min \{ g_1(x)$, $\ldots, g_m(x)\}$, where each function $g_k$ satisfies Assumption~\ref{cond-f}, e.g., by fitting into one of the aforementioned types. See~\cite[Ex.~3.2]{YWS25b} for more general examples involving a compositional mapping in place of $\max$ or $\min$.
 
We next state the remaining conditions, concerning the stepsizes and the asynchronous update schedule.

\begin{assumption}[Stepsize conditions]  \label{cond-ss} \hfill
\begin{enumerate}[leftmargin=0.7cm,labelwidth=!] 
\item[{\rm (i)}] $\sum_n \alpha_n = \infty$, $\sum_n \alpha_n^2 < \infty$, and 
$\alpha_{n+1} \leq \alpha_n$ for all $n$ sufficiently large.
\item[{\rm (ii)}] For $x \in (0,1)$, 
$\sup_n \frac{\alpha_{[ x n]}}{ \alpha_n} < \infty$,
where $[x n]$ denotes the integral part of $xn$.
\item[{\rm (iii)}] For $x \in (0,1)$, as $n \to \infty$, $\frac{ \sum_{k=0}^{[ y n ]} \alpha_k }{ \sum_{k=0}^{n} \alpha_k} \to 1$ uniformly in $y \in [x, 1]$.
\end{enumerate}
\end{assumption} 

For $x > 0$ and $t \geq 0$, define $N(t,x) := \min  \left\{ m > t : \sum_{k = t}^m \alpha_k \geq x \right\}$.
\begin{assumption}[Asynchronous update conditions] \label{cond-us} \hfill 
\begin{enumerate}[leftmargin=0.7cm,labelwidth=!]
\item[{\rm (i)}]  For some deterministic $\eta > 0$, 
$\liminf_{t \to \infty} \nu(t, i)/t  \geq \eta$ a.s., for all $i = (s,a) \in \A$.
\item[\rm (ii)] For each $x > 0$, the limit $\lim_{t \to \infty} \frac{ \sum_{k = \nu(t, i)}^{\nu(N(t,x), i)} \alpha_k}{ \sum_{k = \nu(t,j)}^{\nu(N(t,x), j)} \alpha_k}$ exists a.s., for all $i = (s,a), j = (s', a') \in \A$.
\end{enumerate}
\end{assumption} 

These conditions are based on those introduced in Borkar~\cite{Bor98} for asynchronous stochastic approximation and adopted in~\cite{ABB01} for RVI Q-learning. Admissible stepsize examples include stepsizes asymptotic to $1/ n$ or $1/ (n \ln n)$. For asynchronous updates, Assumption~\ref{cond-us}(i) requires a strictly positive frequency for updating each $Q$-factor component, while Assumption~\ref{cond-us}(ii) requires the components to receive comparable amounts of updates on the stepsize time scale. Both conditions are met, in particular, if the subset $Y_t$ of state-action pairs is eventually chosen based on an irreducible Markov chain on $\A$ (see \cite[Ex.~3]{WYS24}). 

While these conditions are more restrictive than the stepsize and asynchony conditions required for classical Q-learning \cite{Tsi94}, they are needed here because the mappings underlying RVI Q-learning are generally neither contractive nor nonexpansive. They impose partial asynchrony, allowing the asymptotic behavior of the asynchronous algorithm to align, on average, with that of a synchronous counterpart and thereby facilitating analysis \cite{Bor98,YWS25a}.

Under Assumptions~\ref{cond-f}--\ref{cond-us}, it is shown in \cite[Thm.~3.1]{YWS25b} that RVI Q-learning \eqref{eq: q-ite} converges a.s., with
\begin{equation}
f(Q_t) \to g^*, \qquad  Q_t \to \text{a compact subset of solutions of the AOE~\eqref{eq-aoe-Q}},
\end{equation}
where the subset is generally sample-path-dependent. Moreover, this convergence of $Q_t$ implies that, a.s., any greedy policy w.r.t.\ $Q_t$ is gain-optimal after finite time, and that the state-value functions $v_t(s) := \max_{a \in \A(s)} Q_t(s,a), s \in \S$, converge to a compact subset of solutions of the AOE~\eqref{eq-aoe2}.

Furthermore, under additional stepsize and asynchrony conditions \cite[Assump.~3.4(i)]{YWS25b}, \cite[Thm.~3.2]{YWS25b} establishes the a.s.\ convergence of RVI Q-learning to \emph{a unique sample-path-dependent AOE solution}. (This result itself is an application of the more general result for asynchronous stochastic approximation algorithms given in \cite[Thm.~2.3]{YWS25a}.) Specialized to the context here, the following gives a particular choice of stepsize sequence and asynchronous update scheme satisfying the required conditions (cf.~\cite[Rem.~3.4]{YWS25b}). 
\begin{itemize}[leftmargin=0.6cm,labelwidth=!]
\item Let the selection of the components to be updated eventually be governed by an irreducible Markov chain on $\A$.
\item Let $\alpha_n = \tfrac{1}{A n}$ for $n \geq 1$, where $A > 0$ is a scaling parameter satisfying
\begin{equation}
A > 4 + 2 L_f,
\end{equation}
where $L_f$ is the Lipschitz constant of $f$ w.r.t.\ $\|\cdot\|_\infty$.
\end{itemize}
With these choices, \cite[Thm.~3.2]{YWS25b} implies that $Q_t$ converges a.s.\ to a unique sample-path-dependent solution of the AOE~\eqref{eq-aoe-Q}, and its associated state-value function $v_t$ converges to the corresponding solution of the AOE~\eqref{eq-aoe2}.

\section{Background: Q-learning for SSP-type total-reward MDPs} \label{append-ql-ssp}

Let $\hat \M = (\hat \S \cup {\Delta}, \hat \A, \hat p, \hat r)$ be a total-reward MDP of SSP type (as defined in Section~\ref{sec-background-ssp}). Q-learning for $\hat \M$ is similar to classical Q-learning for discounted problems \cite{WatkinsQ}, except that the discount factor is replaced by $1$ and the Q-value at the terminal state $\Delta$ is fixed at zero. Specifically, at iteration $t \geq 0$, the algorithm performs the following operations:
\begin{itemize}[leftmargin=0.5cm,labelwidth=!]
\item Select a nonempty subset $Y_t \subset \hat \A$. For each $(s,a) \in Y_t$, sample a random state transition and reward  $(S_{t+1}^{sa}, R_{t+1}^{sa})$ from $(s, a)$ according to $\hat \M$.
\item For $(s, a) \in Y_t$, let
\begin{equation}
   Q_{t+1}(s, a) = Q_{t}(s, a)  +  \alpha_t(s,a)  \Big( R_{t+1}^{sa}  + \ind\{S^{sa}_{t+1} \not= \Delta \} \cdot \!\max_{a' \in \hat \A(S_{t+1}^{sa}) } Q_t (S_{t+1}^{sa}, a') - Q_t(s, a) \Big),  \label{eq: ssp-q-ite}  
\end{equation}
where $\alpha_t(s,a) \geq 0$ is a possibly random stepsize that can depend on information available at time $t$, including $Y_t$ (but not on the transition data indexed by $t+1$). For $(s, a) \not\in Y_t$, $Q_{t+1}(s, a) = Q_{t}(s, a)$.
\end{itemize}
We mention that more generally, when implemented using parallel distributed computation, instead of $Q_t (S_{t+1}^{sa}, a')$, the update \eqref{eq: ssp-q-ite} can use delayed Q-factors of the form $Q_{t -\tau} (S_{t+1}^{sa}, a')$, $\tau \geq 0$, to allow communication delays between different processors (see \cite{Tsi94,YuB13} for details). For simplicity of exposition, we do not consider communication delays here.

Almost-sure convergence of $Q_t$ to the optimal action-value function $Q^*$ of $\hat \M$ follows from the convergence results of \citet[Thm.~2]{Tsi94} given stability, together with the stability results of \citet[Thm.~3.1]{YuB13}, under the same general stepsize and asynchrony conditions as in discounted Q-learning:

\begin{assumption}[Asynchrony and stepsize conditions]  \label{cond-ssp-ql} 
The following hold a.s.:
\begin{itemize}[leftmargin=0.7cm,labelwidth=!]
\item[\rm (i)] Every Q-factor component is updated infinitely often.
\item[\rm (ii)] For each $(s,a) \in \hat \A$, under the convention that $\alpha_t(s,a) = 0$ whenever $(s,a) \not\in Y_t$, the stepsizes satisfy $\sum_{t} \alpha_t(s,a) = \infty$ and $\sum_{t} \alpha^2_t(s,a) < \infty$.
\end{itemize}
\end{assumption}

The stability proof in \cite{YuB13} is quite involved for a general SSP-type MDP $\hat \M$ with improper policies. It need not be invoked for several special cases of $\hat \M$, where almost-sure convergence of $Q_t$ to $Q^*$ follows readily from alternative arguments that exploit stronger properties of $\hat \M$. These cases include:
\begin{itemize}[leftmargin=0.65cm,labelwidth=!]
\item When every stationary policy in $\hat \M$ is proper, the mapping underlying the Q-learning iteration~\eqref{eq: ssp-q-ite} is contractive w.r.t.\ a weighted sup-norm \cite[Prop.\ 2.2]{BeT96}. Consequently, $Q_t \to Q^*$ a.s.\ by \cite[Thm.~3]{Tsi94} (see also \cite[Secs.\ 4.3, 5.6]{BeT96}).
\item When improper policies may exist in $\hat \M$ but the one-stage reward function $\hat r(s,a)$ is nonpositive, the a.s.\ convergence $Q_t \to Q^*$ was proved in \cite[Lem.\ 9 and Thm.~4]{Tsi94} for the case where all random rewards are also nonpositive (see also \cite[Prop.\ 5.6]{BeT96}), and later in \cite[Prop.\ 3.2]{YuB13} with most of the sign restrictions on the random rewards removed.
\end{itemize}

Our base algorithm applies the two special cases above: its Phase~3 uses the first case in solving the aggregated OSP, while its Phase~4 uses the second case in a realization of the \textsc{ReachPolicy} subroutine. Algorithm~\ref{alg:2} for computing an approximate AOE solution involves solving a general SSP-type MDP, and its convergence analysis required a slight extension of prior stability and convergence results from \cite{Tsi94,YuB13}, which we provided in Section~\ref{sec-ssp-analysis-extension}.

\section{Experimental methodology and additional results}
\label{sebsec:app-experiments}
This section provides further details on the experimental
methodology and results.
Section~\ref{subsec:app-mdp-gen} describes the MDP
generation procedure and the resulting benchmark.
Section~\ref{subsec:app-alg-implementation-details}
specifies the implementation, update schedules, and
parameter choices.
Section~\ref{subsec:app-extra-exp-results} presents
per-MDP results and discusses the observed differences
in gain and transient performance.
\subsection{MDP generation}
\label{subsec:app-mdp-gen}

\begin{table}[h]
\centering
\caption{Summary statistics over 100 generated MDPs.}
\label{tab:mdp_stats}
\begin{tabular}{lccc}
\hline
\textbf{Statistic} & \textbf{Min} & \textbf{Mean} & \textbf{Max} \\
\hline
\multicolumn{4}{l}{\textit{Structural parameters (controlled)}} \\[2pt]
Number of levels & 3 & 4.53 & 6 \\
Number of communicating classes & 3 & 11.25 & 23 \\
Number of transient states & 1 & 9.36 & 20 \\
Number of states & 20 & 71.67 & 147 \\
Number of actions & 29 & 149.8 & 359 \\
States per class (mean) & 3.88 & 5.53 & 6.57 \\
Actions per state (mean) & 1.33 & 2.04 & 3.42 \\
Per-MDP reward span & 14.93 & 20.35 & 27.57 \\
Number of preferred (high-gain) classes & 3 & 6.49 & 12 \\
Number of classes made multichain & 0 & 6.52 & 16 \\[4pt]
\midrule
\multicolumn{4}{l}{\textit{Emergent gain and class structure}} \\[2pt]
$\Cr$ classes per MDP & 1 & 2.97 & 7 \\
$\Ce$ classes per MDP & 0 & 3.90 & 14 \\
$\Cb$ classes per MDP & 0 & 4.38 & 12 \\
Optimal gain span & 0 & 7.51 & 16.78 \\
MDPs with nonconstant optimal gain & \multicolumn{3}{c}{77 / 100} \\
\hline
\end{tabular}
\end{table}
MDPs are generated using a structured procedure that directly leverages Bather's UHD framework, ensuring that every generated instance has a known, valid hierarchical decomposition by construction. For each MDP, the number of levels $L$ is sampled uniformly in the range $[3,6]$, the number of classes per level in the range $[1,4]$, the number of transient states per level in the range $[0,4]$, and the number of states per class in the range $[3,8]$.

Each communicating class $C_{\ell, k}$ is constructed by first laying a random directed cycle over its states (guaranteeing communication), then adding additional random internal transitions (possibly stochastic) with probability sampled uniformly in the range $[0.2,0.6]$ for each MDP. For each class at a level $\ell \ge 1$, a single exit action transitioning deterministically to a randomly selected class at level $\ell-1$ is added to ensure the desired hierarchical structure. Additional exit actions are added with probability sampled from $[0.4, 0.8]$, these exit actions can be stochastic, with each stochastic transition pointing to a random subset of between $1$ and $3$ states, at least one of which is at a strictly lower level. Transient states receive random actions pointing to classes or earlier transients at the same or lower levels, with probability sampled from $[0.4, 0.8]$. Critically, to preserve the desired structure, we ensure no cycles are created between transient states.

To induce genuine multichain structure within classes, each class is independently made multichain with probability sampled per-MDP from $[0.2, 0.9]$. Two methods are used with equal probability: the first selects a random subset of $2-4$ states and equips each with a self-absorbing action; the second partitions the class into $2-3$ random subgroups, each of which is given its own directed subcycle, so multiple distinct recurrent behaviors are possible within the class.

Rewards are assigned through a three layered mechanism applied in sequence. First, a base random reward drawn uniformly from $[-3, 3]$ is assigned to every action. Second, a subset of classes at randomly selected preferred levels are designated as high-gain targets and given an additional reward bonus sampled from $[2, 6]$; non-preferred classes receive a penalty sampled from $[-6, -2]$. For multichain classes at preferred levels, subcycle and absorbing actions additionally receive rewards sampled from $[2, 6]$, while ordinary internal cycling actions are penalized by amounts sampled from $[-6, -2]$. This creates diverse gain and bias structure within classes, where some recurrent behaviors within the same class are more desirable than others.

To explicitly construct $\Cb$-type classes (those for which staying and exiting are gain-tied, and optimal decisions come down to optimizing bias), a dedicated construction is applied. Between 2 and 10 classes per-MDP are selected for this construction. For each such class, a base gain level is sampled from $[1, 10]$. With equal probability, we assign each such class as \emph{exit-good} or \emph{stay-good}, where we construct rewards in order induce optimal behavior which depends on the bias. For exit-good classes, exit actions are designed to offer a reward bonus sampled from $[5, 15]$, over the base gain. For stay-good classes, a value $x \in [5, 10]$ is sampled, $-x$ is applied to exit actions and $+x$ is applied to an internal action leading to a cycle, creating a better bias from remaining within the class. This aims to ensure that a nontrivial fraction of classes present a genuine gain-bias tradeoff with hopes of distinguishing behavior between our algorithms.

Table~\ref{tab:mdp_stats} summarizes the structural and emergent properties of the 100 generated MDPs. The controlled parameters reflect a deliberately diverse benchmark: MDPs range from small instances with 20 states and 29 actions to larger ones with 147 states and 359 actions, with varying numbers of levels, classes, and transient states. The per-MDP reward span, which averages 20.35 and ranges from 14.93 to 27.57, confirms that the layered reward construction — combining base random rewards with level-preference bonuses and multichain subcycle penalties — creates substantial reward variation within each MDP, giving algorithms meaningful signals to differentiate between good and bad behavior.

The emergent class and gain structure is more revealing. As defined in equations \eqref{eq-def-C1}-\eqref{eq-def-C3}, $\Cr$ is the set of classes where it is gain-optimal to stay, $\Ce$ is the set where it is gain-optimal to exit, and $\Cb$ is the set of gain-tied regions, where staying and exiting is equal in terms of gain. The presence of $\Ce$ and $\Cb$ classes across the benchmark confirms that the construction has largely succeeded in inducing the gain-tied and gain-dominated structure that motivates the augmented algorithms. Notably, 98 of the 100 MDPs contain at least one $\Cb$ class providing a strong basis for evaluating the performance differences between Algorithms \ref{alg:2} and \ref{alg:3}. A moderate limitation of the construction is that 24 MDPs ended up with no $\Ce$ classes, meaning in terms of gain there is no class where it is strictly better to exit. The absence of $\Ce$ classes primarily affects the base algorithm's transient performance: in classes where exiting is optimal, the base algorithm uses $\pi_{\ell, k}^{\mathrm{reach}}$ to navigate to an exit state $s^*$ via an SSP formulation, which minimizes hitting time without accounting for intermediate rewards, leading to poor transient behavior. Algorithms \ref{alg:2} and \ref{alg:3} bypass this issue through their action constraint construction, so this limitation does not materially affect the comparison between them. Since the base algorithm's transient performance was already demonstrated to be poor in the main results and in Subsection \ref{subsec:app-extra-exp-results}, we believe these 24 MDPs remain representative of the general behavior of all algorithms. Additionally, 76 of the MDPs still had a variety of classes which fall into the $\Ce$ classification. Nevertheless, future benchmark construction should enforce a minimum number of $\Ce$ classes per MDP to ensure fuller coverage of this case.

Turning to the optimal gain structure, the 77 MDPs with non-constant optimal gain confirm that the majority of the benchmark consists of genuinely multichain problems where the long-run performance depends on the initial state. The remaining 23 MDPs, despite being structurally multichain, happen to have a constant optimal gain vector, reducing to a more weakly communicating problem and representing the easier end of the spectrum. Among the 77 truly multichain instances, the span of the optimal gain varies considerably: 39 MDPs had gain span exceeding $10.0$ --- cases where the optimal gain varies substantially across states and where correctly identifying the optimal classes to stop at are critical to performance. The remaining 38 MDPs fall in between, exhibiting moderate gain variation. Together, the benchmark spans
a wide difficulty spectrum, providing a comprehensive testbed for evaluating average-reward RL algorithms in the multichain setting.

\subsection{Experimental implementation details}
\label{subsec:app-alg-implementation-details}

All experiments were run on a 2021 Apple MacBook Pro with an Apple M1 chip and 16\,GB of memory. The algorithms were implemented in Python 3.13.2. Each of the 100 MDP experiments was run for $750{,}000$ iterations, requiring approximately $1$ hour per MDP and $100$ hours in total. Experiments were run sequentially across MDPs. At the beginning of each run, the parameters needed to recreate the MDP were saved, along with the bias-optimal benchmark policy. Every $1{,}000$ iterations a light history of the run was saved consisting of the current policies of the algorithms. Every $50{,}000$ iterations a full checkpoint was created saving all of the learner objects. Each MDP experiment took up approximately 5\,MB of storage for a total of approximately 500\,MB.

 The implementation uses a round-robin update scheme for reproducibility and ease of comparison. In each outer iteration, every phase advances by exactly one full sweep over its state-action pairs in a fixed deterministic order (randomized only at initialization). Phases are not actually run in parallel; instead, they are frozen and advanced sequentially within each outer iteration, which approximates the concurrent operation described in the paper. Since the iterative phases of our algorithms are specified to operate concurrently, a distributed or parallel implementation is a natural next step, which we leave for future work. The augmented algorithms begin advancing only after $\bar{t} = 30{,}000$ outer iterations, allowing the base algorithm sufficient time to produce stable gain and Q-function estimates before the augmented phases begin using them.

All Q-learning components use a polynomial stepsize schedule $\alpha_t(s,a) = c / (\text{offset} + n_t(s,a))^\omega$ where $n_t(s,a)$ is the number of updates to pair $(s,a)$ up to time $t$. The parameters used are $c = 1.0$, $\omega = 0.7$ and $\text{offset} = 10.0$ for all components. A warm-start phase is applied for the base algorithm in Phase 2 RVI Q-learning: for the first $5{,}000$ outer iterations, the more aggressive polynomial stepsize schedule above is used, after which the schedule switches to $\alpha_t(s,a) = 1 / ( An_t(s,a))$ with $A=8.0$. The $f$ function for RVI Q-learning is taken to be the maximum of the $Q$-function over all state-action pairs (and over all admissible state-action pairs for Algorithm \ref{alg:3} Phase 2).

The thresholds for the set constructions in Algorithms \ref{alg:2} and \ref{alg:3} are set to $\ud = \ud_1 = 0.03$, $\bd = \bd_1 = 0.06$. The additional thresholds for Algorithm \ref{alg:3} are set to $\ud_2 = 0.01$ and $\bd_2 = 0.03$. The penalty parameter for Algorithm 2 is $\epsilon = 0.05$, held constant throughout for simplicity. The discount factor for Phase 1 of Algorithm \ref{alg:3} is $\beta = 0.99$, and Algorithm \ref{alg:3}'s Phase 2 begins at iteration $\bar{t}' = \bar{t} + 10{,}000$. The discounted Q-learning baseline uses $\beta = 0.9995$.

Phase 4 of the base algorithm requires constructing a policy that reaches a target state $s^*$ from all states within a class $C_{\ell,k}$. This is implemented by solving an SSP problem on $C_{\ell,k}$ using $1{,}000$ round-robin Q-learning sweeps, with results cached across outer iterations to avoid redundant recomputation.

The 0-discount policy iteration benchmark is computed once for each MDP using the true MDP model and the exact $0$-discount policy iteration algorithm \cite{Vei66} (see also \cite[Sec.\ 10.3]{Put94}).

\subsection{Additional experimental results and discussion}
\label{subsec:app-extra-exp-results}

\begin{figure}[t]
    \centering
    \begin{subfigure}[t]{0.48\linewidth}
        \centering
        \includegraphics[width=\linewidth]{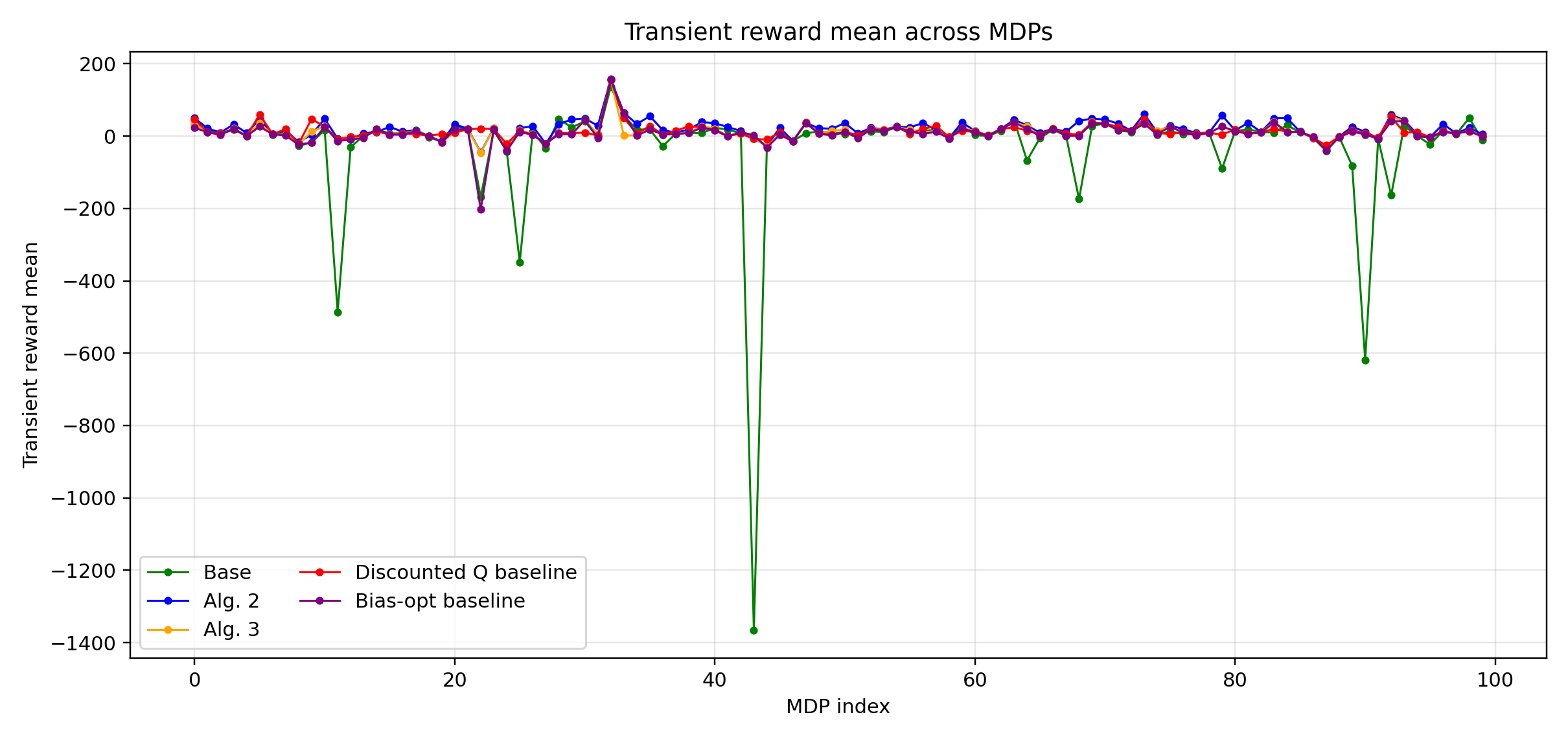}
        \caption{Mean transient reward prior to entering a recurrent class, averaged over
        all initial states, per-MDP. Higher values indicate better transient performance.}
        \label{fig:per_mdp_transient}
    \end{subfigure}
    \hfill
    \begin{subfigure}[t]{0.48\linewidth}
        \centering
        \includegraphics[width=\linewidth]{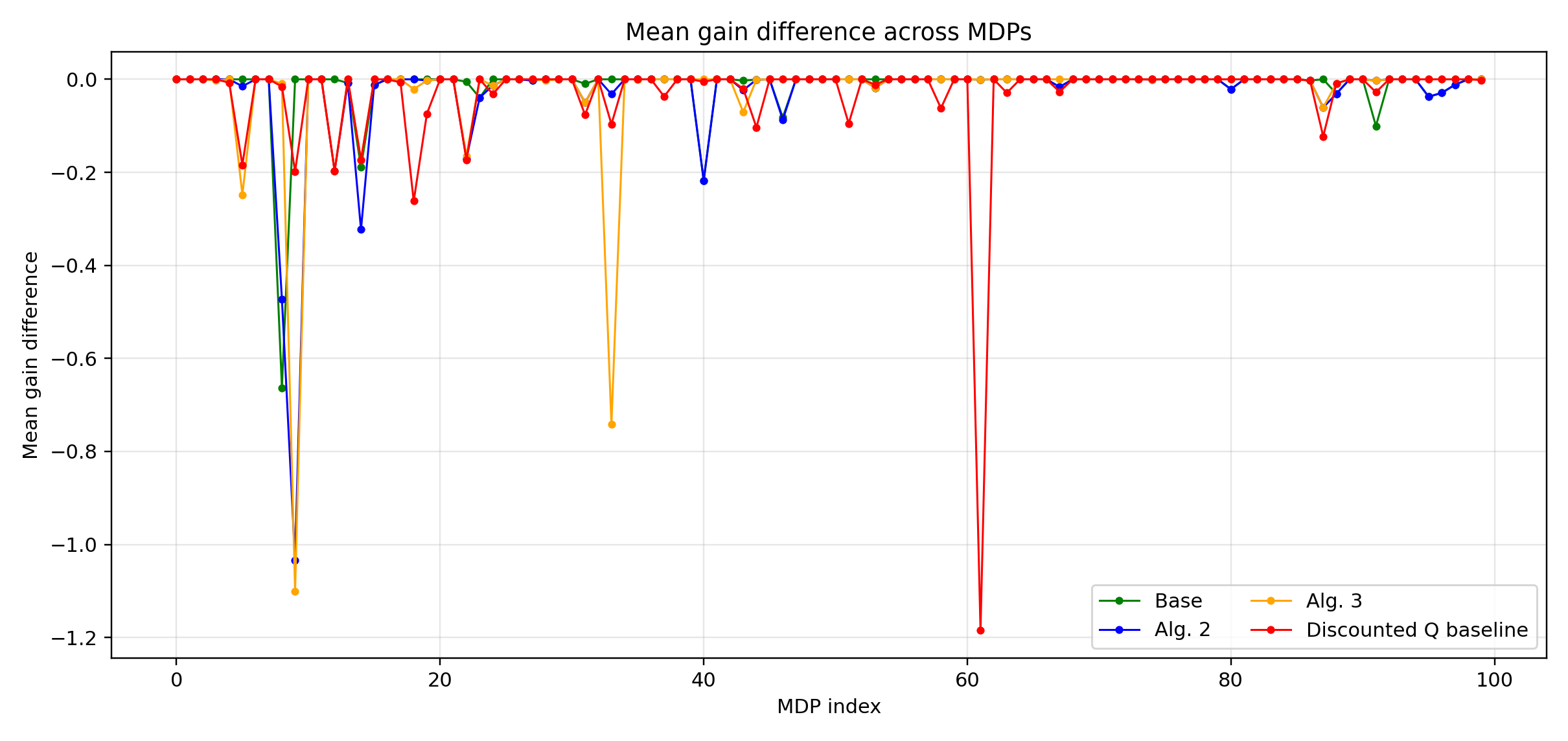}
        \caption{Negative mean gain difference from bias-optimal baseline, averaged over all initial states,
        per-MDP. Higher values indicate better gain performance.}
        \label{fig:per_mdp_gain}
    \end{subfigure}
    \caption{Per-MDP performance across all 100 MDPs.}
    \label{fig:per_mdp}
\end{figure}

Figures~\ref{fig:per_mdp_transient} and~\ref{fig:per_mdp_gain} display per-MDP performance
across all 100 MDPs. Because the aggregated plots in Figure~\ref{fig:main_results}
do not include error bars --- which proved too visually cluttered to be legible --- these
per-MDP plots serve as evidence further supporting the general claims of Section~\ref{sec:main-results}.

Figure~\ref{fig:per_mdp_transient} shows the mean transient reward prior to entering a recurrent class, averaged over all initial states per-MDP. Algorithms~\ref{alg:2} and~\ref{alg:3} both cluster near the bias-optimal benchmark across virtually all 100 MDPs. The base algorithm, by contrast, exhibits large transient failures on a subset of MDPs. The most striking is the base algorithm's
catastrophic transient failure on MDP~43, with mean transient reward of approximately $-1{,}380$. To partially understand why this failure occurs, recall that in Phase~4 of the base algorithm, for any class $C_{\ell,k}$ where Phase~3 decides
to exit via some $(s^*, a^*)$, the within-class policy is set to $\pi^{\mathrm{reach}}_{\ell,k}$--- the solution to an SSP problem that navigates to $s^*$ in minimum expected time, with
no regard for rewards accumulated along the way.

The failures arise in MDPs where the shortest path to $s^*$ within one or more exited classes passes through states with large negative per-step rewards. Since $\pi^{\mathrm{reach}}_{\ell,k}$ minimizes hitting time rather than cumulative reward, it selects these paths without penalty, potentially
accumulating large negative transient costs. When several such classes appear across multiple levels of the hierarchy, these costs compound: the system traverses each exited class via a
reward-blind shortest path before eventually reaching its recurrent destination, and the cumulative transient reward can become extremely poor. In MDP~43, for instance, inspection of the base algorithm's policy reveals that $\pi^{\mathrm{reach}}_{\ell,k}$ routes through several states with per-step rewards in the range $[-8, -5]$, whereas the bias-optimal policy takes alternative paths through the same classes
that achieve per-step rewards as high as $+10$, precisely because it accounts for transient reward through its bias function.

Figure~\ref{fig:per_mdp_gain} shows the mean gain sub-optimality per-MDP. The base algorithm is largely gain-optimal across all MDPs. Extension algorithms \ref{alg:2} and \ref{alg:3} remain near gain-optimal across the majority of MDPs, though both exhibit occasional failures: Algorithm \ref{alg:2} shows more frequent smaller deviations, while Algorithm \ref{alg:3} has fewer, but occasionally larger ones. These results are consistent with Figure~\ref{fig:main_results} in Section~\ref{sec:main-results}. As augmented algorithms are approximation-based, we expect these failures to stem from approximation error. Future work will verify why these large failures occasionally occur and will investigate the sensitivity of the algorithms to parameter choices.

Together, Figures~\ref{fig:per_mdp_transient}  and~\ref{fig:per_mdp_gain} validate the aggregated claims of Section~\ref{sec:main-results}: the base algorithm achieves the strongest gain performance but exhibits large transient failures on a subset of MDPs,
while the extensions substantially improve transient performance while remaining near gain-optimal on the vast majority of instances.

\addcontentsline{toc}{section}{References} 
\bibliographystyle{apa} 
\let\oldbibliography\thebibliography
\renewcommand{\thebibliography}[1]{%
  \oldbibliography{#1}%
  \setlength{\itemsep}{0pt}%
}
{\fontsize{9}{11} \selectfont
\bibliography{rl-multichain.bib}}

\begin{thebibliography}{}

\bibitem[\protect\astroncite{Abounadi et~al.}{2001}]{ABB01}
Abounadi, J., Bertsekas, D.~P., and Borkar, V.~S. (2001).
\newblock Learning algorithms for {Markov} decision processes with average
  cost.
\newblock {\em SIAM J. Control Optim.}, 40(3):681--698.

\bibitem[\protect\astroncite{Bather}{1973a}]{Bather2}
Bather, J. (1973a).
\newblock Optimal decision procedures for finite {Markov} chains. {Part II}:
  Communicating systems.
\newblock {\em Adv. Appl. Prob.}, 5:521--540.

\bibitem[\protect\astroncite{Bather}{1973b}]{Bather3}
Bather, J. (1973b).
\newblock Optimal decision procedures for finite {Markov} chains. {Part III}:
  General convex systems.
\newblock {\em Adv. Appl. Prob.}, 5:541--553.

\bibitem[\protect\astroncite{Bertsekas and Tsitsiklis}{1991}]{BeT91}
Bertsekas, D.~P. and Tsitsiklis, J.~N. (1991).
\newblock An analysis of stochastic shortest path problems.
\newblock {\em Math. Oper. Res.}, 16:185--202.

\bibitem[\protect\astroncite{Bertsekas and Tsitsiklis}{1996}]{BeT96}
Bertsekas, D.~P. and Tsitsiklis, J.~N. (1996).
\newblock {\em Neuro-Dynamic Programming}.
\newblock Athena Scientific, Belmont.

\bibitem[\protect\astroncite{Bertsekas and Yu}{2012}]{BeY12}
Bertsekas, D.~P. and Yu, H. (2012).
\newblock Q-learning and enhanced policy iteration in discounted dynamic
  programming.
\newblock {\em Math. Oper. Res.}, 37:66--94.

\bibitem[\protect\astroncite{Blackwell}{1962}]{Bla62}
Blackwell, D. (1962).
\newblock Discrete dynamic programming.
\newblock {\em Ann. Math. Stat.}, 33:719--726.

\bibitem[\protect\astroncite{Boone and Tuynman}{2025}]{BoT25}
Boone, V. and Tuynman, A. (2025).
\newblock Towards {Blackwell} optimality: {Bellman} optimality is all you can
  get.
\newblock arXiv:2510.13476.

\bibitem[\protect\astroncite{Borkar}{1998}]{Bor98}
Borkar, V.~S. (1998).
\newblock Asynchronous stochastic approximations.
\newblock {\em SIAM J. Control Optim.}, 36(3):840--851.
\newblock Erratum, ibid. 38 (2000), 662--663.

\bibitem[\protect\astroncite{Borkar}{2023}]{Bor23}
Borkar, V.~S. (2023).
\newblock {\em Stochastic Approximations: A Dynamical Systems Viewpoint}.
\newblock Springer and Hindustan Book Agency, Singapore and New Delhi, 2nd
  edition.

\bibitem[\protect\astroncite{Borkar and Meyn}{2000}]{BoM00}
Borkar, V.~S. and Meyn, S. (2000).
\newblock The o.d.e. method for convergence of stochastic approximation and
  reinforcement learning.
\newblock {\em SIAM J. Control Optim.}, 38(2):447--469.

\bibitem[\protect\astroncite{Borkar and Soumyanath}{1997}]{BoS97}
Borkar, V.~S. and Soumyanath, K. (1997).
\newblock A new analog parallel scheme for fixed point computation, {Part I}:
  Theory.
\newblock {\em IEEE Trans. Circuits Systems---I Fund. Theory Appl.},
  44(4):351--355.

\bibitem[\protect\astroncite{Federgruen and Schweitzer}{1981}]{FeS81}
Federgruen, A. and Schweitzer, P.~J. (1981).
\newblock Nonstationary {Markov} decision problems with converging parameters.
\newblock {\em J. Optim. Theory Appl.}, 34:207--241.

\bibitem[\protect\astroncite{Federgruen and Schweitzer}{1984}]{FeS84}
Federgruen, A. and Schweitzer, P.~J. (1984).
\newblock Successive approximation methods for solving nested functional
  equations in {Markov} decision problems.
\newblock {\em Math. Oper. Res.}, 9:319--344.

\bibitem[\protect\astroncite{Fox and Landi}{1968}]{FoxLandi}
Fox, B.~L. and Landi, D.~M. (1968).
\newblock An algorithm for identifying the ergodic subchains and transient
  states of a stochastic matrix.
\newblock {\em Communications of the ACM}, 11(9):619--621.

\bibitem[\protect\astroncite{Fruit et~al.}{2018}]{FPL18}
Fruit, R., Pirotta, M., and Lazaric, A. (2018).
\newblock Near optimal exploration-exploitation in non-communicating {Markov}
  decision processes.
\newblock In {\em Advances in Neural Information Processing Systems}.

\bibitem[\protect\astroncite{Grand-Cl\'{e}ment and Petrik}{2023}]{GrP23}
Grand-Cl\'{e}ment, J. and Petrik, M. (2023).
\newblock Reducing {Blackwell} and average optimality to discounted {MDPs} via
  the {Blackwell} discount factor.
\newblock In {\em Advances in Neural Information Processing Systems}.

\bibitem[\protect\astroncite{Jin et~al.}{2024}]{JBGZ24}
Jin, Y., Blanchet, J., Gummadi, R., and Zhou, Z. (2024).
\newblock Feasible {Q}-learning for average reward reinforcement learning.
\newblock In {\em International Conference on Artificial Intelligence and
  Statistics}.

\bibitem[\protect\astroncite{Kara and Y\"{u}ksel}{2026}]{KaY26}
Kara, A.~D. and Y\"{u}ksel, S. (2026).
\newblock Approximations and learning for continuous state and action mdps
  under average cost criteria.
\newblock {\em J. Mach. Learn. Res.}, 27(68):1--50.

\bibitem[\protect\astroncite{Kushner and Dupuis}{2001}]{KuD01}
Kushner, H.~J. and Dupuis, P.~G. (2001).
\newblock {\em Numerical Methods for Stochastic Control Problems in Continuous
  Time}.
\newblock Springer, New York.

\bibitem[\protect\astroncite{Ohno}{1988}]{Ohno}
Ohno, K. (1988).
\newblock A value iteration method for undiscounted multichain {Markov}
  decision processes.
\newblock {\em Z. Oper. Res}, 32:71--93.

\bibitem[\protect\astroncite{Puterman}{1994}]{Put94}
Puterman, M.~L. (1994).
\newblock {\em Markov Decision Processes: Discrete Stochastic Dynamic
  Programming}.
\newblock John Wiley \& Sons.

\bibitem[\protect\astroncite{Saldi et~al.}{2018}]{SLY18}
Saldi, N., Linder, T., and Y\"{u}ksel, S. (2018).
\newblock {\em Finite Approximations in Discrete-Time Stochastic Control:
  Quantized Models and Asymptotic Optimality}.
\newblock Springer Nature, Cham.

\bibitem[\protect\astroncite{Schweitzer}{1984}]{Schweitzer1984}
Schweitzer, P.~J. (1984).
\newblock A value-iteration scheme for undiscounted multichain {Markov} renewal
  programs.
\newblock {\em Z. Oper. Res.}, 28(4):143--152.

\bibitem[\protect\astroncite{Schweitzer and Federgruen}{1977}]{ScF77}
Schweitzer, P.~J. and Federgruen, A. (1977).
\newblock The asymptotic behavior of undiscounted value iteration in {Markov}
  decision problems.
\newblock {\em Math. Oper. Res.}, 2(4):360--381.

\bibitem[\protect\astroncite{Schweitzer and Federgruen}{1978}]{ScF78}
Schweitzer, P.~J. and Federgruen, A. (1978).
\newblock The functional equations of undiscounted {Markov} renewal
  programming.
\newblock {\em Math. Oper. Res.}, 3(4):308--321.

\bibitem[\protect\astroncite{Sutton and Barto}{2018}]{SuB18}
Sutton, R. and Barto, A. (2018).
\newblock {\em Reinforcement Learning: An Introduction}.
\newblock MIT Press, Cambridge, MA, 2nd edition.

\bibitem[\protect\astroncite{Sutton et~al.}{1999}]{SPS99}
Sutton, R.~S., Precup, D., and Singh, S. (1999).
\newblock Between {MDPs} and {semi-MDPs}: A framework for temporal abstraction
  in reinforcement learning.
\newblock {\em Artificial Intelligence}, 112:181--211.

\bibitem[\protect\astroncite{Tsitsiklis}{1994}]{Tsi94}
Tsitsiklis, J. (1994).
\newblock Asynchronous stochastic approximation and {Q-learning}.
\newblock {\em Mach. Learn.}, 16:185--202.

\bibitem[\protect\astroncite{Veinott}{1966}]{Vei66}
Veinott, A.~F. (1966).
\newblock On finding optimal policies in discrete dynamic programming with no
  discounting.
\newblock {\em Ann. Math. Stat.}, 37:1284--1294.

\bibitem[\protect\astroncite{Wan et~al.}{2021a}]{WNS21b}
Wan, Y., Naik, A., and Sutton, R.~S. (2021a).
\newblock Average-reward learning and planning with options.
\newblock In {\em Advances in Neural Information Processing Systems}.

\bibitem[\protect\astroncite{Wan et~al.}{2021b}]{WNS21a}
Wan, Y., Naik, A., and Sutton, R.~S. (2021b).
\newblock Learning and planning in average-reward {Markov} decision processes.
\newblock In {\em International Conference on Machine Learning}.

\bibitem[\protect\astroncite{Wan et~al.}{2024}]{WYS24}
Wan, Y., Yu, H., and Sutton, R.~S. (2024).
\newblock On convergence of average-reward {Q-learning} in weakly communicating
  {Markov} decision processes.
\newblock arXiv:2408.16262.

\bibitem[\protect\astroncite{Watkins}{1989}]{WatkinsQ}
Watkins, C. J. C.~H. (1989).
\newblock {\em Learning from Delayed Rewards}.
\newblock PhD thesis, Cambridge University.

\bibitem[\protect\astroncite{White}{1963}]{Whi63}
White, D.~J. (1963).
\newblock Dynamic programming, {Markov} chains, and the method of successive
  approximations.
\newblock {\em J. Math. Anal. Appl.}, 6(3):373--376.

\bibitem[\protect\astroncite{Yu and Bertsekas}{2013a}]{YuB13}
Yu, H. and Bertsekas, D.~P. (2013a).
\newblock On boundedness of {Q-learning} iterates for stochastic shortest path
  problems.
\newblock {\em Math. Oper. Res.}, 38:209--227.

\bibitem[\protect\astroncite{Yu and Bertsekas}{2013b}]{YuB12}
Yu, H. and Bertsekas, D.~P. (2013b).
\newblock Q-learning and policy iteration algorithms for stochastic shortest
  path problems.
\newblock {\em Ann. Oper. Res.}, 208:95--132.

\bibitem[\protect\astroncite{Yu et~al.}{2025}]{YWS25b}
Yu, H., Wan, Y., and Sutton, R.~S. (2025).
\newblock Average-reward reinforcement learning in semi-{Markov} decision
  processes via relative value iteration.
\newblock arXiv:2512.06218.

\bibitem[\protect\astroncite{Yu et~al.}{2026}]{YWS25a}
Yu, H., Wan, Y., and Sutton, R.~S. (2026).
\newblock Asynchronous stochastic approximation with applications to
  average-reward reinforcement learning.
\newblock {\em SIAM J. Control Optim.}, 64:1456--1481.
\newblock (Extended arXiv version: arXiv:2409.03915).

\bibitem[\protect\astroncite{Zurek and Chen}{2024}]{ZuC24}
Zurek, M. and Chen, Y. (2024).
\newblock Span-based optimal sample complexity for weakly communicating and
  general average reward {MDPs}.
\newblock In {\em Advances in Neural Information Processing Systems}.

\bibitem[\protect\astroncite{Zurek and Chen}{2025}]{ZuC25}
Zurek, M. and Chen, Y. (2025).
\newblock The plug-in approach for average-reward and discounted {MDPs}:
  Optimal sample complexity analysis.
\newblock In {\em International Conference on Algorithmic Learning Theory}.
\newblock arXiv:2410.07616.

\end{thebibliography}

\end{document}